\documentclass{article}
\PassOptionsToPackage{numbers, compress}{natbib}

\usepackage[preprint]{neurips_2026}              

\usepackage[utf8]{inputenc}
\usepackage[T1]{fontenc}
\usepackage{times}
\usepackage{microtype}

\usepackage{amsmath, amssymb, amsfonts, amsthm, mathtools}
\usepackage{bm}
\usepackage{nicefrac}

\usepackage{booktabs}
\usepackage{graphicx}
\usepackage{subcaption}
\usepackage{wrapfig}
\usepackage{caption}

\usepackage{algorithm}
\usepackage{algpseudocode}

\usepackage{enumitem}
\usepackage{xcolor}
\usepackage{soul}
\usepackage{hyperref}
\usepackage{url}
\usepackage{titletoc}

\usepackage{tikz}

\usepackage{amsmath,amsfonts,bm}

\def\1{\bm{1}}

\DeclareMathAlphabet{\mathsfit}{\encodingdefault}{\sfdefault}{m}{sl}
\SetMathAlphabet{\mathsfit}{bold}{\encodingdefault}{\sfdefault}{bx}{n}

\newcommand{\E}{\mathbb{E}}

\newcommand{\R}{\mathbb{R}}

\newcommand{\Cov}{\mathrm{Cov}}

\newcommand{\calF}{\mathcal{F}}

\newcommand{\calE}{\mathcal{E}}
\newcommand{\calM}{\mathcal{M}}

\graphicspath{{./figures/}}

\newcommand{\calG}{\mathcal{K}}
\newcommand{\calH}{\mathcal{H}}
\newcommand{\calX}{\mathcal{X}}
\newcommand{\xx}{\mathbf{x}}

\def\subbundleidx{d}
\def\ttD{\mathsf{D}}
\def\ttE{\mathsf{E}}

\newtheoremstyle{tightplain}%
  {1.5pt}{1.5pt}{\itshape}{}{\bfseries}{.}{0.5em}{}

\theoremstyle{tightplain}
\newtheorem{theorem}{Theorem}
\newtheorem{corollary}{Corollary}[theorem]
\newtheorem{lemma}{Lemma}
\newtheorem{proposition}{Proposition}
\newtheorem{definition}{Definition}
\newtheorem{remark}{Remark}
\newtheorem{example}{Example}

\newtheorem*{theorem*}{Theorem}
\newtheorem*{corollary*}{Corollary}
\newtheorem*{lemma*}{Lemma}
\newtheorem*{proposition*}{Proposition}
\newtheorem*{definition*}{Definition}

\title{\Large Consistent Geometric Deep Learning \\
  via Hilbert Bundles and Cellular Sheaves}

\author{
  Kartik Tandon$^{1,*,\dagger}$ \quad
  Julian Gould$^{2,*}$ \quad
  Tanishq Bhatia$^{3}$ \\[0.3em]
  \textbf{Francesca Dominici}$^{4}$ \quad
  \textbf{Alejandro Ribeiro}$^{1}$ \quad
  \textbf{Claudio Battiloro}$^{4,*,\dagger}$ \\[0.5em]
  {\small
    $^{1}$University of Pennsylvania \quad
    $^{2}$Sakana AI \quad
    $^{3}$Northeastern University \quad
    $^{4}$Harvard University} \\[0.3em]
  {\small $^{*}$Equal contribution} \\[0.1em]
  {\small $^{\dagger}$Corresponding authors:
    \texttt{ktandon@sas.upenn.edu, cbattiloro@hsph.harvard.edu}}
}

\begin{document}

\maketitle

\begin{abstract}
Modern deep learning architectures increasingly contend with sophisticated signals that are natively infinite-dimensional, such as time series, probability distributions, or operators, and are defined over irregular domains. Yet, a unified learning theory for these settings has been lacking. To start addressing this gap, we introduce a novel convolutional learning framework for possibly infinite-dimensional signals supported on a manifold. Namely, we use the connection Laplacian associated with a Hilbert bundle as a convolutional operator, and we derive filters and neural networks, dubbed as \textit{HilbNets}. We make HilbNets and, more generally, the convolution operation, implementable via a two-stage sampling procedure. First, we show that sampling the manifold induces a Hilbert Cellular Sheaf, a generalized graph structure with Hilbert feature spaces and edge-wise coupling rules, and we prove that its sheaf Laplacian converges in probability to the underlying connection Laplacian as the sampling density increases. Notably, this result is a generalization to the infinite-dimensional bundle setting of the Belkin \& Niyogi \cite{BELKIN20081289} convergence result for the graph Laplacian to the manifold Laplacian, a theoretical cornerstone of geometric learning methods. Second, we discretize the signals and prove that the discretized (implementable) HilbNets converge to the underlying continuous architectures and are transferable across different samplings of the same bundle, providing consistency for learning. Finally, we validate our framework on synthetic and real-world tasks. Overall, our results broaden the scope of geometric learning as a whole by lifting classical Laplacian-based frameworks to settings where the signal at each point lives in its own Hilbert space.
\end{abstract}

\vspace{-.4cm}\section{Introduction}\vspace{-.2cm}
Over the past few years, advances in deep learning have delivered state-of-the-art performance across many areas, driven by increasingly expressive architectures and corresponding gains in both theory and practice. A major contributor to this success, though not the only one, has been the rise of Convolutional Neural Networks (CNNs) \cite{lecun1998gradient}. CNNs have shown outstanding results in settings ranging from image recognition \cite{alexnet2012} to speech processing \cite{hamid2012cnnspeech}. At their core, CNNs rely on filters leveraging the regular (often metric) organization of common signal types, such as spatial grids. In contrast, many modern datasets live on irregular, non-Euclidean domains, including social networks for detection and recommendation \cite{aggarwal2020machine} or point clouds for shape segmentation \cite{xie2020linking}, to name only a few. Such structured data can be represented by richer mathematical objects, among which networks and manifolds are prominent. Motivated by this, the intuition behind CNNs has been generalized to graph convolutional neural networks (GCNs) \cite{scarselli2008graph, gama2018convolutional, kipf2016semi} and extended to many other settings,~e.g.~simplicial complexes \cite{battiloro2022san,bodnar2021weisfeiler,barbarossa2020topological}, cell complexes \cite{battiloro2022can,bodnarcwnet, papillon2025topotune}, order lattices \cite{riessmultidimensional}, and manifolds \cite{wang2021stability, cohen2019gauge, schonsheck2018parallel, battiloro2023tangent, cohen2019general}. Nevertheless, existing works do not address convolutional filtering of infinite-dimensional signals on manifolds, despite such data being ubiquitous in practice, from time series and spatiotemporal fields arising in sensing, robotics, and climate science to distributional and measure-valued representations common in modern learning systems \cite{hu2024amortizing}.

 To address this gap, we adopt a bundle viewpoint. Informally, a bundle $\mathcal{E}$ over a base manifold $\mathcal{M}$ is a consistent assignment to each point $x\in \mathcal{M}$ of a space $\mathcal{E}_x$, called a fiber.  A \emph{section} is a map that picks an element $S(x)\in \mathcal{E}_x$ at every point. In other words, signals supported on manifolds can be seen as sections, e.g., scalar manifold signals correspond to $\mathcal{E}_x\simeq\mathbb{R}$ \cite{wang2021stability}, or tangent bundle signals correspond to $\mathcal{E}_x\simeq T_x\mathcal M$ \cite{battiloro2023tangent}. In this work, we develop a convolutional learning framework operating over \textit{Hilbert bundles}, i.e., bundles whose fibers are (possibly) infinite-dimensional Hilbert spaces.  We design a bundle-theoretic convolutional learning framework and, to make it implementable, we draw the first rigorous connection between Hilbert bundles and \textit{Hilbert Cellular Sheaves},  generalized graph structures whose nodes and edges carry infinite-dimensional signals along with consistency rules.

\begin{figure}[t]
  \centering
   \includegraphics[width=1.0\linewidth]{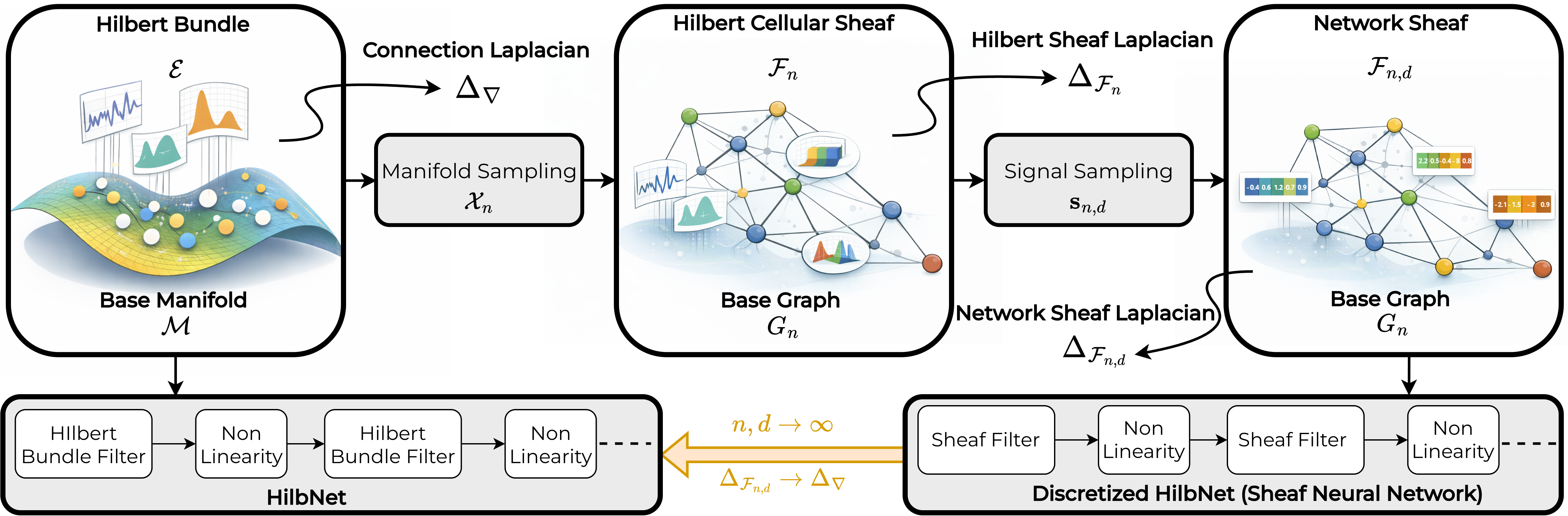}
    \caption{\textbf{Overview of the HilbNets framework.} A HilbNet is a convolutional neural network processing infinite-dimensional signals supported on $\mathcal{M}$ (e.g., time-series or distributions over curved domains). The convolutional operator is the connection Laplacian $\Delta_\nabla$. To make HilbNets implementable, we first sample $n$ points $\mathcal{X}_n$ from $\mathcal{M}$ to obtain a Hilbert Cellular Sheaf $\mathcal{F}_n$ with associated Hilbert Sheaf Laplacian $\Delta_{\mathcal{F}_n}$. We then take $d$ samples of the signals to get vectors $\mathbf{s}_{n,d} \in \mathbb{R}^{nd}$ living on a network sheaf $\mathcal{F}_{\mathcal{X}_{n,d}}$,  i.e., a generalized matrix-weighted graph, with associated network Sheaf Laplacian $\Delta_{\mathcal{F}_{n,d}}\in \mathbb{R}^{nd \times nd}$. Discretized HilbNets are then Sheaf Neural Networks that take as inputs $\mathbf{S}_{n,d}$ and $\Delta_{\mathcal{F}_{n,d}}$. We prove that Discretized HilbNets converge to the underlying HilbNet as the number of manifold and signal samples goes to infinity.}
  \label{fig:figure_abstract}
\end{figure}

\noindent\textbf{Related Works.} 
The connection between continuous domains, such as manifolds and bundles, and discrete structures, such as graphs and cellular sheaves, first emerged in pioneering investigations on the so-called manifold hypothesis. This hypothesis posits that, although data may live in a high-dimensional ambient space, they are effectively generated by sampling from one or several low-dimensional Riemannian manifolds \cite{fefferman2016testing}. The manifold hypothesis underpins several modern spectral graph methods, e.g., nonlinear dimensionality-reduction, clustering, interpretability, and learning algorithms that exploit latent geometric structures. The renowned work of Belkin and Niyogi \cite{BELKIN20081289} proved that, assuming access to a finite point cloud sampled from the underlying manifold, it is possible to build a weighted undirected graph whose Laplacian converges to the Laplace-Beltrami operator of the underlying manifold in probability as the number of samples goes to infinity.

The work in \cite{BELKIN20081289}, and related results, e.g., \cite{singer2012vdm,singer2017spectral}, have been used, directly or indirectly, to design  learning systems over manifolds and networks \cite{wang2022convolution,levie2019transferability, borovitskiy2020matern,mostowsky2024geometrickernels, battiloro2023tangent, borovitskiy2021vector}. Despite the diversity of such systems, these models all assume finite-dimensional fibers and therefore do not directly address learning with infinite-dimensional manifold signals. The main technical reason behind this gap is the lack of an extension of the convergence result in \cite{BELKIN20081289} to bundles with infinite-dimensional fibers.

A  line of related works of interest comes from cellular sheaf theory. Cellular sheaves are combinatorial instances of sheaves introduced in \cite{shepard1985cellular} and later rediscovered in \cite{curry2014sheaves}.  In \cite{bodnar2022sheafdiff,hansen2020sheafnn,barbero2022sheafnnconn,fiorini2025sheaves,duta2023sheaf,peng2026sheaf}, neural networks operating on finite-dimensional cellular sheaves over graphs, referred to as \textit{network sheaves}, are presented, generalizing graph neural networks by, intuitively, replacing scalar edge weights with learned or structured matrix weights. Recently, the works in \cite{battiloro2022tangent,battiloro2023tangent} showed that neural networks for tangent bundle signals can be implemented as certain sheaf neural networks operating on network sheaves built from manifold samples. For an extended treatment of related work, see Appendix~\ref{appsec:relwork}.

\noindent\textbf{Contribution.}
In this work, we first define a \textbf{\textit{convolution operation over a Hilbert bundle}} through its associated \textit{connection Laplacian}. This convolution extends Laplacian-based convolutions on tangent bundles \cite{battiloro2023tangent}, manifolds \cite{wang2022convolution}, and graphs \cite{shuman2013emerging,gama2020gcnmagazine}, as well as standard time convolutions. Using the Borel functional calculus, we then define \textbf{\textit{Hilbert bundle convolutional filters}} for infinite-dimensional manifold signals. These filters are general and expressive, and can be instantiated through suitable spectral responses. We then introduce \textbf{\textit{HilbNets}}, deep convolutional architectures whose layers stack Hilbert bundle filters and pointwise nonlinearities. HilbNets are continuous models and are therefore not directly implementable. To address this, we provide a principled discretization of the manifold domain by sampling points and showing that the induced structure is a \textit{Hilbert cellular sheaf} over an undirected graph. The corresponding sheaf Laplacian combines scalar edge weights, obtained from the sampled base manifold, with parallel transport maps associated with the bundle geometry or learned from data. We prove that this sheaf Laplacian converges in probability to the connection Laplacian as the sampling density increases, yielding the first extension of the classical convergence result of \cite{BELKIN20081289} to the infinite-dimensional bundle setting. We then discretize the signals themselves to obtain an implementable architecture, show that discretized HilbNets are novel instances of network sheaf neural networks, and prove that they converge to the corresponding continuous architectures as both the manifold and signal sampling densities increase. Moreover, we show that discretized HilbNets are \textit{transferable} across different samplings of the same underlying bundle, providing resolution consistency guarantees for learning. Finally, we validate HilbNets on a synthetic transport recovery task and on real-world traffic forecasting tasks, comparing them against baselines with different inductive biases in order to isolate the benefits of the bundle formulation. The potential impact of this work extends well beyond the definition of the HilbNet architecture. See Appendix~\ref{app:broader_impact} for a detailed discussion of broader impact and future directions, and Fig.~\ref{fig:figure_abstract} for an overview.

\vspace{-.4cm}\section{Preliminaries}\vspace{-.2cm}\label{sec:preliminaries}

\textbf{Signals on Manifolds.} Given a manifold $\mathcal{M}$, a vector-valued signal is a square-integrable function $S \in L^2(\mathcal{M},\mathbb{R}^n)$. Certain vector-valued signals on $\mathcal{M}$ may possess the richer structure of a vector field, i.e., they are \textit{sections} of the tangent bundle $T\mathcal{M}$ of $\mathcal{M}$ and thus elements of $L^2(\mathcal{M},T\mathcal{M})$. More generally, we may consider signals that are $L^2$-sections of an arbitrary \textit{bundle} $\mathcal{E}$. A bundle is called trivial when, for a generic fiber $\mathcal{V}$ , it can be written as a product $\mathcal{E}=\mathcal{M}\times \mathcal{V}$. In this setting, $L^2(\mathcal{M},\mathbb{R}^n)$ may be understood as the space of sections of the \textit{trivial bundle} $\mathcal{E} = \mathcal{M} \times \mathbb{R}^n$.
\begin{wrapfigure}{r}{0.45\linewidth} \includegraphics[width=.85\linewidth]{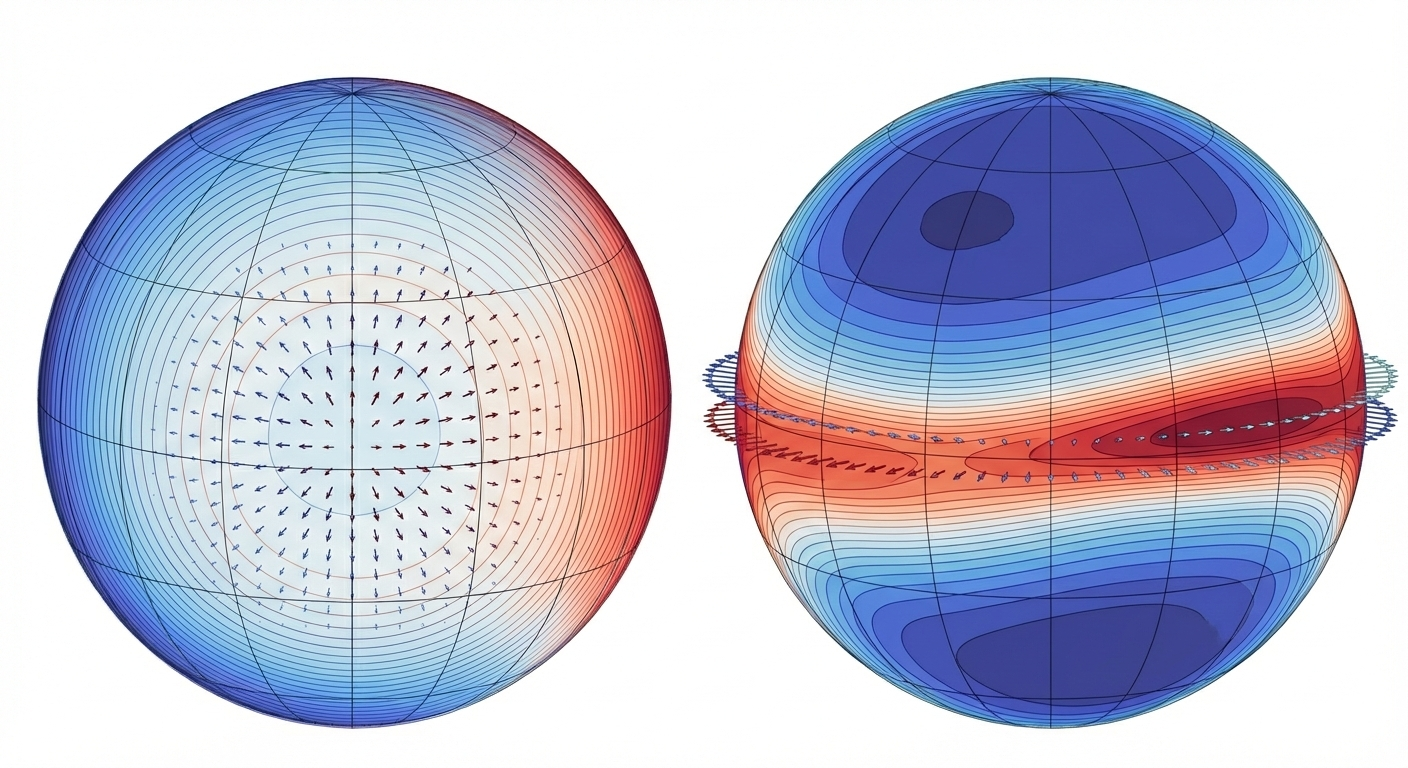}
 \centering
 \caption{Visualization of the effect of the choice of underlying connection for generating heat flows of vector fields on the sphere $\mathbb{S}^2$. Left: Generated by considering the standard Levi-Civita connection on $T\mathbb{S}^2$, and corresponding Laplacian $\Delta_\nabla$. Right: Generated by allowing a more general connection $\nabla$ that allows for torsion anisotropy and considering the corresponding connection Laplacian $\Delta_\nabla$.}\label{fig:connection}\vspace{-.1cm}
\end{wrapfigure}
Consider now the case where the signal is `infinite-dimensional', for instance, representing a time series recorded at each point $x \in \mathcal{M}$. While this is usually considered as a function $S: \mathcal{M} \times \mathbb{R} \to \mathbb{R}^n$, it may instead be more richly understood as a section of a $\textit{Hilbert bundle}$, i.e., a bundle whose fibers are Hilbert spaces. As we will see, Hilbert bundles provide a principled and versatile approach to incorporating structural properties of infinite-dimensional data. 
\begin{example} \label{example:chern-simons}
    In physics, Hilbert bundles often arise naturally when considering global geometric properties of quantum mechanical systems \cite{AxelrodDellaPietraWitten1991}.
\end{example}
\begin{example} \label{example:info-geo-bundle}
    In information geometry, the key objects of study are manifolds $\mathcal{M}$ given by the underlying parameters of some family of data distributions. This manifold is then equipped with a Riemannian structure by either the Otto-Wasserstein or Fisher-Rao metric, the latter of which locally recovers KL divergence. The proper analogue of the tangent bundle in this setting is  a Hilbert bundle  \cite{Malago2018}. 
\end{example}
\textbf{Convolution, Heat Equation, and Connection Laplacian.} 
Geometric signal processing and deep learning \cite{leus2023graph, bronstein2021geometric} traditionally aim to develop convolutional filters and neural networks designed to respect the underlying geometry of the signals of interest. The relevant convolutional operators can usually be realized as a \textit{connection Laplacian} $\Delta_\nabla: L^2(\mathcal{M},\mathcal{E}) \to L^2(\mathcal{M},\mathcal{E})$ operator realized from a connection $\nabla$. For instance, for the tangent bundle over the circle $T\mathbb{S}^1$, the eigenfunctions of $\Delta_\nabla$, with $\nabla$ the Levi-Civita connection, recover the usual Fourier basis. Similarly, the eigenfunctions of $\Delta_\nabla$ for the tangent bundle of the sphere $T\mathbb{S}^2$ recover spherical harmonics. Thus, convolutions with the connection Laplacian may be understood as generalized Fourier transforms in the spectral domain. In the spatial domain, it can be seen as performing a geometry-aware ‘local averaging’ of a signal over fibers. Formally, the connection Laplacian is the generator of the heat equation in $L^2(\mathcal{M},\mathcal{E})$, 
\begin{equation}
    \frac{\partial \mathbf{U}(x,t)}{\partial t} = -\Delta_\nabla \mathbf{U}(x,t), 
\end{equation} 
where $\mathbf{U}(x,t)$ is the distribution of heat at $\mathcal{E}_x$ for $x\in \mathcal{M}$ at time $t \in \mathbb{R}_+$.
A subtlety of note is that for non-Euclidean spaces, there is typically no canonical identification between fibers $\mathcal{E}_x$ and  $\mathcal{E}_y$. Intuitively, the connection $\nabla$ precisely encodes a globally coherent notion of transport between fibers. That is, inducing \textit{parallel transport} maps $P_{\gamma}: \mathcal{E}_{\gamma(0)} \to \mathcal{E}_{\gamma(1)}$ for a path $\gamma \subset \mathcal{M}$, allowing us to compare elements across fibers along this path. More formally, the connection is used to define a first-order ODE whose solution is given by parallel transport (see Appendix \ref{sec:connections}). The connection Laplacian is then the self-adjoint operator $\Delta_\nabla:=\nabla^*\nabla$, now more clearly understandable as a `local weighted average’ over fibers with `weights’ corresponding to our choice of parallel transport. A more rigorous introduction to  
the relevant mathematical background is provided in Appendix \ref{sec:mathematical-background}.  
\begin{example}
      As seen in Fig. \ref{fig:connection}, the choice of connection can be used to emphasize aspects of the geometry that may be relevant for a particular task. For instance, most PDE-based approaches to color-image regularization can be realized as heat equations for a suitable choice of connection \cite{Batard2011HeatVectorBundles}.
\end{example}
  
\vspace{-.4cm}\section{Hilbert Bundle Filters and Neural Networks}\vspace{-.2cm}

In this section, we develop a convolutional learning framework for infinite-dimensional data, such as time series or probability distributions, indexed by a manifold $\mathcal{M}$. Core objects  are \textit{Hilbert bundles}. 

\textbf{Hilbert Bundles.} Given a closed Riemannian manifold $\mathcal{M}$, a Hilbert bundle $\mathcal{E}$ over $\mathcal{M}$ is a bundle whose potentially infinite-dimensional fibers are separable Hilbert spaces over $\mathbb{R}$. The assumption of real, instead of complex, Hilbert spaces is not essential to our analysis, and is made only for the sake of exposition. As mentioned in Section~\ref{sec:preliminaries}, a \textit{Hilbert Bundle signal} is then an $L^2$-section $S: \mathcal{M} \to \mathcal{E}$. Integration of sections in this setting should be understood in the Bochner integral sense, a generalized notion of integration for functions whose values lie in a Hilbert space rather than in $\mathbb{R}^n$. In finite dimensions, it reduces to the standard Lebesgue integral. Given fibers $\mathcal{H}_x$ and $\mathcal{H}_y$ of the Hilbert bundle $\mathcal{E}$, we consider \textit{unitary} parallel transport operators $P_{x\to y}:\mathcal{H}_x \to \mathcal{H}_y$. As before, a globally compatible collection of such transport operators determines a connection $\nabla$, with the subtlety that derivatives of sections must now be understood in the Fréchet sense. Intuitively, Fréchet differentiability is the infinite-dimensional analogue of ordinary differentiability: it asks that a section admit a best linear approximation under small perturbations, but where the linear approximation acts between Hilbert spaces. We therefore refer to this construction as a \textit{Fréchet connection}, which recovers the usual notion of connection and covariant derivative when restricted to finite-dimensional bundles. As before, we obtain a self-adjoint operator $\Delta_\nabla := \nabla^*\nabla$ on $L^2(\mathcal{M},\mathcal{E})$. Unlike the finite-dimensional case, however, this operator need not be compact and thus need not possess a discrete spectrum. As such, care must be taken when adapting classical arguments that involve spectral properties of the Laplacian to the Hilbert-bundle setting. Formal definitions of Hilbert bundles and Fréchet connections are provided in Appendix~\ref{sec:mathematical-background}. For a triple $(\mathcal{M}, \mathcal{E}, \nabla)$, where $\nabla$ is a choice of Fréchet connection on the Hilbert bundle $\mathcal{E}$ over the manifold $\mathcal{M}$, we now wish to construct a general notion of a `filtering' operation using the connection Laplacian $\Delta_\nabla$. In finite-dimensional or compact settings, filters are often defined by applying a function directly to the eigenvalues of the Laplacian. In our setting, the appropriate analogue of eigenvalue-by-eigenvalue filtering is furnished by the \textit{Borel functional calculus}, which allows us to apply a filter to a self-adjoint operator by instead integrating over its spectral measure. See Appendix~\ref{sec:borel-calc} for details.
\begin{definition}[Hilbert bundle convolutional filter]
A \textbf{convolutional filter} is specified by a bounded compactly supported Borel function $g\in L_c^\infty(\mathbb{R})$. The filtering of a signal $S\in L^2(\mathcal{M},\mathcal{E})$ is then its convolution $\star_{\Delta_\nabla}$ with $g$ defined as $
        g \star_{\Delta_\nabla} S
        :=
        g(\Delta_\nabla)S, \textrm{ where } g(\Delta_\nabla):
        L^2(\mathcal{M},\mathcal{E})
        \to
        L^2(\mathcal{M},\mathcal{E})$
    is the bounded linear operator obtained by applying $g$ to $\Delta_\nabla$ through the Borel functional calculus.
\end{definition}
 In this sense, $g$ is the learnable frequency response, as in spectral graph neural filters \cite{gama2020gcnmagazine}, except we now use the spectral measure of the connection Laplacian acting on Hilbert bundle-valued signals.
\begin{definition}[Hilbert bundle convolutional neural network]\label{HilbNet-defn}
 Let  $(\mathcal{M},\mathcal{E},\nabla)$ be a Hilbert bundle. A Hilbert bundle convolutional neural network, or \textbf{HilbNet}, is specified by a filter bank $\mathcal{W}=\{g^{\ell}_{u,q}\}_{\ell,u,q}$ with $g^{\ell}_{u, q}\in L_c^\infty(\mathbb{R})$, and a Lipschitz continuous nonlinear activation $\sigma:\mathbb{R}\to\mathbb{R}$. Given input signals $S_1,..., S_{F_0}\in L^2(\mathcal{M},\mathcal{E})$, the $L$-layer network output is obtained by the recursion
    \begin{equation}
        S^{\ell+1}_u
        =
        \sigma\left(\sum_{q=1}^{F_\ell}
        g^\ell_{u,q}(\Delta_\nabla)S^\ell_q
        \right),
        \qquad
        \ell=0,\dots,L-1, \qquad  S^0_q=S_q,
    \end{equation}
where $\sigma$ is applied pointwise in each fiber. 
\end{definition}
We concisely denote a HilbNet with $\Omega(\mathcal{E},\Delta_\nabla,\mathcal{W},\sigma)$. Similarly to the finite-dimensional case, a nonlinear activation $\sigma: \mathbb{R} \to \mathbb{R}$ with $\sigma(0) = 0$ extends to an operator on the $L^2$ section by simply picking a basis and then applying $\sigma$ to each coordinate with respect to the chosen basis. It is straightforward to check that, for each $\ell \in \{0, \ldots, L\}$, the layer signal $S^\ell$ remains an $L^2$ section.

\vspace{-.4cm}\section{Discretized HilbNets via Cellular Sheaves}\vspace{-.2cm}

HilbNets are continuous architectures that cannot be implemented directly in practice. Moreover, we typically do not have access to the true bundle and connection structure, but only to a point cloud or graph sampled from the underlying manifold $\mathcal{M}$, together with samples of the signal at each point or node. In this section, we first analyze the \textit{Hilbert cellular sheaf} induced by spatial, i.e., manifold-level, sampling. We then further discretize the fibers, i.e., the signal domain itself, obtaining a finite rank \textit{network sheaf}. This two-stage sampling is the basis of our consistency theory presented in Section \ref{sec:convergence}. It allows us to prove that the fully discrete (thus, implementable) Laplacian and HilbNet converge to their infinite-dimensional counterparts, and hence that learning is consistent across scales.

\textbf{Manifold Sampling.} 
A generalized viewpoint to the theory of bundles is given by the language of \textit{sheaves}, mathematical structures initially introduced by Jean Leray while a prisoner of war \cite{Leray1946}. The \textit{functoriality} of sheaves lends them particularly well to the type of principled discretization of geometric structures that we are interested in. In particular, we consider a Hilbert-space valued version of \textit{cellular sheaves} on graphs, as introduced in \cite{gould25}. Intuitively, they can be understood as generalized graph structures with signals valued in Hilbert spaces along with edge-wise coupling rules. For a more thorough introduction to cellular sheaves, see Appendix~\ref{sec:cellular-sheaves}.

In this work, our primary interest is in Hilbert sheaves that represent discretizations of the structure of a Hilbert bundle over a manifold. In particular, we desire a spatial discretization such that we can recover an appropriate discrete analogueof the Hilbert bundle's connection Laplacian. Formally, given an iid random sample $\mathcal{X}_n \subset \mathcal{M}$ from the uniform distribution (see Def. \ref{def:uniform-distribution}), we have the following.

\begin{definition}[Hilbert Cellular Sheaf from a Hilbert Bundle]
\label{def:sheaf-from-bundle}
For a given Hilbert bundle $(\mathcal{M},\mathcal{E},\nabla)$ with sampled points $\mathcal{X}_n = \{x_1, \dots, x_n \} \subset \mathcal{M}$, fix a geodesic $\gamma_{ij}$ between $x_i$ and $x_j$, for all $i < j$. Further, let $m_{\gamma_{ij}}$ denote the midpoint of this geodesic. Consider the graph $G_{n} = (\mathcal{X}_n,E)$ with an undirected edge $e_{ij}$ between $x_i$ and $x_j$, for each $i < j$. The associated \textbf{Hilbert cellular sheaf} $\mathcal{F}_n^{t}$ on $G_n$ with bandwidth parameter $t$ is given by the following assignments:
    \begin{itemize}[leftmargin=*]
    \item The Hilbert space $\mathcal{F}_n^t(x_i) := \mathcal{E}_{x_i}$ for each $x_i \in \mathcal{X}_n$, referred to as the node stalk over $x_i \in \mathcal{X}_n$.
    \item The Hilbert space $\mathcal{F}_n^t(e_{ij}) := \mathcal{E}_{m_{\gamma_{ij}}}$ for each $e_{ij} \in E$, referred to as the edge stalk over $e_{i,j} \in E$.
    \item For each edge $e_{ij} \in E$ with bounding vertices $x_i,x_j$, a pair of bounded linear restriction maps
    \begin{align}
    &(\mathcal{F}_n^t)_{x_i \leq e_{ij}}
    :=
    \sqrt{k_{ij}^t} \,
    P_{x_i \to m_{\gamma_{ij}}}:
    \mathcal{F}_n^t(x_i)
    \to
    \mathcal{F}_n^t(e_{ij}), \nonumber \\
    &(\mathcal{F}_n^t)_{x_j \leq e_{ij}}
    :=
    \sqrt{k_{ij}^t} \,
    P_{x_j \to m_{\gamma_{ij}}}:
    \mathcal{F}_n^t(x_j)
    \to
    \mathcal{F}_n^t(e_{ij}),
    \end{align}
    where $k_{ij}^t = e^{-d_{\calM}(x_i,x_j)^2/4t}$, with $d_{\calM}$ the geodesic distance on $\calM$, and $P_{x_i \to m_{\gamma_{ij}}}$ denotes the unitary parallel transport map on $\mathcal{E}$ between $x_i$ and $m_{\gamma_{ij}}$. 
    \end{itemize}
\end{definition}
For the sake of exposition, the choice of sample and corresponding geodesic paths will often be suppressed, so our parallel transports will be denoted as $P_{x_i \to m_{ij}}$. Also, note that for $n < m$ and $\mathcal{X}_n \subset \mathcal{X}_m$, we assume each additional point is again sampled iid\ from the uniform distribution on $\mathcal{M}$. For the categorically-minded reader, we remark that our sheaf is constructed such that refining our sample then leads to a \textit{subfunctor} $\mathcal{F}^t_{n} \subset \mathcal{F}^t_{m}$. For the Hilbert sheaf $\mathcal{F}_n^t$ on the graph $G_n=(\mathcal{X}_n,E)$, a \textit{signal} is an element of the Hilbert space
\begin{equation}
\label{defn:Hilbert-Global-Section}
C^0(\mathcal{F}_n^t;G_n)
:=
\bigoplus_{x_i \in \mathcal{X}_n}
\mathcal{F}_n^t(x_i).
\end{equation}
\begin{example}
If $\mathcal{F}_n^t$ encodes univariate spatiotemporal data, each node stalk can be chosen as $\mathcal{F}_n^t(x_i)=L^2(\mathbb{R})$. Then
$
C^0(\mathcal{F}_n^t;G_n)
=
\bigoplus_{x_i\in\mathcal{X}_n}
L^2(\mathbb{R}),
$
so a signal assigns a full time series to every node, recovering the usual notion of a node-time graph signal.
\end{example}

\begin{example}
In one dimension, a probability distribution $\mu\in\mathcal{P}_2(\mathbb{R})$ can be represented by its quantile function $Q_\mu\in L^2([0,1])$, and the Wasserstein distance becomes the $L^2$ distance between quantiles \cite{villani2003topics}. Thus, by choosing node stalks
$
\mathcal{F}_n^t(x_i)=L^2([0,1]),
$
a signal assigns a full probability distribution to each graph node, recovering the distributional graph-signal setting of \cite{Ji2025GraphDistributionalSignals,zhao2026graphdistributionvaluedsignalswasserstein}.
\end{example}

Finally, analogous to the construction of the connection Laplacian $\Delta_\nabla$, we may construct the Hilbert sheaf Laplacian. Further details, such as self-adjointness, are discussed in Appendix~\ref{sec:cellular-sheaves}.

\begin{definition}[Hilbert Sheaf Laplacian]
\label{Hilbert-Sheaf-Laplacian}
Let $\mathcal{F}_n^t$ be the Hilbert sheaf on the graph $G_n=(\mathcal{X}_n,E)$ induced by Def,~\ref{def:sheaf-from-bundle}. Fix an orientation for each edge $e\in E$. The \textbf{Hilbert sheaf Laplacian} is the bounded linear operator
\begin{equation}
\Delta_{\mathcal{F}_n^t}:
C^0(\mathcal{F}_n^t;G_n)
\to
C^0(\mathcal{F}_n^t;G_n)
\end{equation}
defined, for a signal $S\in C^0(\mathcal{F}_n^t;G_n)$ and at a node $x_i\in\mathcal{X}_n$, by
\begin{equation}
(\Delta_{\mathcal{F}_n^t}S)_{x_i}
=
\sum_{\substack{e\in E:\\ e=\{x_i,x_j\}}}
(\mathcal{F}_n^t)_{x_i\leq e}^{*}
\left(
(\mathcal{F}_n^t)_{x_i\leq e}S_{x_i}
-
(\mathcal{F}_n^t)_{x_j\leq e}S_{x_j}
\right),
\end{equation}
where $x_j$ denotes the other endpoint of $e$, and $(\mathcal{F}_n^t)_{x_i\leq e}^{*}$ is the adjoint of the restriction map $(\mathcal{F}_n^t)_{x_i\leq e}$.
\end{definition}
Intuitively, $\Delta_{\mathcal{F}_n^t}$ measures how much a signal fails to be locally consistent across edges: before comparing $S_{x_i}$ and $S_{x_j}$, both values are mapped into the common edge stalk $\mathcal{F}_n^t(e)$ by the restriction maps. Thus, it is a broad generalization of a graph Laplacian, with restriction maps replacing scalar edge weights. The Hilbert sheaf Laplacian $\Delta_{\mathcal{F}_n^t}$ is a self-adjoint bounded linear operator. Once the manifold is sampled and the induced sheaf Laplacian is computed, space-discretized HilbNets, which are still not implementable due to the infinite-dimensional signals, are simply given by Def,~\ref{HilbNet-defn} with the connection Laplacian of $\mathcal{E}$ replaced by the sheaf Laplacian of $\mathcal{F}_n^t$, i.e., by $\Omega(\mathcal{F}_n^t,\Delta_{\mathcal{F}_n^t},\mathcal{W},\sigma)$.

\textbf{Signal Sampling.} Hilbert cellular sheaves are the structures that arise when we sample our base manifold but faithfully record the potentially infinite-dimensional signal in each fiber. In practice, we typically only have access to a sampled or compressed version of the signal as well. For instance, when considering a timeseries $S \in L^2(\mathbb{R})$, we may use the orthogonal Fourier basis $\overline{\{e^{ik\theta}\}}_{k \in \mathbb{Z}} = L^2(\mathbb{R})$ and then record a compressed representation with respect to this basis i.e. $[\langle S, e^{-id\theta}\rangle, \dots, \langle S, e^{id\theta}\rangle] $ for some $d$. We can consider fiber-wise orthogonal projections with respect to any chosen basis in the Hilbert bundle setting as a principled approach to discretizing Hilbert bundle signals.
\begin{proposition}\label{prop:signal_discretization}
    Let  $(\mathcal{M},\mathcal{E},\nabla)$ be a Hilbert bundle, with strictly infinite-dimensional generic Hilbert-space fiber $\mathcal{H}$. Fix an orthogonal basis $\mathcal{B}=\{e_1, e_2, ...\} $ of $\mathcal{H}$ and let $\mathcal{H}_d:= \mathrm{span}(e_1, e_2, ..., e_d) $. Then there exists a smooth map of bundles 
    \begin{equation}
   \Pi_d: \mathcal{E} \to \mathcal{E}_d
   \end{equation}
   where $\mathcal{E}_d $ is a $d$-dimensional vector bundle with generic fiber $\mathcal{H}_d$ and at each $x\in \mathcal{M}$, $\left .\Pi_d\right |_{\calE_x}: \mathcal{E}_x \to \mathcal{E}_{d,x}$ recovers the usual orthogonal projection map. See Appendix \ref{app:proofs:lemmas-finite-rank} for details. 
\end{proposition}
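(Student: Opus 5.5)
The plan is to build $\mathcal{E}_d$ and $\Pi_d$ from a local trivialization of $\mathcal{E}$, and to handle the one real issue: compatibility with the transition functions.

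\textbf{Step 1 (local model).} Cover $\mathcal{M}$ by trivializing charts $\{U_\alpha\}$ with Hilbert bundle trivializations $\phi_\alpha:\mathcal{E}|_{U_\alpha}\to U_\alpha\times\mathcal{H}$, with smooth transition maps $g_{\alpha\beta}:U_\alpha\cap U_\beta\to U(\mathcal{H})$ into the unitary group. The reduction to unitary transitions uses Gram--Schmidt on a local frame, or the unitary parallel transport along radial geodesics in normal coordinate balls. On each chart, define $\Pi_d^\alpha:=\phi_\alpha^{-1}\circ(\mathrm{id}\times p_d)$, where $p_d:\mathcal{H}\to\mathcal{H}_d$ is the orthogonal projection onto $\mathrm{span}(e_1,\dots,e_d)$.

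\textbf{Step 2 (the obstacle).} These local projections glue only if the transitions preserve $\mathcal{H}_d$, that is, $g_{\alpha\beta}(x)\mathcal{H}_d=\mathcal{H}_d$. For an arbitrary bundle this fails, and this is the step I expect to be hardest. I would resolve it by \emph{choosing} the bundle structure rather than taking it as given.

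\textbf{Step 3 (global frame).} Since $\mathcal{H}$ is separable and infinite-dimensional, Kuiper's theorem says $U(\mathcal{H})$ is contractible. Hence every Hilbert bundle with this fiber over the manifold $\mathcal{M}$ is smoothly trivial. Choose a global smooth unitary trivialization $\Phi:\mathcal{E}\to\mathcal{M}\times\mathcal{H}$. This gives a global smooth orthonormal frame $e_k(x):=\Phi^{-1}(x,e_k)$. Normalize $\mathcal{B}$ first if it is only orthogonal.

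\textbf{Step 4 (construction).} Set $\mathcal{E}_d:=\Phi^{-1}(\mathcal{M}\times\mathcal{H}_d)=\bigsqcup_x\mathrm{span}(e_1(x),\dots,e_d(x))$. This is a smooth rank-$d$ subbundle with generic fiber $\mathcal{H}_d$. Define $\Pi_d:=\Phi^{-1}\circ(\mathrm{id}\times p_d)\circ\Phi$, so that fiberwise
\[
\Pi_d|_{\mathcal{E}_x}(v)=\sum_{k=1}^d\langle v,e_k(x)\rangle\,e_k(x).
\]
This is smooth as a composition of smooth maps, since $p_d$ is bounded linear. Because $\Phi_x$ is unitary, $\Pi_d|_{\mathcal{E}_x}$ is the orthogonal projection of $\mathcal{E}_x$ onto $\mathcal{E}_{d,x}$. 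It is also idempotent and self-adjoint, which confirms the claim.

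\textbf{Remark on the connection.} $\Pi_d$ need not commute with $\nabla$, and the statement does not ask for it. The compressed connection $\Pi_d\nabla$ on $\mathcal{E}_d$ would be treated separately in the convergence analysis.
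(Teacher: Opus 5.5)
Your proposal is correct and follows essentially the paper's argument. The paper likewise uses Kuiper's theorem to get a global smooth trivialization $\Phi:\mathcal{E}\to\mathcal{M}\times\mathcal{H}$, sets $\mathcal{E}_d:=\Phi^{-1}(\mathcal{M}\times\mathcal{H}_d)$ (the finite-rank approximating-sequence lemma), and takes $\Pi_d$ to be the fiberwise orthogonal projection onto this subbundle; the only difference is that you make $\Phi$ unitary so that $\Pi_d=\Phi^{-1}\circ(\mathrm{id}\times p_d)\circ\Phi$ is visibly smooth and orthogonal, whereas the paper allows any smooth trivialization and projects orthogonally onto the pulled-back subbundle (both routes rely on the same unelaborated upgrade from topological to smooth triviality).
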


Applying Proposition \ref{prop:signal_discretization} to $(\mathcal{M},\mathcal{E},\nabla)$, we obtain $(\mathcal{M},\mathcal{E}_d,\nabla_d)$, to which we may apply the spatial discretization of Def.\ref{def:sheaf-from-bundle} to construct the cellular sheaf $\mathcal{F}_{n,d}^t$ with $d$-dimensional stalks. We refer to $\mathcal{F}_{n,d}^t$ as a \textit{network sheaf}. The signals on this sheaf are then sampled Hilbert bundle signals, i.e., we can discretize $S \in L^2(\mathcal{M}, \mathcal{E})$  as a $nd$-dimensional vector $\mathbf{s}_{n,d}:=\Pi_dS  \in C^0(\mathcal{F}_{n,d}^t; G_n) \subseteq \mathbb{R}^{nd}$, stacking the $d$-dimensional orthogonal projections over the $n$ sampled locations, with respect to the chosen basis $\mathcal{B}$. In this case, the restriction maps can be written as matrices, thus the sheaf Laplacian becomes a block matrix
$\Delta_{\mathcal{F}_{n,d}^t}\in\mathbb{R}^{nd\times nd}$ whose $(i,j)$-block maps the discretized stalk over $x_j$ to the discretized stalk over $x_i$ and is given by
\begin{equation}\label{eq:discrete_sheaf_laplacian}
(\Delta_{\mathcal{F}_{n,d}^t})_{ij}
=
\begin{cases}
\displaystyle
\sum_{r:\, e_{ir}\in E}
k_{ir}^t I_d,
& i=j, \\[1.2em]
\displaystyle
-P_{x_j\to x_i}^{(d,e_{ij})},
& i\neq j \text{ and } e_{ij}\in E, \\[0.5em]
0,
& \text{otherwise,}
\end{cases}
\end{equation}
where $P_{x_j\to x_i}^{(d,e_{ij})}:=k_{ij}^t
\left(P_{x_i\to m_{ij}}^{(d)}\right)^*
P_{x_j\to m_{ij}}^{(d)}$, and $P_{x_j\to m_{ij}}^{(d)}$ denotes the restriction of the parallel transport map from $\mathcal{E}_{x_j}$ to $\mathcal{E}_{m_{ij}}$ to the corresponding $d$-dimensional subbundles in the image of $\Pi_d$. We may thus use this Laplacian to build an implementable sheaf convolutional architecture as follows. 
\begin{definition}[$(n,d)$-Hilbert bundle convolutional neural network] \label{def: discretized-HilbNets}  Let  $(\mathcal{M},\mathcal{E},\nabla)$ be a Hilbert bundle, with generic Hilbert-space fiber $\mathcal{H}$ and corresponding basis $\mathcal{B}$. A $(n,d)$-Hilbert bundle convolutional neural network, or $(n,d)$\textbf{-HilbNet}, is specified by a filter bank $\mathcal{W}=\{g^{\ell}_{u,q}\}_{\ell,u,q}$ with $g^{\ell}_{u, q}\in L_c^\infty(\mathbb{R})$, a Lipschitz continuous nonlinear activation $\sigma:\mathbb{R}\to\mathbb{R}$,
the choice of a $d$-dimensional subbasis of $\mathcal{B}$ and sample $\mathcal{X}_n \subset \mathcal{M}$. Given input sampled signals $\mathbf{s}_{n,d,1},..., \mathbf{s}_{n,d,F_0}\in C^0(\mathcal{F}_{n,d}^t; G_n) $, the $L$-layer network output is obtained by the recursion
    \begin{equation}
        \mathbf{s}_{n,d,u}^{\ell+1}
        =
        \sigma\left(\sum_{q=1}^{F_\ell}
        g^\ell_{u,q}(\Delta_{\mathcal{F}_{n,d}^t})\mathbf{s}_{n,d,q}^\ell
        \right),
        \qquad
        \ell=0,\dots,L-1, \qquad  \mathbf{s}_{n,d,q}^0=\mathbf{s}_{n,d,q}.
    \end{equation}
\end{definition}
Discretized HilbNets are fully implementable and can be compactly written using Def.~\ref{HilbNet-defn} with the connection Laplacian of $\mathcal{E}$ replaced by the sheaf Laplacian of $\mathcal{F}_{n,d}^t$, i.e., by $\Omega( \mathcal{F}_{n,d}^t,\Delta_{\mathcal{F}_{n,d}^t},\mathcal{W},\sigma)$.

\begin{example} In the case that we consider our filter bank $\mathcal{W}$ to consist of order $K$ polynomials, the $(n,d)$-HilbNet can be written as a novel variant of sheaf neural networks \cite{hansen2020sheafnn,battiloro2023tangent} given by
\begin{align} \label{eqn:discretized-hilbnet}
    &\mathbf{S}_{n,d}^{\ell+1}=\sigma\left(\sum_{k=0}^{K-1}(\Delta_{\mathcal{F}_{n,d}^t})^k \mathbf{S}^{\ell}_{n,d}\mathbf{W}_{\ell, k} \right) \in \mathbb{R}^{n d}.
\end{align}

where the matrices $\mathbf{S}^\ell_{n,d} \in \mathbb{R}^{nd \times F_\ell}$, and $\{\mathbf{W}_{\ell,k}\}_k$, with  $\mathbf{W}_{\ell,k}\in \mathbb{R}^{F_\ell \times F_{\ell+1}}$ collect the sampled signals and the learnable filter weights at each layer, respectively.
\end{example}

\vspace{-.4cm}\section{Theoretical Convergence Guarantees}\vspace{-.2cm}\label{sec:convergence}

Our main result may be understood as a far-reaching generalization of the convergence result of \citet{BELKIN20081289}. Consider a random sample $\mathcal{X}_n \subset \mathcal{M}$ and the corresponding geometric graph $G_{n} = (\mathcal{X}_n,E)$. It is established in \cite{BELKIN20081289} that as sampling density increases, the weighted graph Laplacian converges to the manifold Laplace-Beltrami operator in probability. We analogously show that the Hilbert sheaf Laplacian $\Delta_{\mathcal{F}_n}$ over $G_n$ converges to $\Delta_\nabla$, thus recovering the results of \cite{BELKIN20081289} as the special case $\mathcal{E}=\mathcal{M} \times \mathbb{R}$. Our proof, presented in Appendix \ref{sec:appendix-proofs}, is inspired by the strategy of \cite{BELKIN20081289} but with the necessary non-trivial modifications to accommodate the simultaneous generalization to cellular sheaves instead of graphs and to infinite-dimensional Hilbert-spaces. In order to state our results, we require the following intermediary operator. 

\begin{definition} (Point-Cloud Extension of Sheaf Laplacian) \label{defn:point-cloud-extension-main} Let $(\mathcal{M}, \mathcal{E}, \nabla)$  be a Hilbert bundle and consider a sample $\calX_n \subset \mathcal{M}$. Then the corresponding Hilbert sheaf Laplacian $\Delta_{\mathcal{F}^t_n}$ may be extended to the \textbf{point-cloud Laplacian}  $\hat{\Delta}_{\calF^t_{n}}$, an operator on $L^2(\mathcal{M}, \mathcal{E})$ via \begin{equation}(\hat{\Delta}_{\mathcal{F}^t_n}S)(x)= \frac{1}{n} \sum_j e^{-d_{\calM}(x,x_j)^2/4t}\big ( S(x) - P_{x_j \to x}S(x_j)\big ) 
\end{equation}
\end{definition}
As such, we are able to consider the sheaf-level and bundle-level Laplacians as operators on the same space through this extension. In this setting, we then have the following convergence result.

\begin{theorem} (Convergence of Hilbert Sheaf Laplacian) \label{thm: main-thm} Let $\mathcal{M}$ be a $m$-dimensional closed Riemannian manifold. Further, let $(\mathcal{M}, \mathcal{E}, \nabla)$  be a Hilbert bundle and associated connection Laplacian $\Delta_\nabla$. Fix a section $S \in  C^3(\mathcal{M},\mathcal{E})$.  Consider a random sample $\mathcal{X}_n = \{x_1, x_2, \cdots, x_n\} \subset \mathcal{M}$. Let $\mathcal{F}^t_{n}$ be the induced Hilbert cellular sheaf with bandwidth $t$. Then we have, for any $x \in \calM$,
 \begin{equation} \label{thm: convergence in probability} \tag{A}
\lim _{n \rightarrow \infty} \frac{1}{t_n\left(4 \pi t_n\right)^{\frac{m}{2}}} {\hat{\Delta}}_{\mathcal{F}^{t_n}_n}S(x)=\frac{1}{\operatorname{vol}(\mathcal{M})} \Delta_\nabla S(x) \quad \text{in probability,}
 \end{equation}
 with bandwidth $t_n = n^{-\frac{1}{m + 2 + \alpha}}$, $\alpha > 0$. Further, if $S \in C^4(\mathcal{M},\mathcal{E})$, we have 
\begin{equation}  
\label{thm: L2-upgrade} \tag{B}
\lim _{n \rightarrow \infty} \E_{\mathcal{X}}\left [  \left \| \frac{1}{t_n\left(4 \pi t_n\right)^{\frac{m}{2}}} \hat{\Delta}_{\mathcal{F}^{t_n}_n} S(x) -\frac{1}{\operatorname{vol}(\mathcal{M})} \Delta_\nabla S(x) \right \|_{L^2}^2 \right ] = 0\quad \text{in  $L^2$-norm}.
\end{equation}
\end{theorem}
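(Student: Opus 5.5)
We follow the two-step strategy of \cite{BELKIN20081289}: a probabilistic step that replaces the empirical average by its expectation, and a deterministic step that expands the expected operator as $t\to 0$. Fix $x\in\calM$ and define the random vectors $Y_j := e^{-d_{\calM}(x,x_j)^2/4t}\big(S(x)-P_{x_j\to x}S(x_j)\big)\in\mathcal{E}_x$. These are i.i.d.\ with values in the single separable Hilbert space $\mathcal{E}_x$, and $\hat{\Delta}_{\calF^t_n}S(x)=\frac1n\sum_j Y_j$. Their expectation is the continuum operator
\begin{equation*}
L^tS(x):=\frac{1}{\operatorname{vol}(\calM)}\int_\calM e^{-d_\calM(x,y)^2/4t}\big(S(x)-P_{y\to x}S(y)\big)\,d\mathrm{vol}(y),
\end{equation*}
understood as a Bochner integral. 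The proof therefore splits as
\begin{equation*}
\frac{\hat{\Delta}_{\calF^t_n}S(x)}{t(4\pi t)^{m/2}}-\frac{\Delta_\nabla S(x)}{\operatorname{vol}(\calM)}=\underbrace{\frac{\hat{\Delta}_{\calF^t_n}S(x)-L^tS(x)}{t(4\pi t)^{m/2}}}_{\text{variance}}+\underbrace{\frac{L^tS(x)}{t(4\pi t)^{m/2}}-\frac{\Delta_\nabla S(x)}{\operatorname{vol}(\calM)}}_{\text{bias}} .
\end{equation*}

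\textbf{Variance term.} Scalar Hoeffding is unavailable here, so we use the Hilbert-space identity $\E\|\frac1n\sum_j(Y_j-\E Y_j)\|^2=\frac1n\E\|Y_1-\E Y_1\|^2$, which holds because cross terms vanish by independence, followed by Chebyshev (a Pinelis-type Hilbert Hoeffding bound would also work). Unitarity of the transports and smoothness of $S$ give $\|S(x)-P_{y\to x}S(y)\|\le C\,d_\calM(x,y)$. Hence
\begin{equation*}
\E\|Y_1\|^2\lesssim \int e^{-d^2/2t}d^2 \lesssim t^{1+m/2}.
\end{equation*}
The normalized variance is then $\lesssim \frac{1}{n\,t^{1+m/2}}$. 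This tends to $0$ comfortably when $t_n=n^{-1/(m+2+\alpha)}$, since $n t_n^{(m+2)/2}\to\infty$.

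\textbf{Bias term.} We work in normal coordinates $y=\exp_x(v)$ on a ball of radius $t^{1/2-\epsilon}$; outside it the Gaussian weight is $O(e^{-t^{-2\epsilon}/4})$, which is negligible against $t^{1+m/2}$. Inside the ball we express the section in a parallel (radial-gauge) frame, so that $\tilde S(v):=P_{\exp_x v\to x}S(\exp_x v)$ is a $C^3$ map $T_x\calM\to\mathcal{E}_x$. Taylor's theorem in the Fréchet sense gives
\begin{equation*}
\tilde S(v)=S(x)+\nabla_vS(x)+\tfrac12\nabla^2_{v,v}S(x)+O(|v|^3).
\end{equation*}
We also need the identification $\frac{d^2}{ds^2}\tilde S(sv)|_0=\nabla^2_{v,v}S(x)$, which holds because $s\mapsto \exp_x(sv)$ is a geodesic and the transport is along that same geodesic. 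Combining this with the volume expansion $\det(g)^{1/2}=1+O(|v|^2)$ and $d_\calM(x,y)=|v|$, the Gaussian moments kill the odd first-order term. The second moment yields $-t(4\pi t)^{m/2}\operatorname{tr}\nabla^2S(x)=t(4\pi t)^{m/2}\Delta_\nabla S(x)$. The remaining errors are $O(t^{3/2})$ relative to the leading term, with the $C^3$ norm controlling the remainder uniformly.

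\textbf{Main obstacle.} The hardest step is this bias expansion. Two points need care. First, we must verify that the paper's midpoint-based transport $P_{x_j\to x}$ (composition through $m_{ij}$ along the chosen geodesic) agrees with transport along the radial geodesic; uniqueness of geodesics inside the injectivity radius handles this. Second, in infinite dimensions we cannot use compactness or eigenfunctions, so every estimate must be pointwise via Bochner integrals and Fréchet Taylor bounds. We will also check that the trace of the Hessian coincides with $\nabla^*\nabla$ as defined in Appendix~\ref{sec:mathematical-background}.

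\textbf{Part (B).} The same two bounds are made uniform in $x$. The variance bound $\frac{C}{n t^{1+m/2}}$ depends only on $\|S\|_{C^1}$, and the bias remainder is uniform over closed $\calM$ once $S\in C^4$ (which gives uniform third-order Taylor control, including derivatives of the frame). Integrating over $x$ by Fubini then gives expected squared $L^2$ error $\lesssim \frac{1}{nt_n^{1+m/2}}+t_n\to0$.
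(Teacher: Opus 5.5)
Your proposal is correct and follows essentially the same route as the paper. The paper also splits the error into a variance term (empirical average versus its Bochner expectation $\hat{\Delta}^t$) and a bias term, handles the bias by a Taylor expansion of the parallel-transported section in geodesic normal coordinates against Gaussian moments, and obtains (B) from uniform bias and variance bounds integrated over $x$. The only differences are technical refinements: you use Chebyshev with a Lipschitz-sharpened second moment, giving variance $\lesssim 1/(n t^{1+m/2})$, where the paper uses a Pinelis--Hoeffding bound and a crude $2K$ estimate; and your radial-gauge identity $\frac{d^2}{ds^2}\tilde S(sv)|_{0}=\nabla^2_{v,v}S(x)$ avoids the antisymmetric curvature term that the paper has to cancel by hand.
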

Our framework may be seen as concurrently generalizing the convergence results for the weighted graph Laplacian of \cite{BELKIN20081289} as well as the graph connection Laplacian of \cite{singer2017spectral} to  
allow for arbitrary bundles, with potentially infinite-dimensional fibers, equipped with an arbitrary choice of connection. Such convergence results have previously served as the basis of transferability and robustness results in geometric deep learning \cite{wang2022stability, 10508628, levie2019transferability}, as well as to justify the development of numerous Laplacian-based manifold learning techniques \cite{belkin2001laplacian, coifman2006diffusion, yang2024spherical,ValletL08,Rustamov07}. We may likewise develop generalizations of these results for implementable sheaf Laplacians by discretizing in the signal-domain.

\textbf{Finite Rank Convergence.}
Consider a signal $S$ that has been sampled to $\mathbf{s}_{n,d}$ as per Proposition \ref{prop:signal_discretization}. We then have the following theoretical guarantee, which formalizes the intuitive notion that the fully discretized sheaf Laplacian converges to true connection Laplacian as we take an increasingly refined sample of both the underlying manifold and the signal. 

\begin{theorem}[Finite-Rank Approximation]\label{thm: diagonal finite rank convergence}
 Consider the setting of Theorem \ref{thm: main-thm} with a section $S\in C^4(\calM,\calE)$, for $\calE$ a strictly infinite-dimensional Hilbert bundle. Then there exists a sequence of finite rank approximating sheaves $\mathcal{F}^{t_n}_{n,d_n}$ such that
\begin{equation}
 \lim_{n \to \infty} \E_{\mathcal{X}} \left [ 
    \left \| \frac{1}{t_n (4 \pi t_n)^{m/2}} \hat{\Delta}_{\mathcal{F}^{t_n}_{n,\subbundleidx_n}} \mathbf{s}_{n,d_n} - \frac{1}{\mathrm{vol}(\calM)} \Delta_\nabla S \right \|_{L^2}^2 
    \right ] = 0 \quad \text{in  $L^2$-norm},
\end{equation}
    with bandwidth $t_n = n^{-\frac{1}{m + 2 + \alpha}}$, $\alpha > 0$.
\end{theorem}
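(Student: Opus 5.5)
We will prove the statement by a triangle inequality that splits the error into a spatial-sampling part, already controlled by Theorem~\ref{thm: main-thm}(B), and a fiber-truncation part. The truncation part we then make small by choosing the truncation level $d_n$ to grow fast enough relative to $n$ (a diagonal argument). Write $c_n := t_n(4\pi t_n)^{m/2}$. Interpret $\hat{\Delta}_{\mathcal{F}^{t_n}_{n,d}}\mathbf{s}_{n,d}$ as the point-cloud extension of Def.~\ref{defn:point-cloud-extension-main} built from the subbundle $\mathcal{E}_d$ with connection $\nabla_d$. Concretely, it acts on the projected section $\Pi_d S$ through the truncated transports $P^{(d)}_{x_j\to x} := \Pi_d P_{x_j\to x}|_{\mathcal{E}_{d,x_j}}$. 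We then bound
\begin{equation*}
\Big\| \tfrac{1}{c_n}\hat{\Delta}_{\mathcal{F}^{t_n}_{n,d}}\Pi_d S - \tfrac{1}{\mathrm{vol}(\mathcal{M})}\Delta_\nabla S\Big\|
\le \tfrac{1}{c_n}\big\|\hat{\Delta}_{\mathcal{F}^{t_n}_{n,d}}\Pi_d S - \hat{\Delta}_{\mathcal{F}^{t_n}_{n}} S\big\| + \Big\|\tfrac{1}{c_n}\hat{\Delta}_{\mathcal{F}^{t_n}_{n}} S - \tfrac{1}{\mathrm{vol}(\mathcal{M})}\Delta_\nabla S\Big\|.
\end{equation*}
After squaring and taking $\E_{\mathcal{X}}$, the second term tends to zero by Theorem~\ref{thm: main-thm}(B), because $S\in C^4$.

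For the first term, at each $x$ we expand
\begin{equation*}
(\hat\Delta_{n,d}\Pi_d S - \hat\Delta_n S)(x) = \tfrac1n\sum_j k^{t_n}(x,x_j)\Big[(\Pi_d S(x)-S(x)) - \big(\Pi_d P_{x_j\to x}\Pi_d S(x_j) - P_{x_j\to x}S(x_j)\big)\Big].
\end{equation*}
The bracket is bounded by a tail quantity $\varepsilon_d := \sup_{x,y}\big(\|(I-\Pi_d)S(x)\| + \|(I-\Pi_d)P_{y\to x}S(y)\| + \|P_{y\to x}(I-\Pi_d)S(y)\|\big)$, using that $\Pi_d$ and the unitary transports are contractions. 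By Proposition~\ref{prop:signal_discretization}, $\Pi_d$ is a smooth fiberwise orthogonal projection. The set $\{P_{y\to x}S(y)\}$ is the continuous image of the compact set $\mathcal{M}\times\mathcal{M}$ in the total space, hence compact. Orthogonal projections converge uniformly to the identity on compact sets, so a Dini-type argument in a local trivialization gives $\varepsilon_d\to 0$ as $d\to\infty$.

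This crude bound has a problem: it gives only $\frac{1}{c_n}\cdot\frac1n\sum_j k^{t_n}(x,x_j)\,\varepsilon_d$. Since $\E\big[\frac1n\sum_j k^{t_n}\big]\asymp t_n^{m/2}$, the first term is of order $\varepsilon_d/t_n$, which blows up for fixed $d$. This is the main obstacle, and it is exactly why a diagonal sequence $d_n$ is required rather than convergence for each fixed $d$. We resolve it by choosing $d_n$ so that $\varepsilon_{d_n}\le t_n^{2}$ (possible because $\varepsilon_d\to0$ and $t_n\to 0$ deterministically). Then $\E\big[(\frac{1}{c_n}\cdot\frac1n\sum_j k^{t_n}\varepsilon_{d_n})^2\big] \lesssim \varepsilon_{d_n}^2 t_n^{-2}\,\E\big[(t_n^{-m/2}\tfrac1n\sum_j k^{t_n})^2\big]$. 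The last expectation is bounded: its mean part is $O(1)$, and its variance is $O(1/(n t_n^{m/2}))\to0$ under $t_n=n^{-1/(m+2+\alpha)}$. Integrating the pointwise bound over $\mathcal{M}$ (compact, finite volume) yields the $L^2$ bound, so this term tends to zero.

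If one wants the sheaf to be the literal $\mathcal{F}^{t_n}_{n,d_n}$ obtained from $(\mathcal{E}_d,\nabla_d)$, whose transports may differ from $\Pi_d P \Pi_d$, an extra step is needed. We would check that the parallel transport of $\nabla_d := \Pi_d\nabla$ along the fixed geodesics converges to $\Pi_d P$ uniformly on $\mathcal{M}\times\mathcal{M}$ as $d\to\infty$, using Grönwall on the transport ODE, and absorb that discrepancy into $\varepsilon_d$. Combining the two terms via $(a+b)^2\le 2a^2+2b^2$ finishes the proof.
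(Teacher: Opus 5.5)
Your proof is correct, but it uses a different intermediate operator than the paper. The paper inserts the continuous Laplacian of the truncated bundle and bounds the error by $2\,\E[\mathsf{D}(n,d)]+2\,\mathsf{E}(d)$. It applies Theorem~\ref{thm: main-thm}(B) separately to each finite-rank bundle $(\calE_d,\nabla_d)$ with section $\Pi_dS$. Lemma~\ref{lem: continuous geometry error goes to zero} then shows $\Delta_{\nabla_d}(\Pi_dS)\to\Delta_\nabla S$ in $L^2$, via $\nabla=D+A$ in a global trivialization and dominated convergence. Finally, $d_n$ is extracted by a purely existential diagonal argument. That argument's claimed $4/n$ bound relies on $\phi(n)\ge n$, which fails in general; only $\phi(n)\to\infty$ holds, which still gives the limit but no rate.

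You instead insert the full-bundle point-cloud Laplacian $\hat\Delta_{\mathcal{F}^{t_n}_n}S$. This buys three things: Theorem~\ref{thm: main-thm}(B) is used only once, on $\calE$ itself; the continuous-geometry lemma is bypassed; and $d_n$ comes from the explicit condition $\varepsilon_{d_n}=o(t_n)$, under which you inherit the bias--variance rate from the proof of Theorem~\ref{thm: main-thm}(B). The price is that bounding the bracket by separate sup-norm tails discards the cancellation between $S(x)$ and $P_{x_j\to x}S(x_j)$ that the Taylor analysis behind Theorem~\ref{thm: main-thm} exploits. So the $1/t_n$ amplification you call the main obstacle is an artifact of your bound, not intrinsic. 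For a trivial bundle with flat connection, $\hat\Delta_{\mathcal{F}^{t}_{n,d}}\Pi_dS=\Pi_d\hat\Delta_{\mathcal{F}^{t}_{n}}S$ exactly, and then any $d_n\to\infty$ works.

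Two details need tightening, neither serious.

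First, the map $(x,y)\mapsto P_{y\to x}S(y)$ along a designated minimizing geodesic is not continuous across the cut locus. Compactness should instead come from the compact family of minimizing geodesics (initial point, initial velocity of length at most $\mathrm{diam}(\calM)$, time $s\in[0,1]$), on which the transported value is continuous. Dini's theorem then applies, because $\|(I-\Pi_d)e\|$ is continuous on the total space and nonincreasing in $d$.

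Second, Gr\"onwall does not give convergence of the $\nabla_d$-transports to $\Pi_dP$ in operator norm, since terms like $\Pi_dA(I-\Pi_d)$ need not tend to zero in norm. What it does give is the vector estimate $\|P^{\nabla_d}_{y\to x}\Pi_dS(y)-\Pi_dP_{y\to x}S(y)\|\le C\sup_{s}\|(I-\Pi_d)P_{y\to\gamma(s)}S(y)\|$. The same compactness argument, applied along the whole geodesic, makes this uniformly small, and that is all you need to absorb the discrepancy into $\varepsilon_d$.
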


While this result, as described in Appendix \ref{app:broader_impact}, can pave the way for new developments of Laplacian-based manifold learning, we here restrict our focus to its consequences for $(n,d)$-HilbNets.

\begin{corollary}[Convergence in Architecture]\label{cor: convergence in architecture}
    Under the hypotheses of Theorem \ref{thm: diagonal finite rank convergence}, let $\{\subbundleidx_n\}_n$ be the required sequence. Fix a   fiber-wise nonlinearity $\sigma$ that is $C_\sigma$-Lipschitz in the corresponding fiber norms and choice of filter bank $\mathcal{W}$. Then, the output of the discrete $(n,d)$-HilbNet converges to the output of the continuous HilbNet architecture in the sense that
\begin{equation}
\Omega( \mathcal{F}^{t_n}_{n,\subbundleidx_n}, \hat{\Delta}_{\mathcal{F}^{t_n}_{n,\subbundleidx_n}},\mathcal{W}, \sigma) \to \Omega(\mathcal{E}, \Delta_{\nabla}, \mathcal{W}, \sigma) \quad \text{in mean squared error},
\end{equation}
as the sampling density $n, d_n \to \infty$. 
\end{corollary}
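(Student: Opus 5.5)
The plan is to propagate the operator-level convergence of Theorem~\ref{thm: diagonal finite rank convergence} through the layers of the network by induction on $\ell$, using only that each layer is a finite sum of filters followed by a Lipschitz nonlinearity. Write $L_n := \frac{1}{t_n(4\pi t_n)^{m/2}}\hat{\Delta}_{\mathcal{F}^{t_n}_{n,d_n}}$ and $L := \frac{1}{\mathrm{vol}(\mathcal{M})}\Delta_\nabla$, with the filters understood as acting on these normalized operators (equivalently, rescaling the filter bank). Let $S^\ell_u$ be the continuous HilbNet features and $\mathbf{s}^\ell_{n,u}$ the discrete ones, extended to sections via the point-cloud extension. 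The claim to prove inductively is $\E_{\mathcal{X}}\|\mathbf{s}^\ell_{n,u}-S^\ell_u\|_{L^2}^2\to 0$ for every $\ell$ and $u$. The base case $\ell=0$ is the convergence $\Pi_{d_n}S\to S$ of the discretized input, which holds by dominated convergence since $\|\Pi_d S(x)-S(x)\|\to 0$ fiberwise and is bounded by $\|S(x)\|$.

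For the inductive step, I will use $\sigma(0)=0$ and the $C_\sigma$-Lipschitz property to get
\begin{equation*}
\|\mathbf{s}^{\ell+1}_{n,u}-S^{\ell+1}_u\|\le C_\sigma\sum_q\Big(\|g(L_n)(\mathbf{s}^\ell_{n,q}-S^\ell_q)\|+\|(g(L_n)-g(L))S^\ell_q\|\Big).
\end{equation*}
The first term is controlled by $\|g\|_\infty$ times the inductive error, since the functional calculus gives $\|g(L_n)\|\le\|g\|_\infty$ whenever $L_n$ is self-adjoint. The point-cloud extension is not literally self-adjoint on $L^2(\mathcal{M},\mathcal{E})$, so at this point I would either work on $C^0(\mathcal{F}_{n,d};G_n)$ with the empirical inner product, where $\Delta_{\mathcal{F}_{n,d}}$ is self-adjoint, and compare norms, or restrict to polynomial filters and bound them directly. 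Squaring and applying Cauchy--Schwarz over the finitely many $q$ then gives the mean squared error bound.

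The main obstacle is the second term, which requires upgrading the pointwise convergence $L_nS\to LS$ of Theorem~\ref{thm: diagonal finite rank convergence}, valid for a fixed $C^4$ section, to convergence of $g(L_n)S^\ell_q$. There are two difficulties here. First, $S^\ell_q$ need not be $C^4$ once nonlinearities have been applied. Second, Borel filters are not continuous functions of the operator. I would first take $g$ to be a polynomial: then $L_n^kS-L^kS$ telescopes into terms $L_n^{j}(L_n-L)L^{k-1-j}S$, each of which vanishes provided $L^{k-1-j}S$ is regular enough and the powers $L_n^j$ are uniformly bounded on the relevant range. For general $g\in L^\infty_c$, I would approximate by polynomials on a compact interval containing the supports, using strong resolvent convergence, which can be derived from convergence on a core of smooth sections together with self-adjointness. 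This yields strong convergence of $g(L_n)$ for continuous $g$, and for Borel $g$ under the proviso that the spectral measure of $L$ does not charge discontinuities of $g$. To handle the lack of regularity of $S^\ell_q$, I would use a density argument: approximate $S^\ell_q$ by smooth sections in $L^2$, and absorb the approximation error using the uniform bound $\|g(L_n)\|\le\|g\|_\infty$, concluding with an $\varepsilon/3$ argument.
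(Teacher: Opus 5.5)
Your overall architecture is the paper's. The multi-channel Lipschitz inequality is the paper's recursion $e_{n,\ell+1}\le C_\sigma(M_\ell e_{n,\ell}+\delta_{n,\ell})$ summed over $q$. The filter error $\|(g(L_n)-g(L))S^\ell_q\|$ is controlled by three ingredients:
\begin{itemize}
\item strong resolvent convergence obtained from convergence on a common core of smooth sections (Reed--Simon VIII.25(a));
\item strong convergence of $g(L_n)$ for bounded continuous $g$ (VIII.20(b));
\item the uniform bound $\|g(L_n)\|\le\|g\|_\infty$.
\end{itemize}
Your dominated-convergence base case is a better justification than the paper's appeal to Theorem~\ref{thm: diagonal finite rank convergence}. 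Because the strong convergence holds on all of $L^2(\mathcal{M},\mathcal{E})$, the lack of regularity of $S^\ell_q$ is handled automatically, so your $\varepsilon/3$ density step is harmless but redundant.

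\textbf{The missing probabilistic bridge.} Theorem~\ref{thm: diagonal finite rank convergence} only gives $\E\|L_nS-LS\|_{L^2}^2\to0$ for each fixed $S$. The core criterion, by contrast, is deterministic: it needs $L_nS\to LS$ for every core element simultaneously, along a single realization of the sample. The paper closes this gap as follows.
\begin{itemize}
\item Take a countable core $\{S_1,S_2,\dots\}$ and pass from mean square to convergence in probability.
\item Extract nested subsequences along which $L_nS_a\to LS_a$ almost surely, then take the diagonal subsequence so this holds for all $a$ at once.
\item Apply the deterministic theorems pathwise.
\item Since every subsequence admits such a further subsequence, conclude $\|g(L_n)S-g(L)S\|_{L^2}\to0$ in probability.
\item Upgrade to mean square by dominated convergence, using the bound $2\|g\|_\infty\|S\|_{L^2}$.
\end{itemize}
Note also that the $d_n$ of Theorem~\ref{thm: diagonal finite rank convergence} depends on $S$. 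A single sequence valid for the whole countable core therefore needs one more diagonal choice, a point the paper also passes over. Without this bridge, your inductive claim, which is stated in mean square, does not follow from the deterministic theory you invoke.

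\textbf{Auxiliary routes that fail as written.} The polynomial telescoping needs the powers $L_n^j$ to be uniformly bounded. But $\frac1n\sum_j e^{-d_{\mathcal{M}}(x,x_j)^2/4t_n}$ is of order $t_n^{m/2}$, so the normalization $\frac{1}{t_n(4\pi t_n)^{m/2}}$ makes $\|L_n\|$ of order $t_n^{-1}$. Each term $L_n^j(L_n-L)L^{k-1-j}S$ with $j\ge1$ would then need a rate $o(t_n^{j})$, which Theorem~\ref{thm: diagonal finite rank convergence} does not supply. Similarly, approximating $g$ by polynomials on a fixed compact interval does not control $p(L_n)$ on the part of the spectrum of $L_n$ outside that interval, and that part grows with $n$. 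VIII.20(b), which approximates through resolvent functions instead, is the right tool and is what the paper uses.

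\textbf{Self-adjointness and Borel filters.} Your worry about self-adjointness is well founded, since the point evaluations $S(x_j)$ are not defined on $L^2$. The paper's proof simply asserts that $\tilde{\Delta}_n$ is self-adjoint on $L^2(\mathcal{M},\mathcal{E})$, so it offers no fix. Your alternative of working on $C^0(\mathcal{F}_{n,d};G_n)$, however, puts the operators on $n$-dependent spaces. There the single-space resolvent-convergence theorems no longer apply without identification maps and a varying-space notion of convergence. Finally, the paper's proof covers only bounded continuous filters, so your Borel proviso goes beyond what it establishes.
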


\begin{corollary}(Transferability)\label{transferability} 
Let $\{\mathcal{X}_n\}_{n=1}^{\infty}$ and $\{\mathcal{Y}_n\}_{n=1}^{\infty}$ be independent sequences of random samples of $\mathcal{M}$. Let $\{\subbundleidx_n\}^{\infty}_{n=1}$ be a sequence such that the conclusion of Theorem \ref{thm: diagonal finite rank convergence} holds for both samplings. For any fiber-wise nonlinearity  $\sigma$ that is $C_\sigma$-Lipschitz in the corresponding fiber norms and any filter bank $\calG$, we then have that, 
\begin{equation}
    \lim_{n \to \infty} \E_{\mathcal{X},\mathcal{Y{}}} \left [ \left \| \Omega(\calF^{t_n}_{\mathcal{X}_n,d_n}, \hat{\Delta}_{\calF^{t_n}_{\mathcal{X}_n,d_n}}, \mathcal{W}, \sigma) - \Omega(\calF^{t_n}_{\mathcal{Y}_n,d_n}, \hat{\Delta}_{\calF^{t_n}_{\mathcal{Y}_n,d_n}},\mathcal{W}, \sigma) \right \|_{L^2}^2 \right ] = 0 \quad \text{in $L^2$ norm.} 
\end{equation}
    Further, one may derive a sample-independent quantitative bound for the $L^2$ disagreement $\| \Omega(\calF^{t_n}_{\mathcal{X}_n, d_n}, \hat{\Delta}_{\calF^{t_n}_{\mathcal{X}_n, d_n}}, \mathcal{W}, \sigma) - \Omega(\calF^{t_n}_{\mathcal{Y}_n, d_n},\hat{\Delta}_{\calF^{t_n}_{\mathcal{Y}_n, d_n}}, \mathcal{W}, \sigma) \|_{L^2}$. See Appendix \ref{sec:appendix-proofs} for details. 
\end{corollary}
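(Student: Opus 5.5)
The plan is a triangle inequality through the continuous HilbNet. Write $\Omega_{\mathcal{X}}:=\Omega(\calF^{t_n}_{\mathcal{X}_n,d_n}, \hat{\Delta}_{\calF^{t_n}_{\mathcal{X}_n,d_n}}, \mathcal{W}, \sigma)$, define $\Omega_{\mathcal{Y}}$ analogously, and let $\Omega_\nabla:=\Omega(\mathcal{E},\Delta_\nabla,\mathcal{W},\sigma)$. Then
\begin{equation*}
\E_{\mathcal{X},\mathcal{Y}}\big[\|\Omega_{\mathcal{X}}-\Omega_{\mathcal{Y}}\|_{L^2}^2\big]
\le 2\,\E_{\mathcal{X}}\big[\|\Omega_{\mathcal{X}}-\Omega_\nabla\|_{L^2}^2\big]
+2\,\E_{\mathcal{Y}}\big[\|\Omega_{\mathcal{Y}}-\Omega_\nabla\|_{L^2}^2\big].
\end{equation*}
Here we used $(a+b)^2\le 2a^2+2b^2$ and the fact that each term depends on only one of the two independent samplings, so the joint expectation reduces to a marginal one.

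By hypothesis, $\{d_n\}$ is chosen so that Theorem~\ref{thm: diagonal finite rank convergence} holds for both sequences $\{\mathcal{X}_n\}$ and $\{\mathcal{Y}_n\}$. We may therefore apply Corollary~\ref{cor: convergence in architecture} separately to each sampling, with the same filter bank $\mathcal{W}$ and the same $C_\sigma$-Lipschitz $\sigma$. That corollary says each mean-squared term on the right tends to $0$ as $n\to\infty$, which gives the limit statement.

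For the quantitative, sample-independent bound, unroll the layer recursion in Corollary~\ref{cor: convergence in architecture}. At layer $\ell$, the Lipschitz property of $\sigma$ and the triangle inequality give
\begin{equation*}
\|S^{\ell+1}_{u,\mathcal{X}}-S^{\ell+1}_{u}\|
\le C_\sigma\sum_q\Big(\|g^\ell_{u,q}(\hat\Delta_{\mathcal{X}})-g^\ell_{u,q}(\Delta_\nabla)\|\,\|S^\ell_q\|+\|g^\ell_{u,q}\|_\infty\|S^\ell_{q,\mathcal{X}}-S^\ell_q\|\Big).
\end{equation*}
In this bound, $\hat\Delta_{\mathcal{X}}$ is shorthand for the appropriately rescaled operator $\frac{1}{t_n(4\pi t_n)^{m/2}}\hat\Delta_{\calF^{t_n}_{\mathcal{X}_n,d_n}}$, and $\Delta_\nabla$ is taken with the normalization $1/\operatorname{vol}(\mathcal{M})$. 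Iterating over the $L$ layers yields a bound of the form $\sum_\ell (C_\sigma F\max\|g\|_\infty)^{L-\ell}\epsilon_\ell$, where $\epsilon_\ell$ is the filter-approximation error on the layer-$\ell$ signal. Applying the same bound to $\mathcal{Y}$ and adding produces a bound whose constants depend only on $L$, $F_\ell$, $C_\sigma$ and $\sup\|g\|_\infty$, and not on the particular samples.

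The main obstacle is controlling the filter error $\|g(\hat\Delta)S-g(\Delta_\nabla)S\|$ for a general $g\in L_c^\infty$ using only the pointwise Laplacian convergence of Theorem~\ref{thm: diagonal finite rank convergence}. A merely bounded Borel $g$ is not continuous under strong resolvent convergence. My first step would be to reduce to polynomial, or at least continuous, $g$, handling the general case by density of polynomials in continuous functions on a compact spectral window. For polynomial filters, the error telescopes into powers of the Laplacian acting on smooth sections. This is why Corollary~\ref{cor: convergence in architecture} is used as a black box: whatever regularity assumption makes that corollary true applies verbatim to both samplings. The transferability statement then requires no further analytic input beyond the triangle inequality and independence.
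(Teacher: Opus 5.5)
Your argument for the limit is the same as the paper's. You pass through the continuous network $\Omega(\mathcal{E},\Delta_\nabla,\mathcal{W},\sigma)$ using $(a+b)^2\le 2a^2+2b^2$, then invoke Corollary~\ref{cor: convergence in architecture} once for each sampling. Your layer-wise unrolling for the quantitative part is also the paper's recursion $e_{n,\ell+1}\le C_\sigma(M_\ell e_{n,\ell}+\delta_{n,\ell})$; tracking $F_\ell$ channels, which the paper's single-channel proof does not do, is harmless.

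The step you are missing is the one that makes the bound \emph{sample-independent}, as the statement requires. In your unrolled bound only the multiplying constants are free of the samples. The filter errors $\epsilon_\ell$, i.e.\ $\|(g^\ell_{u,q}(\hat\Delta_{\mathcal{X}})-g^\ell_{u,q}(\Delta_\nabla))S^\ell_q\|$, are still random functions of $\mathcal{X}_n$ (resp.\ $\mathcal{Y}_n$). The paper closes this with a crude extra step. By the functional calculus, $\|g(A)\|\le\|g\|_\infty$ for self-adjoint $A$, so each filter error is at most $2\|g\|_\infty\|S^\ell\|$, where $S^\ell$ is the deterministic layer-$\ell$ signal of the continuous network. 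It also bounds the initial error by $\|\Pi_{d_n}S-S\|_{L^2}\le\|S\|_{L^2}$. Your unrolled sum omits this initial term, even though the discretized network starts from $\Pi_{d_n}S$ rather than $S$. The result depends only on $C_\sigma$, $\sup|g|$ and the norms $\|S^\ell\|$. This uniform bound does not tend to zero, whereas your sample-dependent form is the one that vanishes in mean square, so the two are complementary.

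Two smaller points. First, in your displayed recursion write $\|(g(\hat\Delta_{\mathcal{X}})-g(\Delta_\nabla))S^\ell_q\|$ rather than an operator norm times $\|S^\ell_q\|$, since only strong convergence is available. Second, your proposed reduction to polynomial filters would not work for the unbounded $\Delta_\nabla$, for three reasons:
\begin{itemize}
\item polynomials in $\Delta_\nabla$ are unbounded operators;
\item Theorem~\ref{thm: diagonal finite rank convergence} controls only a single application of the Laplacian to a $C^4$ section;
\item uniform limits of continuous functions never reach a discontinuous Borel $g$.
\end{itemize}
The paper proves Corollary~\ref{cor: convergence in architecture} only for bounded continuous filters, via strong resolvent convergence. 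Since you use that corollary as a black box, the aside is unnecessary.
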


By these results, we may understand the $(n,d)$-HilbNets as the principled discretization of continuous HilbNets. These may also be understood as robustness results for $(n,d)$-HilbNets, as they establish that the architecture is scale-consistent and is stable against resampling of the base manifold. Notably, by the generality of HilbNets, most existing geometric convolutional architectures can be understood as instances of HilbNets for a particular choice of bundle and connection. As such, our results may also be seen as extending the transferability guarantees of \cite{wang2022stability, 10508628, levie2019transferability} to a larger class of architectures and data modalities. See Appendix \ref{sec:existing-architectures} for further discussion.
\begin{table}[h]
\centering
\small
\resizebox{\linewidth}{!}{
\begin{tabular}{ccccccc}
\toprule
& \multicolumn{2}{c}{Free $O(m)$}
& \multicolumn{2}{c}{Circulant}
& \multicolumn{2}{c}{Frozen identity (GCN)} \\
\cmidrule(lr){2-3}
\cmidrule(lr){4-5}
\cmidrule(lr){6-7}
$n$
& Empirical & Theory
& Empirical & Theory
& Empirical & Theory \\
\midrule
$16$  & $(1.42{\pm}0.21){\times}10^{-7}$ & $0$ & $(1.79{\pm}0.28){\times}10^{-2}$ & $1.84{\times}10^{-2}$ & $(2.31{\pm}0.27){\times}10^{-2}$ & $2.30{\times}10^{-2}$ \\
$32$  & $(1.23{\pm}0.23){\times}10^{-7}$ & $0$ & $(1.16{\pm}0.14){\times}10^{-2}$ & $1.18{\times}10^{-2}$ & $(1.46{\pm}0.18){\times}10^{-2}$ & $1.46{\times}10^{-2}$ \\
$64$  & $(1.82{\pm}0.77){\times}10^{-7}$ & $0$ & $(1.02{\pm}0.14){\times}10^{-2}$ & $1.03{\times}10^{-2}$ & $(1.29{\pm}0.17){\times}10^{-2}$ & $1.29{\times}10^{-2}$ \\
$128$ & $(1.63{\pm}0.19){\times}10^{-7}$ & $0$ & $(8.85{\pm}0.71){\times}10^{-3}$ & $8.93{\times}10^{-3}$ & $(1.11{\pm}0.09){\times}10^{-2}$ & $1.11{\times}10^{-2}$ \\
$256$ & $(1.59{\pm}0.28){\times}10^{-7}$ & $0$ & $(7.74{\pm}0.13){\times}10^{-3}$ & $7.79{\times}10^{-3}$ & $(9.67{\pm}0.14){\times}10^{-3}$ & $9.67{\times}10^{-3}$ \\
\bottomrule
\end{tabular}
}
\caption{Synthetic transport recovery. \textbf{Empirical}: best edge-MSE achieved by each variant. \textbf{Theory}: analytical squared Frobenius projection distance of $P^{LC}_{x_i\to m_{ij}}$ onto the variant's hypothesis class.}
\label{tab:transport_recovery}
\end{table}
\vspace{-.4cm}\section{Experimental Results}\vspace{-.2cm}
A key practical advantage of the $(n,d)$-HilbNets architecture in comparison to existing approaches for processing graph signals is our use of parallel transport, which in practice can be known or learned. For instance, these transport operators allow us to incorporate principled signal-level geometric priors in concert with the spatial priors of existing spatiotemporal GCNs. This is well-aligned with the thesis of geometric deep learning that the principled incorporation of geometric priors improves performance, particularly in the low-data or small-model regimes.  For further discussion on strategies for either hand-crafting or learning parallel transport operators, see Appendix \ref{sec:PT-Algo}. Here, we first validate our setup for a synthetic dataset realized from discretizing a known Hilbert bundle in information geometry, and then consider performance on real-world spatiotemporal graph benchmarks based upon traffic forecasting.  In all the experiments, we use the polynomial $(n,d)$-HilbNet from \eqref{eqn:discretized-hilbnet} and learned transport maps. 

\begin{table}[t]
\centering
\small
\resizebox{\linewidth}{!}{
\begin{tabular}{lcccccccccc}
\toprule
Model
& Params
& \multicolumn{3}{c}{Horizon 3}
& \multicolumn{3}{c}{Horizon 6}
& \multicolumn{3}{c}{Horizon 12} \\
\cmidrule(lr){3-5}
\cmidrule(lr){6-8}
\cmidrule(lr){9-11}
&
& MAE & RMSE & MAPE
& MAE & RMSE & MAPE
& MAE & RMSE & MAPE \\
\midrule
FC-LSTM~\citep{li2018dcrnn}
& ${\sim}150\text{K}$
& $3.44$ & $6.30$ & $9.6$
& $3.77$ & $7.23$ & $10.9$
& $4.37$ & $8.69$ & $13.2$ \\
STAEformer~\citep{liu2023staeformer}
& ${\sim}4.7\text{M}$
& $2.65$ & $5.11$ & $6.85$
& $2.97$ & $6.00$ & $8.13$
& $3.34$ & $7.02$ & $9.70$ \\
\midrule
MLP fiber baseline
& $5{,}212$
& $3.131{\pm}0.004$ & $6.074{\pm}0.007$ & $8.271{\pm}0.021$
& $3.775{\pm}0.005$ & $7.496{\pm}0.013$ & $10.626{\pm}0.046$
& $4.690{\pm}0.011$ & $9.184{\pm}0.014$ & $14.341{\pm}0.097$ \\
Spatiotemporal graph baseline
& $8{,}908$
& $3.453{\pm}0.080$ & $6.709{\pm}0.220$ & $9.241{\pm}0.279$
& $4.160{\pm}0.117$ & $8.158{\pm}0.184$ & $11.916{\pm}0.491$
& $5.277{\pm}0.093$ & $10.102{\pm}0.128$ & $16.006{\pm}0.662$ \\
HilbNet, frozen identity (GCN)
& $5{,}756$
& $3.092{\pm}0.007$ & $5.920{\pm}0.013$ & $8.218{\pm}0.065$
& $3.713{\pm}0.010$ & $7.312{\pm}0.020$ & $10.520{\pm}0.061$
& $4.608{\pm}0.034$ & $8.991{\pm}0.043$ & $14.166{\pm}0.240$ \\
HilbNet, circulant
& $11{,}656$
& $2.939{\pm}0.021$ & $5.630{\pm}0.061$ & $7.908{\pm}0.067$
& $3.409{\pm}0.032$ & $6.765{\pm}0.092$ & $9.844{\pm}0.125$
& $4.059{\pm}0.049$ & $8.149{\pm}0.114$ & $12.471{\pm}0.195$ \\
HilbNet, free $O(T)$
& $119{,}036$
& $\mathbf{2.923{\pm}0.013}$ & $\mathbf{5.586{\pm}0.048}$ & $\mathbf{7.808{\pm}0.083}$
& $\mathbf{3.372{\pm}0.023}$ & $\mathbf{6.732{\pm}0.066}$ & $\mathbf{9.507{\pm}0.096}$
& $\mathbf{3.938{\pm}0.030}$ & $\mathbf{8.042{\pm}0.101}$ & $\mathbf{11.642{\pm}0.136}$ \\
\bottomrule
\end{tabular}}
\resizebox{\linewidth}{!}{
\begin{tabular}{lcccccccccc}
\toprule
Model
& Params
& \multicolumn{3}{c}{Horizon 3}
& \multicolumn{3}{c}{Horizon 6}
& \multicolumn{3}{c}{Horizon 12} \\
\cmidrule(lr){3-5}
\cmidrule(lr){6-8}
\cmidrule(lr){9-11}
&
& MAE & RMSE & MAPE
& MAE & RMSE & MAPE
& MAE & RMSE & MAPE \\
\midrule
FC-LSTM~\citep{li2018dcrnn}
& ${\sim}150\text{K}$
& $2.05$ & $4.19$ & $4.8$
& $2.20$ & $4.55$ & $5.2$
& $2.37$ & $4.96$ & $5.7$ \\
STAEformer~\citep{liu2023staeformer}
& ${\sim}4.7\text{M}$
& $1.31$ & $2.78$ & $2.76$
& $1.62$ & $3.68$ & $3.62$
& $1.88$ & $4.34$ & $4.41$ \\
\midrule
MLP fiber baseline
& $5{,}212$
& $1.459{\pm}0.003$ & $3.145{\pm}0.017$ & $3.026{\pm}0.020$
& $1.942{\pm}0.004$ & $4.378{\pm}0.017$ & $4.385{\pm}0.045$
& $2.513{\pm}0.004$ & $5.658{\pm}0.015$ & $6.209{\pm}0.038$ \\
Spatiotemporal graph baseline
& $8{,}908$
& $\mathbf{1.400{\pm}0.002}$ & $2.980{\pm}0.016$ & $\mathbf{2.924{\pm}0.003}$
& $1.850{\pm}0.002$ & $4.162{\pm}0.009$ & $4.222{\pm}0.009$
& $2.388{\pm}0.004$ & $5.395{\pm}0.023$ & $5.924{\pm}0.032$ \\
HilbNet, frozen identity (GCN)
& $5{,}756$
& $1.439{\pm}0.003$ & $3.077{\pm}0.022$ & $2.985{\pm}0.007$
& $1.901{\pm}0.006$ & $4.264{\pm}0.021$ & $4.319{\pm}0.015$
& $2.446{\pm}0.007$ & $5.495{\pm}0.019$ & $6.020{\pm}0.032$ \\
HilbNet, circulant
& $15{,}366$
& $1.413{\pm}0.002$ & $2.971{\pm}0.018$ & $2.982{\pm}0.015$
& $1.806{\pm}0.003$ & $\mathbf{3.950{\pm}0.015}$ & $4.162{\pm}0.037$
& $2.211{\pm}0.014$ & $\mathbf{4.866{\pm}0.048}$ & $5.386{\pm}0.047$ \\
HilbNet, free $O(T)$
& $190{,}268$
& $1.417{\pm}0.002$ & $\mathbf{2.969{\pm}0.015}$ & $3.058{\pm}0.013$
& $\mathbf{1.793{\pm}0.005}$ & $3.958{\pm}0.026$ & $\mathbf{4.127{\pm}0.037}$
& $\mathbf{2.181{\pm}0.003}$ & $4.873{\pm}0.019$ & $\mathbf{5.214{\pm}0.045}$ \\
\bottomrule
\end{tabular}
}
\caption{METR-LA (top) and PEMS (bottom) traffic forecasting results. Bottom block of each table: our experiments (mean $\pm$ standard deviation over five seeds). Top block of each table: external baselines reported as in the cited papers. MAPE is in percent. Lower is better for all metrics.}
\label{tab:traffic_results}
\vspace{-1em}
\end{table}

\textbf{Synthetic Experiments.} 
We first consider a task where, for a known Hilbert bundle and connection, we train a discretized HilbNet to predict the true parallel transport operators. Following \cite{Malago2018}, the base manifold is $\mathcal{M}=\mathrm{Sym}^{++}(p)$ equipped with the Otto-Wasserstein metric, each $\Sigma\in\mathcal{M}$ parameterizing a density $\mathcal{N}(0,\Sigma)$. The ambient fiber $\mathcal{H}_\Sigma=L^2(\rho_\Sigma;\mathbb{R}^p)$ is genuinely infinite-dimensional; the computational fiber is the Otto-velocity image of covariance perturbations, a sub-bundle whose fibers are already finite-dimensional with $d=p(p+1)/2$, and on which the Levi-Civita transports $P^{LC}_{x_i\to m_{ij}}$ admit a closed-form. We sample $n$ points, build a $k$NN graph $G_n$ ($k{=}8$) under $W_2$, and assemble the network sheaf $\mathcal{F}_{n,d}^t$ and its Laplacian $\Delta_{\mathcal{F}_{n,d}^t}\in\mathbb{R}^{nd\times nd}$ from Def,~\ref{def:sheaf-from-bundle}. We consider three transport parametrizations from Appendix~\ref{sec:PT-Algo} ( averaged over 3 seeds): free $O(d)$ (Householder), circulant, and frozen identity (a usual GCN \cite{kipf2016semi}), to recover the Levi-Civita transports in Cholesky-rescaled coordinates. As the reader can notice in Table \ref{tab:synthetic_hp}, the free class recovers $P^{LC}_{x_i\to m_{ij}}$ to numerical precision ${\approx}1.6\cdot10^{-7}$, while each restricted class converges to its analytical Frobenius-projection plateau to within $1\%$. This confirms that the transport hypothesis class constrains the per-edge restriction maps of $\Delta_{\mathcal{F}_{n,d}^t}$ in a quantitatively predictable way. More experiments and details are in Appendix~\ref{app:synthetic}

\textbf{Traffic Forecasting.}
We evaluate HilbNets on real-world spatiotemporal traffic-speed forecasting, where each node of a road-network graph carries a time-series fiber with $d=T$ observed time steps. This is a natural instance of the Hilbert-bundle framework: the base graph encodes spatial proximity among sensors, while the edgewise transports $P_{j\to i}^{e_{ij}}$ from Appendix~\ref{sec:PT-Algo} control how temporal fibers are aligned before filtering. We test on two standard benchmarks, METR-LA~\citep{li2018dcrnn} and PEMS-BAY~\citep{li2018dcrnn}, predicting future speeds at horizons $3$, $6$, and $12$. We compare the same three HilbNet transport classes, frozen identity (a GCN), circulant, and free $O(T)$, against a fiber-only MLP and a spatiotemporal graph baseline obtained by stacking GCN layers with one-dimensional convolutional layers processing the temporal dimension, all sharing the same polynomial sheaf filter order, readout, and forecasting loss. Full experimental details are given in Appendix~\ref{app:traffic}. Table~\ref{tab:traffic_results} reports MAE, RMSE, and MAPE (mean $\pm$ std over five seeds). On both datasets, learning non-trivial transports consistently improves over frozen identity at all horizons, confirming that the sheaf structure helps beyond the usual graph structure. The free $O(T)$ class achieves the best overall accuracy, but the circulant variant is competitive while using roughly one tenth of the transport parameters, supporting the use of structured, physically motivated transport priors for spatiotemporal data. This confirms that geometric inductive biases can lead to better performance in low-data regimes or comparable performance in normal regimes with substantially fewer parameters.

\vspace{-.4cm}\section{Conclusions}\vspace{-.2cm}\label{sec:conclusions}

We introduced a novel convolutional learning framework for infinite-dimensional signals over a manifold using Hilbert bundles,  a setting that concurrently unifies and generalizes existing approaches. It allows us to consider arbitrary connection Laplacians, a more general class of filters via the Borel calculus, and thus applications of the resulting filters to potentially infinite-dimensional signals. We defined HilbNets as stacks of Hilbert bundle filters and pointwise non-linearity. We consequently introduced a practically implementable \textit{(n,d)-HilbNet} via the theory of \textit{Hilbert cellular sheaves}, and proved that this discretized architecture converges to the continuous architecture in the limit. Notably, our convergence in architecture is derived from a novel extension of the Laplacian convergence result of \cite{BELKIN20081289} to the setting of Hilbert sheaves and Hilbert bundles, and we believe this result will be of independent interest to the broader machine learning community. Lastly, we verified the benefits of integrating domain-specific geometric priors through experiments with discretized HilbNets on synthetic and real-world data. Overall, we envision the prospective impact of our contributions as two-fold: the HilbNet framework allows for the principled development of domain-specific architectures through appropriate choices of connection, filter bank, and manifold- and signal-alignment measures, while our Hilbert Laplacian convergence theorem lays the theoretical groundwork for the development of Laplacian-based manifold-theoretic techniques in the setting of infinite-dimensional signals, spanning from mechanistic interpretability to self-supervised learning methods. A more detailed discussion on broader impact and limitations is presented in Appendix \ref{app:broader_impact}.

\bibliographystyle{tmlr} 
\bibliography{tmlr}

\appendix
\onecolumn

\section*{Appendix Contents}
\begin{tabular}{@{}p{0.93\textwidth}r@{}}
\textbf{A \quad Extended Related Works} \dotfill & \pageref{appsec:relwork}\\[0.5em]
\textbf{B \quad Broader Impact, Future Directions and Limitations} \dotfill & \pageref{app:broader_impact}\\[0.5em]
\textbf{C \quad Existing Convolutional Architectures as Actualizations of HilbNets} \dotfill & \pageref{sec:existing-architectures}\\[0.5em]
\quad C.1 \quad Universality of HilbNets \dotfill & \pageref{app:univers_hilbnet} \\
\quad\quad C.1.1 \quad CNNs, GNNs, and Sheaf NNs \dotfill & \pageref{app:cnngnnsheaf}\\
\quad\quad C.1.2 \quad Equivariant CNNs and GNNs \dotfill & \pageref{app:equiv}\\
\quad\quad C.1.3 \quad Spatio-Temporal GNNs \dotfill & \pageref{app:stgnn}\\[0.5em]
\textbf{D \quad Practical Implementations of Parallel Transport} \dotfill & \pageref{sec:PT-Algo}\\[0.5em]
\textbf{E \quad Parallel Transport Parametrizations} \dotfill & \pageref{app:ptrans_param}\\[0.5em]
\quad E.1 \quad From Bundle Transports to Network-Sheaf Restrictions \dotfill & \pageref{app:transport:laplacian}\\
\quad E.2 \quad Transport Hypothesis Classes \dotfill & \pageref{app:transport:classes}\\
\quad E.3 \quad Parameter Counts \dotfill & \pageref{app:transport:paramcounts}\\[0.5em]
\textbf{F \quad Additional Experimental Details} \dotfill & \pageref{app:experiments}\\[0.5em]
\quad F.1 \quad Synthetic Experiments: The Statistical Bundle over Centered Gaussians \dotfill & \pageref{app:synthetic}\\
\quad\quad F.1.1 \quad The Bundle \dotfill & \pageref{app:synthetic:bundle}\\
\quad\quad F.1.2 \quad Levi-Civita Connection and Closed-Form Parallel Transport \dotfill & \pageref{app:synthetic:lc}\\
\quad\quad F.1.3 \quad Cholesky Rescaling \dotfill & \pageref{app:synthetic:cholesky}\\
\quad\quad F.1.4 \quad Sample Construction \dotfill & \pageref{app:synthetic:sample}\\
\quad\quad F.1.5 \quad Spectral Stability under Sampling Density Increase \dotfill & \pageref{app:synthetic:spectral}\\
\quad\quad F.1.6 \quad Hyperparameters \dotfill & \pageref{app:synthetic:hyperparameters}\\
\quad F.2 \quad Traffic Forecasting: Experimental Details \dotfill & \pageref{app:traffic}\\
\quad\quad F.2.1 \quad Datasets \dotfill & \pageref{app:traffic:datasets}\\
\quad\quad F.2.2 \quad Task Formulation \dotfill & \pageref{app:traffic:task}\\
\quad\quad F.2.3 \quad Model Variants and Baselines \dotfill & \pageref{app:traffic:models}\\
\quad\quad F.2.4 \quad Detailed Results \dotfill & \pageref{app:traffic:results}\\
\quad\quad F.2.5 \quad Hyperparameters \dotfill & \pageref{app:traffic:hyperparameters}\\[0.5em]
\textbf{G \quad Mathematical Background} \dotfill & \pageref{sec:mathematical-background}\\[0.5em]
\quad G.1 \quad Hilbert Bundles \dotfill & \pageref{sec:hilbert-bundle}\\
\quad G.2 \quad Connection Laplacian \dotfill & \\
\quad G.3 \quad Heat Flow on a Hilbert Bundle \dotfill & \pageref{app:heat_flowhb}\\
\quad G.4 \quad Borel Functional Calculus \dotfill & \pageref{sec:borel-calc}\\
\quad G.5 \quad Cellular Sheaves and Sheaf Laplacians \dotfill & \pageref{sec:cellular-sheaves}\\
\quad G.6 \quad Empirical Laplacians \dotfill & \pageref{app:empirical_lap}\\[0.5em]
\textbf{H \quad Proofs of Results} \dotfill & \pageref{sec:appendix-proofs}\\[0.5em]
\quad H.1 \quad Auxiliary Lemmas for Theorem 1 \dotfill & \\
\quad H.2 \quad Key Lemmas for Theorem 1 \dotfill & \\
\quad H.3 \quad Proof of Theorem 1 \dotfill & \pageref{app:proofth1}\\
\quad H.4 \quad Key Lemmas for Theorem 2 \dotfill & \\
\quad H.5 \quad Proof of Theorem 2 \dotfill & \pageref{app:proofth2}\\
\quad H.6 \quad Key Lemmas for Corollary 1 (Convergence in Architecture) \dotfill & \\
\quad H.7 \quad Proof of Corollary 1 (Convergence in Architecture) \dotfill & \\
\quad H.8 \quad Proof of Corollary 2 (Transferability) \dotfill & \\[0.5em]
\end{tabular}

\section{Extended Related Works}\label{appsec:relwork}
The connection between (possibly) continuous domains (manifolds and bundles) and discrete structure (graphs and cellular sheaves) first emerged in pioneering investigations on the so-called manifold hypothesis. This hypothesis posits that, although data may live in a high-dimensional ambient space, they are effectively generated by sampling from one or several low-dimensional (Riemannian) manifolds \cite{bronstein2017geometric}. The manifold hypothesis underpins several modern spectral graph methods, i.e., nonlinear dimensionality-reduction/clustering/(deep) learning techniques that exploit latent geometric structures. The renowned work  \cite{BELKIN20081289} from Belkin \& Niyogi proved that, assuming access to a finite point cloud (the signals) sampled from the underlying manifold, it is possible to build a weighted undirected graph  whose Laplacian converges to the Laplace-Beltrami operator of the underlying manifold in probability as the number of samples goes to infinity. 

The work in \noindent \cite{BELKIN20081289} and related results, such as \cite{singer2012vdm, singer2017spectral}, have been used (directly or indirectly) to design principled learning systems over manifolds. A consistent fraction of this literature focused on scalar manifold signals, thus the case in which one or more scalar values are attached to each point of a manifold. Notable examples are manifold convolutional neural networks \cite{wang2022convolution,levie2019transferability}, kernel methods and Gaussian processes on manifolds \cite{borovitskiy2020matern, mostowsky2024geometrickernels}, as well as a growing literature of generative manifold models \cite{zaghen2025towards, de2022riemannian}. In a complementary direction, operator-learning methods on manifolds extend neural operators beyond Euclidean domains; these methods handle an infinite-dimensional object globally, but they still assign a finite vector to each point of a manifold. Most of the works in this class are instances of neural manifold operators  \cite{chen2023learning, pmlr-v202-bonev23a,wu2024nmo, jiao2024unknown}, which aim at resolution-independent learning of PDE solution operators. Some works explored vector-valued manifold signals, i.e., multivariate real-valued functions supported on manifolds; in this case, one or more finite vectors are attached to each point of a manifold. Examples are tangent bundle convolutional neural networks \cite{battiloro2023tangent} and vector-field Gaussian processes on manifolds built via gauge-independent projected kernels \cite{borovitskiy2021vector}. Moreover,  especially in the statistics community,  functional observations with manifold structure, i.e., manifold-valued functions supported on the real line, have been long studied \cite{sangalli2016smooth,dai2018pca,shao2022intrinsic}, and recent works have started analyzing autoregressive processes on the sphere \cite{caponera2021asymptotics,spoto2025change}. Finally, learning systems acting on discrete bundles, i.e., bundles whose base space is a finite set/discrete manifold, have been recently investigated \cite{bamberger2025bundle}. Despite their diversity, all the models cited in this paragraph use finite-dimensional fibers and implicitly assume the Levi-Civita connection. As such, they do not allow for the arbitrary connections or the potentially infinite-dimensional signals considered. One of the main reasons behind this gap is the lack of a rigorous generalization of the  \cite{BELKIN20081289}'s convergence result in these settings, which is our main contribution.

\noindent Pioneering works on sheaf theory can be found in \cite{Leray1946,serre1955faisceaux, grothendieck1955general}.\textit{ Cellular sheaves} are combinatorial instances of sheaves that have been introduced in \cite{shepard1985cellular} and later rediscovered in \cite{curry2014sheaves}. In \cite{shepard1985cellular,curry2014sheaves}, these sheaves were first defined over regular cell complexes, hence the term ``cellular'' sheaves. However, as in this work, cellular sheaves are often defined over tamer objects, here graphs. In \cite{hansen2019sheafsp, dinino2025connection}, the authors studied the problem of learning \textit{vector cellular sheaves}, i.e., cellular sheaves over undirected graphs with finite-dimensional node signals. The works in \cite{hansen2020opinion,Hansen2019towardspecsheaf,ghrist2022cellular,riess2022diffusion} introduced a novel class of diffusion dynamics on vector cellular sheaves. In \cite{bodnar2022sheafdiff,hansen2020sheafnn,barbero2022sheafnnconn, fiorini2025sheaves, duta2023sheaf, peng2026sheaf, battiloro2022tangent,battiloro2023tangent}, neural networks operating on vector cellular sheaves over (undirected, directed, hyper) graphs with finite-dimensional signals are presented, generalizing graph neural networks. We again note however, that all these works implicitly or explicitly restrict to consider either the Levi-Civita or flat connections.  
Additionally, the work in \cite{bodnar2022sheafdiff} exploited vector cellular sheaf theory to show that the underlying geometry of the graph gives rise to oversmoothing behavior of GCNs. Also, (vector and general) cellular sheaves recently appeared in causal theory \cite{dacunto2025rck}, control \cite{hanks2025distributed}, and telecommunications \cite{grimaldi2025sheafcomm}. Finally, the works in \cite{battiloro2022tangent,battiloro2023tangent} showed that neural networks for tangent bundle signals can be implemented as certain sheaf neural networks operating on vector cellular sheaves from manifold samples. 

\section{Broader Impact, Future Directions and Limitations}\label{app:broader_impact}
The potential impact of this work extends well beyond the effectiveness of the HilbNet architecture. Our convergence result unifies and extends the graph- and vector-diffusion convergence theories of \cite{BELKIN20081289,singer2017spectral}, thereby enabling novel geometric learning systems for genuinely infinite-dimensional manifold-supported  and equipped with arbitrary connections. HilbNets are just a first (principled, transferable) instance of such systems, but our result opens several new avenues.

\textbf{Clustering and Dimensionality Reduction.} Classical Laplacian-based methods for clustering \cite{von2008consistency} and nonlinear dimensionality reduction  \cite{belkin2001laplacian,coifman2006diffusion} all rely, either explicitly or implicitly, on the convergence of the graph Laplacian to the Laplace–Beltrami operator. Our Theorem \ref{thm: main-thm} provides the analogous foundation in the Hilbert bundle setting, immediately suggesting sheaf-spectral generalizations of these techniques. For instance, one may define Hilbert sheaf eigenmaps by computing the leading eigensections of $\Delta_{\mathcal{F}_n^t}$ and using them as coordinates, yielding embeddings that are aware not only of the base manifold geometry but also of the fiber-wise coupling encoded by the connection. This is particularly promising for data such as spatiotemporal fields or distributional signals, where standard spectral methods discard the internal structure of each observation. Similarly, sheaf-spectral clustering would partition data by jointly considering geometric proximity on $\mathcal{M}$ and coherence of the infinite-dimensional signals across fibers, a strictly richer criterion than what scalar graph Laplacians can capture. The finite-rank convergence guarantee of Theorem \ref{thm: diagonal finite rank convergence} ensures that such methods can be implemented with truncated signals while remaining provably consistent with the underlying continuous geometry.

\textbf{Structured Self-Supervised Learning.} Self-supervised learning (SSL) has largely been built around objectives that encourage invariance or equivariance with respect to augmentations of the base domain \cite{you2020graph, thakoor2022largescale, dangovski2021equivariant, yu2024self}. Our framework suggests a more structured family of SSL methods. Because the Hilbert sheaf Laplacian encodes both spatial geometry and fiber-wise transport, one can design contrastive or non-contrastive objectives that encourage learned representations to be sections of an appropriate bundle, i.e., to satisfy local consistency constraints dictated by the restriction maps. Such objectives would yield representations that are not merely invariant to domain augmentations but are geometrically coherent across fibers, a property that is especially desirable when downstream tasks depend on the relational structure between signals at different manifold points, as in multi-sensor forecasting or multi-agent coordination.

\textbf{Generalizability Theory and Mechanistic Interpretability for Transformers.} Another promising direction concerns the connection between our framework and transformer architectures \cite{vaswani2017attention}. In a standard transformer, each token is equipped with a positional encoding, either fixed (e.g., sinusoidal) or learned, that situates it in a continuous geometric space \cite{su2024roformer}. These positional encodings can be viewed as sampled points on a base manifold $\mathcal{M}$, with the encoding scheme implicitly defining the metric structure of the domain. The residual-stream representation at each position, or, in the infinite-width or infinite-context limit, the full distribution over possible activations, then lives in a Hilbert space fibered over this base point, so that the collection of representations across positions constitutes a section of a Hilbert bundle over $\mathcal{M}$. The attention mechanism then defines a data-dependent transport between these fibers. In this view, a self-attention layer is an instance of a single-step diffusion under a learned sheaf Laplacian whose base graph is determined by the sampled positional encodings. This may in some sense be viewed as a more precise incarnation of the recently introduced \textit{geodesic hypothesis} \cite{huang2026semantictubepredictionbeating}, where the autoregressive output of transformers is modeled by a stochastic diffusion PDE in Euclidean space, rather than the proposed manifold-theoretic treatment. 
Making this correspondence precise would allow one to import the convergence and transferability machinery of Theorems \ref{thm: main-thm}--\ref{thm: diagonal finite rank convergence} into the transformer setting. 
We note that there exists a recent line of work that attempts to adapt Laplacian-based GNN generalization and stability results to transformers \cite{porrasvalenzuela2026sizetransferabilitygraphtransformers}, but due to their fundamental reliance on the convegrence result of \cite{BELKIN20081289}, must work in a somewhat simplified setting. In conjunction with our convergence result, the generality of the Hilbert sheaf Laplacian is potentially well-suited for establishing extensions of these generalization theorems for a broader class of transformers.
On the interpretability side, decomposing attention into a positional-affinity component and a fiber-transport component offers a principled lens through which to study what information each head moves and how it is transformed in transit. One could, for example, measure the holonomy of the learned connection around closed loops of attention to detect whether a head implements a nontrivial geometric transformation. While formalizing these connections requires treatment of the data-dependence of the connection and the interplay between positional and content-based attention, the mathematical infrastructure developed in this work provides a strong starting point. 

\textbf{Limitations.} Our theoretical guarantees rest on some assumptions that may not hold exactly in practice, as is usually the case. The convergence results (Theorems~\ref{thm: main-thm}--\ref{thm: diagonal finite rank convergence}) require the base manifold $\mathcal{M}$ to be closed (compact without boundary), sections to be $C^3$ or $C^4$ smooth, and samples to be drawn i.i.d.\ from the uniform distribution. Real-world sensor networks, such as those in our traffic experiments, are neither uniformly sampled nor necessarily supported on compact manifolds, and measured signals are typically noisy rather than smooth. These gaps between theory and practice are absolutely standard in the Laplacian convergence literature: the foundational results of \cite{BELKIN20081289}, as well as subsequent works on vector diffusion maps \cite{singer2012vdm,singer2017spectral} and manifold-based learning \cite{coifman2006diffusion,belkin2001laplacian}, all assume compact manifolds and uniform or smooth sampling densities, yet are routinely and successfully applied under weaker conditions. Our numerical results confirm that HilbNets likewise remain effective under these standard approximations, consistently outperforming baselines that lack the principled bundle-geometric structure. On the computational side, the network sheaf Laplacian $\Delta_{\mathcal{F}_{n,d}^t} \in \mathbb{R}^{nd \times nd}$ scales quadratically in the product of spatial and fiber dimensions, which may become prohibitive for very large graphs or high-dimensional signal discretizations without further sparsification or approximation strategies. Finally, broader and tailored empirical validation on other infinite-dimensional signal types, such as distributional or functional data on manifolds, remains an important direction for future work.

\section{Existing Convolutional Architectures as Actualizations of HilbNets} \label{sec:existing-architectures}

\subsection{Universality of HilbNets} \label{app:univers_hilbnet}
Due to the first-principles approach to the construction of HilbNets, we note that they serve as a sort of universal architecture. That is, several popular variations of convolutional architectures in geometric deep learning, across domains and modalities, can be derived as particular instantiations of HilbNets --- even when their construction does not explicitly invoke cellular sheaves. We consider a few concrete examples of this philosophy. 

\subsubsection{CNNs, GNNs, and Sheaf NNs}\label{app:cnngnnsheaf}  Convolutional neural networks (CNNs) and graph neural networks (GNNs) are both often formalized as acting on signals $f: \mathcal{M} \to \mathbb{R}$. In particular, by the consistency of the discrete Fourier transform, CNNs operating on fixed grid can be viewed as operating on a principled discretization of $\mathcal{M} = [0,1]^2$. We may more generally view the uniform grid on which CNNs act as a particular instantiation of a graph, and thus view CNNs as a special case of GNNs \cite{kipf2016semi}, and the relevant convolutional operator as the graph Laplacian. The GNNs can likewise be viewed as principled discretizations of manifold neural networks \cite{wang2022stability}, precisely by the convergence result of \cite{BELKIN20081289}. Sheaf neural networks \cite{hansen2020sheafnn, bodnar2022sheafdiff} may then be understood as an enrichment that allows for matrix-valued edge weights rather than scalars during convolution. Further, in \cite{battiloro2022tangent}, it was made precise that sheaf neural networks may in particular, be viewed as acting upon tangent bundle signals $s: \mathcal{M} \to T\mathcal{M}$, via the convergence result of \cite{singer2012vdm}. In particular, the convergence result of \cite{singer2012vdm} applies strictly to tangent bundle setting, providing an explanation as to why existing works of sheaf neural networks either explicitly or implicitly restrict to discretizing the tangent bundle with either flat or Levi-Civita connections \cite{barbero2022sheafnnconn, bamberger2025bundle}. As such, existing sheaf neural network architectures can be typically be recovered as HilbNets under the paradigm that $\mathcal{E}=T\mathcal{M}$.

\subsubsection{Equivariant CNNs and GNNs}\label{app:equiv}  The equivariant case of CNNs and GNNs arises when we wish for our architecture to respect some underlying symmetry group $G \curvearrowright \mathcal{M}$. More generally, there may not exist a global representation of the symmetry, but rather only a local representation. In physics, this is known as a \textit{gauge symmetry}, and is formalized by considering our signal $f$ as a section of a bundle with connection $(\mathcal{M},\mathcal{E}, \nabla)$, where the group action is then encoded as a symmetry of the connection $\nabla$. As such, \textit{gauge-equivariant} CNNs and GNNs constitute perhaps the most general equivariant architectures in the literature (see \citet{Weiler2026EquivariantCNNs} for a thorough introduction in the CNN case), and are formulated precisely as convolutions at the level of sections of a frame bundle (although the necessary datum of a connection is often suppressed in the literature). From this perspective, it is natural that gauge-equivariant CNNs and GNNs can be derived as particular cases of cellular sheaf networks (see the \citet{Li2025LearningFromFrustration} for a more in-depth exploration of this perspective). Thus, by noting that these architectures can be equivalently reformulated in the language of sheaves, our Theorem \ref{thm: main-thm}, as well as the consequent transferability result, can be seen to apply to these architectures. In particular, this may be understood as establishing the theoretical bedrock for the intuitive idea that as the underlying mesh or graph is increasingly refined, these architectures indeed approach continuous operators on sections of the underlying bundle, while maintaining equivariance across scales. 

\subsubsection{Spatio-Temporal GNNs}\label{app:stgnn} A formal treatment of signal processing of graphs whose signals at each node are timeseries is still emerging and is an active area of research, and these filtering techniques then serve as the basis for the development of spatiotemporal graph neural networks (STGNNs). In this literature, it is common to consider convolutional operators built via the \textit{joint Laplacian} $L_{J} = L_T \otimes \operatorname{Id}_G + \operatorname{Id}_T \otimes L_G$,  where $L_G$ is the graph-domain Laplacian and $L_T$ is the `time-domain' Laplacian  \cite{Grassi2018TimeNodeSignalProcessing}. Due to this decomposition, the resulting `time' and `space' filters commute, allowing for the development of both \textit{time-and-space} or \textit{time-then-space} STGNNs \cite{marisca2025_2506.15507}. Consider now the continuous setting. Convolutions via $\Delta_\nabla$ intertwine the spatial and temporal domains, and this is precisely encoded by our parallel transport maps. So suppose our bundle is trivial $\mathcal{E} = \mathcal{M} \times L^2(\mathbb{R}^n)$ with trivial connection $\nabla$.  In this case, our parallel transport maps are simply $P_{\gamma} = Id$, and the connection Laplacian collapses to the Laplace-Beltrami operator. On product manifolds $\mathcal{M} = \mathcal{M}_1 \times \mathcal{M}_2$, the Laplace-Beltrami operator decomposes as $\Delta_\mathcal{M} = \Delta_{\mathcal{M}_1} \otimes \operatorname{Id}_{\mathcal{M}_2} + \operatorname{Id}_{\mathcal{M}_1}  \otimes \Delta_{\mathcal{M}_2}$, implying that heat flow is given by $
e^{t \Delta_M}=e^{t \Delta_{M_1}} \otimes e^{t \Delta_{M_2}}
$ (see \citep{Grigoryan2009} for a formal derivation). Expressing $e^{t{\Delta_\mathcal{M}}}$ as an integral operator via the heat kernel, the fact that spatial and temporal filters commute in this case is then simply an application of Fubini's theorem. As such, we see that the type of filtering commonly considered in STGNNs is recovered precisely as the `base case' of HilbNet, and in particular, our robustness guarantees can also be applied to these STGNN architectures.

\section{Practical Implementations of Parallel Transport}
\label{sec:PT-Algo}

As we have established, a key strength of the HilbNets architecture is the ability to encode signal-level geometric priors through the principled incorporation of relevant parallel transport operators. This naturally raises the question as to how these transport operators should be implemented in-practice. We may consider three general classes of use-cases. 

\textbf{Task-Inherent Priors} The most theoretically well-grounded case is when knowledge of the geometry of task itself may be utilized to build our parallel transport operators. For instance, suppose the nodes of our base graph represent cameras and the task is multi-view 3D recognition. Then the relevant transport operators should record the rotation $P_{x_i \to x_j} \in SO(3)$ that aligns views as in \cite{Li2025LearningFromFrustration}, resulting in an appropriately equivariant sheaf Laplacian operator. More generally, whenever the data modality lacks a `global' reference frame or coordinate system, then the appropriate alignments between local reference frames precisely gives rise to a connection and the associated parallel transport. For instance, biomedical timeseries analysis often utilizes algorithms based upon the large deformation diffeomorphic metric mapping (LDDMM) \cite{Beg2005LDDMM}, a core part of which may be understood as extracting the necessary parallel transport from the data using the first-order ODE definition. We may also consider the tangent bundle networks of \cite{battiloro2023tangent} in this category, as they use explicit vector-field data from which they may then compute the necessary sheaf transition maps via local PCA. As such, we see that HilbNets may be applied to any of these settings, where the relevant parallel transport would be completely determined by the task itself and thus, can typically be explicitly pre-computed.  

\textbf{Domain-Inherent Priors} Alternatively, it is often the case that we may not have access to task-specific priors, but rather to general knowledge of the structure of the signal-domain. For instance, in many domains, our generic stalks  may be equipped with the additional structure of a reproducing kernel Hilbert space (RKHS), i.e. $\mathcal{H}_\kappa$. Analogously to the previous case, we may then view parallel transport as operators that that maximize alignment, but now with respect to our kernel. For instance, given a choice of similarity kernel $\kappa$ between timeseries or distributions, then given our initial section data $\{S_u\}_{u=1}^{F_0}$, we may define our parallel transport operator matrices via 
\begin{equation}\label{eqn:kernel_reg}
    P_{x_j\to x_i}^{(d,e_{ij})} := \arg \max_{\mathbf{T} \in \mathcal{C}} \sum^{F_0}_{q=1}\kappa(\mathbf{T}\mathbf{S}_{i,q},\mathbf{S}_{j,q})
\end{equation}for some suitable class of operators $\mathcal{C} \subseteq O(d)$, and force the diagonal blocks of the sheaf Laplacian to be the sum of the scalar edge weights given by the kernel. This is exactly the discretized Sheaf Laplacian from \eqref{eq:discrete_sheaf_laplacian}. In practice, given a choice of similarity kernel on our fibers, we may then either precompute these parallel transport operators using the above optimization objective or learn them end-to-end with the model's learned filters.  In the latter case, \eqref{eqn:kernel_reg} is applied as a regularization to the task loss. The special case in which \eqref{eqn:kernel_reg} is not employed at all, $\mathcal{C} = O(d)$, $K=1$ in \eqref{eqn:discretized-hilbnet},  recovers the sheaf diffusion neural network from \cite{bodnar2022sheafdiff}. As such, we see that the greater generality of the Hilbert sheaf Laplacian consequently lends itself to more flexible and perhaps more broadly applicable design choices than existing sheaf neural networks.

\section{Parallel Transport Parametrizations}\label{app:ptrans_param}

This appendix details the finite-dimensional transport parametrizations used to instantiate the network sheaf Laplacian $\Delta_{\mathcal{F}_{n,d}^t}$ in the experiments. In particular, this particular instantiation of HilbNets may be considered as a paticular of the end-to-end learning paradigm introduced in \ref{sec:PT-Algo} for polynomial filters and a few demonstrative classes of $\mathcal{C}$ and $\kappa$. The discussion should be read as a continuation of the two-stage discretization in Section~\ref{def: discretized-HilbNets}: after sampling the manifold, we obtain a Hilbert cellular sheaf $\mathcal{F}_n^t$; after sampling or projecting the fibers, we obtain a finite-dimensional network sheaf $\mathcal{F}_{n,d}^t$ with $d$-dimensional stalks. 

\subsection{From bundle transports to network-sheaf restrictions}
\label{app:transport:laplacian}

Recall that, before signal discretization, the Hilbert cellular sheaf $\mathcal{F}_n^t$ induced by a sample $\mathcal{X}_n=\{x_1,\dots,x_n\}$ assigns the node stalk
\begin{equation}
\mathcal{F}_n^t(x_i)
=
\mathcal{E}_{x_i}
\end{equation}
and, for an edge $e_{ij}\in $, the edge stalk
\begin{equation}
\mathcal{F}_n^t(e_{ij})
=
\mathcal{E}_{m_{\gamma_{ij}}},
\end{equation}
where $m_{\gamma_{ij}}$ is the midpoint of the chosen geodesic between $x_i$ and $x_j$. Its restriction maps are weighted parallel transports of the form
\begin{equation}
(\mathcal{F}_n^t)_{x_i\leq e_{ij}}
=
\sqrt{k_{ij}^t}\,
P_{x_i\to m_{\gamma_{ij}}},
\qquad
k_{ij}^t
=
\exp
\left(
-\frac{d_\mathcal{M} (x_i,x_j)^2}{4t}
\right). 
\end{equation}
After fiber discretization, the network sheaf $\mathcal{F}_{n,d}^t$ has finite-dimensional stalks, which we identify with $\mathbb{R}^d$ after choosing the first $d$ basis elements of the fiber Hilbert space. The corresponding restriction maps are matrices
\begin{equation}
(\mathcal{F}_{n,d}^t)_{x_i\leq e_{ij}}:
\mathbb{R}^d
\to
\mathbb{R}^d.
\end{equation}

For the pragmatic parametrizations used in the experiments, it is useful to express the same sheaf Laplacian in node-to-node transport coordinates. Fix an orientation convention for each edge $e_{ij}$. After identifying the edge stalk with the coordinate system of one endpoint, we write the restrictions as
\begin{equation}
(\mathcal{F}_{n,d}^t)_{x_i\leq e_{ij}}
=
\sqrt{k_{ij}^t}\,I_d,
\qquad
(\mathcal{F}_{n,d}^t)_{x_j\leq e_{ij}}
=
\sqrt{k_{ij}^t}\,P_{x_j\to x_i}^{(d,e_{ij})},
\label{eq:pragmatic-restrictions}
\end{equation}
where
\begin{equation}
P_{x_j\to x_i}^{(d,e_{ij})}
\in
O(d)
\end{equation}
is the finite-dimensional transport carrying the discretized fiber over $x_j$ into the discretized fiber over $x_i$ along edge $e_{ij}$. When the transport comes from the continuous Hilbert bundle, this matrix is the finite-dimensional representation of the corresponding parallel transport, after the chosen fiber projection and coordinate identification. When the continuous connection is unknown, $P_{x_j\to x_i}^{(d,e_{ij})}$ is instead chosen from a transport hypothesis class.

With the shorthand
\begin{equation}
P_{j\to i}^{e}
:=
P_{x_j\to x_i}^{(d,e_{ij})},
\end{equation}
the action of the network sheaf Laplacian on a sampled signal
\begin{equation}
\mathbf{s}_{n,d}
=
(\mathbf{s}_{x_1},\dots,\mathbf{s}_{x_n})
\in
C^0(\mathcal{F}_{n,d}^t;G_n),
\qquad
\mathbf{s}_{x_i}\in\mathbb{R}^d,
\end{equation}
takes the concrete form
\begin{equation}
(\Delta_{\mathcal{F}_{n,d}^t}\mathbf{s}_{n,d})_{x_i}
=
\sum_{x_j\in\mathcal{N}(x_i)}
k_{ij}^t
\left(
\mathbf{s}_{x_i}
-
P_{j\to i}^{e_{ij}}\mathbf{s}_{x_j}
\right).
\label{eq:pragmatic-laplacian-action}
\end{equation}
Equivalently, $\Delta_{\mathcal{F}_{n,d}^t}\in\mathbb{R}^{nd\times nd}$ is the block matrix with blocks
\begin{equation}
(\Delta_{\mathcal{F}_{n,d}^t})_{ij}
=
\begin{cases}
\displaystyle
\sum_{r:\,e_{ir}\in E}
k_{ir}^t I_d,
& i=j, \\[1.2em]
\displaystyle
-k_{ij}^t P_{j\to i}^{e_{ij}},
& i\neq j \text{ and } e_{ij}\in E, \\[0.5em]
0,
& \text{otherwise.}
\end{cases}
\label{eq:pragmatic-laplacian-blocks}
\end{equation}
This is the same sheaf Laplacian defined in Section~\ref{Hilbert-Sheaf-Laplacian}, specialized to the coordinate convention in~\eqref{eq:pragmatic-restrictions}. The scalar weight $k_{ij}^t$ controls how strongly the two sampled base points interact, while the transport matrix $P_{j\to i}^{e_{ij}}$ controls how the two discretized fibers are aligned before their signals are compared.

\subsection{Transport hypothesis classes}
\label{app:transport:classes}

As mentioned in \ref{sec:PT-Algo}, the true  connection is often unknown and parallel transport maps must be learned. In the experimental results of this work, we always learn the parallel transport maps end-to-end using the task loss regularized with \eqref{eqn:kernel_reg}, and we restrict each edgewise transport to hypothesis classes $\mathcal{C}$ such that
\begin{equation}
P_{j\to i}^{e_{ij}}
\in
\mathcal{C}
\subseteq
O(d).
\end{equation}
We use three transport classes in the experiments: frozen identity, free orthogonal transports, and circulant or time-stationary transports.

\paragraph{Frozen identity.}
The simplest class is
\begin{equation}
\mathcal{C}_{\mathrm{id}}
=
\{I_d\}.
\end{equation}
This recovers the usual assumption that neighboring fibers are canonically identified and that no non-trivial alignment is needed. In this case,
\begin{equation}
P_{j\to i}^{e_{ij}}
=
I_d
\end{equation}
for every edge, and~\eqref{eq:pragmatic-laplacian-action} becomes
\begin{equation}
(\Delta_{\mathcal{F}_{n,d}^t}\mathbf{s}_{n,d})_{x_i}
=
\sum_{x_j\in\mathcal{N}(x_i)}
k_{ij}^t
\left(
\mathbf{s}_{x_i}
-
\mathbf{s}_{x_j}
\right).
\end{equation}
Thus, frozen identity reduces the sheaf Laplacian to a standard weighted graph Laplacian applied independently to each fiber coordinate, and, therefore, HilbNets to standard GCNs. It is a useful baseline: any improvement over frozen identity quantifies the value of learning or imposing non-trivial transports.

\paragraph{Free orthogonal transports.}
The most expressive finite-dimensional class is
\begin{equation}
\mathcal{C}_{\mathrm{free}}
=
O(d),
\end{equation}
or, when the target transports are known to lie in the identity component,
\begin{equation}
\mathcal{C}_{\mathrm{free}}
=
SO(d).
\end{equation}
In the experiments, we parameterize free orthogonal transports by products of Householder reflections.

For a nonzero vector $v\in\mathbb{R}^d$, define the Householder reflection
\begin{equation}
H(v)
=
I_d
-
2\frac{vv^\top}{\|v\|_2^2}.
\end{equation}
Each $H(v)$ is orthogonal and symmetric:
\begin{equation}
H(v)^\top H(v)=I_d,
\qquad
H(v)^\top=H(v).
\end{equation}
For each oriented edge $e_{ij}$, we store $R$ Householder vectors
\begin{equation}
v_{e_{ij},1},\dots,v_{e_{ij},R}
\in
\mathbb{R}^d
\end{equation}
and define
\begin{equation}
P_{j\to i}^{e_{ij}}
=
H(v_{e_{ij},R})
\cdots
H(v_{e_{ij},1}).
\label{eq:householder-transport}
\end{equation}
Therefore $P_{j\to i}^{e_{ij}}$ is exactly orthogonal for every parameter value.

By the Cartan--Dieudonn\'e theorem, every matrix in $O(d)$ can be represented as a product of at most $d$ Householder reflections. In practice, the choice of $R$ is dataset-dependent: in our synthetic experiments we use $R=16$ for $d=m=10$, a modest over-parametrization that aids optimization in our traffic experiments we use $R=8$ for $d=T=12$, which parameterizes a strict Householder subset of $O(T)$ rather than the full orthogonal group, and which we find sufficient for the alignment patterns observed in the data. If one fixes exactly $R$ non-degenerate reflections, the determinant parity is fixed:
\begin{equation}
\det(P_{j\to i}^{e_{ij}})
=
(-1)^R.
\end{equation}
Thus, an even number of reflections parameterizes the identity component $SO(d)$, while an odd number parameterizes the other component. In the synthetic Gaussian experiment, the ground-truth Levi-Civita transports are obtained continuously from the identity along geodesics and, after Cholesky rescaling, lie in the identity component. Hence an even number of reflections is appropriate. If both connected components of $O(d)$ are needed, one may add a fixed final reflection or a discrete sign component. For numerical stability, the implementation uses
\begin{equation}
H(v_{e_{ij},r})
=
I_d
-
2\frac{v_{e_{ij},r}v_{e_{ij},r}^\top}
{\|v_{e_{ij},r}\|_2^2 + \epsilon},
\end{equation}
with a small $\epsilon>0$ to avoid division by zero at degenerate $v_{e_{ij},r}$. This recovers the exact Householder reflection $H(q) = I_d - 2qq^\top$ with $q = v_{e_{ij},r}/\|v_{e_{ij},r}\|_2$ in the limit $\epsilon\to 0$ for $\|v_{e_{ij},r}\|_2>0$, and yields a matrix orthogonal up to numerical precision.

The free class is useful as an expressivity test. If the target transport belongs to $O(d)$ in the chosen coordinates, then the Householder class can represent it. This is precisely the role it plays in the synthetic statistical-bundle experiment, where Cholesky rescaling converts the intrinsic Wasserstein-unitary Levi-Civita transports into Euclidean-orthogonal matrices. In real-data experiments, the free class serves as a high-capacity transport baseline.

\paragraph{Circulant or time-stationary transports.}
For time-series fibers, the discretized fiber dimension is the number of retained time samples, so we write $d=T$. A natural prior is that inter-fiber transport should commute with time shifts. Let
\begin{equation}
\mathsf{S}_T:\mathbb{R}^T\to\mathbb{R}^T
\end{equation}
be the cyclic shift operator. A time-stationary transport is one satisfying
\begin{equation}
P_{j\to i}^{e_{ij}}\mathsf{S}_T
=
\mathsf{S}_T P_{j\to i}^{e_{ij}}.
\end{equation}
The commutant of the cyclic shift is the algebra of circulant matrices. Requiring in addition that $P_{j\to i}^{e_{ij}}$ be orthogonal gives the class of orthogonal circulant transports.

Let $F_T$ denote the unitary discrete Fourier transform matrix. Then every orthogonal circulant transport has the form
\begin{equation}
P_{j\to i}^{e_{ij}}
=
F_T^{*}
\operatorname{diag}(\lambda_{e_{ij}})
F_T,
\qquad
|\lambda_{e_{ij},k}|=1.
\label{eq:circulant-fourier-form}
\end{equation}
For real-valued time-domain signals, the Fourier multipliers must satisfy conjugate symmetry:
\begin{equation}
\lambda_{e_{ij},T-k}
=
\overline{\lambda_{e_{ij},k}}.
\end{equation}
We therefore store only the independent positive-frequency phases. Let
\begin{equation}
m_T
=
\left\lfloor
\frac{T-1}{2}
\right\rfloor.
\end{equation}
The learnable parameter for edge $e_{ij}$ is
\begin{equation}
\varphi_{e_{ij}}
=
(\varphi_{e_{ij},1},\dots,\varphi_{e_{ij},m_T})
\in
\mathbb{R}^{m_T}.
\end{equation}
We define
\begin{equation}
\lambda_{e_{ij},0}=1,
\qquad
\lambda_{e_{ij},k}=e^{i\varphi_{e_{ij},k}},
\qquad
\lambda_{e_{ij},T-k}=e^{-i\varphi_{e_{ij},k}},
\quad
k=1,\dots,m_T.
\end{equation}
If $T$ is even, the Nyquist frequency is self-conjugate and is fixed to
\begin{equation}
\lambda_{e_{ij},T/2}=1
\end{equation}
for the identity-component parametrization. This yields a real orthogonal circulant matrix through~\eqref{eq:circulant-fourier-form}. Equivalently, $P_{j\to i}^{e_{ij}}$ can be constructed in real arithmetic from its first column. For $r=0,\dots,T-1$, define
\begin{equation}
c_{e_{ij}}[r]
=
\frac{1}{T}
\left[
1
+
\mathbb{I}_{\{T\text{ even}\}}(-1)^r
+
2\sum_{k=1}^{m_T}
\cos\left(
\varphi_{e_{ij},k}
+
\frac{2\pi kr}{T}
\right)
\right].
\end{equation}
The full circulant matrix is then
\begin{equation}
(P_{j\to i}^{e_{ij}})_{ab}
=
c_{e_{ij}}[(a-b)\bmod T],
\qquad
a,b=0,\dots,T-1.
\end{equation}
This form is convenient for implementation because it avoids explicitly manipulating complex-valued matrices. The circulant class has only
\begin{equation}
m_T
=
\left\lfloor
\frac{T-1}{2}
\right\rfloor
\end{equation}
parameters per edge, compared with $d(d-1)/2$ degrees of freedom for a general orthogonal matrix. Each phase $\varphi_{e_{ij},k}$ has a direct interpretation as the phase lag at frequency $k$ between the two endpoint fibers. Thus, the transport may advance or delay oscillatory components across an edge, but it cannot arbitrarily mix frequencies or reshape the waveform. This is the intended inductive bias for spatiotemporal signals such as traffic or sensor time series, where neighboring sensors may observe delayed or phase-shifted versions of related temporal patterns. In the synthetic experiment, the same class is used more abstractly as a structured subgroup of $O(d)$ against which the ground-truth transports can be projected.

\subsection{Parameter counts}
\label{app:transport:paramcounts}

For a graph $G_n=(\mathcal{X}_n,E)$ with $|E|$ undirected edges and discretized fiber dimension $d$, the transport parameter counts are summarized in Table~\ref{tab:transport_param_counts}.

\begin{table}[h]
\centering
\small
\begin{tabular}{lll}
\toprule
Transport class & Parameters per edge & Interpretation \\
\midrule
Frozen identity & $0$ & no learned alignment \\
Free Householder & $R d$ & product of $R$ reflections in $O(d)$ \\
Full circulant & $\lfloor(d-1)/2\rfloor$ & one phase per positive frequency \\
\bottomrule
\end{tabular}
\caption{Parameter counts for the finite-dimensional transport classes used in $\mathcal{F}_{n,d}^t$.}
\label{tab:transport_param_counts}
\end{table}

Thus, the free class is maximally expressive but parameter-heavy, while the circulant classes encode a strong time-stationary prior and scale linearly with the number of frequencies or bands.

\section{Additional Experimental Details}\label{app:experiments}
 
\subsection{Synthetic experiments: the statistical bundle over centered Gaussians}
\label{app:synthetic}
 
\subsubsection{The bundle}
\label{app:synthetic:bundle}
 
The base manifold is $\mathcal{M}=\mathrm{Sym}^{++}(p)$, the open cone of $p\times p$ symmetric positive-definite matrices. Each $\Sigma\in\mathcal{M}$ parameterizes a centered Gaussian $\mathcal{N}(0,\Sigma)$ on $\mathbb{R}^p$ with density $\rho_\Sigma$. We equip $\mathcal{M}$ with the Otto-Wasserstein metric, namely the Riemannian metric induced on $\mathrm{Sym}^{++}(p)$ by the optimal-transport distance $W_2$ between centered Gaussian measures.
 
Concretely, the tangent space is naturally identified with symmetric matrices,
\begin{equation}
T_\Sigma\mathcal{M}
\cong
\mathrm{Sym}(p),
\end{equation}
and the Otto-Wasserstein inner product between $U,V\in\mathrm{Sym}(p)$ is
\begin{equation}
W_\Sigma(U,V)
=
\tfrac{1}{2}\,
\mathrm{Tr}\bigl(L_\Sigma[U]\,V\bigr),
\qquad
L_\Sigma[U]\Sigma+\Sigma L_\Sigma[U]=U,
\label{eq:otto_metric_app}
\end{equation}
where $L_\Sigma[U]$ is the unique symmetric solution of the Lyapunov equation.
 
Above each $\Sigma$, the ambient Hilbert fiber is the vector-field space
\begin{equation}
\mathcal{H}_\Sigma
:=
L^2(\rho_\Sigma;\mathbb{R}^p),
\end{equation}
equipped with the inner product
\begin{equation}
\langle a,b\rangle_{\mathcal{H}_\Sigma}
=
\int_{\mathbb{R}^p}
a(x)^\top b(x)\rho_\Sigma(x)\,dx.
\end{equation}
This fiber is genuinely infinite-dimensional. The finite-rank fiber used in the synthetic experiments is the Otto-velocity image of covariance perturbations:
\begin{equation}
\mathcal{E}_\Sigma
:=
\bigl\{
v_V(x)=L_\Sigma[V]x
:
V\in\mathrm{Sym}(p)
\bigr\}
\subset
L^2(\rho_\Sigma;\mathbb{R}^p).
\label{eq:otto_velocity_fiber}
\end{equation}
Thus, the computational fiber $\mathcal{E}_\Sigma$ is a finite-dimensional statistical subspace of the ambient Hilbert fiber. The map $V\mapsto v_V$ is an isometry between $\mathrm{Sym}(p)$ with the Otto-Wasserstein metric and $\mathcal{E}_\Sigma$ with the $L^2(\rho_\Sigma;\mathbb{R}^p)$ inner product. Indeed, for $U,V\in\mathrm{Sym}(p)$,
\begin{equation}
\langle v_U,v_V\rangle_{\mathcal{H}_\Sigma}
=
\mathbb{E}_{x\sim\mathcal{N}(0,\Sigma)}
\bigl[
x^\top L_\Sigma[U]L_\Sigma[V]x
\bigr]
=
\mathrm{Tr}\bigl(L_\Sigma[U]L_\Sigma[V]\Sigma\bigr).
\end{equation}
Using $V=L_\Sigma[V]\Sigma+\Sigma L_\Sigma[V]$ and the fact that $L_\Sigma[U]$, $L_\Sigma[V]$, and $\Sigma$ are symmetric, this equals
\begin{equation}
\tfrac{1}{2}
\mathrm{Tr}\bigl(L_\Sigma[U]V\bigr)
=
W_\Sigma(U,V).
\end{equation}
Therefore,
\begin{equation}
\dim \mathcal{E}_\Sigma
=
\dim \mathrm{Sym}(p)
=
d
=
p(p+1)/2.
\end{equation}
Since the fibers $\mathcal{E}_\Sigma$ are already $m$-dimensional, the fiber discretization of Proposition~\ref{prop:signal_discretization} is exact with $d=m$. Throughout this appendix, we therefore write $d=m$ and use the network sheaf notation $\mathcal{F}_{n,d}^t$ and Laplacian $\Delta_{\mathcal{F}_{n,d}^t}\in\mathbb{R}^{nd\times nd}$ from Section~\ref{def: discretized-HilbNets}.
 
This construction is useful because it gives a faithful but tractable proxy for the Hilbert-bundle settings that motivate HilbNets. It is \emph{faithful} in the sense that the ambient fibers $L^2(\rho_\Sigma;\mathbb{R}^p)$ are infinite-dimensional vector-field Hilbert spaces, and the Levi-Civita connection of $(\mathrm{Sym}^{++}(p),W_\Sigma)$ yields non-trivial, metric-compatible parallel transports. It is \emph{tractable} because the Otto-velocity map selects a finite-rank statistical sub-bundle on which the metric, parallel-transport ODE, and projection of ground-truth transports onto restricted transport classes admit closed-form numerical evaluation. Thus, the experiments probe the finite-rank computational slice used by the implementation. 
\subsubsection{Levi-Civita connection and closed-form parallel transport}
\label{app:synthetic:lc}
 
The Levi-Civita connection on $(\mathrm{Sym}^{++}(p),W_\Sigma)$ is the canonical metric-compatible torsion-free connection associated with the Otto-Wasserstein metric. In covariance coordinates, its Christoffel symbol is
\begin{equation}
\Gamma_\Sigma(U,V)
=
\tfrac{1}{2}
\bigl(
U L_\Sigma[V]
+
V L_\Sigma[U]
\bigr)
\in
\mathrm{Sym}(p),
\label{eq:christoffel_app}
\end{equation}
which is symmetric in $(U,V)$, as required for a torsion-free connection.
 
Let $\Sigma_t$ denote the Wasserstein geodesic from $\Sigma_0$ to $\Sigma_1$. Parallel transport of a tangent vector $V(t)\in\mathrm{Sym}(p)$ along $\Sigma_t$ is governed by
\begin{equation}
\dot V(t)
=
-
\Gamma_{\Sigma_t}
\bigl(
\dot\Sigma_t,V(t)
\bigr),
\qquad
V(0)=V_0.
\label{eq:parallel_transport_ode_app}
\end{equation}
We solve this ODE numerically by Euler integration with $50$ steps. The resulting linear map on the finite-rank fiber is the ground-truth Levi-Civita transport
\begin{equation}
P^{LC}_{x_i\to x_j}:\mathrm{Sym}(p)\to\mathrm{Sym}(p).
\end{equation}
In the notation of Def,~\ref{def:sheaf-from-bundle}, the restriction maps of the induced sheaf $\mathcal{F}_{n,d}^t$ use the midpoint transports $P^{LC}_{x_i\to m_{ij}}$, which play the role of the discretized parallel transport $P_{x_i\to m_{ij}}^{(d)}$ with $d=m$. These midpoint transports define the restriction maps used in the spectral-stability experiments. The transport-recovery experiments instead regress against the Cholesky-rescaled node-to-node transport $\tilde P^{LC}_{x_j\to x_i}$, which adopts the orientation convention of Appendix~\ref{sec:PT-Algo}.
 
Because the connection is metric-compatible, $P^{LC}_{x_i\to x_j}$ is unitary with respect to the Otto-Wasserstein inner products on the source and target fibers:
\begin{equation}
W_{x_j}
\bigl(
P^{LC}_{x_i\to x_j}U,
P^{LC}_{x_i\to x_j}V
\bigr)
=
W_{x_i}(U,V).
\label{eq:wasserstein_unitarity_app}
\end{equation}
We verify this numerically to within $0.5\%$ using $200$ Euler steps. This Wasserstein-unitarity is the geometric invariant that justifies comparing the ground-truth transports to orthogonal parametrizations after metric rescaling.
 
\subsubsection{Cholesky rescaling}
\label{app:synthetic:cholesky}
 
The free-$O(m)$ transport class used in the implementation is Euclidean-orthogonal in vectorized fiber coordinates (cf.\ the hypothesis classes in Appendix~\ref{sec:PT-Algo} with $d=m$). However, the intrinsic fiber metric is $W_\Sigma$, not the raw Frobenius metric on $\mathrm{Sym}(p)$. Therefore, $P^{LC}_{x_i\to x_j}$ is not generally orthogonal in raw coordinates.
 
Let $G_\Sigma$ be the Gram matrix of $W_\Sigma$ in a fixed basis of $\mathrm{Sym}(p)$. We factor
\begin{equation}
G_\Sigma
=
R_\Sigma^\top R_\Sigma
\end{equation}
by Cholesky decomposition and represent a fiber coordinate vector $u$ in the rescaled frame as
\begin{equation}
\tilde u
=
R_\Sigma u.
\end{equation}
In this frame, the rescaled Levi-Civita transport is
\begin{equation}
\tilde P^{LC}_{x_i\to x_j}
=
R_{x_j}
P^{LC}_{x_i\to x_j}
R_{x_i}^{-1}.
\label{eq:rescaled_transport_app}
\end{equation}
By Wasserstein-unitarity, it satisfies
\begin{equation}
\bigl(\tilde P^{LC}_{x_i\to x_j}\bigr)^\top
\tilde P^{LC}_{x_i\to x_j}
=
I_m.
\end{equation}
Hence,
\begin{equation}
\tilde P^{LC}_{x_i\to x_j}\in O(m).
\end{equation}
This is the coordinate system used in the transport-recovery experiments. In these coordinates, the free-$O(m)$ Householder class described in Appendix~\ref{sec:PT-Algo} contains the ground-truth transports. The spectral-stability metrics are computed from the assembled sheaf Laplacian $\Delta_{\mathcal{F}_{n,d}^t}$ and are invariant to this coordinate choice up to similarity transformation.
 
\subsubsection{Sample construction}
\label{app:synthetic:sample}
 
We draw samples
\begin{equation}
\Sigma_i
=
R_iD_iR_i^\top,
\qquad
i=1,\dots,n,
\end{equation}
where $R_i$ is a Haar-random $p\times p$ orthogonal matrix, obtained by QR decomposition of a standard-normal matrix, and $D_i$ is diagonal with log-uniform spectrum on $[\log 0.5,\log 2.0]$. Equivalently, the eigenvalues of $\Sigma_i$ lie in $[0.5,2.0]$ on a log-uniform scale.
 
Although $\mathrm{Sym}^{++}(p)$ is non-compact, this procedure samples a bounded subset of it. This is appropriate for the finite deployment-regime stability tests reported here, but it is not the normalized-volume sampling assumption used in the asymptotic convergence theorem.
 
We build a $k$NN graph $G_n=(\mathcal{X}_n,E)$ with $k=8$ under the Gaussian Wasserstein distance $W_2(\Sigma_i,\Sigma_j)$ and assign Gaussian-kernel weights
\begin{equation}
k_{ij}^{t}
=
\exp
\left(
-\frac{W_2(\Sigma_i,\Sigma_j)^2}{4t}
\right),
\qquad
t=0.5.
\end{equation}
The induced network sheaf $\mathcal{F}_{n,d}^t$ has per-edge restriction maps
\begin{equation}
(\mathcal{F}_{n,d}^t)_{x_i\leq e_{ij}}
=
\sqrt{k_{ij}^{t}}\,
P^{LC}_{x_i\to m_{ij}},
\end{equation}
as in Def,~\ref{def:sheaf-from-bundle}, where $m_{ij}$ is the Wasserstein-geodesic midpoint between $\Sigma_i$ and $\Sigma_j$.
 
\subsubsection{Spectral stability under sampling density increase}
\label{app:synthetic:spectral}

For each Gaussian dimension $p$, sample size $n\in\{50,100,200,400,800\}$, and random seed, we sample
$\mathcal{X}_n\subset\mathrm{Sym}^{++}(p)$, build $\mathcal{F}_{n,d}^t$ using the closed-form Levi-Civita transports, and assemble the sheaf Laplacian
\begin{equation}
\Delta_{\mathcal{F}_{n,d}^t}
\in
\mathbb{R}^{nd\times nd}
\end{equation}
as a sparse block matrix. Since the sheaf Laplacian has size $n_{\max} m \times n_{\max} m$ and we require a dense eigendecomposition, we choose $n_{\max}$ so that $n_{\max}\cdot d\approx 10^4$ remains tractable on a single CPU node, yielding $n_{\max}\in\{4000,2000,1000\}$ for $p\in\{2,3,4\}$ (i.e., $d\in\{3,6,10\}$), respectively.

Let
\begin{equation}
\lambda_1^{(n)}
\le
\dots
\le
\lambda_k^{(n)}
\end{equation}
denote the bottom-$k$ eigenvalues of $\Delta_{\mathcal{F}_{n,d}^t}$, with $k=32$, and define $\lambda_i^{(n_{\max})}$ analogously for the reference operator $\Delta_{\mathcal{F}_{n_{\max},m}^t}$. We measure both the aggregate $\ell_2$ and worst-case relative spectral discrepancy of the bottom-$32$ eigenvalues of $\Delta_{\mathcal{F}_{n,d}^t}$ against a high-resolution reference, sweeping $p\in\{2,3,4\}$ and $n\in\{50,100,200,400,800\}$, and averaging over 5 sampling realizations. Fig.~\ref{fig:op_conv} shows a monotone decrease for both metrics across all dimensions, demonstrating that the sheaf Laplacian stabilizes  as manifold sampling density increases, and faster for higher signal sampling densities.
\begin{figure}
    \centering
    \includegraphics[width=1\linewidth]{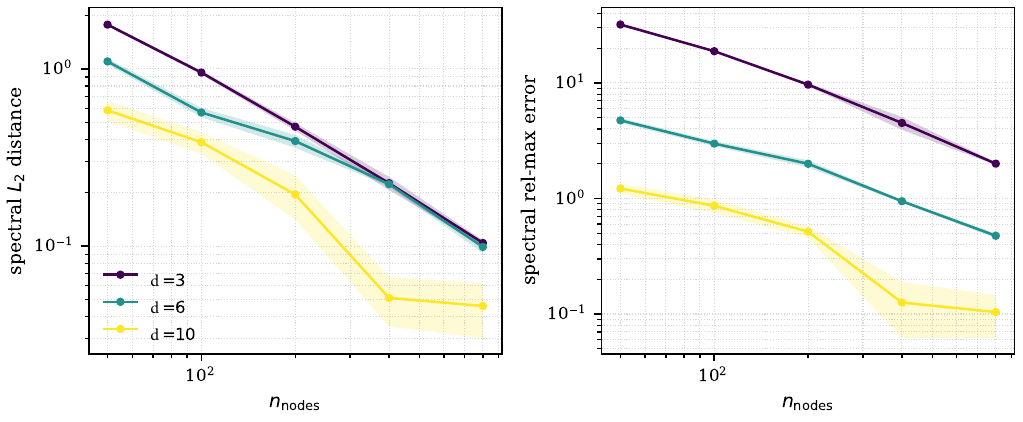}
    \caption{Spectral stability of $\Delta_{\mathcal{F}_{n,d}^t}$ for different signal and manifold sampling densitites. \textbf{Left}: aggregate $\ell_2$ eigenvalues discrepancy. \textbf{Right}: worst-case relative error.}
    \label{fig:op_conv}
\end{figure}
Moreover,

\paragraph{Aggregate discrepancy.} Fig.~\ref{fig:op_conv} (Left) reports the low-frequency spectral $\ell_2$ discrepancy
\begin{equation}
\mathrm{spec\text{-}L_2}(n)
=
\frac{1}{k}
\left(
\sum_{i=1}^{k}
\bigl(
\lambda_i^{(n)}
-
\lambda_i^{(n_{\max})}
\bigr)^2
\right)^{1/2}.
\label{eq:spec_l2_app}
\end{equation}
We average across three seeds and report $\bar x\pm s$ as the shaded band.

\paragraph{Worst-case discrepancy.} Fig.~\ref{fig:op_conv} (Right) reports the relative max error over the bottom-$k$ eigenvalues:
\begin{equation}
\mathrm{spec\text{-}rel\text{-}max}(n)
=
\frac{
\max_{1\le i\le k}
\left|
\lambda_i^{(n)}
-
\lambda_i^{(n_{\max})}
\right|
}{
\lambda_k^{(n_{\max})}
}.
\label{eq:spec_rel_max_app}
\end{equation}
This complements the aggregate metric by capturing worst-case low-frequency spectral error.

Overall, Fig.~\ref{fig:op_conv} shows a monotone decrease for both metrics across all dimensions, demonstrating that the sheaf Laplacian stabilizes  as manifold sampling density increases, and faster for higher signal sampling densities.
 
We train three transport parametrizations from Appendix~\ref{sec:PT-Algo}, free $O(d)$ (Householder), circulant, and frozen identity, to recover the Levi-Civita transports in Cholesky-rescaled coordinates by minimizing the per-edge transport-MSE loss
\begin{equation}
\mathcal{L}
=
\frac{1}{|E|}
\sum_{e_{ij}\in E}
\mathbb{E}_{V\sim\mathcal{N}(0,I_d)}
\left\|
P_{j\to i}^{e_{ij}}V
-
\tilde P^{LC}_{x_j\to x_i}V
\right\|_2^{2}.
\label{eq:free_transport_mse_app}
\end{equation}
Here $P_{j\to i}^{e_{ij}}\in O(d)$ is the rescaled transport produced by the model for edge $e_{ij}$, $\tilde P^{LC}_{x_j\to x_i}$ is the ground-truth Levi-Civita transport from $x_j$ to $x_i$ in Cholesky-rescaled coordinates, and $V$ is a fresh isotropic Gaussian test vector.
 
By the trace identity
\begin{equation}
\mathbb{E}_{V\sim\mathcal{N}(0,I_d)}
\|AV\|_2^2
=
\|A\|_F^2,
\end{equation}
the population loss equals the mean per-edge squared Frobenius distance
\begin{equation}
\mathcal{L}^*
=
\frac{1}{|E|}
\sum_{e_{ij}\in E}
\left\|
P_{j\to i}^{e_{ij}}
-
\tilde P^{LC}_{x_j\to x_i}
\right\|_F^{2}.
\label{eq:free_transport_frob_app}
\end{equation}
 
\paragraph{Free $O(d)$.}
The free transport is parameterized by products of Householder reflections, as detailed in Appendix~\ref{sec:PT-Algo}. Since $\tilde P^{LC}_{x_j\to x_i}\in O(d)$ in the Cholesky-rescaled frame, the free-$O(d)$ class contains the target transport and the minimum population loss is zero. Empirically, we observe $\mathcal{L}^*\approx 1.6\cdot 10^{-7}$ at convergence, confirming recovery to numerical precision.

\paragraph{Restricted classes and analytical projections.}
For each restricted transport hypothesis class $\mathcal{C}\subseteq O(d)$, the population loss minimum is the mean Frobenius distance from each ground-truth transport $\tilde P^{LC}_{x_j\to x_i}$ to its best approximation inside $\mathcal{C}$:
\begin{equation}
\mathcal{L}^*_{\mathcal{C}}
=
\frac{1}{|E|}
\sum_{e_{ij}\in E}
\min_{T\in\mathcal{C}}
\left\|
T-\tilde P^{LC}_{x_j\to x_i}
\right\|_F^{2}
=
\frac{1}{|E|}
\sum_{e_{ij}\in E}
\left\|
\tilde P^{LC}_{x_j\to x_i}
-
\mathrm{proj}_{\mathcal{C}}
\bigl(
\tilde P^{LC}_{x_j\to x_i}
\bigr)
\right\|_F^{2}.
\label{eq:restricted_class_projection_app}
\end{equation}
This is the Theory column in Table~\ref{tab:transport_recovery}.

\emph{Frozen identity.}
For $\mathcal{C}_{\mathrm{id}}=\{I_d\}$, the projection is trivial: $\mathrm{proj}_{\{I\}}(\tilde P^{LC}_{x_j\to x_i})=I_d$, so
\begin{equation}
\mathcal{L}^*_{\mathrm{frozen}}
=
\frac{1}{|E|}
\sum_{e_{ij}\in E}
\left\|
\tilde P^{LC}_{x_j\to x_i}-I_d
\right\|_F^{2}.
\end{equation}
 
\emph{Circulant.}
For the circulant class (Appendix~\ref{sec:PT-Algo} with $d=m$), let $F_d$ be the $d\times d$ DFT matrix and define
\begin{equation}
\mathcal{C}_{\mathrm{circ}}
=
\left\{
F_d^{*}
\mathrm{diag}
(e^{i\varphi_1},\dots,e^{i\varphi_d})
F_m
:
\varphi_k\in\mathbb{R}
\right\}.
\end{equation}
The Frobenius-best projection of any $T\in O(d)$ onto this class is the diagonal-phase Procrustes solution
\begin{equation}
\mathrm{proj}_{\mathrm{circ}}(T)
=
F_m^{*}
\mathrm{diag}
(e^{i\hat\varphi_1},\dots,e^{i\hat\varphi_d})
F_d,
\qquad
\hat\varphi_k
=
\arg
\bigl(
(F_dTF_d^{*})_{kk}
\bigr).
\label{eq:circ_projection_app}
\end{equation}
The zero-frequency phase is pinned to zero to preserve the constant mode, and the self-conjugate Nyquist frequency, when present, is handled according to the real-valued convention of Appendix~\ref{sec:PT-Algo}. In the synthetic experiment the circulant class is used as a structured subgroup of $O(d)$ for testing transport-class projection, rather than as a time-series prior.
 
\paragraph{Empirical vs.\ theoretical plateaus.}
In Table~\ref{tab:transport_recovery}, the Empirical column is the lowest training loss observed over the $5000$-epoch budget, while the Theory column is $\mathcal{L}^*_{\mathcal{C}}$ computed on the same edge set used during training. Both columns report means $\pm$ standard deviations across three seeds. The empirical and theoretical plateaus track each other to within $3\%$ for the circulant class and within $2.3\%$ for frozen identity, with closer agreement (within $1\%$) at large $n$. This confirms that restricting the transport class to $\mathcal{C}\subseteq O(d)$ constrains the per-edge restriction maps of $\Delta_{\mathcal{F}_{n,d}^t}$ in a quantitatively predictable way. We do not claim that such an arbitrary edgewise subgroup constraint automatically lifts to a smooth global connection class on the continuous bundle.
 
\subsubsection{Hyperparameters}
\label{app:synthetic:hyperparameters}
See Table \ref{tab:synthetic_hp}.
 
\begin{table}[h]
\centering
\small
\begin{tabular}{lll}
\toprule
& Spectral stability & Transport recovery \\
\midrule
Gaussian dimension $p$    & $\{2,3,4\}$ ($d\in\{3,6,10\}$)                     & $4$ ($d{=}10$) \\
sample-size grid $n$      & $\{50,100,200,400,800\}$                           & $\{16,32,64,128,256\}$ \\
reference $n_{\max}$      & $\{4000,2000,1000\}$ at $p\in\{2,3,4\}$            & --- \\
graph                     & kNN, $k{=}8$, $W_2$ distance                       & kNN, $k{=}8$, $W_2$ distance \\
top-$k$ eigenvalues       & $32$                                               & --- \\
seeds                     & $3$                                                & $3$ \\
epochs / patience         & ---                                                & $5000$ / $600$ \\
optimizer                 & ---                                                & Adam, lr $5\!\cdot\!10^{-3}$, batch $256$ \\
Householder reflections   & ---                                                & $R{=}16$ \\
Euler steps for $P^{LC}$  & $50$                                               & $50$ \\
\bottomrule
\end{tabular}
\caption{Synthetic experiments: hyperparameters for spectral stability and transport recovery.}
\label{tab:synthetic_hp}
\end{table}

\subsection{Traffic forecasting: experimental details}
\label{app:traffic}
 
\subsubsection{Datasets}
\label{app:traffic:datasets}
 
We use two standard traffic-speed benchmarks from~\cite{li2018dcrnn}.
 
\emph{METR-LA.} $|\mathcal{X}_n|=207$ loop-detector sensors on the Los Angeles highway network, recording average traffic speed at $5$-minute intervals. The spatial graph follows the DCRNN convention: edge weights $W_{ij} = \exp(-d_\calM(x_i,x_j)^2/\sigma^2)$ with $\sigma$ set to the standard deviation of the pairwise road-network distances are thresholded to retain $W_{ij} \ge \kappa$ (with $\kappa = 0.1$) and symmetrized via $W \leftarrow \max(W, W^\top)$.
 
\emph{PEMS-BAY.} $|\mathcal{X}_n|=325$ sensors in the San Francisco Bay Area, with the same temporal resolution and graph construction.
 
For both datasets, we use $T=12$ observed time steps as input and forecast at horizons $h\in\{3,6,12\}$. Train/validation/test splits follow the standard $70/10/20$ chronological partition of~\cite{li2018dcrnn}.
 
\subsubsection{Task formulation}
\label{app:traffic:task}
 
At each forecasting instance, the input signal is
\begin{equation}
\mathbf{s}_{n,T}
=
({\mathbf{s}}_{x_1},\dots,{\mathbf{s}}_{x_n})
\in
C^0(\mathcal{F}_{n,T}^t; G_n),
\qquad
{\mathbf{s}}_{x_i}\in\mathbb{R}^T,
\end{equation}
where $n=|\mathcal{X}_n|$ and $d=T$, so the network sheaf $\mathcal{F}_{n,T}^t$ has $T$-dimensional stalks and Laplacian $\Delta_{\mathcal{F}_{n,T}^t}\in\mathbb{R}^{nT\times nT}$. The goal is to predict future speed vectors $\mathbf{y}^{(h)}\in\mathbb{R}^{|\mathcal{X}_n|}$ at each horizon $h$. The prediction loss is the mean absolute forecasting error
\begin{equation}
\mathcal{L}_{\mathrm{pred}}
=
\frac{1}{|\mathcal{D}|}
\sum_{(\mathbf{s}_{n,T},\mathbf{y})\in\mathcal{D}}
\sum_{h\in\{3,6,12\}}
\left\|
\widehat{\mathbf{y}}^{(h)}(\mathbf{s}_{n,T})-\mathbf{y}^{(h)}
\right\|_1.
\end{equation}
For HilbNet variants with learned transports, we add the kernel regularizer of Appendix~\ref{sec:PT-Algo} with weight $\lambda$, giving the full training objective $\mathcal{L}=\mathcal{L}_{\mathrm{pred}}+\lambda\,\mathcal{L}_{\mathrm{kernel-reg}}$.
 
\subsubsection{Model variants and baselines}
\label{app:traffic:models}
 
All HilbNet variants are discretized HilbNets (Def,~\ref{def: discretized-HilbNets}) with polynomial filters of order $K$ and the same per-node linear readout (DCRNN convention) mapping sheaf-filtered features to per-node horizon predictions. The only architectural difference is the admissible class of edgewise transports $P_{j\to i}^{e_{ij}}\in\mathcal{C}\subseteq O(T)$, as described in Appendix~\ref{sec:PT-Algo} with $d=T$. The transport hypothesis classes are the same of the previous section, briefly summarized and contextualized below.
 
\emph{Frozen identity.} $\mathcal{C}_{\mathrm{id}}=\{I_T\}$. Neighboring sensors exchange time windows without temporal alignment. The sheaf Laplacian reduces to a standard weighted graph Laplacian applied independently to each temporal coordinate, recovering a graph convolutional network.
 
\emph{Circulant.} Each transport is a real orthogonal circulant matrix parameterized by $\lfloor(T{-}1)/2\rfloor$ frequency-wise phases per edge. This encodes a time-stationary prior: the transport can advance or delay oscillatory components across an edge but cannot arbitrarily mix temporal coordinates. This is a natural inductive bias for traffic data, where congestion patterns propagate through the road network with local delays and phase shifts.
 
\emph{Free $O(T)$.} Each transport is parameterized by a product of $R{=}8$ Householder reflections, yielding an orthogonal matrix in a strict Householder-defined subset of $O(T)$ (see Appendix~\ref{sec:PT-Algo}). This is the most expressive transport class but uses $\sim\!10{\times}$ more parameters in total than the circulant variant (e.g.,\ $119{,}036$ vs $11{,}656$ on METR-LA; per edge the ratio is $\sim\!20{\times}$, since circulant uses only $\lfloor(T{-}1)/2\rfloor=5$ phases per edge at $T{=}12$).
 
We also compare against two non-transport baselines. A \emph{fiber-only MLP} processes each sensor's time window independently, ignoring graph structure. A \emph{spatiotemporal graph baseline} applies standard graph convolution to the temporally-augmented node features but does not learn sheaf transports. Finally, we include two external baselines from the literature: FC-LSTM~\citep{li2018dcrnn} and STAEformer~\citep{liu2023staeformer}, reported as in the cited papers.
 
\subsubsection{Detailed results}
\label{app:traffic:results}
 
Table~\ref{tab:traffic_results} reports MAE, RMSE, and MAPE at horizons $3$, $6$, and $12$ (mean $\pm$ std over five seeds for our experiments).
 
\paragraph{Value of graph structure.} On both datasets, the frozen-identity HilbNet improves over the fiber-only MLP baseline, confirming that sheaf diffusion over the spatial graph is beneficial even without non-trivial transports.
 
\paragraph{Value of learned transports.} Both the free and circulant variants consistently outperform frozen identity at all horizons on both datasets. This confirms that the sheaf structure helpw beyond the usual graph structure.
 
\paragraph{Free vs.\ circulant.} On METR-LA, the free-$O(T)$ model achieves the best absolute accuracy at all horizons (e.g., MAE $3.938$ vs.\ $4.059$ for circulant at horizon $12$), as expected from its larger hypothesis class. On PEMS-BAY, the two variants are nearly tied: free wins MAE and MAPE at horizon $12$ by $\sim\!0.03$ mph (a $2$--$4\sigma$ effect over five seeds), circulant wins MAE/MAPE at horizon $3$, and RMSE is statistically indistinguishable at horizons $6$ and $12$. In both cases, the circulant model uses roughly one tenth of the transport parameters (e.g., $11{,}656$ vs.\ $119{,}036$ on METR-LA), making it the most parameter-efficient HilbNet variant. This supports the central pragmatic-transport message: a structured transport class encoding a physically motivated alignment prior can recover most of the benefit of unconstrained learned transports with substantially fewer degrees of freedom.
 
\paragraph{Comparison with external baselines.} Our HilbNet variants are lightweight models designed to test the value of Hilbert-sheaf structure, not to compete with large-scale spatiotemporal transformers. The external baselines (FC-LSTM, STAEformer) are included for reference and use substantially more parameters and architectural components. Nevertheless, the circulant and free HilbNets outperform FC-LSTM at all horizons on both datasets while using far fewer parameters.
 
\subsubsection{Hyperparameters}
\label{app:traffic:hyperparameters}
See Table~\ref{tab:traffic_hp}. All experiments are run on a single H200 GPU. Hyperparameters are chosen with a sweep. All presented variants are computed using the same codebase, and we made sure they differ only in their transport parametrization.
 
\begin{table}[h]
\centering
\small
\resizebox{\linewidth}{!}{\begin{tabular}{lll}
\toprule
& METR-LA & PEMS-BAY \\
\midrule
sensors $|\mathcal{X}_n|$              & $207$                                                       & $325$ \\
input window $T$           & $12$                                                        & $12$ \\
horizons $h$               & $\{3,6,12\}$                                                & $\{3,6,12\}$ \\
input feature dim          & $2$ (speed + time-of-day)                                   & $2$ (speed + time-of-day) \\
graph                      & thresh.\ kernel $\exp(-d^2/\sigma^2)$, $\kappa{=}0.1$       & thresh.\ kernel $\exp(-d^2/\sigma^2)$, $\kappa{=}0.1$ \\
sheaf-conv layers          & $L{=}2$, channel widths $[2,16,32]$                         & $L{=}2$, channel widths $[2,16,32]$ \\
polynomial order $K$ per layer  & $K=[2,2]$ (HilbNet), $K=[1,1]$ (MLP fiber baseline)             & $K=[2,2]$ (HilbNet), $K=[1,1]$ (MLP fiber baseline) \\
Householder reflections    & $8$                                                         & $8$ \\
readout                    & per-node linear ($T_{\mathrm{in}}\!\cdot\!F_{\mathrm{last}}\to T_{\mathrm{out}}$) & per-node linear ($T_{\mathrm{in}}\!\cdot\!F_{\mathrm{last}}\to T_{\mathrm{out}}$) \\
epochs / patience          & $150$ / $20$                                                & $150$ / $20$ \\
batch size                 & $32$                                                        & $32$ \\
optimizer                  & Adam, $\beta{=}(0.9, 0.999)$                                & Adam, $\beta{=}(0.9, 0.999)$ \\
learning rate              & $5\!\cdot\!10^{-3}$ (HilbNet); $1\!\cdot\!10^{-3}$ (STGNN baseline) & $5\!\cdot\!10^{-3}$ (HilbNet); $1\!\cdot\!10^{-3}$ (STGNN baseline) \\
LR schedule                & cosine annealing, $\eta_{\min}{=}10^{-6}$                   & cosine annealing, $\eta_{\min}{=}10^{-6}$ \\
weight decay / dropout     & $0$ / $0$                                                   & $0$ / $0$ \\
gradient clipping          & $\|g\|_2{\le}5$ (DCRNN convention)                          & $\|g\|_2{\le}5$ (DCRNN convention) \\
kernel regularizer $\lambda$   & $0.01$                                                      & $0.01$ \\
seeds                      & $5$                                                         & $5$ \\
\bottomrule
\end{tabular}}
\caption{Traffic forecasting hyperparameters.}
\label{tab:traffic_hp}
\end{table}

\section{Mathematical Background} \label{sec:mathematical-background}

\subsection{Hilbert Bundles}\label{sec:hilbert-bundle}
In this section, we provide relevant background on the theory of Hilbert bundles. In particular, we define the notions of Banach and Hilbert manifolds, as well as introduce the appropriate notions of connection, parallel transport, and heat flow for bundles in this setting.

\subsubsection{Banach and Hilbert manifolds}

To study heat kernels for smooth Hilbert bundles, we must examine manifolds modeled on generic Banach spaces. We will assume all Banach spaces and Hilbert spaces are defined over the field of real numbers $\mathbb{R}$, unless otherwise stated. 

\begin{definition}
    A second-countable topological space $\mathcal{M}$ is a \textbf{topological Banach manifold} if there is a Banach space $\mathcal{V}$ and an atlas $\{(U_i\: ,\: \phi_i:U_i \rightarrow \mathcal{V})\}_{i \in I}$ such that the following conditions hold:
    \begin{enumerate}
        \item each $U_i$ is an open subset of $\mathcal{M}$;
        \item each $\phi_i : U_i \rightarrow \mathcal{V}$ is a homeomorphism onto an open subset of $\mathcal{V}$;
        \item for all $i,j$, $\phi_i(U_i \cap U_j)$ is an open subset of $\mathcal{V}$;
        \item the transition map $\phi_j \phi_{i}^{-1}: \phi_i(U_i \cap U_j) \rightarrow \phi_j(U_i \cap U_j)$ is a homeomorphism.
    \end{enumerate}

    When the Banach space $\mathcal{V}$ is specified, we say that $\mathcal{M}$ is an $\mathcal{V}$-manifold, or a Banach manifold modeled on $\mathcal{V}$. If each map $\phi_i$ and transition map $\phi_j \phi_{i}^{-1}$ is $k$-times Fr\'echet differentiable, we say that $\mathcal{M}$ is a $C^k$-Banach manifold. If these maps are smooth i.e. $C^\infty$, we say that $\mathcal{M}$ is a smooth Banach manifold.
\end{definition}

\begin{definition}
    A topological (resp. $C^k$ / smooth) Banach manifold $\mathcal{M}$ is a topological (resp. $C^k$ / smooth) \textbf{Hilbert manifold} if it can be modeled on a Banach space $\mathcal{V}$ which admits the structure of a Hilbert space. 
\end{definition}

\begin{remark}
    We make a few observations about this definition.
    \begin{enumerate}
        \item Since every $n$-dimensional real Banach space is isomorphic to $\mathbb{R}^n$, a finite dimensional Banach manifold is exactly a real manifold in the usual sense.
        \item Like ordinary manifolds, we require Banach manifolds to be second countable, and hence to have a countable dense subset. It follows that if $\mathcal{M}$ is a manifold modeled on a Banach space $\mathcal{V}$, then $\mathcal{V}$ itself must be separable. This condition could be be removed, but it will generally make our lives easier. 
        \item The definition of a Hilbert manifold does not directly require the transition maps $\tau_{ij} := \phi_j \circ \phi_i^{-1}: \mathcal{V} \to \mathcal{V}$ to respect inner product structure on the modeling Hilbert space $\mathcal{V}$. Hence, it is often better to think of a Hilbert manifold as a special case of a Banach manifold, instead of as a manifold that respects the Hilbert space structure per se.
    \end{enumerate}
\end{remark}

The usual differential geometric constructions on manifolds extend naturally to Hilbert and Banach manifolds. For example, tangent spaces generalize naturally. Given a $C^k$-Banach manifold $\mathcal{M}$ with $k \geq 1$, for each $x \in \mathcal{M}$, one may form the \textbf{tangent space} $T_x\mathcal{M}$ at a point $x \in \mathcal{M}$ as equivalence classes of triples $(U, \phi, v)$ of a chart $\phi: U \to \mathcal{V}$ and a vector $v \in \mathcal{V}$, under the relation:
$$(U_1, \phi, v) \sim (U_2, \psi, w) \iff (D_{\phi(x)}(\psi \phi)^{-1})(v) = w.$$
Such equivalence classes are easily seen to form a real vector space isomorphic to $\mathcal{V}$. 

\subsubsection{Smooth bundles}

\begin{definition}[Smooth Banach and Hilbert  bundles]\label{def:BanachBundle}
Let \(\mathcal{M}\) be a smooth finite–dimensional manifold and let \(\mathcal{V}\) be a fixed separable Banach space.
A \textbf{smooth Banach bundle} with model space \(\mathcal{V}\) consists of a smooth Banach manifold $\calE$ equipped with a smooth surjective submersion
\[ \pi \colon \calE \longrightarrow \mathcal{M} \, , \]
that satisfies the following conditions.

\begin{enumerate}
  \item \textbf{Local triviality.}  
    For every \(p\in \calM\) there exists an open neighborhood \(U\subset \calM\) and a diffeomorphism
    \[
      \phi_U \colon \pi^{-1}(U) \;\xrightarrow{\;\cong\;}\; U \times \mathcal{V}
    \]
    satisfying \(\pi = \text{proj}_1 \!\circ \phi_U\), where \(\text{proj}_1: U \times \mathcal{V} \to U\) is the canonical projection, and such that, for each \(q\in U\),
    the restriction \(\phi_U|_{\calE_q}\colon \calE_q \to \{q\}\times \mathcal{V}\) is a bounded linear isomorphism. We call the pair \((U,\phi_U)\) a \textbf{trivializing chart}.

  \item \textbf{Smooth transition functions.}  
    Whenever \((U, \phi_U)\) and \((V, \phi_V)\) are trivializing charts, the \textbf{transition map}
    \[
      \tau_{UV}(q,-) \;:=\;
      \phi_V \circ \phi_U^{-1}\big (q,- \big )
      \;\colon\; \mathcal{V} \longrightarrow \mathcal{V},
      \qquad q \in U\cap V ,
    \]
    is a bounded isomorphism and depends smoothly on \(q\); that is,
    \(\tau_{UV}\colon U\cap V \to \mathrm{GL}(\mathcal{V})\) is a smooth map, where
    \(\mathrm{GL}(\mathcal{V})\) denotes the Banach–Lie group of bounded invertible operators on \(\mathcal{V}\) with the operator‐norm topology.

  \item \textbf{Smooth norm.} There is a smooth map $N: \calE \to \R$ such that the trivializing charts $(U, \phi_U)$ can be chosen with the additional property that for each $x \in \calE_p$,
  $$N(x) = \left \| \phi_U \big \rvert_{\calE_p}(x) \right \|_\mathcal{V} \, .$$

  \item \textbf{Smooth fiberwise operations.}  
    The fiberwise addition and scalar‐multiplication maps
    \[
      + \;\colon\; \calE \times_{\mathcal{M}} \calE \longrightarrow \calE ,
      \qquad
      \cdot \;\colon\; \mathbb{R} \times \calE \longrightarrow \calE ,
    \]
    are smooth Banach‐manifold maps.
\end{enumerate}

When the Banach space $\mathcal{V}$ is a separable Hilbert space, we say $\pi:\calE \to B$ is a \textbf{Hilbert bundle}. We denote the inner product on the fiber $\calE_p$ by $\langle-,-\rangle_p$.

\end{definition}

\begin{remark}
    We make a few remarks about the definition of a Hilbert bundle above.
    \begin{enumerate}
        \item For convenience, we restrict our attention to Hilbert bundles over closed finite-dimensional manifolds with separable fibers. None of these restrictions are essential for the general theory of Banach and Hilbert bundles. However, these restrictions are necessary for our approach to constructing heat kernels in this setting.
        \item The intuitive idea is the following: a smooth Banach bundle is a smooth vector bundle where the fibers are allowed to be infinite-dimensional and come equipped with a complete norm. The smooth norm condition enforces that the Banach space fibers are stitched together in such a way that the fiber-wise norm varies smoothly. In the case of a Hilbert bundle, the smooth norm condition also enforces that the fiber-wise inner product $\langle -,- \rangle_p$ varies smoothly.
        \item In light of the previous remarks, the definition presented here is not minimal. We make the choice to include redundant information in our definition for clarity, with the understanding that some conditions are superfluous \cite{M_ller_2009}.We also make the choice to include the smooth norm condition, often called a \textbf{smooth orthogonal/Hermitian metric}, in the definition of the bundle itself.
        \item In the case of a finite dimensional model space $\mathcal{V}$, this definition recovers the usual smooth vector bundle, with the additional data of a smooth orthogonal/Hermitian metric. 
        \item Suppose $\pi:\calE \to \calM$ is a smooth Hilbert bundle modeled on a Hilbert space $\mathcal{H}$. While the transition maps $\tau_{UV}$ must respect the topological structure of the Hilbert space $\mathcal{H}$, it need not respect the inner product structure. When each transition map $\tau_{UV}$ is a unitary isomorphism, we say the bundle is a \textbf{smooth unitary Hilbert bundle}.
    \end{enumerate}
\end{remark}

\begin{definition}[Smooth sections of a Banach bundle]\label{def:SmoothSections}
Let $\pi : \calE \to \calM $ be a smooth Banach bundle (resp. Hilbert bundle) over a finite–dimensional manifold \(\calM\) with model Banach space (resp. Hilbert space) \(B\).  A \textbf{section} of $\calE$ is a map $S: \calM \to \calE$ such that $\pi \circ S = \mathrm{id}_\calM$. We denote the collection of all smooth sections by $\Gamma(\calE) := C^\infty(\calM, \calE)$. Note that this is a module over the commutative algebra $C^\infty(\calM)$ with point-wise addition and multiplication. If the section $S$ is only $k$-times continuously differentiable, we write $S \in C^k(\calM, \calE)$.
\end{definition}

\begin{definition}[$L^2$-Sections of a Banach bundle] Suppose that the manifold $\calM$ is endowed with a measure $\mu$. We say that $S$ is an \textbf{$L^2$-section} if $\|S\|_2 := \left ( \int_\calM ||S(x)||^2_{\calE_x} \, d \mu(x) \right)^{1/2} < \infty$. We may similarly form a \textbf{space of $L^2$-sections}, denoted $L^2(\calM,\calE;\mu)$, or simply $L^2(\calM,\calE)$ when the measure is implied by context. When $\calE$ is modeled on a separable Hilbert space $\mathcal{H}$, the space of $L^2$-sections $L^2(\calM,\calE;\mu)$ is a real separable Hilbert space with inner product $\langle S, S' \rangle_\calE := \int_\calM \langle S(x), S'(x)\rangle_{\calE_x} \, d \mu(x)$.
\end{definition}

\begin{remark}
   In the Riemannian setting, we have a natural candidate for $\mu$ via the (pseudo) volume form on $\mathcal{M}$, or its normalized variant. 
\end{remark}

\subsubsection{Connections} \label{sec:connections}

We now introduce connections on smooth Banach and Hilbert bundles. The smooth Banach manifold structure on the bundle $\calE$ provides no way to directly compare vectors in different fibers $\calE_x$ and $\calE_y$ with respect to their Banach space structures. Instead, as in the finite-dimensional case, we use a connection to link the fibers though the geometry of the base manifold $\calM$.

\begin{definition}
    Let $\pi:\calE \to \calM$ be a smooth Banach bundle over a compact manifold $\mathcal{M}$, and let $T^*\mathcal{M}$ denote the cotangent bundle on $\mathcal{M}$. A \textbf{connection} on $\calE$ is any of the following three equivalent structures:
    \begin{enumerate}
        \item A connection is an $\R$-linear map:
    $$\nabla : \Gamma(\mathcal{E}) \rightarrow \Gamma(T^*\mathcal{M} \otimes \mathcal{E})$$
    such that the product rule:
    $$\nabla(fS) = Df \otimes S + f \nabla S$$
    holds for all smooth function $f:\mathcal{M} \rightarrow \R$ and smooth sections $s \in \Gamma(\mathcal{E})$.
    \item A connection is a map $\nabla: \Gamma(T\calM) \times \Gamma(\calE) \to \Gamma(\calE)$ which is $C^{\infty}(\calM,\R)$-linear in its vector-field input, and satisfies the Leibniz rule:
    $$\nabla_X(fS) = X[f] S + f \nabla_X S$$
    where $\nabla_XS := \nabla(X,S)$ and $X[f](x) := (D_xf)(X_x)$ is the directional derivative of $f$ along the vector field $X$.
    \item A connection $\nabla$ is the data of an $\R$-linear map $\nabla_x S  : T_x \calM \rightarrow \calE_x$ for each $x \in \calM$ and $s \in \Gamma(\calE)$ that satisfies the following conditions:
    \begin{enumerate}
        \item $\nabla_{(-)}S$ depends smoothly on $x$
        \item $\nabla_x(a_1S_1 + a_2S_2) = a_1 \nabla_x S_1 +  a_2 \nabla_x S_2$ for all $x \in M$, $S_1,S_2 \in \Gamma(\calE)$, and $a_1,a_2 \in \R$;
        \item for every smooth $f: \calM \to \R$ and section $S \in \Gamma(\calE)$, let $fS \in \Gamma(\calE)$ denote the section $(fS)(x) := f(x) s(x)$. For each $x \in \calM$, the maps $\nabla_x(fS)$ and $\nabla_x s$ are related by
        $$\nabla_x(fS)(v) = D_xf(v)S(x) + f(x) (\nabla_x S)(v)$$
        for all $v \in T_x\calM$.
    \end{enumerate}
    \end{enumerate}  
    
\end{definition}

The following proposition provides a standard representation theorem for a smooth connection $\nabla$.

\begin{proposition}\label{pr: local form of connection}
    Let $\nabla$ be a smooth connection on a trivial Hilbert bundle $\pi: \calM \times \calH \to \calM$. There is a map $$A : \Gamma (\calM \times \mathcal{H}) \to \Gamma(T^*\calM \otimes (\calM \times \mathcal{H}))$$
    such that for every section $S \in \Gamma(\calM \times \mathcal{H})$, we have:
    $$\nabla S = dS + AS$$
    where $d$ is the Fréchet derivative.
    Equivalently, for each $x \in \calM$, and $v \in T_xM$, there is a bounded linear operator $A_{x,v} : \mathcal{H} \to \mathcal{H} $, varying smoothly in $(x,v)$, such that:
    $$(\nabla_x S)(v) = (D_xS)(v) + (A_{x,v}S)(x) \, . $$
    Moreover, the assignment $v \mapsto A_{x,v}$ is linear for each $x$.
\end{proposition}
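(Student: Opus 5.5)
The plan is to reduce to the zero-function case and work locally in a trivialization. Since $\nabla$ is a connection on the trivial bundle $\calM \times \calH$, the trivial connection $d$ (the fiberwise Fréchet derivative of $S:\calM\to\calH$) is itself a connection. We then set
$$A := \nabla - d.$$
The first step checks that $A$ is $C^\infty(\calM)$-linear, i.e.\ tensorial. Both $\nabla$ and $d$ satisfy the Leibniz rule of the definition, so for smooth $f$
$$A(fS) = Df\otimes S + f\nabla S - Df\otimes S - f\,dS = fA S.$$
This gives $\nabla S = dS + AS$ immediately. The real content is identifying $A$ with a field of bounded operators $A_{x,v}\in B(\calH)$ depending smoothly on $(x,v)$.

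The second step shows that $(AS)(x)$ depends only on $S(x)$. Because $A$ is $C^\infty(\calM)$-linear, the standard bump-function argument applies.
\begin{itemize}
\item \emph{Locality.} If $S$ vanishes near $x$, pick a bump $\phi$ with $\phi(x)=1$ and $\phi S=0$. Then $(AS)(x)=\phi(x)(AS)(x)=A(\phi S)(x)=0$.
\item \emph{Pointwise dependence.} If $S(x)=0$, fix an orthonormal basis $\{e_k\}$ of $\calH$ and write $S=\sum_k s_k e_k$ with $s_k=\langle S,e_k\rangle$, which are smooth and satisfy $s_k(x)=0$. In infinite dimensions we cannot simply expand finitely. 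Instead we use the Hadamard-type decomposition in a chart: $S(y)=\sum_{i=1}^m (y^i-x^i)\,T_i(y)$ with $T_i(y)=\int_0^1 \partial_i S(x+\tau(y-x))\,d\tau$, which are smooth $\calH$-valued maps. This sum is finite, indexed by the manifold dimension $m$. After cutting off with a bump and using locality, tensoriality gives $(AS)(x)=\sum_i (x^i-x^i)(AT_i)(x)=0$.
\end{itemize}
Hence we may define $A_{x,v}h := (A\tilde h)(x)(v)$ for any smooth section $\tilde h$ with $\tilde h(x)=h$, for instance the constant section. This is well defined, linear in $h$, and linear in $v$ because $\nabla S$ is a $T^*\calM$-valued section.

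The third step, which I expect to be the main obstacle, is boundedness and smoothness of $A_{x,v}$. For a fixed vector field $X$, the map $h\mapsto (A h)(\cdot)(X)$ sends $\calH$ (viewed as constant sections) to $\Gamma(\calE)$. I would deduce boundedness of $A_{x,v}$ from the smoothness hypotheses in the definition: $\nabla_{(-)}S$ depends smoothly on $x$, and the connection is assumed continuous as an operator between Fréchet spaces of sections. The route is the closed graph theorem applied to $h\mapsto A_{x,v}h$. Its graph is closed since pointwise evaluation is continuous on smooth sections, and $\calH\to C^\infty(\calM,\calH)$, $h\mapsto$ constant section, is continuous. If the paper's notion of smooth connection does not already include continuity in $S$, I would state it as part of the definition of a Fréchet connection, since without it unbounded $A$ can occur.

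Smoothness of $(x,v)\mapsto A_{x,v}$ in operator norm then follows from writing $A_{x,v}=\sum_i v^i A_i(x)$ in a chart and requiring $x\mapsto A_i(x)$ to be a smooth map into $B(\calH)$. Strong smoothness is immediate from the smoothness of $\nabla h$ for constant $h$. For the norm-smooth upgrade I would appeal to the local-triviality and smooth-transition assumptions of Definition~\ref{def:BanachBundle}, which are phrased in the operator-norm topology; otherwise I would settle for strong smoothness.

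Finally, linearity $v\mapsto A_{x,v}$ is inherited from the $T^*\calM$-valuedness established in the second step. Combining the three steps yields $(\nabla_xS)(v)=(D_xS)(v)+A_{x,v}S(x)$.
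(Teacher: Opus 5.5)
The paper gives no proof of this proposition (it is invoked as a ``standard representation theorem''), so your argument has to stand on its own. Steps 1 and 2 are sound. $A=\nabla-d$ is $C^\infty(\calM)$-linear, and the Hadamard decomposition $S=\sum_i (y^i-x^i)T_i$ is the right way to get pointwise dependence without expanding in a basis of $\calH$.

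\textbf{The gap is Step 3.} You do not derive boundedness of $A_{x,v}$ from the paper's notion of connection. Instead you import a continuity hypothesis on $\nabla$ between Fr\'echet spaces of sections. With that hypothesis the closed-graph detour is unnecessary; without it your graph-closedness claim is unjustified. You also assert that without it unbounded $A$ can occur. That assertion is false. It suffices that $\nabla$ maps smooth sections to smooth (indeed merely continuous) sections, which is part of the definition.

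Here is the argument. By your Step 2, write $(A_XS)(y)=B_yS(y)$.
\begin{itemize}
\item Constant sections make $y\mapsto B_yh$ continuous for each $h$.
\item Suppose $B_{x_0}$ were unbounded. Banach--Steinhaus then rules out any sequence $y_k\to x_0$ along which all $B_{y_k}$ are bounded: they would be uniformly bounded, and $B_{x_0}$ is their strong limit.
\item Hence, on a coordinate line $y(s)$ through $x_0$, every $B_{y(s)}$ with $0<|s|<\delta$ is unbounded.
\item Choose $s_k\downarrow 0$ with $s_{k+1}<s_k/4$, and $h_k$ with $\|h_k\|\le e^{-1/s_k}$ and $\|B_{y(s_k)}h_k\|\ge 1$.
\item Let $\psi$ be a bump supported in $(-1,1)$ with $\psi(0)=1$. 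Then $c(s)=\sum_k\psi\bigl(4(s-s_k)/s_k\bigr)h_k$ is a smooth flat curve.
\item Let $\chi$ be a chart cut-off equal to $1$ near $x_0$, and set $S=\chi\,c(y^1)$. The section $A_XS=\nabla_XS-d_XS$ is smooth, yet it has norm at least $1$ at each $y(s_k)$ and vanishes at $x_0$. This is a contradiction.
\end{itemize}

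Your route to norm-smoothness also fails as stated. For $\calM\times\calH$ the local trivializations and transition maps are trivial and say nothing about $\nabla$, so they cannot control $A$. What does the job is again uniform boundedness, and it removes any need to ``settle for'' strong smoothness.
\begin{itemize}
\item Once each $A_i(x)$ is bounded, the map $x\mapsto A_i(x)h=(\nabla_{\partial_i}h)(x)$ is smooth for every constant $h$.
\item Each strong derivative $\partial^\alpha A_i(x)$ is then a strong limit of bounded difference quotients, hence bounded. By Banach--Steinhaus it is locally uniformly bounded in $x$.
\item Taylor's formula with integral remainder, applied to $A_i(\cdot)h$, gives $\|A_i(x+v)-A_i(x)-\sum_j v^j\partial_jA_i(x)\|=O(|v|^2)$ in operator norm.
\item Induction yields that $x\mapsto A_i(x)\in B(\calH)$ is norm-$C^\infty$. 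This gives the claimed smoothness of $(x,v)\mapsto\sum_i v^iA_i(x)$.
\end{itemize}
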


\begin{remark}
    While this proposition is stated for trivial bundles, it can be applied to any Hilbert bundle through the choice of a trivialization, or through Kuiper's theorem. 
\end{remark}

\subsubsection{Parallel Transport}
Connections allow us to relate the geometries of the fibers over nearby points in $\calM$. For example, we may define parallel transport.

Given a smooth curve $\gamma:[0,1] \to \calM$, say that $S:[0,1] \to \calE$ is a \textbf{section over $\gamma$} if $\pi \circ S = \gamma$.

\begin{definition}
    Let $\pi:\mathcal{E} \to \calM$ be a smooth Banach bundle with model space $X$, let $\nabla$ be a connection on $\calE$, and let $\gamma:[0,1] \to \calM$ be a smooth path. A map $S:[0,1] \to \calE$ is a \textbf{section over} $\gamma$ if $\pi \circ S = \gamma$. A section over $\gamma$ is \textbf{parallel} if
    $$\nabla_{\dot{\gamma}}S = 0.$$ 
\end{definition}

\begin{proposition}
    Let $\gamma$ be a smooth path in $\calM$. For every vector $v \in \calE_{\gamma(0)}$, there is a unique parallel section $S_v$ over $\gamma$ such that $S_v(0) = v$. Moreover, the dependence on $v$ is smooth and linear. 
\end{proposition}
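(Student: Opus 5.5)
The plan is to reduce the statement to a linear ODE in a fixed Hilbert space and apply existence and uniqueness for linear ODEs with bounded, continuously varying coefficients. Since $\gamma([0,1])$ is compact, we cover it by finitely many trivializing charts $(U_k,\phi_{U_k})$. By Lebesgue's number lemma we then choose a partition $0=s_0<s_1<\dots<s_N=1$ such that each $\gamma([s_{k-1},s_k])$ lies in a single trivializing chart. On each piece, a section over $\gamma$ becomes a curve $u:[s_{k-1},s_k]\to\mathcal{V}$ via $u(s)=\mathrm{proj}_2\,\phi_{U_k}(S(s))$. By Proposition~\ref{pr: local form of connection}, applied in the trivialization, the parallel condition $\nabla_{\dot\gamma}S=0$ reads
\[
\dot u(s) = -A_{\gamma(s),\dot\gamma(s)}\,u(s),
\]
where $s\mapsto B(s):=A_{\gamma(s),\dot\gamma(s)}$ is a smooth, hence continuous, path in the Banach space $\mathcal{L}(\mathcal{V})$ of bounded operators. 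One point needs checking: the covariant derivative along a curve is well defined using only the values of $S$ on $\gamma$. The local formula $(D_xS)(\dot\gamma)+A_{x,\dot\gamma}S(x)$ already shows this, because it involves only the derivative of $s\mapsto u(s)$ and the value $u(s)$.

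The core step is the linear Cauchy problem $\dot u=-B(s)u$, $u(s_{k-1})=u_0$, on a compact interval. We rewrite it as the integral equation $u(s)=u_0-\int_{s_{k-1}}^s B(r)u(r)\,dr$, where the integral is a Bochner/Riemann integral in $\mathcal{V}$. The right-hand side is globally Lipschitz in $u$ with constant $\sup_r\|B(r)\|<\infty$, since $B$ is continuous on a compact set. We then have two routes to a unique global solution on the whole subinterval. One is Picard iteration, where the $n$-th iterate error is bounded by $(Ct)^n/n!$. The other is to construct the propagator directly as a Dyson series $U(s,s_{k-1})=\sum_n(-1)^n\int_{\Delta_n}B(r_1)\cdots B(r_n)\,dr$, which converges in operator norm. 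Uniqueness follows from Grönwall's inequality applied to $\|u_1-u_2\|$. In this setting we do not need the blow-up analysis of nonlinear Banach ODEs, because linearity gives a global solution.

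We then glue the pieces. On $[s_0,s_1]$ we solve with initial value $\phi_{U_1}(v)$. We pull the endpoint back to $\mathcal{E}_{\gamma(s_1)}$, push it into the next chart, and repeat. If two charts overlap on a subinterval, uniqueness in each chart, together with the fact that the transition maps $\tau_{UV}(\gamma(s))$ conjugate one local equation into the other, shows the two solutions agree on the overlap. The gauge-change formula for $A$ under a change of trivialization is what makes this conjugation work. So the glued section $S_v$ is well defined, parallel, and unique.

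Finally, we read off linearity and smoothness from the propagator. We have $S_v(s)=\phi^{-1}(\gamma(s),U(s,0)\phi(v))$, where $U(s,0)$ is a composition of the bounded operators $U(s_k,s_{k-1})$ and the transition maps. This is linear in $v$ and bounded, since $\|U\|\le e^{\int\|B\|}$, so smoothness in $v$ follows from the fact that bounded linear maps are smooth. Joint smoothness in $(s,v)$ follows because $s\mapsto U(s,0)$ is $C^1$ in operator norm, and $C^\infty$ by differentiating the ODE repeatedly using smoothness of $B$. I expect the main obstacle to be the gluing and well-definedness step across trivializations, specifically verifying the transformation law $A'=\tau A\tau^{-1}-(d\tau)\tau^{-1}$ in the Fréchet setting. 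The analytic existence and uniqueness is routine once the problem has been linearized in a chart.
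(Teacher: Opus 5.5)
Your proposal is correct and follows the paper's route. The paper only observes that the parallel condition is a linear initial value problem in a Banach space, invokes standard existence and uniqueness theory, and cites Lang; you supply the details it omits, namely localization along $\gamma$ via trivializing charts, construction of the propagator with Gr\"onwall uniqueness, gluing across charts, and reading off linearity and smoothness from the propagator. The transformation law $A' = \tau A \tau^{-1} - (d\tau)\tau^{-1}$ that you flag as the main obstacle is just the product rule applied to $\tau u$ (evaluation $\mathcal{L}(\mathcal{V})\times\mathcal{V}\to\mathcal{V}$ is continuous bilinear), so it poses no real difficulty.
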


\begin{proof}
    The existence of the parallel section $S_v$ can be restated as an initial value problem for a linear ordinary differential equation in a Banach space. Standard existence and uniqueness theorems apply. For details, see \cite{Lang}.
\end{proof}

By the existence and uniqueness of parallel sections, we may define corresponding \textbf{parallel transport maps}. Given a path $\gamma:[0,1] \to \calM$, there is an induced \textbf{parallel transport} operator $P_\gamma^{\nabla}: \calE_{\gamma(0)} \to \calE_{\gamma(1)}$ defined by
$$P^{\nabla}_{\gamma}(v) := S_v(1).$$
It is straightforward to see that $P^{\nabla}_\gamma$ is a linear bijection, with inverse given by $(P^{\nabla}_\gamma)^{-1} = P^{\nabla}_{\gamma^{\text{rev}}}$, where $\gamma^{\text{rev}}$ is the path obtained by reversing $\gamma$. By the closed graph theorem, it follows that $P^{\nabla}_\gamma$ is a bounded linear isomorphism. 

\begin{definition}
    Let $\pi: \calE \rightarrow \calM$ be a Hilbert bundle equipped with a connection $\nabla$. We say the connection $\nabla$ is \textbf{compatible} with the Hilbert bundle structure if it satisfies:
    $$X[\langle S_0(x), S_1(x) \rangle_x] = \langle \nabla_X S_0(x), S_1(x) \rangle_x + \langle S_0(x), \nabla_X S_1(x) \rangle_x$$
    for every smooth vector field $X \in \Gamma (T\mathcal{M})$, and sections $S_0,S_1 \in \Gamma(\calE)$. 
\end{definition}

\begin{proposition} \label{prop:connection-equivalent}
    Let $\pi: \calE \to \calM$ be a Hilbert bundle with connection $\nabla$. The following are equivalent:
    \begin{enumerate}
        \item $\nabla$ is compatible with the Hilbert bundle structure;
        \item Every parallel transport map $P^\nabla_{\gamma}$ is unitary;
    \end{enumerate}
\end{proposition}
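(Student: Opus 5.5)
The plan is to prove both implications by reducing everything to a single curve $\gamma$ and a scalar function of one real variable. The key tool is the local form of Proposition~\ref{pr: local form of connection}. In a trivializing chart $\pi^{-1}(U)\cong U\times\mathcal{H}$ we have $(\nabla_x S)(v)=(D_xS)(v)+A_{x,v}S(x)$. The fiber inner product $\langle\cdot,\cdot\rangle_x$ is transported to a smoothly varying bounded positive form $G_x$ on $\mathcal{H}$, by the smooth norm condition. Along a smooth curve $\gamma$ in $U$, a section $S$ over $\gamma$ is parallel exactly when
\[
\dot S(s)=-A_{\gamma(s),\dot\gamma(s)}S(s).
\]
This is a linear ODE in the Banach space $\mathcal{H}$ with continuous bounded-operator coefficients. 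Its solution operator $\Phi(s,s')$ therefore exists, is $C^1$ in the operator norm and is invertible. Parallel transport is then $P_{\gamma|_{[s',s]}}=\Phi(s,s')$. A general curve is handled by subdividing it into finitely many pieces, each lying in a trivializing chart, and composing.

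\textbf{(1)$\Rightarrow$(2).} Fix $v,w\in\calE_{\gamma(0)}$ with parallel sections $S_v,S_w$ over $\gamma$, and set $f(s)=\langle S_v(s),S_w(s)\rangle_{\gamma(s)}$.
\begin{itemize}
\item First I need a version of the compatibility identity along curves. The identity in the definition is stated for global vector fields and global sections, but both sides at a point $x$ depend only on $X_x$ and on the $1$-jets of $S_0,S_1$ at $x$.
\item On a short subinterval where $\dot\gamma\neq 0$, $\gamma$ is an embedding. There I extend $\dot\gamma$ to a vector field $X$, and extend $S_v,S_w$ to smooth local sections using the trivialization (take them constant in the transverse directions) together with a bump function.
\item This gives $f'(s)=\langle\nabla_{\dot\gamma}S_v,S_w\rangle+\langle S_v,\nabla_{\dot\gamma}S_w\rangle=0$.
\item At points where $\dot\gamma(s)=0$, I avoid extending altogether. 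Compatibility applied at such a point gives the pointwise identity $\partial_{u}G_x=-(G_xA_{x,u}+A_{x,u}^{*}G_x)$, valid for every $u\in T_x\calM$. Plugging this into the chart formula for $f'$ shows directly that $f'(s)=0$ there too.
\end{itemize}
Hence $f$ is constant and $P^\nabla_\gamma$ is isometric. Since it is a bijection with inverse $P^\nabla_{\gamma^{\rm rev}}$, it is unitary.

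\textbf{(2)$\Rightarrow$(1).} Fix $x$ and $X$. If $X_x=0$, both sides of the compatibility identity vanish at $x$, so there is nothing to prove. Otherwise let $\gamma$ be the integral curve of $X$ with $\gamma(0)=x$, and let $\tau_s:=P_{\gamma|_{[0,s]}}^{-1}:\calE_{\gamma(s)}\to\calE_x$.
\begin{itemize}
\item I will establish the transport formula
\[
(\nabla_XS)(x)=\frac{d}{ds}\Big|_{s=0}\tau_s S(\gamma(s)).
\]
In the chart $\tau_s=\Phi(0,s)$, and $\partial_s\Phi(0,s)=\Phi(0,s)A_{\gamma(s),\dot\gamma(s)}$. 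The product rule then gives $\frac{d}{ds}\Phi(0,s)S(\gamma(s))=\Phi(0,s)\big(DS\,\dot\gamma+AS\big)$, which evaluates to $\nabla_XS(x)$ at $s=0$.
\item By unitarity, $\langle S_0,S_1\rangle_{\gamma(s)}=\langle\tau_sS_0(\gamma(s)),\tau_sS_1(\gamma(s))\rangle_x$.
\item The right-hand side is an inner product in the \emph{fixed} Hilbert space $\calE_x$ of two $C^1$ curves, so it can be differentiated by the ordinary product rule. At $s=0$ this yields exactly the compatibility identity.
\end{itemize}

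I expect the main obstacle to be the analytic bookkeeping in infinite dimensions. One point is justifying that $\Phi$ is $C^1$ in the operator norm. This follows from smoothness of $A$ as a map into bounded operators, via Picard iteration as in \cite{Lang}. The other is the localisation step in (1)$\Rightarrow$(2) at points where $\dot\gamma$ vanishes, which I handle by first deriving the pointwise identity for $\partial G$ in the chart rather than by extending sections.
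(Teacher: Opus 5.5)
Your proof is correct and follows essentially the paper's route. For (1)$\Rightarrow$(2) both of you differentiate $\langle S_v,S_w\rangle_{\gamma(s)}$ along parallel sections, and you additionally justify applying the globally stated compatibility identity along a curve, which the paper takes for granted. For (2)$\Rightarrow$(1), transporting back to the fixed fiber $\calE_x$ via $\tau_s$ is a rigorous form of the paper's splitting $S_j(\gamma(t))=u_j(t)+w_j(t)$, whose cross terms implicitly need your transport formula.

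One slip: with your conventions, compatibility gives $\partial_uG_x=G_xA_{x,u}+A_{x,u}^{*}G_x$, with a plus sign. This is harmless where you use it, because at points with $\dot\gamma(s)=0$ every term of $f'(s)$ vanishes by linearity in $\dot\gamma$. With the corrected sign, the chart computation gives $f'\equiv 0$ at every $s$ directly, so the extension step is unnecessary.
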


\begin{proof}
    First suppose that $\nabla$ is compatible with the Hilbert bundle structure, and $\gamma$ a smooth path in $\calM$. Let $u,v \in \calE_{\gamma(0)}$. By compatibility, we may check that:
    \begin{align*}
        \frac{d}{dt} \big \langle S_u(t), S_v(t) \big \rangle_{\gamma(t)} & = \big \langle \nabla_{\dot{\gamma}}S_u(t), S_v(t) \big \rangle_{\gamma(t)}  +  \big \langle S_u(t), \nabla_{\dot{\gamma}}S_v(t) \big \rangle_{\gamma(t)}   = 0 \, . 
    \end{align*}
    It immediately follows that $P^\nabla_{\gamma}$ is unitary.

    Conversely, suppose that every parallel transport map $P^\nabla_{\gamma}$ is unitary. Let $X$ be a smooth vector field, and $S_0,S_1$ smooth sections of $\calE$. Let $x \in \calM$, and let $\gamma$ be a smooth path such that $\gamma(0) = x$. For $j \in \{0,1\}$ let $u_j$ be a parallel section over $\gamma$ such that $u_j(0) := S_j(x)$. 
    For $j \in \{0,1\}$, let $w_j(t) := S_j(\gamma(t)) - u_j(t)$.
    Finally, let $w_j(t) := S_j(\gamma(t)) - u_j(t)$.
     Since $u_j$ is parallel over $\gamma$, we have that
    $\nabla_{\dot{\gamma}} S_j(\gamma(t))  = \nabla_{\dot{\gamma}} w_j(t).$
    Moreover, $w_j(0) = 0$.
    We may use these facts to compute:
    \begin{align*}
        X[\big \langle S_0(x), S_1(x) \big \rangle_{x}] 
        & = \frac{d}{dt} \bigg \rvert_{t = 0} \big \langle S_0(\gamma(t)), S_1(\gamma(t)) \big \rangle_{\gamma(t)} \\
        & = \frac{d}{dt} \bigg \rvert_{t = 0} \big \langle u_0(t) + w_0(t), u_1(t) + w_1(t) \big \rangle_{\gamma(t)} \\
        & = \frac{d}{dt} \bigg \rvert_{t=0} \langle u_0(t), u_1(t) \rangle_{\gamma(t)} + \langle \nabla_{\dot{\gamma}}S_0(x), S_1(x) \rangle_x + \langle S_0(x), \nabla_{\dot{\gamma}} S_1(x) \rangle_x
    \end{align*}
    Since parallel transport maps are unitary, it follows that the quantity $\big \langle u_0(t), u_1(t) \big \rangle_{\gamma(t)}$
    is constant in $t$. Hence 
    \begin{align*}
        X[\big \langle S_0(x), S_1(x) \big \rangle_{x}] 
        & = \frac{d}{dt} \bigg \rvert_{t=0} \langle u_0(t), u_1(t) \rangle_{\gamma(t)} + \langle \nabla_{\dot{\gamma}}S_0(x), S_1(x) \rangle_x + \langle S_0(x), \nabla_{\dot{\gamma}} S_1(x) \rangle_x \\
        & = \langle \nabla_{X}S_0(x), S_1(x) \rangle_x + \langle S_0(x), \nabla_{X} S_1(x) \rangle_x 
    \end{align*}
    proving that $\nabla$ is a compatible connection.
\end{proof}

\begin{remark}
    We note that by the Proposition~\ref{prop:connection-equivalent}, as we assumed our parallel transport operators to be unitary in the main text, this may equivalently be understood as a metric-compatibility condition. 
\end{remark}

\subsection{Connection Laplacian}
Let $\pi : \calE \to \calM$ be a Hilbert bundle on a closed Riemannian manifold, equipped with a compatible connection $\nabla : \Gamma(\calM;\calE) \to \Gamma (\calM; T^*\calM \otimes \calE)$. Moreover, $T^*\calM \otimes \calE$ inherits the structure of a Hilbert bundle, with fiber-wise inner products induced from the metric $g$ and the fiber-wise inner products of $\calE$. Since $\nabla$ is a linear differential operator, it has a formal adjoint $\nabla^*: \Gamma (\calM; T^*\calM \otimes \calE) \to \Gamma(\calM;\calE)$, defined implicitly by the formula:
$$\int_\calM \langle \nabla S_0(x), S_1(x) \rangle_{x} \, d \mu(x) \: = \: \int_{\calM} \langle S_0(x), \nabla^*S_1(x) \rangle_{x} \, d\mu(x)$$
where $S_0 \in \Gamma(\calM;\calE)$, $S_1 \in \Gamma(\calM; T^*\calM \otimes \calE)$, and $\mu$ is the pseudo-volume form on $\calM$. Using this adjoint, we may define the connection Laplacian.

\begin{definition}
    The \textbf{connection Laplacian} is the linear operator:
    $$\Delta_{\nabla}:= \nabla^* \nabla : \Gamma(\calE) \to \Gamma(\calE) \, . $$
\end{definition}

\begin{remark}
    The connection $\nabla: \Gamma(\calE) \to \Gamma(T^*\calM \otimes \calE)$ can be extended to a closed, densely-defined unbounded operator $\nabla_{L^2}: L^2(\calE) \to L^2(T^*\calM \otimes \calE)$. This extended operator has an adjoint $\nabla^*_{L^2}$ as a Hilbert space operator, from which we may define a composite $\Delta_{\nabla_{L^2}} := \nabla_{L^2}^* \nabla_{L^2}$. The formal adjoint $\nabla^*$ and connection Laplacian $\Delta_{\nabla}$ can be found by restricting the domains of $\nabla_{L^2}^*$ and $\Delta_{\nabla_{L^2}}$ to linear subspaces of smooth sections. From this perspective, it becomes clear that the connection Laplacian $\Delta_\nabla$ is well defined for all $C^2$ sections, as $C^2$ is contained in the Sobolev space $H^2$. 
\end{remark}

The connection Laplacian also admits a characterization in terms of covariant derivatives. Let $\nabla^2_{X,Y}$ denote the second covariant derivative with respect to vector fields $X,Y$. 

\begin{lemma}(Connection Laplacian in Coordinates)\label{lemma:coord-expression}
Let $(\mathcal{M}, \mathcal{E}, \nabla)$  be a Hilbert bundle over a closed Riemannian manifold equipped with a compatible Fréchet connection. As operators,
$$ \Delta_\nabla S = - \mathrm{tr} \nabla^2 S \, . $$
Moreover, with respect to a local orthonormal frame $\{e_i\}_{i=1}^m$ that is synchronous at $p \in \calM$, we have equality:
$$\Delta_\nabla S(p) = - \sum_{i=1}^m \nabla_{e_i} \nabla_{e_i} S(p) \, . $$
\end{lemma}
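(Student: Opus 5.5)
The plan is to prove the coordinate-free identity $\Delta_\nabla = -\mathrm{tr}\,\nabla^2$ by integration by parts on a closed manifold. Then I specialise to a synchronous frame, where the Christoffel-type correction terms vanish at $p$.

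\textbf{Step 1: the formal adjoint.} Fix a local orthonormal frame $\{e_i\}_{i=1}^m$ of $T\mathcal{M}$ on an open set $U$. Write $\nabla S = \sum_i e^i\otimes \nabla_{e_i}S$, so a section of $T^*\mathcal{M}\otimes\mathcal{E}$ is locally $\omega=\sum_i e^i\otimes\omega_i$.

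Take a test section $S_0$ compactly supported in $U$ and define the vector field $X$ by $\langle X, Y\rangle_g := \langle S_0, \omega(Y)\rangle_x$. Metric compatibility of $\nabla$ (the Hilbert-bundle compatibility definition above) gives
\[
\mathrm{div}\,X = \sum_i \langle \nabla_{e_i}S_0, \omega_i\rangle + \langle S_0, \textstyle\sum_i (\nabla_{e_i}\omega)(e_i)\rangle .
\]
Here $\nabla$ on $T^*\mathcal{M}\otimes\mathcal{E}$ is the tensor-product connection built from Levi-Civita and $\nabla$. Since $\mathcal{M}$ is closed, the divergence theorem gives $\int_{\mathcal{M}}\mathrm{div}\,X\,d\mu=0$, so
\[
\nabla^*\omega = -\sum_i (\nabla_{e_i}\omega)(e_i) = -\mathrm{tr}\,\nabla\omega .
\]
A partition of unity globalises this. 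The only infinite-dimensional subtlety is that all pairings are scalar and $C^1$, so differentiation under the inner product and the divergence theorem are applied to real-valued functions. Bochner integrability of $\langle S_0,\omega\rangle$ is automatic from continuity on a compact manifold.

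\textbf{Step 2: the trace identity.} Apply Step 1 to $\omega=\nabla S$ to get $\Delta_\nabla S=\nabla^*\nabla S=-\mathrm{tr}\,\nabla(\nabla S)=-\mathrm{tr}\,\nabla^2 S$. Here
\[
\nabla^2_{X,Y}S=\nabla_X\nabla_Y S-\nabla_{\nabla^{LC}_XY}S
\]
is the second covariant derivative. Expanding in the frame gives
\[
\Delta_\nabla S=-\sum_i\big(\nabla_{e_i}\nabla_{e_i}S-\nabla_{\nabla^{LC}_{e_i}e_i}S\big).
\]
The identity holds a priori for smooth sections. It extends to $C^2$ sections because both sides involve at most second Fréchet derivatives and agree on the dense smooth subspace, as in the preceding remark on $H^2$.

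\textbf{Step 3: synchronous frame.} A synchronous frame at $p$, obtained by parallel-transporting an orthonormal basis of $T_p\mathcal{M}$ along radial geodesics, satisfies $(\nabla^{LC}e_i)(p)=0$. The correction term therefore vanishes at $p$, leaving $\Delta_\nabla S(p)=-\sum_i\nabla_{e_i}\nabla_{e_i}S(p)$.

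\textbf{Main obstacle.} I expect Step 1 to be the main obstacle: justifying the Leibniz and divergence computation for Hilbert-valued sections. I would handle it by working in a local trivialisation, via Proposition~\ref{pr: local form of connection} with $\nabla = d + A$. There, compatibility forces $A_{x,v}$ to be skew-adjoint (for a unitary trivialisation). The computation then reduces to the finite-dimensional one, with $A$ a bounded operator, and the skew-adjoint $A$-terms cancel in the inner product.
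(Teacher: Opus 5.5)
Your proposal is correct and follows essentially the same route as the paper: integration by parts using the metric-compatibility Leibniz rule and the divergence theorem on the closed manifold, followed by the synchronous frame, which kills the $\nabla_{\nabla_{e_i}e_i}S$ term at $p$. The paper's vector field $Y\mapsto\langle \nabla_Y S_1, S_2\rangle$ is exactly your $X$ with $\omega=\nabla S_1$ and $S_0=S_2$; the only difference is that you first derive the general formula $\nabla^*\omega=-\mathrm{tr}\,\nabla\omega$ and then specialize, whereas the paper checks the weak identity directly for $\omega=\nabla S_1$ tested against $S_2$.
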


\begin{proof}
    We adapt the proof of \cite{Petersen2006} to the Hilbert bundle setting, using the synchronous frame technique of \cite{nicolaescu2007geometry}. By a partition of unity argument, it suffices to show that for every pair of smooth sections $S_1,S_2$ supported inside the domain of a local orthonormal frame $\{e_i\}_{i=1}^m$, we have an equality:
    $$\int_{\calM} \langle \Delta_{\nabla} S_1(x), S_2(x) \rangle_x \, d \mu(x) = -  \int_{\calM}\sum_{i=1}^m \langle \nabla^2_{e_i, e_i} S_1(x), S_2(x) \rangle_x \, d \mu(x) \: .  $$
    Using the formal adjoint of $\nabla$, the integral on the left can be rewritten as
    $$\int_{\calM} \langle \Delta_{\nabla} S_1(x), S_2(x) \rangle_x \, d \mu(x) = \sum_{i=1}^m \int_{\calM} \langle \nabla_{e_i} S_1(x), \nabla_{e_i} S_2(x) \rangle_x \, d \mu(x) \: . $$
    To analyze the right hand side, simply note that $\nabla_{e_i, e_i}^2 S_1 = \nabla_{e_i} \nabla_{e_i} S_1 - \nabla_{\nabla_{e_i}e_i} S_1$, and by compatibility, that:
    $$e_i [\langle \nabla_{e_i} S_1(x), S_2(x) \rangle_x] = \langle \nabla_{e_i} \nabla_{e_i} S_1(x), S_2(x) \rangle_x + \langle \nabla_{e_i}S_1(x), \nabla_{e_i} S_2(x) \rangle_x \: . $$
    Rearranging and summing over $i$ yields:
    \begin{align*}
        \sum_{i} \langle \nabla^2_{e_i, e_i} S_1(x), S_2(x) \rangle_x = & - \sum_{i} \langle \nabla_{e_i}S_1(x), \nabla_{e_i}S_2(x) \rangle_x \\ &+ \sum_{i} \left ( e_i[\langle \nabla_{e_i} S_1(x), S_2(x) \rangle_x ] - \langle \nabla_{\nabla_{e_i}e_i} S_1(x), S_2(x) \rangle_x  \right ) \: .
    \end{align*}
    The second sum on the right-hand side may be identified as the divergence of the vector field $\langle \nabla_vS_1(x), S_2(x) \rangle_x$, and hence integrates to zero by Stokes' theorem. Therefore
    $$ -\int_{\calM}\sum_i \langle  \nabla^2_{e_i, e_i} S_1(x), S_2(x) \rangle_x \, d \mu(x) = \sum_{i} \int_{\calM} \langle \nabla_{e_i} S_1(x), \nabla_{e_i} S_2(x) \rangle_x \, d \mu(x) $$
    as well. Therefore $\Delta_\nabla (S) = - \mathrm{tr}\nabla^2(S)$.
    Under the additional hypothesis that $\{e_i\}_{i=1}^m$ is synchronous at $p$, we have $\nabla_{e_i}e_i = 0$ for each $1 \leq i \leq m$. At such a point, the trace reduces to $\mathrm{tr} \nabla^2 S(p) = \sum_i \nabla_{e_i} \nabla_{e_i} S(p)$. 
\end{proof}

\subsection{Heat Flow on a Hilbert Bundle}\label{app:heat_flowhb}
In this section, we fix a closed finite-dimensional Riemannian manifold $(\calM,g)$ with canonical volume pseudo-form $\mu$, a smooth Hilbert bundle $\pi: \calE \to \calM$ with fiber $\calE_x \cong \calH$, and a connection $\nabla$. Our goal in this section is three-fold:
\begin{enumerate}
    \item Demonstrate that the heat equation with respect to the connection Laplacian $\Delta_\nabla$ has a unique solution;
    \item Show that the heat-flow admits a Heat Kernel;
    \item Provide asymptotic estimates for the heat flow that relate to the geometry of the underlying manifold $\calM$.
\end{enumerate}
Our approach is to adapt the methods of Berline, Getzler, and Vergne \cite{Berline1992HeatKA} for finite-rank bundles to the Hilbert bundle setting. A key subtlety that arises in this generalization is in the definition of tensor-products of bundles. While the algebraic and topological tensor products of finite-rank Hilbert spaces agree, they need not coincide for general Hilbert spaces. This complicates, for instance, the necessary tensor-hom adjunction. In order to keep track of the appropriate tensor, we adopt the following convention.  Let $\pi_0: \calE_0 \to \calM$ and $\pi_1 : \calE_1 \to \calM$ be smooth Hilbert bundles over a common manifold $\calM$. One may form the \textbf{hom-bundle} $\hom(\calE_0, \calE_1) \to \calM$, whose fiber $\hom(\calE_0, \calE_1)_x$ is the Banach space of bounded linear operators from $\calE_0$ to $\calE_1$. This is a Banach bundle when $\hom(\calE_0, \calE_1)_x$ is topologized with the operator norm.

\begin{definition}
    Let $\Delta_\nabla$ be the connection Laplacian for a compatible connection on a smooth Hilbert bundle $\pi : \calE \to \calM$ over a compact orientable manifold $\calM$. A \textbf{heat kernel} for $\Delta_\nabla$ is a continuous section $K^\nabla(x,y,t)$ of the Banach bundle $\text{hom} \big ( \text{pr}_2^* \calE, \text{pr}_1^* \calE \big ) $ over $\calM \times \calM \times (0,\infty)$ that satisfies the following conditions.
    \begin{enumerate}
        \item $K^\nabla(x,y,t)$ is $C^1$ with respect to $t$, and is $C^2$ with respect to $x$. 
        \item $K^\nabla(x,y,t)$ satisfies the \textbf{heat equation} $\partial_t K^\nabla(x,y,t) = - (\Delta_\nabla)_x K^\nabla(x,y,t)$, where $(\Delta_\nabla)_x$ means applying the Laplacian to $x$.
        \item $K^\nabla(x,y,t)$ satisfies the boundary condition $\lim_{t \to 0} K_t S = S$ for every smooth section $s \in \Gamma(\calE)$, where
        $(K_tS)(x) = \int K^\nabla(x,y,t) S(y) \, d \mu(y).$
    \end{enumerate}
\end{definition}

\begin{lemma}(Heat Kernel for Hilbert Bundles) \label{Heat-Kernel-Existence}
      Let $\Delta_\nabla$ be the connection Laplacian associated to a Hilbert bundle, $(\mathcal{M}, \mathcal{E}, \nabla)$.  Let $n := \dim(\mathcal{M})/2$, $\psi$ be a cutoff function, and 
    $$C_n(x,y) := \frac{\exp \left ( \frac{-d_\calM(x,y)^2}{4t} \right ) }{(4\pi t)^{n/2}}. $$ 
    The following hold:
    \begin{enumerate}
        \item The Laplacian $\Delta_\nabla$ admits a unique heat kernel $K^\nabla(x,y,t)$.
        \item There exist smooth sections $\Phi_i \in \Gamma \left (\mathcal{M} \times \mathcal{M}, \text{hom} \big ( \text{pr}_2^* \calE, \text{pr}_1^* \calE \big ) \right )$ such that for every $N > n$, the kernel
        $$K^N(x,y,t) := C_n(x,y) \psi(d_\calM(x,y)^2) \sum_{i=0}^N t^i \Phi_i(x,y) j(x,y)^{-1/2} |dx|^{1/2}$$
        is asymptotic to $K^\nabla(x,y,t)$, in the sense that
        $$\|\partial^k_t (K^\nabla(x,y,t) - K^N(x,y,t))\|_{\ell} = O(t^{N- \frac{n + \ell + 2k}{2}}).$$
        \item The leading term $\Phi_0(x,y)$ is equal to the parallel transport $P_{y \to x}: \mathcal{E}_y \to \mathcal{E}_x$ with respect to the Fréchet connection associated to $\calH$ along the unique length-minimizing geodesic joining $y$ and $x$. 
    \end{enumerate}
\end{lemma}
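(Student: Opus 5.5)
The plan is to adapt the parametrix construction of Berline–Getzler–Vergne \cite{Berline1992HeatKA} essentially verbatim. The one change is that every object becomes a section of the Banach bundle $\hom(\mathrm{pr}_2^*\calE,\mathrm{pr}_1^*\calE)$ with the operator-norm topology. All estimates will use only operator norms and Bochner integrals, never traces, compactness of operators or a discrete spectrum. I will build the formal asymptotic solution first (items 2 and 3), then the true kernel from it (item 1), and prove uniqueness last.

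\textbf{Formal solution and transport equations.} Fix $y\in\calM$ and work in geodesic normal coordinates on a ball of radius below the injectivity radius. On that ball I trivialize $\calE$ by parallel transport along radial geodesics from $y$; this is possible because parallel transport is a smooth bounded isomorphism, and it is unitary by Proposition~\ref{prop:connection-equivalent}. Write $r=d_\calM(x,y)$, let $\mathcal{R}$ be the radial vector field and $j$ the Jacobian of $\exp_y$. I substitute the ansatz $C_n\sum_i t^i\Phi_i\, j^{-1/2}$ into $\partial_t+\Delta_\nabla$, using the coordinate expression of Lemma~\ref{lemma:coord-expression}. Matching powers of $t$ gives the recursive transport equations
\[
\nabla_{\mathcal{R}}\Phi_0=0,\qquad \Phi_0(y,y)=\mathrm{Id},
\]
\[
(\nabla_{\mathcal{R}}+i)\Phi_i=-\,j^{1/2}\Delta_\nabla\!\left(j^{-1/2}\Phi_{i-1}\right).
\]
The first equation says that $\Phi_0(\cdot,y)$ is parallel along each radial geodesic with initial value the identity, so $\Phi_0(x,y)=P_{y\to x}$; this is item 3. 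The later equations are linear ODEs along rays with values in the Banach space $\mathcal{B}(\calH)$. They have unique solutions regular at $r=0$, given by $\Phi_i(x)=\int_0^1 s^{i-1}(\cdots)(sx)\,ds$. Smoothness of $\Phi_i$ in $(x,y)$ in the operator-norm sense follows because the bundle transition maps take values in $\mathrm{GL}(\mathcal{V})$ smoothly, and because solutions of smooth linear Banach ODEs depend smoothly on parameters \cite{Lang}. Multiplying by the cutoff $\psi$ produces $K^N$. A direct computation then gives
\[
(\partial_t+\Delta_\nabla)K^N=C_n\,t^{N}\,R_N,
\]
where $R_N$ is a smooth, uniformly bounded section; the error from the cutoff is $O(t^\infty)$ because it is supported away from the diagonal.

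\textbf{Construction of the true kernel.} I convert the parametrix into an exact kernel with the Volterra series
\[
K^\nabla=K^N+\sum_{k\ge1}(-1)^k\,K^N*(R_N)^{*k},
\]
where $*$ denotes space–time convolution, $(A*B)(t)=\int_0^t\!\int_\calM A(t-s)B(s)\,d\mu\,ds$. The Gaussian bound on $C_n$ together with compactness of $\calM$ gives a bound of the form $\|R_N^{*k}(t)\|\le C^k\,t^{k+N-n/2-1}/k!$. Hence the series converges in operator norm, together with enough $x$- and $t$-derivatives. That $K^\nabla$ solves the heat equation, and that $K_tS\to S$ (the leading term being $P_{y\to x}$ times a Gaussian approximate identity), then follow as in \cite{Berline1992HeatKA}. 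The remainder estimate $O(t^{N-\frac{n+\ell+2k}{2}})$ of item 2 comes from bounding the tail of the series in $C^\ell$ norms.

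\textbf{Uniqueness.} Suppose two heat kernels exist, and let $u(t)=(K_t-K'_t)S$. Then $u$ solves the heat equation and $u(t)\to0$ as $t\to0$. By compatibility of $\nabla$, self-adjointness gives
\[
\frac{d}{dt}\|u\|_{L^2}^2=-2\|\nabla u\|_{L^2}^2\le 0 .
\]
So $u\equiv0$ for every smooth $S$, and therefore the two kernels agree.

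\textbf{Main obstacle.} I expect the hardest step to be justifying the derivative estimates in the operator-norm topology. Concretely, I must show that $\Delta_\nabla$ acting in $x$ commutes with the convolution integrals and that the $\Phi_i$ are genuinely $C^2$ into $\mathcal{B}(\calH)$ and not merely strongly continuous. I would handle this by using the explicit smooth $\mathrm{GL}(\mathcal{V})$-valued transition maps and differentiating under the Bochner integral with dominated convergence in norm. The singularity at $t=0$ would be handled as in the scalar case, by splitting the time integral at $s=t/2$.
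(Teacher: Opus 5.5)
Your proposal is correct and follows the same route as the paper, whose proof is only a sketch. The paper transplants the Berline--Getzler--Vergne parametrix to $\hom(\calH,\calH)$-valued coefficients using Bochner integrals and the hom-bundle, solves the transport equations along geodesics (giving $\Phi_0=P_{y\to x}$), corrects the result by a Volterra series, and cites the usual energy argument for uniqueness; your version supplies more of the detail than the paper does, namely the explicit transport ODEs, the norm bounds on iterated convolutions, and the attention to operator-norm rather than strong regularity.

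One caveat, inherited from the statement: your claim that $K_tS\to S$ via a Gaussian approximate identity needs $C_n$ normalized by $(4\pi t)^{-\dim(\mathcal{M})/2}$, as in BGV, so $n$ must be read as $\dim\mathcal{M}$ in $C_n$ and in the error exponent, not literally as $\dim(\mathcal{M})/2$.
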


\begin{proof} \textit{(Sketch)} We note that parametrix-based approach of Berline, Getzler, and Vergne \cite{Berline1992HeatKA} extends to our setting with minor but judicious modifications. As the original parametrix argument is quite lengthy, we simply make note of the necessary modifications to their argument. First, all integration must be understood as Bochner integration. Second, to avoid ambiguities surrounding the algebraic and geometric tensor bundle, the hom-bundle is used instead of tensor bundles.  \\
Now note that the parametrix argument is fundamentally local in nature. Consider a smooth Hilbert bundle $\calE \to \calM$, and note that we require an associated Fréchet connection $\nabla$ to be metric-compatible. Then, in a a local trivialization $\calE \big \rvert_U \cong U \times \calH$, with $\calH$ a separable Hilbert space, the connection has the form $\Delta = d + A$ with $A$ a smooth $\hom(\calH, \calH)$ valued 1-form by Proposition \ref{pr: local form of connection}. The connection Laplacian is a second-order elliptic operator with scalar principal symbol and lower-order coefficients in the Banach algebra $\hom(\calH, \calH)$. The parametrix argument then proceeds via solving transport equations along geodesics and then correcting the resulting approximated kernel by a Volterra series. These steps do not rely in any essential way on finite-dimensionality of the fiber, but only on the fact that the coefficient algebra admits the usual smooth calculus and operator-norm estimates. Thus, replacing matrix-valued coefficients by $\hom(\calH, \calH)$-valued ones, one obtains in the same way a smooth kernel $K^\nabla(x,y,t)$, and the usual energy argument gives uniqueness.
\end{proof}

\begin{remark}
    The details of the necessary parametrix argument may be found in 2.1 -- 2.5 of \cite{Berline1992HeatKA}. Note that while they assume a finite-rank hypothesis through the entirety of chapter two, the hypothesis is actually unused until section 2.6 of their work, when the operator $K_t$ is required to be Hilbert-Schmidt. 
\end{remark}

\subsection{Borel Functional Calculus} \label{sec:borel-calc}

Given a linear map $T: \R^n \to \R^n$ and a suitably well-behaved function $g: \mathbb{R} \to \mathbb{C}$, one may ``apply $g$'' to $T$ to get a new linear map $g(T): \R^n \to \R^n$. In particular, whenever $g$ is analytic with a globally defined Maclaurin expansion $g(x) = \sum_j a_j x^j$, one may define $g(T) = \sum_j a_jT^{j}$, where $T^j$ is interpreted as the  $j$-fold composition $T \circ \cdots \circ T$. When $T$ is a bounded linear endo-operator on a Hilbert space, one may similarly define $g(T)$ via series expansion. However, when $T$ is unbounded, more care must be taken to handle series convergence. This difficulty in the unbounded case is pertinent for the HilbNet architecture, where the convolution filter $g$ must be applied to the unbounded connection Laplacian $\Delta_\nabla$. The \emph{Borel functional calculus} provides an elegant solution. While traditionally formulated through the spectral theorem and projection-valued measures, for the purpose of the HilbNet architecture, the following version (Theorem VIII.5 of \cite{reed1972functional}) will be sufficient.



\begin{theorem} (Spectral Theorem - Functional Calculus Form)  \label{thm:spectral-thm} Let $A$ be a self-adjoint operator on a Hilbert space $\mathcal{H}$. Then there is a unique map $\hat{\phi}$ from the bounded Borel measurable functions on $\mathbb{R}$ into the space of bounded linear operators on $\calH$, $\mathcal{L}(\mathcal{H})$, so that \begin{itemize}
    \item $\hat{\phi}$ is an algebraic *-homomorphism.
    \item $\hat{\phi}$ is norm continuous, that is, $\|\hat{\phi}(h)\|_{\mathcal{L}(\mathcal{\calH})} \leq\|h\|_{\infty}$.
    \item Let $g_n(x)$ be a sequence of bounded Borel functions with $g_n(x) \xrightarrow[n \rightarrow \infty]{ } x$ for each $x$ and $\left|h_n(x)\right| \leq|x|$ for all $x$ and $n$. Then, for any $\psi \in \mathrm{dom}(A)$, $\lim _{n \rightarrow \infty} \hat{\phi}\left(h_n\right) \psi=A \psi$.
    \item If $g_n(x) \to h(x)$ pointwise and if the sequence $\|h_n \|_\infty$ is bounded, then $\hat{\phi}(h_n) \to \hat{\phi}(h)$ strongly.
\end{itemize}

In addition:

\begin{itemize}
    \item  If $A \psi=\lambda \psi$, then $\hat{\phi}(h) \psi=h(\lambda) \psi$.
    \item If $g \geq 0$, then $\hat{\phi}(h) \geq 0$.
\end{itemize}

\end{theorem}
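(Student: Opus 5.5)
The plan is to reduce the unbounded operator $A$ to a bounded one, build a continuous functional calculus there, and upgrade it to bounded Borel functions through spectral measures; this is the standard route of \cite{reed1972functional}, and we only indicate the steps.

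First, pass to a bounded normal operator using the Cayley transform $U := (A - i)(A+i)^{-1}$. Self-adjointness of $A$ gives $\mathrm{ran}(A\pm i)=\calH$ and $\|(A\pm i)\psi\|^2 = \|A\psi\|^2+\|\psi\|^2$, so $U$ is unitary. Moreover $1$ is not an eigenvalue of $U$, since $(I-U)=2i(A+i)^{-1}$ is injective. The map $\kappa(x) = (x-i)/(x+i)$ is a homeomorphism of $\R$ onto $\mathbb{S}^1\setminus\{1\}$. Any bounded Borel $h$ on $\R$ therefore corresponds to the bounded Borel function $h\circ\kappa^{-1}$ on $\mathbb{S}^1$, extended arbitrarily at the point $1$.

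Next, construct the calculus for $U$. On Laurent polynomials $p(z,\bar z)$ set $p(U) := \sum a_{jk}U^j (U^*)^k$. Normality of $U$ and the spectral radius formula give $\|p(U)\| = \sup_{\sigma(U)}|p| \le \|p\|_\infty$. By Stone--Weierstrass this extends to an isometric $*$-homomorphism on $C(\mathbb{S}^1)$. For fixed $\psi$, the functional $f\mapsto\langle \psi, f(U)\psi\rangle$ is positive and bounded, so Riesz--Markov yields a finite spectral measure $\mu_\psi$.

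Then define $\hat\phi(h)$ by polarization from the quadratic form $\langle \psi,\hat\phi(h)\psi\rangle := \int h\circ\kappa^{-1}\,d\mu_\psi$. We must check that $\mu_\psi(\{1\})=0$. This should follow from the fact that $1$ is not an eigenvalue: $\mu_\psi(\{1\})=\|E(\{1\})\psi\|^2$, and the range of $E(\{1\})$ is contained in $\ker(U-1)=0$. The listed properties then follow by routine steps.
\begin{itemize}
\item The bound $\|\hat\phi(h)\|\le\|h\|_\infty$ and positivity ($h\ge0\Rightarrow\hat\phi(h)\ge0$) come directly from the integral formula.
\item Multiplicativity and the $*$-property extend from continuous to bounded Borel functions by a monotone-class argument. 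The key input is bounded pointwise convergence, controlled by dominated convergence in each $\mu_\psi$.
\item The strong convergence statement follows from $\|(\hat\phi(h_n)-\hat\phi(h))\psi\|^2=\int|h_n-h|^2\,d\mu_\psi\to0$, again by dominated convergence.
\item For an eigenvector $A\psi=\lambda\psi$, we get $U\psi=\kappa(\lambda)\psi$, so $\mu_\psi$ is a point mass and $\hat\phi(h)\psi=h(\lambda)\psi$.
\end{itemize}

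For the third bullet, take $\psi\in\mathrm{dom}(A)$ and write $\psi=(A+i)^{-1}\eta$. The function $x\mapsto (x+i)^{-1}$ corresponds to a continuous function of $U$, namely $(U-1)/(-2i)$, and one checks that it is sent to $(A+i)^{-1}$. Hence $\hat\phi(h_n)\psi=\hat\phi\big(h_n(x)(x+i)^{-1}\big)\eta$. The functions $h_n(x)/(x+i)$ are bounded by $1$ and converge pointwise to $x/(x+i)$. By the previous bullet the limit is $\hat\phi(x/(x+i))\eta=(I - i(A+i)^{-1})\eta = A\psi$.

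Uniqueness follows because any $*$-homomorphism with these continuity properties is fixed on the resolvent functions $(x\pm i)^{-1}$ by the third bullet. These functions generate a dense subalgebra of $C_0(\R)$, and bounded pointwise limits of $C_0(\R)$ reach all bounded Borel functions. We expect the main obstacle to be exactly this bookkeeping at the point at infinity: showing that the excluded point $1\in\mathbb{S}^1$ carries no spectral mass, and identifying $\hat\phi$ of the resolvent functions with the true resolvents. Everything else is the bounded theory.
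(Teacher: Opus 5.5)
\textbf{Verdict: correct, via a different route from the one the paper relies on.} The paper gives no proof of this statement. It quotes the result from Reed--Simon \cite{reed1972functional} (Thm.~VIII.5), where existence is deduced from the multiplication-operator form. There, the bounded normal spectral theorem is applied to the resolvent to get a unitary $V$ with $V(A+i)^{-1}V^{-1}=M_g$. Injectivity of $(A+i)^{-1}$ gives $g\neq 0$ a.e.; one sets $f:=g^{-1}-i$ and defines $\hat\phi(h):=V^{-1}M_{h\circ f}V$.

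Your Cayley-transform argument reduces to essentially the same bounded operator, since $U=I-2i(A+i)^{-1}$. In particular, your check that $\mu_\psi(\{1\})=0$ because $\ker(U-1)=\{0\}$ is exactly their check that $g\neq 0$ a.e. The difference is in how the bounded theory is used.
\begin{itemize}
\item Their unitary equivalence makes every bullet a pointwise statement about multiplication operators on $L^2(M,\mu)$. For example, the third bullet is dominated convergence with $|h_n\circ f|\le|f|$ plus the domain characterization $\psi\in\mathrm{dom}(A)\iff f\,V\psi\in L^2(M,\mu)$. The cost is building $V$ through a cyclic decomposition.
\item Your spectral-measure route avoids $V$ entirely. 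In exchange, you must prove multiplicativity on Borel functions by a monotone-class argument and handle the point at infinity by hand. You must also identify $\hat\phi((x+i)^{-1})$ with $(A+i)^{-1}$ to get the third bullet. You do all of this correctly.
\end{itemize}

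Three points to tighten.
\begin{itemize}
\item Build the Borel calculus of $U$ on $\mathbb{S}^1$ first and define $E(\{1\}):=\hat\phi_U(\chi_{\{1\}})$. Multiplicativity then gives $UE(\{1\})=E(\{1\})$, so its range lies in $\ker(U-1)=\{0\}$. Only after that should you compose with $\kappa^{-1}$.
\item In the uniqueness step, pinning down $\hat\phi((x+i)^{-1})$ uses the fourth bullet together with the third. It also needs $\hat\phi(1)=I$, which Reed--Simon build into the phrase ``algebraic $*$-homomorphism''. Without unitality, the zero map satisfies the first four bullets when $A=0$.
\item The paper's Hilbert spaces are real, so the Cayley transform (like the resolvent in Reed--Simon) must be taken on the complexification. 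Real-valued $h$ then give operators commuting with conjugation, hence acting on $\mathcal{H}$.
\end{itemize}
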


\subsection{Cellular Sheaves and Sheaf Laplacians} \label{sec:cellular-sheaves}

Cellular sheaves on graphs are a data structure that generalizes weighted graphs. We take our exposition of cellular sheaves and their Laplacians primarily from \cite{Hansen2019towardspecsheaf}. See \cite{Hansen2019towardspecsheaf, curry2014sheaves, hansen2020opinion} for more details. 

\begin{definition}[Cellular Sheaf on a Graph]
    Let $G = (V,E)$ be an undirected multi-graph without self-loops, and finitely many vertices and edges. Let $v \leq e$ denote that node $v$ is incident to the edge $e$.
    A \textbf{cellular sheaf}, or equivalently \textbf{network sheaf}, $\calF$ on $G$ consists of the following data.
    \begin{itemize}
        \item A vector space $\calF(\sigma)$ for each $\sigma \in V \amalg E$, called the \textbf{stalk} over $\sigma$. 
        \item A linear map $ \calF_{v \leq e} : \calF(v) \to \calF(e)$ for each incident pair $v \leq e$, called the \textbf{restriction map} of $v$ into $e$.
    \end{itemize}
\end{definition}

\begin{remark}At the level of category theory, a cellular sheaf is a functor $\calF : G \to \mathbf{Vect}$, where the graph $G = (V,E)$ is viewed as a posetal category with objects $\mathrm{Ob}(G) = V \amalg E$, and a unique homomorphism from $v \to e$ whenever $v \leq e$. In this light, we adopt the notation $\calF : G \to \mathbf{Vect}$ for a cellular sheaf $\calF$ on a graph $G$. 
\end{remark}

Traditionally, to add geometric content to a cellular sheaf, one passes to \emph{weighted cellular sheaves}: a cellular sheaf $\calF: G \to \mathbf{Vect}$ where each stalk $\calF_\sigma$ is a finite dimensional vector space endowed with an inner product $\langle - , - \rangle_\sigma$. To accommodate infinite-dimensional Hilbert space stalks, we instead follow the approach of \cite{gould25}.

\begin{definition}(Hilbert Cellular Sheaf on a Graph)
    A \textbf{Hilbert cellular sheaf} $\mathcal{F}$ on a finite graph $G = (V,E)$ consists of the following data.
    \begin{itemize}
        \item A Hilbert space $\calF(v)$ for each $v \in V $, referred to as the \textbf{node stalk} over $v$. 
        \item A Hilbert space $\calF(e)$ for each $e \in E $, referred to as the \textbf{edge stalk} over $v$. 
        \item For each edge $e \in E$ with bounding vertices $u,v$, a pair of bounded linear \textbf{restriction maps} $\calF_{u \leq e} : \calF(u) \to \calF(e)$ and $\calF_{v \leq e} : \calF(v) \to \calF(e)$.
    \end{itemize}
\end{definition}


\begin{remark}
    A bounded Hilbert sheaf $\calF$ can again be viewed as a functor $\calF: G \to \mathbf{Hilb}_{\mathbb{R}}$, where $G$ is the graph $G$ viewed as an acyclic category, and $\mathbf{Hilb}_{\mathbb{R}}$ is the category of real Hilbert spaces and bounded globally-defined linear operators. 
\end{remark}

\begin{remark}
    In order to better differentiate between the usual finite-rank cellular sheaf on a graph and the potentially infinite-rank Hilbert cellular sheaves considered above, we use the terminology of network sheaves when we wish to emphasize the finite-rank consideration. 
\end{remark}

\begin{definition}
    Let $\calF: G \to \mathbf{Hilb}_{\mathbb{R}}$ be a bounded Hilbert sheaf on a graph $G = (V,E)$. The spaces of \textbf{0-cochains} and \textbf{1-cochains} are defined by:
    \begin{align*}
        C^0(G;\calF) & := \bigoplus_{v \in V} \calF(v) \, , \\ 
        C^1(G;\calF) & := \bigoplus_{e \in E} \calF(e) \, . 
    \end{align*}
    where $\oplus$ denotes the direct sum of Hilbert spaces. For a 0-cochain $\xx \in C^0(G;\calF)$, we denote the component of $\xx$ in the stalk over the node $v$ by $\xx_v$, with a similar notation for components of 1-cochains. 
\end{definition}

\begin{definition}
    Let $G = (V,E)$ be a graph. A \textbf{signed incidence relation} on $G$ is a pairing $[-:-]: V \times E \to \{-1,0,1\}$ which satisfies the following conditions:
    \begin{enumerate}
        \item $[v:e] \neq 0$ if and only if $v \leq e$.
        \item For each $e$, $\sum_{v \leq e} [v:e
    ] =0$.
    \end{enumerate}
\end{definition}

\begin{remark}
    The data of a signed incidence structure on $G = (V,E)$ is equivalent to the choice of a source $s(e)$ and target $t(e)$ for each edge $e$. In particular, the total set of incidences can be put into two-to-one correspondence with edges, counting the two distinct ``boundings'' of each $e \in E$.
\end{remark}

\begin{definition}
    Let $G = (V,E)$ be a graph equipped with a signed incidence relation. Let $\calF: G \to \mathbf{Hilb}_{\R}$ be a bounded Hilbert sheaf on $G$. The \textbf{coboundary operator} $\delta: C^0(G;\calF) \to C^1(G;\calF)$ is the operator with image on the each edge stalk:
    $$\left ( \delta \xx \right )_e := \sum_{\substack{ v \leq e}} [v:e] \calF_{v \leq e}(\xx_v) \, .$$
\end{definition}

\begin{remark}
    The coboundary map $\delta$ depends on the choice of the signed incidence relation. However, given two signed incidence relations $[-:-]_1, [-:-]_2$, the corresponding coboundary operators $\delta_1, \delta_2$ differ on the stalk $\calF(e)$ by at most a sign difference $(\delta_1 \xx)_{e} = \pm (\delta_2 \xx)_e$. In particular, $\ker(\delta)$ does not depend on the choice of $\epsilon$.
\end{remark}

\begin{definition}
    Let $\calF: G \to \mathbf{Hilb}_{\mathbb{R}}$ be a bounded Hilbert network sheaf on a graph $G = (V,E, \epsilon)$ equipped with a signed incidence relation. Let $\delta^*: C^1(G;\calF) \to C^0(G;\calF)$ denote the linear adjoint of the corresponding coboundary operator with respect to the inner product structures on the spaces of cochains as product spaces. The \textbf{Hilbert sheaf Laplacian} is the operator $\Delta_\calF : C^0(G;\calF) \to C^0(G;\calF)$ defined by the composition:
    $$\Delta_\calF = \delta^* \circ \delta\, .$$
\end{definition}

\begin{proposition}
    The Hilbert sheaf Laplacian $\Delta_{\calF}$ has the following properties.
    \begin{enumerate}
        \item The Laplacian $\Delta_{\calF}$ is a self-adjoint globally-defined bounded linear operator.
        \item When $C^0(G;\calF) \xrightarrow{\delta} C^1(G;\calF)$ is viewed as a Hilbert complex (in the sense of \cite{Bruning92}) the kernel $\ker(\Delta_{\calF})$ recovers the space of harmonic $0$-cochains.
        \item The negative Laplacian $-\Delta_\calF$ is the infinitesimal generator of a strongly continuous semigroup $e^{- t\Delta_{\calF}}$ on $C^{0}(G;\calF)$. For a choice of initial cochain $\xx_0 \in C^0(G;\calF)$, the resulting flow 
        $\xx_t := e^{- t\Delta_{\calF}} \xx_0$ is a solution to the sheaf heat equation 
        $\frac{d}{dt} \xx_t = - \Delta_{\calF} \xx_t$. 
        Moreover, the flow has limiting behavior $\lim_{t \to \infty} \xx_t = \Pi_{\ker(\Delta_{\calF})} \xx_0$, where $\Pi_{\ker(\Delta_{\calF})}$ denotes the orthogonal projection onto $\ker(\Delta_{\calF})$.
    \end{enumerate}
\end{proposition}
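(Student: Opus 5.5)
The plan is to derive everything from the factorization $\Delta_\calF=\delta^*\circ\delta$ and the finiteness of $G$. Throughout I will refer to the items of the proposition as (1), (2) and (3).

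\emph{Step 1 (boundedness and self-adjointness).} Since $G$ has finitely many vertices and edges, each component $(\delta\xx)_e=\sum_{v\le e}[v:e]\calF_{v\le e}(\xx_v)$ is a finite sum of bounded restriction maps applied to coordinate projections. Hence $\delta$ is globally defined and bounded, with $\|\delta\|\le \big(\sum_{v\le e}\|\calF_{v\le e}\|^2\big)^{1/2}$ up to a constant depending only on the maximal degree. Its Hilbert adjoint $\delta^*$ is then bounded and everywhere defined, and so is $\Delta_\calF=\delta^*\delta$. Self-adjointness follows from $(\delta^*\delta)^*=\delta^*\delta^{**}=\delta^*\delta$. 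Positivity comes from
\[
\langle\Delta_\calF\xx,\xx\rangle=\|\delta\xx\|^2\ge0 ,
\]
and this identity will be reused below. Expanding $\delta^*$ on edge stalks should recover the nodewise formula of Definition~\ref{Hilbert-Sheaf-Laplacian}, independently of the orientation, because the signs square to one.

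\emph{Step 2 (harmonic cochains).} I view $0\to C^0\xrightarrow{\delta}C^1\to0$ as a bounded Hilbert complex. The harmonic $0$-cochains are $\ker\delta\cap\ker(d_{-1})^*$, where $d_{-1}=0$ is the incoming map, so they reduce to $\ker\delta$. The positivity identity gives $\ker\Delta_\calF=\ker\delta$: if $\Delta_\calF\xx=0$ then $\|\delta\xx\|^2=0$, and the converse inclusion is trivial. Closedness of the range is not needed here, since all operators are bounded.

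\emph{Step 3 (semigroup and long-time limit).} Because $\Delta_\calF$ is bounded, I define $e^{-t\Delta_\calF}=\sum_k(-t)^k\Delta_\calF^k/k!$, which converges in operator norm. This gives a uniformly (hence strongly) continuous semigroup that is norm-differentiable with generator $-\Delta_\calF$, so $\xx_t$ solves the sheaf heat equation. The same operator is $h_t(\Delta_\calF)$ for $h_t(\lambda)=e^{-t\lambda}$ under the Borel functional calculus of Theorem~\ref{thm:spectral-thm}, once $h_t$ is truncated to the compact spectrum $\sigma(\Delta_\calF)\subset[0,\|\Delta_\calF\|]$.

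For the limit, $h_t\to \mathbf 1_{\{0\}}$ pointwise on $[0,\infty)$ with $\|h_t\|_\infty\le1$ there. The strong-continuity bullet of Theorem~\ref{thm:spectral-thm} then gives $e^{-t\Delta_\calF}\xx_0\to \mathbf 1_{\{0\}}(\Delta_\calF)\xx_0$. It remains to identify $Q:=\mathbf 1_{\{0\}}(\Delta_\calF)$ with $\Pi_{\ker\Delta_\calF}$. By the $*$-homomorphism property, $Q$ is an orthogonal projection, since $\mathbf 1_{\{0\}}$ is real and idempotent. Its range lies in the kernel because $\Delta_\calF Q=(\lambda\mathbf 1_{\{0\}})(\Delta_\calF)=0$. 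Conversely, if $\Delta_\calF\psi=0$, the eigenvector bullet gives $Q\psi=\mathbf 1_{\{0\}}(0)\psi=\psi$. So $\operatorname{ran}Q=\ker\Delta_\calF$.

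I expect Step 3 to be the main obstacle. Unlike the finite-rank case, $\Delta_\calF$ need not have a spectral gap: $0$ may be an accumulation point of the spectrum. So there is no exponential rate, and one only gets strong convergence, not norm convergence. This is exactly why I route the argument through pointwise bounded convergence in the functional calculus rather than an eigen-decomposition.
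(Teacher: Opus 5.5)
Your proof is correct. It differs from the paper only because the paper gives no argument for this proposition: its proof is the one-line pointer ``See \cite{gould25}.''

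You instead derive all three items from $\Delta_\calF=\delta^*\delta$ plus the two standing hypotheses (finitely many cells, bounded restriction maps). This reduces everything to bounded-operator theory:
\begin{itemize}
\item no domain or core issues arise in the Hilbert-complex picture;
\item $\langle\Delta_\calF\xx,\xx\rangle=\|\delta\xx\|^2$ gives $\ker\Delta_\calF=\ker\delta$ at once;
\item the semigroup is a norm-convergent power series, so no Hille--Yosida argument is needed.
\end{itemize}
This is fully adequate for the sheaves the paper actually uses. Your remark that $0$ may accumulate in $\sigma(\Delta_\calF)$ is useful: it makes explicit that the limit in item (3) holds only strongly, with no uniform rate.

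Two small corrections.

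In Step 2, closed range is irrelevant, but not because the operators are bounded. A bounded $\delta$ has closed range exactly when $0$ is isolated in, or absent from, $\sigma(\Delta_\calF)$. So closed range fails in the very no-gap case you describe in Step 3. The real reason it does not matter is that the identity $\ker\Delta_\calF=\ker\delta$ never uses it.

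In Step 3, Theorem~\ref{thm:spectral-thm} is stated for bounded Borel functions on all of $\mathbb{R}$, so make the truncation explicit. Use, e.g., $\tilde h_t(\lambda)=e^{-t\lambda}\mathbf 1_{[0,\|\Delta_\calF\|]}(\lambda)$, and use $\lambda\mathbf 1_{[0,\|\Delta_\calF\|]}(\lambda)$ in place of the identity function when you write $\Delta_\calF Q=0$. This relies on the standard fact that the calculus only sees values on $\sigma(\Delta_\calF)\subset[0,\|\Delta_\calF\|]$, which is not among the quoted bullets. With this choice, $\tilde h_t\to\mathbf 1_{\{0\}}$ pointwise on all of $\mathbb{R}$ and $\|\tilde h_t\|_\infty\le 1$. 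The strong-convergence bullet then applies along every sequence $t_k\to\infty$.
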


\begin{proof}
    See \cite{gould25}.
\end{proof}

We now recall our construction of Hilbert cellular sheaf from a spatially-discretized Hilbert bundle.

\begin{definition*}[Hilbert Cellular Sheaf from a Hilbert Bundle]
For a given Hilbert bundle $(\mathcal{M},\mathcal{E},\nabla)$ with sampled points $\mathcal{X}_n = \{x_1, \dots, x_n \} \subset \mathcal{M}$, fix a geodesic $\gamma_{ij}$ between $x_i$ and $x_j$, for all $i < j$. Further, let $m_{\gamma_{ij}}$ denote the midpoint of this geodesic. Consider the graph $G_{n} = (\mathcal{X}_n,E)$ with an undirected edge $e_{ij}$ between $x_i$ and $x_j$, for each $i < j$. The associated \textbf{Hilbert cellular sheaf} $\mathcal{F}_n^{t}$ on $G_n$ with bandwidth parameter $t$ is given by the following assignments:
    \begin{itemize}[leftmargin=*]
    \item The Hilbert space $\mathcal{F}_n^t(x_i) := \mathcal{E}_{x_i}$ for each $x_i \in \mathcal{X}_n$, referred to as the node stalk over $x_i \in \mathcal{X}_n$.
    \item The Hilbert space $\mathcal{F}_n^t(e_{ij}) := \mathcal{E}_{m_{\gamma_{ij}}}$ for each $e_{ij} \in E$, referred to as the edge stalk over $e_{i,j} \in E$.
    \item For each edge $e_{ij} \in E$ with bounding vertices $x_i,x_j$, a pair of bounded linear restriction maps
    \begin{align}
    &(\mathcal{F}_n^t)_{x_i \leq e_{ij}}
    :=
    \sqrt{k_{ij}^t} \,
    P_{x_i \to m_{\gamma_{ij}}}:
    \mathcal{F}_n^t(x_i)
    \to
    \mathcal{F}_n^t(e_{ij}), \nonumber \\
    &(\mathcal{F}_n^t)_{x_j \leq e_{ij}}
    :=
    \sqrt{k_{ij}^t} \,
    P_{x_j \to m_{\gamma_{ij}}}:
    \mathcal{F}_n^t(x_j)
    \to
    \mathcal{F}_n^t(e_{ij}),
    \end{align}
    where $k_{ij}^t = e^{-d_{\calM}(x_i,x_j)^2/4t}$, with $d_{\calM}$ the geodesic distance on $\calM$, and $P_{x_i \to m_{\gamma_{ij}}}$ denotes the unitary parallel transport map on $\mathcal{E}$ between $x_i$ and $m_{\gamma_{ij}}$. 
    \end{itemize}
\end{definition*}

\begin{remark}
    We make a few clarifying remarks on this construction. 
    \begin{itemize}
        \item Note that geodesics exist in this setting by the Hopf-Rinow theorem \cite{docarmo1992} by compactness of $\mathcal{M}$, and we should further choose length-minimizing geodesics. 
        \item For simplicity, we use the geodesic distance to weight our restriction maps. However, we could also use the Euclidean heat kernel ala \cite{BELKIN20081289}, and this would result in a reweighted sheaf Laplacian but would ultimately converge to the same connection Laplacian. In practical implementations, it is thus well-justified to work with the Euclidean heat kernel rather than geodesic distance based weights.
        \item While this particular construction is chosen to emphasize the relationship to \cite{BELKIN20081289} and allow for the necessary analytical arguments, there exist alternative constructions that are geodesic choice-independent that generate the same sheaf Laplacian but emphasize functoriality. 
    \end{itemize}
\end{remark}

\subsection{Empirical Laplacians}\label{app:empirical_lap}
Analogously to \cite{BELKIN20081289}, we introduce two intermediary notions of Laplacian that interpolate between the Hilbert sheaf Laplacian and the Laplacian on a Hilbert bundle. For this section, fix a Hilbert bundle with compatible Fréchet connection $(\calM, \calE, \nabla)$ over a closed manifold $\calM$. 

\begin{definition} \label{def:uniform-distribution}
    Consider the unique normalized volume pseudo-form on $\calM$ (or the usual volume form if $\calM$ is orientable), denoted $d\mu$. Thus, $d\mu$ equips $\calM$ with a probability measure and we may refer to the resulting distribution as the \textbf{uniform distribution} on $\calM$.
\end{definition}

Henceforth, let $\mathcal{X}_n = \{x_1, \ldots, x_n\}$ denote the realization of an iid random sample drawn from the uniform distribution on $\mathcal{M}$. We then recall the following construction.

\begin{definition*} (Point-Cloud Extension of Sheaf Laplacian) \label{defn:point-cloud-extension} Let $(\mathcal{M}, \mathcal{E}, \nabla)$  be a Hilbert bundle and consider a sample $\calX_n \subset \mathcal{M}$. Then the corresponding Hilbert sheaf Laplacian $\Delta_{\mathcal{F}^t_n}$ may be extended to the \textbf{point-cloud Laplacian}  $\hat{\Delta}_{\calF^t_{n}}$, an operator on $L^2(\mathcal{M}, \mathcal{E})$ via \begin{equation}(\hat{\Delta}_{\mathcal{F}^t_n}S)(x):= \frac{1}{n} \sum_j e^{-d_{\calM}(x,x_j)^2/4t}\big ( S(x) - P_{x_j \to x}S(x_j)\big ) 
\end{equation}
\end{definition*}

\begin{remark}
    We make the following remarks about the point-cloud Laplacian.
    \begin{enumerate}
        \item The point-cloud Laplacian is the extension of the Hilbert sheaf Laplacian $\Delta_{\mathcal{F}^t_n} :C^0(G;\mathcal{F}_n^t) \to C^0(G;\mathcal{F}^t_{n})$ to an operator acting on sections of the Hilbert bundle $\calE \to \calM$, normalized by a factor of $1/n$. In particular, when evaluated at a sample point $x_i \in \mathcal{X}$, the point cloud Laplacian $\hat{\Delta}_{\mathcal{F}^t_n} S(x_i)$ is exactly the normalized $x_i$ component of the Hilbert sheaf Laplacian $\Delta_{\mathcal{F}^t_n}$ evaluated at the cochain $(S(x_1), \ldots, S(x_n))^T \in C^0(G;\mathcal{F}^t_{n})$.
        \item The point-cloud Laplacian is well defined for any section $S : \calM \to \calE$, regardless of regularity.
    \end{enumerate}
\end{remark}

\begin{definition}[Functional Approximation Laplacian] \label{def:functional-approx}
     For a section $S \in C^1(\calE)$, we define the \textbf{functional approximation} to the connection Laplacian
    $$(\hat{\Delta}^tS)(x) :=  \int_\mathcal{M} \big ( S(x) -P_{y \to x}S(y) \big ) \exp \left ( - \frac{d_\calM(x,y)^2}{4t}\right ) \, dy $$
    where $\int (-) \, dy$ denotes Bochner integration with respect to the canonical normalized volume pseudo-form on $\mathcal{M}$.
\end{definition}

\begin{remark}
    We make the following remarks about the functional approximation Laplacian.
    \begin{enumerate}
        \item The functional approximation Laplacian has no dependence on a sample of points from the underlying manifold. Instead, the functional approximation may be treated as the limiting operator where all points on the manifold have been sampled, and contribute uniformly via parallel transport. 
        \item The geometric data of the connection $\nabla$ impacts the functional approximation Laplacian through the parallel transport maps $P_{y \to x}: \mathcal{E}_y \to \mathcal{E}_x$, which links the fibers of $\calE$.
        \item Viewing the sample $\mathcal{X}_n = \{x_1, \ldots, x_n\} \subseteq \calM$ as having been drawn iid from the uniform probability distribution on $\calM$, the functional approximation Laplacian can be identified pointwise on a section $S$ as the expected value of the point cloud Laplacian. That is, for any $S \in C^1(\calM,\calE)$ and $x \in \calM$, we have:
        $$\frac{1}{\mathrm{vol}(\calM)}(\hat{\Delta}^tS)(x) = \mathbb{E}_{\mathcal{X}} \left [ (\hat{\Delta}_{\mathcal{F}^t_n} S)(x) \right ] $$
    \end{enumerate}
\end{remark}



\section{Proofs of Results}\label{sec:appendix-proofs}

\subsection{Auxiliary Lemmas for Theorem 1}

\begin{lemma}
    For $x = (x_1, \ldots, x_m) \in \R^m$, $k \in \mathbb{N}$, and $a,t > 0$, the following Gaussian identities hold:
    \begin{align}
        \frac{1}{(2 \pi at)^{m/2}} \int x_i \exp \left ( - \frac{ \|x\|^2}{2at} \right ) \, dx &= 0, \label{eq: gauss 1}\\
        \frac{1}{(2\pi at)^{m/2}} \int x_ix_j\exp \left ( - \frac{ \|x\|^2}{2at} \right ) \, dx &= at\delta_{ij}, \label{eq: gauss 2}\\
        \frac{1}{(2 \pi at)^{m/2}} \int \|x\|^{2k + 1} \exp \left ( - \frac{ \|x\|^2}{2at} \right ) \, dx &= O(t^{k + \frac{1}{2}}) \qquad \text{as $t \to 0$}, \label{eq: gauss 3}
    \end{align}
    where $\delta_{ij}$ is the Kronecker delta.
\end{lemma}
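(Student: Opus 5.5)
The three identities follow from the product structure of the Gaussian. I will first reduce to the standard normal by the substitution $x = \sqrt{at}\,z$. Then $dx = (at)^{m/2}\,dz$, and the prefactor $(2\pi a t)^{-m/2}$ becomes $(2\pi)^{-m/2}$, so each integral is an expectation $\mathbb{E}[\,\cdot\,]$ with $Z\sim\mathcal{N}(0,I_m)$, rescaled by the appropriate power of $\sqrt{at}$. Throughout, the integrals converge absolutely because polynomials are integrable against the Gaussian. This justifies the change of variables and the use of Fubini.

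For \eqref{eq: gauss 1}, the integrand $x_i e^{-\|x\|^2/2at}$ is odd under the reflection $x_i\mapsto -x_i$, which preserves $\|x\|$ and Lebesgue measure. The integral therefore vanishes. For \eqref{eq: gauss 2} with $i\neq j$, the same reflection in the $x_i$ coordinate alone flips the sign of the integrand, so the integral is $0$. For $i=j$, I will factor the Gaussian as $\prod_k e^{-x_k^2/2at}$ and apply Fubini. The $m-1$ untouched coordinates each contribute a factor of $1$ after normalization by $(2\pi at)^{1/2}$. The remaining one-dimensional integral is the variance of $\mathcal{N}(0,at)$, namely $at$. Together these give $at\,\delta_{ij}$.

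For \eqref{eq: gauss 3}, the substitution gives exactly
\begin{equation*}
\frac{1}{(2\pi at)^{m/2}}\int \|x\|^{2k+1} e^{-\|x\|^2/2at}\,dx \;=\; (at)^{k+\frac12}\,\mathbb{E}\|Z\|^{2k+1}.
\end{equation*}
It remains to show that $\mathbb{E}\|Z\|^{2k+1}$ is a finite constant depending only on $m$ and $k$. I will argue this either through polar coordinates, which express it as a ratio of Gamma functions, $2^{(2k+1)/2}\Gamma\!\big(\tfrac{m+2k+1}{2}\big)/\Gamma(\tfrac m2)$, or by Cauchy–Schwarz against the even moment $\mathbb{E}\|Z\|^{4k+2}<\infty$. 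Either way the expression equals $C_{m,k}\,a^{k+1/2}\,t^{k+1/2}$, which is $O(t^{k+1/2})$ as $t\to 0$ for fixed $a$. In fact it is an exact scaling, not merely an asymptotic bound.

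None of these steps poses a real obstacle. The only point needing care is recording that the integrals converge absolutely, so that Fubini and the change of variables are legitimate and the constant in the $O(\cdot)$ is uniform in $t$. The dependence of that constant on $a$ should also be noted, since the identities will later be applied with different values of $a$.
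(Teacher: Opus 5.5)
Your proposal is correct and follows essentially the same route as the paper. The paper reads the normalized integrand as the $\mathcal{N}(0, atI)$ density, so the first two identities are its mean and covariance, and it gets the third from the exact scaling $X=\sqrt{at}\,Z$, giving $(at)^{k+1/2}\mathbb{E}\|Z\|^{2k+1}$. Your extra details (the symmetry and Fubini arguments, and the explicit Gamma-function constant) fill in steps the paper leaves implicit.
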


\begin{proof}
    Notice that $\frac{1}{(2 \pi at)^{m/2}}\exp \left ( - \frac{ \|x\|^2}{2at} \right )$ is the density function of a multivariate normal random variable $X \sim \mathcal{N}(0, a t I)$. Equations \eqref{eq: gauss 1} and \eqref{eq: gauss 2} are simply the values of the coordinate-mean $\mathbb{E}[X_i] = 0$ and covariance $\Cov(X_i,X_j) = at \delta_{ij}$. Finally, we may write $X = \sqrt{at} Z$, where $Z \sim \mathcal{N}(0,I)$ is a standard multivariate normal (zero-mean, uncorrelated, unit variance) in $m$ dimensions. We may write $\mathbb{E}[\|X\|^{2k + 1}] = (at)^{k + \frac{1}{2}} \cdot \mathbb{E}[||Z||^{2k+1}]$, which confirms \eqref{eq: gauss 3}. 
\end{proof}

\begin{remark}
    If instead of integrating over the entire domain $\R^m$, we integrate over a symmetric ball $B := B(R,0)$ centered at zero, we recover the following augmented Gaussian identities:
    \begin{align}
        \frac{1}{(2 \pi at)^{m/2}} \int_B x_i \exp \left ( - \frac{ \|x\|^2}{2at} \right ) \, dx &= 0, \label{eq: gauss 1 aug}\\
        \frac{1}{(2\pi at)^{m/2}} \int_B x_ix_j\exp \left ( - \frac{ \|x\|^2}{2at} \right ) \, dx &=  \bigg ( at + O(e^{-R^2/2at})  \bigg)\,\delta_{ij} \qquad \text{as $t \to 0$}, \label{eq: gauss 2 aug}\\
        \frac{1}{(2 \pi a t)^{m/2}} \int_B \|x\|^{2k + 1} \exp \left ( - \frac{ \|x\|^2}{2at} \right ) \, dx &= O(t^{k + \frac{1}{2}}) \qquad \text{as $t \to 0$}. \label{eq: gauss 3 aug}
    \end{align}
    The restriction to the ball $B$ leaves all odd-degree symmetries unchanged, and augments the even symmetries by a factor of the form $O(e^{-c/t})$, capturing the exponential decay on probability mass far away from the origin.
\end{remark}

\begin{lemma}(Banach Mean Value Theorem)\label{Banach-MVT}
Consider Banach spaces $\mathcal{B}_1, \mathcal{B}_2$ and some open $\mathcal{U} \subseteq \mathcal{B}_1,$ Then if $S: \mathcal{U} \rightarrow \mathcal{B}_2$ is Gateaux differentiable, then the mean value theorem holds in the sense that

$$
\|S(x)-S(y)\|_{\mathcal{B}_2} \leq\|x-y\|_{\mathcal{B}_1} \sup _{0 \leq t \leq 1}\|D f(t x+(1-t) y)\|
$$

whenever the convex hull $[x, y]$ lies in $\mathcal{U}$.
    
\end{lemma}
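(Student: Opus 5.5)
The statement is the Banach-space mean value inequality, Lemma~\ref{Banach-MVT}. The plan is to reduce it to the scalar mean value theorem with a norming functional, via Hahn--Banach.

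Set $z := S(x)-S(y)\in\mathcal{B}_2$. If $z=0$ there is nothing to prove. Otherwise Hahn--Banach supplies $\varphi\in\mathcal{B}_2^*$ with $\|\varphi\|=1$ and $\varphi(z)=\|z\|_{\mathcal{B}_2}$.

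Consider the path $c(s):=sx+(1-s)y$ for $s\in[0,1]$, which lies in $\mathcal{U}$ by the convexity hypothesis on $[x,y]$. Define the real-valued function $h(s):=\varphi\big(S(c(s))\big)$.

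Gateaux differentiability of $S$ at $c(s)$ in the direction $x-y$ means that
\[
\lim_{\epsilon\to0}\frac{S(c(s)+\epsilon(x-y))-S(c(s))}{\epsilon}=DS(c(s))(x-y).
\]
Since $c(s+\epsilon)=c(s)+\epsilon(x-y)$ and $\varphi$ is continuous and linear, $h$ is differentiable on $(0,1)$ with
\[
h'(s)=\varphi\big(DS(c(s))(x-y)\big).
\]
I would interpret Gateaux differentiability as giving a bounded linear $DS$, as the norm in the statement presupposes. The notation $Df$ in the displayed statement should read $DS$.

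The step needing the most care is continuity of $h$ at the endpoints $s=0,1$. The scalar mean value theorem needs $h$ continuous on $[0,1]$ and differentiable on $(0,1)$. One-sided Gateaux derivatives at $x$ and $y$ (both in $\mathcal{U}$) give directional continuity of $S$ along the segment there, which suffices. Alternatively, I would first apply the argument to a slightly shortened subsegment and then pass to the limit.

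With that in hand, the scalar mean value theorem gives some $\theta\in(0,1)$ with
\[
\|S(x)-S(y)\|_{\mathcal{B}_2}=h(1)-h(0)=\varphi\big(DS(c(\theta))(x-y)\big)\le \|DS(c(\theta))\|\,\|x-y\|_{\mathcal{B}_1},
\]
using $\|\varphi\|=1$ and the definition of the operator norm. Bounding by the supremum over $\theta\in[0,1]$ yields the claim.

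No completeness or Hilbert structure is used, only Hahn--Banach in a normed space. The result therefore applies fiberwise in a trivialization of $\mathcal{E}$, which is presumably how it will be invoked to control $S(x)-P_{y\to x}S(y)$ in the proof of Theorem~\ref{thm: main-thm}.
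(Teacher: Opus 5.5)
Your proposal is correct and takes the same route as the paper: both pick a Hahn--Banach norming functional $\varphi$ for $S(x)-S(y)$ and reduce to the scalar case, and you additionally spell out the real mean value theorem step for $h(s)=\varphi\big(S(sx+(1-s)y)\big)$, which the paper's sketch leaves implicit. The endpoint issue you flag is simpler than you suggest: since $x,y\in\mathcal{U}$ and $\mathcal{U}$ is open, $h$ is two-sided differentiable, hence continuous, at $s=0$ and $s=1$ as well.
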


\begin{proof}
    The proof is a standard functional analysis argument but we recall it here for completeness. Let $\mathbb{L}$ be the one-dimensional subspace spanned by some fixed nonzero $u \in \mathcal{U}$. Consider $\varphi(cu)=c\|u \|$ as a continuous linear functional on $\mathbb{L}$ with norm $1$. By the strong form of the Hahn-Banach, as stated in \cite{ambrosetti1995primer}, for instance, we may extend this functional to the whole domain. Then note that $\varphi(u)=\|u \|$, so the result follows by considering $u = S(x)-S(y)$.
\end{proof}

\begin{remark}
    We recall that Fréchet differentiability in particular implies Gateaux differentiable, which will be sufficient for our purposes.
\end{remark}

\begin{lemma}(Banach Weak Law of Large Numbers) \label{Hilbert-LLN}  Let $\{X_j\}_{j \in \mathbb{N}}$ denote an independent identically distributed collection of random variables $X_j \in L^1(\Omega, B)$, where $\Omega := (\Omega, \Sigma, \mathbb{P})$ is a probability space and $B$ is a Banach space. Let $S_n := \sum_{j=1}^n X_j$ denote the partial sum of the first $n$ random variables. As $n \to \infty$, the normalized sequence $\frac{1}{n}S_n$ converges in probability to the mean $\mu := \mathbb{E}[X_j]$. That is for all $\epsilon > 0$, 
    $$\lim_{n \to \infty} \mathbb{P} \left [ \left | \frac{1}{n}S_n - \mu \right | > \epsilon \right ] = 0.$$
\end{lemma}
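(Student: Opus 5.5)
The plan is to follow the classical Chebyshev route for the scalar weak law, adapted to the Banach-valued setting by truncation and approximation. Throughout, $|\cdot|$ in the statement is read as the norm $\|\cdot\|_B$.

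\emph{Step 1: finite-dimensional approximation.} Since $X_1 \in L^1(\Omega,B)$ is Bochner integrable, it is strongly measurable. Hence, for any $\eta>0$, there is a measurable map $\phi: B \to B$ with finite range such that $\mathbb{E}\|X_1 - \phi(X_1)\| < \eta$. This uses the density of simple functions in the Bochner space, combined with the fact that the law of $X_1$ is concentrated on a separable subspace, so a countable partition can be truncated.

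Set $Y_j := \phi(X_j)$. The $Y_j$ are iid, bounded, and take values in a fixed finite-dimensional subspace $F \subset B$. Writing $\nu := \mathbb{E}[Y_1]$, we also have $\|\mu - \nu\| \le \eta$.

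\emph{Step 2: the approximating sequence.} In $F$, equipped with any Euclidean norm (equivalent to $\|\cdot\|_B$ up to a constant $C_F$), the $Y_j$ have finite second moment. Independence and Chebyshev's inequality, applied coordinatewise or via $\mathbb{E}\|\tfrac1n\sum_j (Y_j-\nu)\|_2^2 = \tfrac1n \mathbb{E}\|Y_1-\nu\|_2^2$, give
\[
\mathbb{P}\Bigl[\bigl\|\tfrac1n \textstyle\sum_j Y_j - \nu\bigr\| > \epsilon/3\Bigr] \to 0 .
\]

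\emph{Step 3: the error terms.} The remainder $\tfrac1n\sum_j (X_j - Y_j)$ is controlled by Markov's inequality:
\[
\mathbb{P}\Bigl[\bigl\|\tfrac1n\textstyle\sum_j (X_j-Y_j)\bigr\| > \epsilon/3\Bigr] \le \frac{3}{\epsilon}\,\mathbb{E}\|X_1 - Y_1\| < \frac{3\eta}{\epsilon}.
\]
Choosing $\eta < \epsilon/3$ also ensures $\|\mu-\nu\| < \epsilon/3$.

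By the triangle inequality, the three pieces combine to give
\[
\limsup_n \mathbb{P}\bigl[\|\tfrac1n S_n - \mu\| > \epsilon\bigr] \le \frac{3\eta}{\epsilon}.
\]
Since $\eta$ is arbitrary, the limit is zero.

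\emph{Main obstacle.} The key difficulty is Step 1: producing a finite-valued approximation with small $L^1$ error using only Bochner integrability. There is no second-moment assumption, and no type or cotype assumption on $B$, so a direct variance argument in $B$ is unavailable. I would settle Step 1 via Pettis measurability: take a countable dense set $\{b_k\}$ in the separable essential range, form the nearest-point selector $\phi_K$ among $b_1,\dots,b_K$ (with $0$ included), and apply dominated convergence using $\|X_1 - \phi_K(X_1)\| \le 2\|X_1\|$.

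Applying the same deterministic $\phi$ to every $X_j$ is what preserves the iid structure.
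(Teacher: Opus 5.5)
Your argument is correct, and it differs from the paper only in that the paper gives no argument: it simply cites Pinelis (1992) and Ledoux--Talagrand (1991).

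\textbf{Your route.} You approximate $X_1$ in $L^1$ by a finitely-valued Borel map $\phi$, which is possible because Bochner integrability gives an essentially separable range, say in a closed separable subspace $B_0$. You then apply the same $\phi$ to every $X_j$. Identical distribution gives both $\mathbb{P}(X_j\in B_0)=1$ and $\mathbb{E}\|X_j-\phi(X_j)\|=\mathbb{E}\|X_1-\phi(X_1)\|$ for all $j$. Chebyshev in the finite-dimensional span handles the main term, and Markov handles the remainder. This is the weak-law version of the classical proof of Mourier's strong law given in Ledoux--Talagrand.

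\textbf{What each approach buys.} Your argument makes the lemma self-contained. It holds for an arbitrary Banach space $B$ under only a first-moment assumption, with no separability of $B$ and no type or smoothness assumptions. It gives no rate, since the $\eta$-approximation is non-quantitative. The Pinelis citation instead points to exponential Hoeffding/Bernstein-type tail bounds for sums in smooth spaces such as Hilbert spaces. Those bounds need boundedness or moment conditions beyond $L^1$, so they do not by themselves cover the lemma as stated. They are, however, what supplies the quantitative concentration the paper actually uses in its subsequent Hoeffding lemma.

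\textbf{Two small details to tighten.} Fix a tie-breaking rule (e.g.\ smallest index) so that the nearest-point selector $\phi_K$ is genuinely Borel. Also, because $0$ is among the candidate points, the dominating function is $\|X_1\|$ itself rather than $2\|X_1\|$; either bound works.
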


\begin{proof}
    See \citet{Pinelis1992} or \citet{LedouxTalagrand1991}. 
\end{proof}

\subsection{Key Lemmas for Theorem 1}

\begin{lemma}(Taylor Series for Hilbert Signals) \label{Taylor-Series}
 Let  $(\mathcal{M}, \mathcal{E}, \nabla)$ be a Hilbert bundle equipped with a Fréchet connection. For a given signal $S \in C^{n+1}(\mathcal{M},\mathcal{E})$, the space of $(n+1)$-times continuously Fréchet-differentiable sections, and $p \in \mathcal{M}$, consider any $q \in \mathcal{M}$ in a geodesic ball of $p$ and fix a length-minimizing curve $\gamma(t)$ from $p$ to $q$. Let $S^*(t) := P_{\gamma(t) \to p} S(\gamma(t))$ denote the parallel transport of $S$ from $\gamma(t)$ back to $p$ along $\gamma$. As $t \to 0$, we have that
   $$S^*(t) = \left [ \sum_{j=0}^{n} \frac{t^j}{j!} \big (\nabla_{\dot{\gamma}}^{(j)} S \big )(p) \right ] + O(t^{n+1})$$ 
   
\end{lemma}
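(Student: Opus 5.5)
The plan is to reduce the statement to the ordinary vector-valued Taylor theorem on the interval, by trivializing along $\gamma$ with parallel transport. Fix a parallel orthonormal frame along $\gamma$, or equivalently use the unitary maps $P_{\gamma(t)\to p}$ to identify every fiber $\mathcal{E}_{\gamma(t)}$ with the single Hilbert space $\mathcal{E}_p$. In this identification the curve $S^*:[0,\epsilon)\to\mathcal{E}_p$ is an honest map into a fixed Banach space. The whole argument then has two parts: show $S^*$ is $C^{n+1}$ with derivatives given by covariant derivatives, and apply the one-variable Taylor formula with remainder in $\mathcal{E}_p$.

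The key identity is
\begin{equation*}
\frac{d}{dt}S^*(t)=P_{\gamma(t)\to p}\big(\nabla_{\dot\gamma}S\big)(\gamma(t)),
\end{equation*}
which I would prove from the definition of parallel transport. Write $S(\gamma(t+h))=P_{\gamma(t)\to\gamma(t+h)}u+w(h)$ with $u=S(\gamma(t))$; this is the same decomposition used in the proof of Proposition~\ref{prop:connection-equivalent}. The first term is parallel, so $w(0)=0$ and $w'(0)=\nabla_{\dot\gamma}S(\gamma(t))$. I would verify the latter in a local trivialization using the local form $\nabla=d+A$ of Proposition~\ref{pr: local form of connection}; there the parallel term contributes exactly $-A_{\dot\gamma}u$, so it cancels the connection term.

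Transporting back and using the group property $P_{\gamma(t+h)\to p}P_{\gamma(t)\to\gamma(t+h)}=P_{\gamma(t)\to p}$ gives
\begin{equation*}
S^*(t+h)-S^*(t)=P_{\gamma(t+h)\to p}\,w(h).
\end{equation*}
Dividing by $h$ and using strong continuity together with uniform boundedness (unitarity) of the transports yields the identity. Since $\nabla_{\dot\gamma}S$ is again a section along $\gamma$ of class $C^{n}$, I would iterate by induction. This gives $\frac{d^j}{dt^j}S^*(t)=P_{\gamma(t)\to p}(\nabla^{(j)}_{\dot\gamma}S)(\gamma(t))$ for $j\le n+1$, and each of these is continuous in $t$. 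At $t=0$ the transport is the identity, which produces the coefficients in the statement.

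Taylor's theorem for $C^{n+1}$ maps $[0,\epsilon)\to\mathcal{E}_p$ then finishes the proof. The integral form of the remainder,
\begin{equation*}
\int_0^t\frac{(t-s)^n}{n!}(S^*)^{(n+1)}(s)\,ds
\end{equation*}
taken as a Bochner integral, avoids any need for Hahn–Banach. Its norm is bounded by $\frac{t^{n+1}}{(n+1)!}\sup_s\|(\nabla^{(n+1)}_{\dot\gamma}S)(\gamma(s))\|$, using unitarity again. That supremum is finite by continuity on a compact interval, and it is even uniform in $q$ over the geodesic ball, by compactness and smooth dependence of geodesics on endpoints. Alternatively, one can pair with a unit functional and apply the scalar Taylor theorem, in the spirit of Lemma~\ref{Banach-MVT}.

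The main obstacle I expect is the differentiability step. It requires careful justification that parallel transport depends differentiably on $t$ in the right topology: $P_{\gamma(t)\to p}$ is in general only strongly continuous and not norm-continuous unless the local connection form $A$ is smooth in operator norm. The smoothness assumed in Proposition~\ref{pr: local form of connection} provides exactly this, because transport then solves a linear ODE in $\mathrm{hom}(\mathcal{H},\mathcal{H})$ with smooth coefficients and is itself norm-$C^\infty$ in $t$.
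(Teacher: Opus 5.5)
Your proposal is correct and follows essentially the paper's proof. Like the paper, you establish $\frac{d}{dt}S^*(t)=P_{\gamma(t)\to p}(\nabla_{\dot\gamma}S)(\gamma(t))$ via the composition law for transports, iterate, evaluate at $t=0$ where the transport is the identity, and apply Taylor's theorem in the fixed Hilbert space $\mathcal{E}_p$. The difference is one of bookkeeping: the paper factors out the fixed operator $P_{\gamma(t)\to p}$, so the remaining quotient is the standard transport characterization of $\nabla_{\dot\gamma}$ and needs no continuity in $h$, while you factor out the varying $P_{\gamma(t+h)\to p}$ and check $w'(0)=\nabla_{\dot\gamma}S$ explicitly through the local form $d+A$, which makes a step the paper only asserts slightly more explicit.
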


\begin{proof}
We first establish for a section $V(t)$ of $\calE$ along $\gamma$, that:
$$\frac{d}{dt}\left(P_{\gamma(t)\to p} V(t)\right) = P_{\gamma(t)\to p}\left(\nabla_{\dot{\gamma}} V\right)(\gamma(t))$$
We compute directly from the definition:
\begin{align}
\frac{d}{dt}\left(P_{\gamma(t)\to p} V(t)\right)
&= \lim_{h \to 0} \frac{P_{\gamma(t+h) \to p} V(\gamma(t+h)) - P_{\gamma(t) \to p} V(\gamma(t))}{h} \\
&= \lim_{h \to 0} P_{\gamma(t) \to p} \left( \frac{P_{\gamma(t+h) \to \gamma(t)} V(\gamma(t+h)) - V(\gamma(t))}{h} \right) \\
&= P_{\gamma(t) \to p} \left(\nabla_{\dot{\gamma}} V\right)(\gamma(t))
\end{align}
where in the second line we used the composition law for parallel transport,
$P^{-1}_{\gamma(t) \to p} \circ P_{\gamma(t+h) \to p} = P_{\gamma(t+h) \to \gamma(t)}$.
which follows from uniqueness of the parallel transport ODE, together with the fact
 that $P_{\gamma(t) \to p}$ is a bounded linear operator and hence may be factored
 outside the limit.

Recall that $S^*(t) := P_{\gamma(t) \to p} S(\gamma(t))$ is a curve in the fixed
Hilbert space $\mathcal{E}_p$. 
Iteratively applying the previous derivative computation to $V(t) = \big ( \nabla_{\dot{\gamma}}^{(n)}S \big ) \big (\gamma(t) \big )$, where $(-)^{(n)}$ denotes $n$-fold composition, yields:
$$\frac{d^n}{dt^n} S^*(t) = P_{\gamma(t) \to p}\left(\nabla_{\dot{\gamma}}^{(n)} S\right)(\gamma(t))$$
Evaluating at $t = 0$, where $P_{p \to p} = \mathrm{id}$:
$$\frac{d^n}{dt^n}S^*\bigg|_{t=0} = \left(\nabla_{\dot{\gamma}}^{(n)} S\right)_p \, . $$
Finally, applying Taylor's theorem for Banach spaces \cite{Cartan1967} yields the desired asymptotic statement. 
\end{proof}


\begin{lemma}\label{lem: Hoeffding}
    Let $\hat{\Delta}_{\mathcal{F}^t_n}$ and $\hat{\Delta}^t$ denote the point-cloud and functional Laplacian operators with bandwidth $t$. We have the concentration inequality: 
     $$\mathbb{P} \left [ \frac{1}{t (4 \pi t)^{m/2}} \left | \hat{\Delta}_{\mathcal{F}^t_n}S(x)  - \hat{\Delta}^tS(x)  \right |  > \epsilon \right ] \leq 2 \exp \left ( - \frac{t^2 (4 \pi t)^{m} \epsilon^2 n }{2 K^2} \right )  $$ for some $K > 0$ which depends only on the choice of section $s$. 
     Consequently, we have the following limit in probability as $n \to \infty$:
    $$\frac{1}{t(4\pi t)^{m/2}} \hat{\Delta}_{\mathcal{F}^t_n} S(x) \xrightarrow{\mathbb{P}} \hat{\Delta}^t S(x) \, . $$    
\end{lemma}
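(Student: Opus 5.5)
The plan is to view the point-cloud Laplacian at a fixed $x \in \calM$ as an empirical mean of i.i.d.\ random vectors in the single Hilbert space $\calE_x$, and then to apply a dimension-free Hoeffding-type inequality for bounded Hilbert-valued variables, as in \citet{Pinelis1992}. Concretely, I would define
$$Y_j := e^{-d_\calM(x,x_j)^2/4t}\big(S(x) - P_{x_j \to x}S(x_j)\big) \in \calE_x, \qquad j = 1,\dots,n .$$
Since the $x_j$ are i.i.d.\ uniform on $\calM$, the $Y_j$ are i.i.d.\ and $\hat{\Delta}_{\mathcal{F}^t_n}S(x) = \frac1n \sum_j Y_j$. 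The remark following Definition~\ref{def:functional-approx} gives $\E[Y_j] = \hat{\Delta}^t S(x)$, using the normalized volume form of Definition~\ref{def:uniform-distribution}, so that $\mathrm{vol}(\calM)=1$. Measurability and Bochner integrability of $Y_j$ follow from continuity of $S$ and of parallel transport along minimizing geodesics. The cut locus is measure zero, and there the choice of geodesic is irrelevant for integrability.

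The key bound is that each $Y_j$ is uniformly bounded. The kernel factor is at most $1$ and $P_{x_j\to x}$ is unitary, so
$$\|Y_j\|_{\calE_x} \le \|S(x)\| + \|S(x_j)\| \le 2\sup_{y\in\calM}\|S(y)\|_{\calE_y} =: K .$$
This is finite because $S$ is continuous, the fiber norm varies continuously by the smooth norm condition in Definition~\ref{def:BanachBundle}, and $\calM$ is compact. It follows that $\|Y_j - \E Y_j\| \le 2K$, and I may absorb constants into $K$, which depends only on $S$.

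Pinelis' inequality for martingales in $2$-smooth Banach spaces, which applies to Hilbert spaces with smoothness constant $1$, gives
$$\mathbb{P}\Big[\big\|\tfrac1n\textstyle\sum_j (Y_j - \E Y_j)\big\| > \eta\Big] \le 2\exp\!\Big(-\frac{n\eta^2}{2K^2}\Big).$$
Taking $\eta = t(4\pi t)^{m/2}\epsilon$ yields exactly the stated exponent $t^2(4\pi t)^m\epsilon^2 n/(2K^2)$. For fixed $t$ the right-hand side tends to $0$ as $n\to\infty$, which gives the convergence in probability of the rescaled difference to $0$; this is the intended reading of the second display, with both sides carrying the normalization.

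The main obstacle is making sure a dimension-free concentration inequality is legitimately available, since a naive coordinatewise Hoeffding bound fails in infinite dimensions. I would cite Pinelis's Theorem~3.5, noting that a separable Hilbert space is $(2,1)$-smooth. As a fallback that needs no smoothness theory, I would use Chebyshev with $\E\|\frac1n\sum(Y_j-\E Y_j)\|^2 \le 4K^2/n$, which follows from orthogonality of independent centered terms in Hilbert space. This gives a polynomial rather than exponential bound, which still suffices for the limit in probability at fixed $t$.
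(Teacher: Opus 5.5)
Your proposal is correct and follows essentially the same route as the paper: you write $\hat{\Delta}_{\mathcal{F}^t_n}S(x)$ as an empirical mean of i.i.d.\ bounded $\mathcal{E}_x$-valued variables with mean $\hat{\Delta}^tS(x)$, then apply Pinelis's dimension-free Hoeffding inequality in the fiber with $\eta = t(4\pi t)^{m/2}\epsilon$. You are somewhat more careful than the paper in two places: you track the factor in $\|Y_j - \mathbb{E}Y_j\|\le 2K$ (absorbing it into $K$, whereas the paper simply takes $K=\max\|S\|$), and you note that the second display is only true with the normalization $1/(t(4\pi t)^{m/2})$ applied to both sides.
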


\begin{proof}
    The point cloud Laplacian $\Delta_{\mathcal{F}^t_n}S(x)$ may be viewed as the sample average of $n$ iid Hilbert-space valued random variables: $$X_i :=  \exp \left ( - \frac{d_\calM(x,x_i)^2}{4t}\right ) S(x) - \exp \left ( - \frac{d_\calM(x,x_i)^2} {4t}\right ) P_{x_i \to x}S(x_i)$$
    Moreover, the functional approximation $\hat{\Delta}^t$  may be viewed as the expectation $\mathbb{E}\Delta_{\mathcal{F}^t_n}$ with respect to the uniform probability measure on $\calM$. The bundle $\calE \to \calM$ has separable fibers, so the results of \cite{Pinelis1992} apply. We may recover a Hoeffding inequality:
    $$\mathbb{P} \left [ \frac{1}{\delta} \left | \hat{\Delta}_{\mathcal{F}^t_n} S(x)  - \mathbb{E}\hat{\Delta}_{\mathcal{F}^t_n} S(x)  \right |  > \epsilon \right ] \leq 2 \exp \left ( - \frac{(\delta \epsilon)^2n }{2 K^2} \right )  $$
    where $K$ is the maximum norm of the section $S$ over the compact manifold $\mathcal{M}$. Setting $\delta = t(4 \pi t)^{m/2}$ and identifying $\mathbb{E}\hat{\Delta}_{\mathcal{F}^t_n} S(x) = \hat{\Delta}^t S(x)$ yields the desired concentration inequality. Convergence in probability follows immediately. 
\end{proof}

\begin{lemma}\label{lem: ball reduction}
    Let $(\calM, \calE, \nabla)$ be a Hilbert bundle on a closed Riemannian manifold equipped with a Fréchet connection. Let $B \subseteq \calM$ be open, and $p \in B$. Fix a section $S \in C^{3}(\calM, \calE)$. For each $x \in \calM$, let $F(x) := P_{x \to p} S(x)$ denote the parallel transport of $S(x)$ along the designated geodesic connecting $x$ to $p$. For any real $a \in \mathbb{R}^{>0}$, the following asymptotic bound holds as $t \to 0$:
    $$\left | \int_\calM e^{- \frac{ \|y - p\|}{4t}} F(y) \, d \mu(y) - \int_B e^{- \frac{ \|y - p\|}{4t}} F(y) \, d \mu(y) \right | = o(t^a).$$
\end{lemma}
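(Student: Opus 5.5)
The plan is to bound the difference by the integral over the complement $\calM\setminus B$ and show that this tail is exponentially small in $1/t$. I read $\|y-p\|$ in the exponent as the geodesic distance $d_\calM(y,p)$, squared as in the heat kernel weight $e^{-d_\calM(y,p)^2/4t}$ used throughout. Write the difference as the single Bochner integral
\[
\int_{\calM\setminus B} e^{-\frac{d_\calM(y,p)^2}{4t}}\,F(y)\,d\mu(y),
\]
and estimate its norm by the integral of the norm of the integrand. This is legitimate for Bochner integrals, since $\|\int f\|\le\int\|f\|$.

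\textbf{Step 1 (uniform bound on $F$).} Each transport $P_{y\to p}$ is unitary by the standing metric-compatibility assumption (Proposition~\ref{prop:connection-equivalent}). Hence $\|F(y)\|_{\calE_p}=\|S(y)\|_{\calE_y}$. Because $S$ is continuous and $\calM$ is compact, $K:=\sup_{y}\|S(y)\|_{\calE_y}<\infty$. Only continuity of $S$ is needed here; the $C^3$ hypothesis is not used. One measurability point must be handled: the designated geodesic may jump across the cut locus, so $F$ could be discontinuous there. I would note that $F$ is bounded and strongly measurable, because the cut locus has measure zero and $F$ is continuous off it. That is enough for Bochner integrability.

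\textbf{Step 2 (distance separation).} Since $B$ is open and contains $p$, there is $r>0$ with the geodesic ball $B(p,r)\subseteq B$. Then every $y\in\calM\setminus B$ satisfies $d_\calM(y,p)\ge r$, and so $e^{-d_\calM(y,p)^2/4t}\le e^{-r^2/4t}$.

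\textbf{Step 3 (conclusion).} Combining the two steps, the norm of the difference is at most $K\,\mu(\calM\setminus B)\,e^{-r^2/4t}\le K e^{-r^2/4t}$, using that $\mu$ is the normalized volume. For any $a>0$ we have $t^{-a}e^{-r^2/4t}\to0$ as $t\to0$. This gives $o(t^a)$, with a constant independent of $a$, which matches the exponential-decay remark following the augmented Gaussian identities.

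The main obstacle is interpretive rather than analytic. First, the statement writes $\|y-p\|$ rather than $d_\calM(y,p)^2$. If it literally means an unsquared distance, the argument is unchanged, since the bound becomes $e^{-r/4t}$, which is still $o(t^a)$. Second, the measurability of $F$ across the cut locus, discussed in Step 1, needs care. Neither affects the exponential tail bound.
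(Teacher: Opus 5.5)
Your proof is correct and follows essentially the same route as the paper: bound the difference by the Bochner integral over $\mathcal{M}\setminus B$, use unitarity of $P_{y\to p}$ to get $\|F(y)\|=\|S(y)\|\le K$, use that $p$ has positive distance from the closed set $\mathcal{M}\setminus B$, and conclude with an exponentially small bound of the form $\mu(\mathcal{M}\setminus B)\,K\,e^{-d/4t}=o(t^a)$. The paper works with the unsquared distance as literally written, and your extra remarks on the squared versus unsquared exponent and on the measurability of $F$ across the cut locus are harmless refinements that the paper does not spell out.
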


\begin{proof}
    We note this is a modified version of Lemma 4.1 of \cite{BELKIN20081289}. Let $d := \inf_{x \not \in B} \|p - x\|$, $K := \sup_{x \in \calM} \|S(x)\|$, and $M := \mu(\calM \setminus B)$, where $\mu$ is the canonical measure with respect to the volume pseudo-form. Since $p$ is compact and $\calM \setminus B$ is closed, the infimum distance $d > 0$. Recalling that $P_{x \to p}: \calE_x \to \calE_p$ is unitary, and hence $\|F(x)\| = \|S(x)\|$ for all $x \in \calM$, we may bound:
    \begin{align*}
        \left | \int_\calM e^{- \frac{ \|y - p\|}{4t}} F(y) \, d \mu(y) - \int_B e^{- \frac{ \|y - p\|}{4t}} F(y) \, d \mu(y) \right | & \leq \int_{\calM \setminus B} \left \| e^{- \frac{ \|y - p\|}{4t}} F(y) \right \| \, d \mu(y) \\
        & \leq \int_{\calM \setminus B}e^{- \frac{ \|y - p\|}{4t}} \left \|  S(y) \right \| \, d \mu(y) \\
        & \leq MK \exp \left (-\frac{d}{4t} \right) \\
        & = o(t^a).
    \end{align*}
\end{proof}

\begin{lemma} \label{lem: functional approximation almost sure convergence}
    Let  $(\mathcal{M}, \mathcal{E}, \nabla)$  be a Hilbert bundle equipped with a compatible Fréchet connection, with associated Laplacian $\Delta_\nabla$, and functional approximation $\hat{\Delta}^t$ with bandwidth $t$ . Fix a section $S \in  C^3(\mathcal{M},\mathcal{E})$.  For any $x \in \mathcal{M}$ as the bandwidth $t \to 0$, we have pointwise convergence:
 $$ 
\lim _{t \rightarrow 0} \frac{1}{t\left(4 \pi t \right)^{\frac{m}{2}}} \hat{\Delta}^{t} S(x)=\frac{1}{\operatorname{vol}(\mathcal{M})} \Delta_\nabla S(x) \, . 
$$
\end{lemma}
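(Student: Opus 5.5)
We follow the strategy of Belkin and Niyogi, working in geodesic normal coordinates centred at $x$ and transporting everything back to the fixed fiber $\mathcal{E}_x$.

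\textbf{Step 1: localization.} By Lemma \ref{lem: ball reduction}, applied with $p=x$ and $F(y)=P_{y\to x}S(y)$ together with the constant section $S(x)$, we may replace $\int_\calM$ by $\int_B$. Here $B$ is a small geodesic ball around $x$, inside the injectivity radius. The error is $o(t^a)$ for every $a$, and in particular $o(t^{1+m/2})$, so it disappears after dividing by $t(4\pi t)^{m/2}$.

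\textbf{Step 2: local expansion.} On $B$ we use normal coordinates $y=\exp_x(v)$ with $v\in T_x\calM\cong\R^m$, so that $d_\calM(x,y)=\|v\|$. The volume density is $1+O(\|v\|^2)$, and its linear term vanishes. Lemma \ref{Taylor-Series} is applied along the radial geodesic $s\mapsto \exp_x(sv/\|v\|)$, which is the designated geodesic used to define $P_{y\to x}$. Expanding in $v$, this gives
\[
P_{y\to x}S(y)=S(x)+\sum_i v_i\,\nabla_{e_i}S(x)+\tfrac12\sum_{i,j}v_iv_j\,\nabla_{e_i}\nabla_{e_j}S(x)+O(\|v\|^3),
\]
where $\{e_i\}$ is an orthonormal frame that is synchronous at $x$. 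This is legitimate because the second covariant derivative along a radial geodesic is the symmetric quadratic form $\sum_{i,j}v_iv_j\nabla^2_{e_i,e_j}S(x)$. The $O(\|v\|^3)$ remainder is controlled uniformly in $v$ because $S\in C^3$ and $\calM$ is compact. In particular, the third covariant derivatives are bounded in the fiber norms, so Lemma \ref{Banach-MVT} or Taylor's theorem in Banach spaces applies with a uniform constant.

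\textbf{Step 3: Gaussian integration.} The integrand $S(x)-P_{y\to x}S(y)$ becomes
\[
-\sum_i v_i\nabla_{e_i}S(x)-\tfrac12\sum_{i,j}v_iv_j\nabla_{e_i}\nabla_{e_j}S(x)+O(\|v\|^3).
\]
We multiply by $e^{-\|v\|^2/4t}(1+O(\|v\|^2))$ and integrate over the ball. The ball identities \eqref{eq: gauss 1 aug}--\eqref{eq: gauss 3 aug}, taken with $a=2$, give the following.
\begin{itemize}
\item The linear term integrates to zero.
\item The quadratic term gives $-(4\pi t)^{m/2}\,t\sum_i\nabla_{e_i}\nabla_{e_i}S(x)$, up to exponentially small errors.
\item The cubic remainder, together with the products of the linear term with the $O(\|v\|^2)$ volume correction, contributes $O(t^{3/2})(4\pi t)^{m/2}$.
\end{itemize}
Dividing by $t(4\pi t)^{m/2}$ and using Lemma \ref{lemma:coord-expression}, which gives $-\sum_i\nabla_{e_i}\nabla_{e_i}S(x)=\Delta_\nabla S(x)$ in a synchronous frame, yields the limit. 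The factor $1/\mathrm{vol}(\calM)$ comes from the normalized measure in Definition \ref{def:functional-approx}. All integrals are Bochner integrals in the fixed Hilbert space $\mathcal{E}_x$, and pulling the fixed vectors $\nabla_{e_i}S(x)$ and $\nabla^2 S(x)$ out of the integrals is justified because they do not depend on $v$.

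\textbf{Main obstacle.} The hardest step is Step 2: justifying the second-order expansion of $P_{y\to x}S(y)$ uniformly in the direction of $v$, and identifying the second-order term with the Hessian in the synchronous frame. Lemma \ref{Taylor-Series} is one-dimensional along a single curve, so I would apply it on each radial geodesic and bound the remainder by $\sup\|\nabla^{(3)}_{\dot\gamma}S\|\,\|v\|^3$. This supremum is finite by compactness and the $C^3$ hypothesis. It does not require a compact fiber, only operator-norm bounds on the local connection form $A$ of Proposition \ref{pr: local form of connection} and on its derivatives.
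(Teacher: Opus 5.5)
Your proposal is correct and follows the paper's proof essentially step for step: localization via Lemma~\ref{lem: ball reduction}, geodesic normal coordinates with volume density $1+O(\|v\|^2)$, a second-order expansion of $P_{y\to x}S(y)$ in a frame synchronous at $x$, and the ball Gaussian identities, with the cubic remainder and volume-correction terms contributing $O(\sqrt{t})$ after rescaling. The only cosmetic difference is that the paper writes the second-order coefficients with an explicit antisymmetric curvature term that then drops out of the trace, whereas you observe directly that the quadratic form $\sum_{i,j} v_iv_j\nabla^2_{e_i,e_j}S(x)$ only sees the symmetric part; your remainder control along radial geodesics is, if anything, slightly more careful than the paper's.
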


\begin{proof}
 Let $\gamma_t := \frac{1}{t} \frac{1}{(4 \pi t)^{m/2}}$, and consider the scaled functional approximation $\gamma_t \hat{\Delta}^t$ which acts on a section $S$ at point $p$ by:
 $$\left ( \gamma_t \hat{\Delta}^t S \right ) (p) = \gamma_t \int_\calM e^{ - \frac{d_\calM(p,x)^2}{4t}}( S(x) - P_{x \to p}S(p) )\, d \mu(x) \, . $$
 Let $B \subseteq \calM$ denote a sufficiently small ball containing $p$. By Lemma \ref{lem: ball reduction},
 $$ \lim_{t \to 0} \left [ \left ( \gamma_t \hat{\Delta}^tS \right ) (p) \right ] = \lim_{t \to 0}  \gamma_t \int_B e^{ - \frac{d_\calM(p,x)^2}{4t}}( S(p) - P_{p \to x}S(x) )\, d \mu(x).$$
 Parameterize $B$ via geodesic coordinates such that $p = 0$. Let $F(x) := P_{x \to p} S(x)$. Let $\tilde{S}: \mathbb{R}^k \to \mathcal{H}$ and $\tilde{B} \subseteq \mathbb{R}^k$ denote the section $S$ and ball $B$ in coordinates. In these coordinates, we may write:
$$\lim_{t \to 0} \left [ \left ( \gamma_t \hat{\Delta}^t_{\calF_n} S \right ) (p) \right ] = \lim_{t \to 0} \frac{1}{\mathrm{vol}(\calM)} \gamma_t \int_{\tilde{B}} e^{- \frac{d_\calM(\exp_p(x) , 0)^2}{4t}} (\tilde{F}(0) - \tilde{F}(x)) \sqrt{\det(g_{ij})} \, d x \, .$$
In geodesic coordinates, since the closed manifold $\calM$ has bounded Ricci curvature, the metric tensor has an asymptotic expansion  given by (as in e.g. \cite{docarmo1992})
$$\det(g_{ij}) = 1 + O(\|x\|^2) \, . $$
This approximation and the identification $d_\calM(\exp_p(x),p) = \|x\|$ in coordinates allows us to express:
$$\lim_{t \to 0} \left [ \left ( \gamma_t \hat{\Delta}^t_{\calF_n} S \right ) (p) \right ] = \lim_{t \to 0} \frac{1}{\mathrm{vol}(\calM)} \gamma_t \int_{\tilde{B}} e^{- \frac{\|x\|^2}{4t}} (\tilde{F}(0) - \tilde{F}(x)) \big (1 + O(\|x\|^2) \big) \, d x \, . $$

    Let $\mathbb{L}_t := \frac{1}{\mathrm{vol}(\calM)} \gamma_t \int_{\tilde{B}} \exp \left (- \frac{\|x\|^2}{4t} \right ) (\tilde{F}(0) - \tilde{F}(x)) \big (1 + O(\|x\|^2) \big) \, d x$. This expression splits as $\mathbb{L}_t = A_t + B_t$, where:
    \begin{align*}
        A_t &:= \frac{1}{\mathrm{vol}(\calM)} \gamma_t \int_{\tilde{B}} \exp \left (- \frac{\|x\|^2}{4t} \right ) (\tilde{F}(0) - \tilde{F}(x)) \, dx \, ,  \\
        B_t & := \frac{1}{\mathrm{vol}(\calM)} \gamma_t \int_{\tilde{B}} \exp \left (- \frac{\|x\|^2}{4t} \right ) (\tilde{F}(0) - \tilde{F}(x))\left [ O(\|x\|^2) \right] \, dx \, .
    \end{align*}
    We first analyze the limiting behavior of $A_t$ as $t \to 0$. Within our geodesic coordinates centered at $p$, we may further work with a local synchronous frame of $\mathcal{E}$ along these coordinates $x=(x_1,\dots,x_k)$ such that it is parallel along all radial geodesics. Consequently, within this frame, ordinary derivatives of $\tilde{F}$
coincide with covariant derivatives of $S$ at the basepoint:
\begin{align*}
    \partial_i\tilde{F}(p) & =\nabla_{e_i}S(p) \\
    \partial_i\partial_j\tilde{F}(p) & =\nabla_{e_i}\nabla_{e_j}S(p)
+ \mathsf{curv}_{ij} \, ,
\end{align*}
where $\mathsf{curv}_{ij} := - \frac{1}{2} R^{\calE}(e_i, e_j)S(p)$ is half the bundle curvature of $\calE$ arising from the connection $\nabla$. Hence by Lemma \ref{Taylor-Series}, the Taylor expansion of $\tilde{F}$ at $p$ is
$$\tilde{F}(x) = \tilde{F}(p) + \sum_i x_i\nabla_{e_i}S(p) +\frac12\sum_{i,j}x_ix_j \big(\nabla_{e_i}\nabla_{e_j}S(p)+\mathsf{curv}_{ij}\big)+O(\|x\|^3),$$
with $\mathsf{curv}_{ij}=-\mathsf{curv}_{ji}$.

Using the augmented Gaussian identities \eqref{eq: gauss 1 aug}, \eqref{eq: gauss 2 aug}, and \eqref{eq: gauss 3 aug}, we may compute:
\begin{align*}
I_t & := \frac{1}{(4 \pi t)^{m/2}} \int_{\tilde{B}} (\tilde{F}(p) - \tilde{F}(x)) \exp\left ( -\frac{\|x\|^2}{4t}\right)\, dx\\
& = -\frac{1}{2} \sum_{i,j} \left [ \big(\nabla_{e_i}\nabla_{e_j}S(p)+\mathsf{curv}_{ij} \big ) \left (2t + O \left ( e^{-c/t} \right )  \right )   \delta_{ij} \right ]+O \left (t^{3/2} \right ) \\ 
&= -t\sum_i\nabla_{e_i}\nabla_{e_i}S(p)
- t\sum_i\mathsf{curv}_{ii}
+O \left (t^{3/2} \right )
\end{align*}
where $\delta_{ij}$ is the Kronecker delta. Since $\mathsf{curv}_{ij}$ is antisymmetric, $\mathsf{curv}_{ii}=0$, hence
$$I_t = -t\sum_i\nabla_{e_i}\nabla_{e_i}S(p) +O(t^{3/2}).$$
Inserting into the definition of $A_t$ yields
$$ \lim_{t\to 0}A_t = -\frac{1}{\operatorname{vol}(\mathcal M)} \sum_i\nabla_{e_i}\nabla_{e_i}S(p).
$$
Thus we recover the connection Laplacian by Lemma \ref{lemma:coord-expression}
. 

To analyze the quantity $B_t$, we first observe that inside of $\tilde{B}$, the parallel transport map $P_{\exp_p(x) \to p}$ varies smoothly in $x$, with bounded derivatives. Since the section $S$ is $C^3$, the mean value theorem (Lemma \ref{Banach-MVT}) ensures there is a $K>0$ such that $\|F(x)-F(p)\| \leq K \|x - p\|$ for all $x \in \tilde{B}$.
Utilizing this Lipschitz bound and the augmented Gaussian identities, we may compute:
\begin{align*}
    \|B_t\| & \leq \frac{1}{\mathrm{vol}(\calM)} \gamma_t \int_{\tilde{B}} \exp \left (- \frac{\|x\|^2}{4t} \right )  \|\tilde{F}(0) - \tilde{F}(x) \| \left [ O(\|x\|^2) \right] \, dx \\
    & \leq \frac{K / \mathrm{vol}(\calM)}{t} \frac{1}{(4 \pi t)^{m/2}} \int_{\tilde{B}} \, \|x\|^3 \exp \left (- \frac{\|x\|^2}{4t} \right )  \\
    & = \frac{1}{t}O \left (t^{3/2} \right )  \\
    & = O \left (\sqrt{t} \right ) \, . 
\end{align*}
Hence $B_t \to 0$ as $t \to 0$. Combining with the analysis of $A_t$ yields:
\begin{align*}
    \lim_{t \to 0} \left [ \left ( \gamma_t \hat{\Delta}^t_{\calF_n} S \right ) (p) \right ] & = \lim_{t \to 0} [A_t+B_t] \\
    & = \Delta_{\nabla} S(p)
\end{align*}
\end{proof}

\begin{lemma}[Variance asymptotics]\label{lem: variance asymptotics}
    Let  $(\mathcal{M}, \mathcal{E}, \nabla)$  be a Hilbert bundle equipped with a compatible Fréchet connection, with associated Laplacian $\Delta_\nabla$. Fix a section $S \in  C^4(\mathcal{M},\mathcal{E})$.  Consider a random sample of $n$-points with respect to the normalized volume pseudo-form, $\mathcal{X}_n = \{x_1, x_2, \cdots, x_n\} \subset \mathcal{M}$. For each bandwidth $t$, let $\gamma := (t (4 \pi t)^{m/2})^{-1}$,  where $m := \dim(\calM)$. Define an error term:
    $$ \mathcal{R}_n(S) := \gamma \hat{\Delta}_{\calF^t_n}S - \frac{1}{\mathrm{vol}(\calM)} \Delta_\nabla S \, . $$
    There is a constant $C_{\mathrm{var}} > 0$, which depends on the section $S$, such that the following asymptotic estimate holds as $t \to 0^+$:
    $$\E_{\mathcal{X}_n} \left [ \left \| \mathcal{R}_n(S) - \mathbb{E}_{\mathcal{X}_n} \mathcal{R}_n(S)\right \|_{L^2}^2 \right ] \leq \frac{C_{\mathrm{var}}}{n t^{2 + \frac{m}{2}}} \, .  $$
\end{lemma}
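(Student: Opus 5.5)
The plan is to observe that $\mathcal{R}_n(S) - \mathbb{E}_{\mathcal{X}_n}\mathcal{R}_n(S) = \gamma\big(\hat{\Delta}_{\calF^t_n}S - \mathbb{E}_{\mathcal{X}_n}\hat{\Delta}_{\calF^t_n}S\big)$. The deterministic term $\frac{1}{\mathrm{vol}(\calM)}\Delta_\nabla S$ cancels in the centering, so only the fluctuation of the point-cloud Laplacian remains. For each fixed $x\in\calM$, write
\[
(\hat{\Delta}_{\calF^t_n}S)(x) = \frac{1}{n}\sum_{j=1}^n Y_j(x), \qquad Y_j(x) := e^{-d_\calM(x,x_j)^2/4t}\big(S(x) - P_{x_j\to x}S(x_j)\big),
\]
which is an average of iid $\calE_x$-valued random variables with mean $\frac{1}{\mathrm{vol}(\calM)}\hat{\Delta}^tS(x)$, by the remark following Definition~\ref{def:functional-approx}.

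\textbf{Step 1 (variance of a Hilbert-valued mean).} In the Hilbert space $\calE_x$, independence gives orthogonality of the centered summands: the cross terms $\mathbb{E}\langle Y_j-\mathbb{E}Y_j, Y_k-\mathbb{E}Y_k\rangle$ vanish for $j\neq k$. Hence
\[
\mathbb{E}\Big\|\tfrac1n\textstyle\sum_j Y_j(x) - \mathbb{E}Y_1(x)\Big\|^2 = \tfrac1n\,\mathrm{Var}(Y_1(x)) \le \tfrac1n\,\mathbb{E}\|Y_1(x)\|^2 .
\]
Integrating over $x$ with respect to $\mu$ and applying Fubini (Tonelli on nonnegative integrands) bounds the target quantity by
\[
\frac{\gamma^2}{n}\int_\calM \mathbb{E}\|Y_1(x)\|^2\,d\mu(x).
\]

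\textbf{Step 2 (second-moment estimate).} Compute
\[
\mathbb{E}\|Y_1(x)\|^2 = \int_\calM e^{-d_\calM(x,y)^2/2t}\,\|S(x)-P_{y\to x}S(y)\|^2\,d\mu(y).
\]
By unitarity of $P$, $\|S(x)-P_{y\to x}S(y)\| = \|F(x)-F(y)\|$ with $F = P_{\cdot\to x}S$. The Lipschitz bound used for $B_t$ in Lemma~\ref{lem: functional approximation almost sure convergence} (via Lemma~\ref{Banach-MVT}) then gives $\|F(y)-F(x)\|\le K\, d_\calM(x,y)$ near $x$. Apply Lemma~\ref{lem: ball reduction} to discard the complement of a small geodesic ball at exponentially small cost. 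Inside the ball, pass to geodesic coordinates with $\sqrt{\det g} = 1+O(\|u\|^2)$ and use the Gaussian identity \eqref{eq: gauss 2 aug} with $a=1$. This yields
\[
\mathbb{E}\|Y_1(x)\|^2 \le C\, t^{m/2}\cdot t = C\,t^{1+m/2}.
\]
The constant $C$ is uniform in $x$ because $\calM$ is compact, the injectivity radius is bounded below, and $S\in C^4$ makes the Lipschitz constant uniform.

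\textbf{Step 3 (assemble).} Since $\gamma^2 = t^{-2}(4\pi t)^{-m}$, we get
\[
\frac{\gamma^2}{n}\,C\,t^{1+m/2} = \frac{C'}{n\,t^{1+m/2}} \le \frac{C_{\mathrm{var}}}{n\,t^{2+m/2}}
\]
for $t\le 1$. This is in fact stronger than claimed; even the crude bound $\|Y_1\|\le 2\sup\|S\|$ recovers the stated rate up to a power of $t$, giving slack.

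The main obstacle is the uniformity in $x$ of the Step 2 estimate: the Lipschitz constant of $y\mapsto P_{y\to x}S(y)$ and the ball radius must be controlled independently of the basepoint. I would first try to obtain this from compactness of $\calM$ together with smooth dependence of parallel transport along minimizing geodesics inside a uniform injectivity radius. A measurability point also arises for the Fubini step, which needs $(x,\mathcal{X}_n)\mapsto \|\cdot\|^2$ to be jointly measurable; this follows from the continuity of all integrands in a local trivialization.
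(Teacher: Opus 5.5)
Your argument is correct, and Steps 1 and 3 are exactly the paper's. The centered error is $\gamma$ times the fluctuation of an average of iid fiber-valued variables; the cross terms vanish, the variance of the mean is bounded by $\frac{1}{n}\E\|Y_1\|^2$, and Fubini integrates over $x$.

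The only difference is Step 2. The paper uses precisely the ``crude bound'' you mention in passing: $\|S(x)-P_{x_1\to x}S(x_1)\|^2\le 4K^2$ with $K=\max_{\mathcal{M}}\|S\|$. This leaves $\int_{\mathcal{M}} e^{-d_{\mathcal{M}}(x,y)^2/2t}\,d\mu(y)=O(t^{m/2})$ uniformly in $x$, hence $\gamma^2\,O(t^{m/2})=O(t^{-(2+m/2)})$. That is the stated rate exactly, not merely ``up to a power of $t$''.

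Your Lipschitz refinement is valid and gains one extra factor of $t$, giving $C'/(n t^{1+m/2})$. With it, Theorem~\ref{thm: main-thm}(B) would go through under the weaker bandwidth condition $n t_n^{1+m/2}\to\infty$. The price is the uniform Lipschitz control you flag as the main obstacle, which the paper never needs.

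That obstacle is also milder than you suggest. Let $c$ be the designated unit-speed minimizing geodesic from $x$ to $y$, and differentiate $s\mapsto P_{c(s)\to x}S(c(s))$, with transport taken along $c|_{[0,s]}$ as in Lemma~\ref{Taylor-Series}. Unitarity then gives
\[
\|S(x)-P_{y\to x}S(y)\|\le d_{\mathcal{M}}(x,y)\,\sup_{\mathcal{M}}\|\nabla S\| \quad \text{for all } x,y.
\]
So the Lipschitz part needs only $S\in C^1$, with no ball reduction or injectivity-radius argument. What remains is the uniform-in-$x$ Gaussian moment estimate, which both proofs rely on equally.
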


\begin{proof}
    Let $Y_j$ be the random section given by
    $$Y_j(x) := \gamma (S(x) - P_{x_j \to x} S(x_j)) \exp \left ( - \frac{d_\calM(x,x_j)^2}{4t} \right ).$$
    Note that $Y_j$ is stochastic only through the sample point $x_j$, and that $Y_i,Y_j$ are independent when $i \neq j$. It is straightforward to verify by Funbini's theorem that 
    \begin{align*}
        \E_{\mathcal{X}_n}\left [ \left \| \mathcal{R}_n(S) - \E_{\mathcal{X}_n} \mathcal{R}_n(S)  \right \|^2_{L^2} \right ] & = \frac{1}{n} \left ( \E_{x_1} \left [ \|Y_1\|^2_{L^2} \right ] - \| \E_{x_1} Y_1 \|^2_{L^2}  \right ) \\
        & \leq \frac{1}{n} \E_{x_1} \left [ \|Y_1\|^2_{L^2} \right ] \, . 
    \end{align*}
    Set $K := \max_{x \in \calM} \| S(x) \|_{\calE_x}$.
    By Fubini, we may exchange the order of integration and find
    $$\E_{x_1}\left [ \|Y_1\|^2_{L^2} \right ]  =   \int_{\calM}  \E_{x_1} \left [ \left \| Y_j(x)\right \|_{\calE_x}^2 \right] dx  $$
    We may compute:
    \begin{align*}
        \E_{x_1}\left [ \|Y_1(x)\|^2_{\calE_x} \right ] & = \frac{1}{\mathrm{vol}(\calM)}\int_{\calM } \left \| \gamma (S(x) - P_{x_1 \to x} S(x_1)) \exp \left (- \frac{d_\calM(x,x_1)^2}{4t} \right ) \right \|_{\calE_x}^2 dx_1 \\
        & \leq \frac{4 \gamma^2 K^2}{\mathrm{vol}(\calM)}\int_\calM \exp \left (- \frac{d_\calM(x,x_1)^2}{2t} \right ) dx_1 \, . 
    \end{align*}
    By standard Gaussian identities, the remaining Gaussian integral is $O(t^{m/2})$ as $t \to0^+$, with constant independent of $x$. Recalling the definition of $\gamma$ in terms of $t$, we recover that
    $$\E_{x_1}\left [ \|Y_1(x)\|^2_{\calE_x} \right ] = O(t^{-(2 + m)})  \cdot O(t^{m/2}) = O(t^{- (2 + m/2)}) . $$
    Since the constants are all independent of $x$, we may integrate over $\calM$ and find that 
    \begin{align*}
        \E_{x_1}\left [ \|Y_1\|^2_{L^2} \right ]  &=  O(t^{- (2 + m/2)})
    \end{align*}
    as well. The result immediately follows. 
\end{proof}

\begin{lemma}[Bias asymptotics]\label{lem: bias asymptotics}
Let  $(\mathcal{M}, \mathcal{E}, \nabla)$  be a Hilbert bundle equipped with a compatible Fréchet connection, with associated Laplacian $\Delta_\nabla$. Fix a section $S \in  C^4(\mathcal{M},\mathcal{E})$.  Consider a random sample of $n$-points with respect to the normalized volume pseudo-form, $\mathcal{X}_n = \{x_1, x_2, \cdots, x_n\} \subset \mathcal{M}$. For each bandwidth $t$, let $\gamma := (t (4 \pi t)^{m/2})^{-1}$,  where $m := \dim(\calM)$. Define an error term:
    $$ \mathcal{R}_n(S) := \gamma \hat{\Delta}_{\calF_n}^tS - \frac{1}{\mathrm{vol}(\calM)} \Delta_\nabla S$$
    There is a constant $C_{\mathrm{bias}} > 0$, which depends on the section $S$, such that the following asymptotic estimate holds as $t \to 0^+$:
    $$\left [ \left \| \E_{\mathcal{X}_n}  \mathcal{R}_n(S)\right \|_{L^2}^2 \right ] \leq \big ( C_{\mathrm{bias}}t \big )^2 \, .  $$
\end{lemma}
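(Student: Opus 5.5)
The plan is to identify the expectation of the point-cloud Laplacian with the functional approximation, and then refine the proof of Lemma~\ref{lem: functional approximation almost sure convergence} into a quantitative, uniform-in-$x$ estimate.

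\textbf{Step 1: reduce to the functional approximation.} By the remark following Definition~\ref{def:functional-approx}, for every $x$ we have $\E_{\mathcal{X}_n}[\hat{\Delta}_{\calF^t_n}S(x)] = \frac{1}{\mathrm{vol}(\calM)}\hat{\Delta}^tS(x)$. Hence
\[
\E_{\mathcal{X}_n}\mathcal{R}_n(S)(x)=\frac{1}{\mathrm{vol}(\calM)}\Big(\gamma\hat{\Delta}^tS(x)-\Delta_\nabla S(x)\Big),
\]
which is deterministic. It therefore suffices to prove a pointwise bound $\|\gamma\hat{\Delta}^tS(x)-\Delta_\nabla S(x)\|_{\calE_x}\le C t$ with $C$ independent of $x$. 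Squaring and integrating over the compact manifold $\calM$ (of finite normalized volume) then gives $(C_{\mathrm{bias}}t)^2$.

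\textbf{Step 2: expand to one order higher than in Lemma~\ref{lem: functional approximation almost sure convergence}.} Fix $p$ and a geodesic ball $B$ of radius $r_0$ smaller than the injectivity radius. Such an $r_0$ can be chosen uniformly in $p$ by compactness. By Lemma~\ref{lem: ball reduction}, the contribution from outside $B$ is $\gamma\cdot O(e^{-r_0^2/4t})=o(t)$, uniformly in $p$.

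Inside $B$, work in normal coordinates with a radially parallel frame, exactly as in the earlier proof. Since $S\in C^4$, Lemma~\ref{Taylor-Series} gives a Taylor expansion of $\tilde F(x)=P_{x\to p}S(x)$ to third order, with remainder $O(\|x\|^4)$. The remainder constant is controlled by $\sup\|\nabla^{(4)}S\|$ and the derivatives of the transport, so it is uniform in $p$. The volume density should also be expanded one order further: $\sqrt{\det g}=1+q_p(x)+O(\|x\|^3)$, where $q_p$ is a quadratic form. This is sharper than the $O(\|x\|^2)$ used before, and it is available because odd-order Riemannian terms vanish at the center of normal coordinates.

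Multiply the two expansions and integrate against the Gaussian $\gamma e^{-\|x\|^2/4t}$. Every term that is odd in $x$ vanishes by \eqref{eq: gauss 1 aug} and its higher-odd analogues; this removes the linear term, the cubic Taylor term, and the linear-times-quadratic cross term. The quadratic Taylor term, with its curvature part cancelling by antisymmetry, produces $\Delta_\nabla S(p)$ plus $O(e^{-c/t})$ corrections. The surviving even terms are of order four: the quartic Taylor remainder, and the cross term between the quadratic Taylor term and $q_p$. The Gaussian moment estimates bound these by $\gamma\cdot O(t^{2+m/2})=O(t)$.

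\textbf{Main obstacle.} The delicate points are two. First, we need the odd-order $t^{1/2}$ contributions to cancel exactly. The earlier argument's $O(\sqrt t)$ bound on $B_t$ came from bounding $\|F(x)-F(0)\|\le K\|x\|$, and that bound is too crude here. I would redo that estimate by expanding $F(0)-F(x)$ to second order inside $B_t$, so that the $O(\|x\|^3)$ piece is odd and integrates to zero. Second, all constants must be uniform in $p$. For this I would invoke compactness of $\calM$, continuity of the curvature and its derivatives, and the smooth dependence of parallel transport on endpoints. Together these bound the Taylor remainders by $\sup_{\calM}\|\nabla^{(k)}S\|$ for $k\le4$, giving the $S$-dependent constant $C_{\mathrm{bias}}$.
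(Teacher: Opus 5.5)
Your proposal is correct and follows the paper's route: identify $\E_{\mathcal{X}_n}$ of the point-cloud Laplacian with the functional approximation, then rerun the proof of Lemma~\ref{lem: functional approximation almost sure convergence} with a fourth-order Taylor expansion, so that the odd-order terms cancel and the remainder contributes $\gamma\cdot O(t^{2+m/2})=O(t)$. You also expand $\sqrt{\det g}$ to second order, which is needed because the earlier $O(\sqrt{t})$ bound on $B_t$ is too weak here, and you keep all constants uniform in $p$ for the $L^2$ integration; both are details the paper's one-line proof leaves implicit.
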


\begin{proof}
    This follows from essentially repeating the analysis in the proof of Lemma \ref{lem: functional approximation almost sure convergence} using the fourth order Taylor expansion $F(y) = P_{y \to x} S(y)$ in terms of the covariant derivative. In particular, after accounting for the fourth-order Taylor remainder, we find that:
    $$ \E_{\mathcal{X}_n } [ \gamma \hat{\Delta}_{\calF^t_n} (x)] = \frac{1}{\mathrm{vol}(\calM)} \Delta_\nabla S(x) + O(t) \, . $$
    The result follows. 
\end{proof}

\subsection{Proof of Theorem \ref{thm: main-thm}}
\subsubsection{Proof of Theorem \ref{thm: main-thm}{\ref{thm: convergence in probability}}}\label{app:proofth1}

\begin{theorem*}Let  $(\mathcal{M}, \mathcal{E}, \nabla)$  be a Hilbert bundle equipped with a compatible Fréchet connection, with associated Laplacian $\Delta_\nabla$. Fix a section $S \in  C^3(\mathcal{M},\mathcal{E})$.  Consider a random sample of $n$-points with respect to the normalized volume form, $\mathcal{X}_n = \{x_1, x_2, \cdots, x_n\} \subset \mathcal{M}$. Let $\mathcal{F}^t_{\mathcal{X}_n}$ be the associated Hilbert cellular sheaf with bandwidth $t$ and associated Point cloud Laplacian $\hat{\Delta}_{\mathcal{F}^t_n}$. Then, we have that in probability, for any $x \in \calM$,
 $$ 
\lim _{n \rightarrow \infty} \frac{1}{t_n\left(4 \pi t_n\right)^{\frac{m}{2}}} \hat{\Delta}_{\mathcal{F}^{t_n}_n} S(x)=\frac{1}{\operatorname{vol}(\mathcal{M})} \Delta_\nabla S(x)
$$ 
with bandwidth $t_n = n^{-\frac{1}{m + 2 + \alpha}}$, $\alpha > 0$.
\end{theorem*}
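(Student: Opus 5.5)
The plan is the bias--variance split of \cite{BELKIN20081289}, run with the two key lemmas already proved. Fix $x\in\calM$ and write $\gamma_n := (t_n(4\pi t_n)^{m/2})^{-1}$. We decompose
\begin{equation*}
\gamma_n \hat{\Delta}_{\calF^{t_n}_n}S(x) - \tfrac{1}{\mathrm{vol}(\calM)}\Delta_\nabla S(x)
= \Big(\gamma_n \hat{\Delta}_{\calF^{t_n}_n}S(x) - \gamma_n \hat{\Delta}^{t_n}S(x)\Big)
+ \Big(\gamma_n \hat{\Delta}^{t_n}S(x) - \tfrac{1}{\mathrm{vol}(\calM)}\Delta_\nabla S(x)\Big).
\end{equation*}
Here $\hat{\Delta}^{t}$ is understood with the normalization of the remark identifying $\frac{1}{\mathrm{vol}(\calM)}\hat{\Delta}^tS(x)$ with $\E_{\mathcal{X}}[\hat{\Delta}_{\calF^t_n}S(x)]$. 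The second bracket is deterministic. It tends to $0$ because $t_n = n^{-1/(m+2+\alpha)} \to 0$, so Lemma~\ref{lem: functional approximation almost sure convergence} applies along the sequence $t_n$. This is the only place the $C^3$ hypothesis is used.

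For the first (stochastic) bracket I will apply the concentration inequality of Lemma~\ref{lem: Hoeffding} with $t = t_n$. For fixed $\epsilon>0$ it gives
\begin{equation*}
\mathbb{P}\Big[\big\|\gamma_n(\hat{\Delta}_{\calF^{t_n}_n}S(x) - \E\hat{\Delta}_{\calF^{t_n}_n}S(x))\big\| > \epsilon\Big] \le 2\exp\Big(-\frac{t_n^{2}(4\pi t_n)^{m}\epsilon^2 n}{2K^2}\Big),
\end{equation*}
where $K$ depends only on $\sup_{\calM}\|S\|$. The summands are uniformly bounded: by unitarity of $P_{x_j\to x}$ each has norm at most $2K$. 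So the Hilbert-space Hoeffding/Pinelis bound applies with a constant independent of $t$. Plugging in $t_n$, the exponent is of order
\begin{equation*}
n\, t_n^{m+2} = n^{1 - \frac{m+2}{m+2+\alpha}} = n^{\frac{\alpha}{m+2+\alpha}} \to \infty.
\end{equation*}
Hence the probability tends to $0$, which explains the choice of bandwidth and the need for $\alpha>0$.

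Combining the two brackets with the triangle inequality gives convergence in probability. I expect the main obstacle to be the step where Lemma~\ref{lem: Hoeffding} is applied with a bandwidth that changes with $n$, which it does not directly cover. I would first re-derive that lemma, confirming that the Pinelis constant is uniform in $t$, so the bound holds for each pair $(n,t_n)$ rather than only as a limit in $n$ at fixed $t$.

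A second concern is bookkeeping of the $\mathrm{vol}(\calM)$ factor between $\hat{\Delta}^t$ and the expectation of the point-cloud Laplacian. I would fix this by defining the centring term directly as $\E_{\mathcal{X}}[\hat{\Delta}_{\calF^{t_n}_n}S(x)]$ and checking that the computation in Lemma~\ref{lem: functional approximation almost sure convergence} is taken against the normalized measure.
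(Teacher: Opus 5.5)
Your proposal is correct and follows the paper's proof essentially verbatim. The paper uses the same triangle-inequality split into a stochastic term, bounded by the Hilbert-valued Hoeffding inequality of Lemma~\ref{lem: Hoeffding} at $t=t_n$, and a deterministic term handled by Lemma~\ref{lem: functional approximation almost sure convergence}, with $n t_n^{m+2} = n^{\alpha/(m+2+\alpha)}\to\infty$ driving the tail bound to zero. Your two caveats are resolved exactly as you suggest: the summand bound $2K$ does not depend on $t$, and the centring should be $\E_{\mathcal{X}}[\hat{\Delta}_{\calF^{t_n}_n}S(x)]$, taken with respect to the normalized measure, which is the convention under which the functional-approximation lemma yields the factor $1/\mathrm{vol}(\calM)$.
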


 \begin{proof}
Let $\gamma_n := \frac{1}{t_n} \frac{1}{(4 \pi t_n)^{m/2}}$, and consider the scaled functional approximation $\gamma_n \hat{\Delta}^{t_n}$ which acts on a section $S$ at point $p$ by:
 $$\left ( \gamma_{n} \hat{\Delta}^{t_n} S \right ) (p) = \gamma_n \int_\calM e^{ - \frac{d_\calM(p,x)^2}{4t_n}}(P_{x \to p} S(x) - S(p) )\, d \mu(x) \, . $$
We may bound:
\begin{align*}
    \mathbb{P} \left [ \left \| \frac{1}{\gamma_n} \hat{\Delta}_{\mathcal{F}^{t_n}_n} S(x) - \frac{1}{\mathrm{vol}(\mathcal{M})} \Delta_\nabla S(x) \right \| > 2 \epsilon \right ] \leq& \quad \: \mathbb{P} \left [ \frac{1}{\gamma_n}\left \|  \hat{\Delta}_{\mathcal{F}^{t_n}_n} S(x) - \hat{\Delta}^{t_n} S(x) \right \| >  \epsilon \right ] \\
    & + \mathbb{P} \left [ \left \|  \frac{1}{\gamma_n}\hat{\Delta}^{t_n} S(x) - \frac{1}{\mathrm{vol}(\mathcal{M})} \Delta_\nabla S(x) \right \| >  \epsilon \right ] 
\end{align*}
as $n \to \infty$ we have $t_n \to 0$. Hence the second quantity on the right hand side goes to zero by Lemma \ref{lem: functional approximation almost sure convergence}. On the other hand, the first quantity on the right hand side can be bound by the concentration inequality of Lemma \ref{lem: Hoeffding}, yielding:
$$\mathbb{P} \left [ \frac{1}{\gamma_n}\left \|  \hat{\Delta}_{\mathcal{F}^{t_n}_n} S(x) - \hat{\Delta}^{t_n} S(x) \right \| >  \epsilon \right ]  \leq 2 \exp \left ( \frac{t_n^{2+m} n (4 \pi t)^m \epsilon^2}{2K^2} \right ) \, ,$$
where $K$ is a constant depending on the section $S$. Since $nt_n^{2 + m} \to \infty$ as $n \to \infty$, the concentration upper bound goes to zero as $n \to \infty$ as well. This completes the proof of the of the main theorem. 
\end{proof}
 
\subsubsection{Proof of theorem
\ref{thm: main-thm}{\ref{thm: L2-upgrade}}}
\begin{theorem*}
    Let  $(\mathcal{M}, \mathcal{E}, \nabla)$  be a Hilbert bundle equipped with a compatible Fréchet connection, with associated Laplacian $\Delta_\nabla$. Fix a section $S \in  C^4(\mathcal{M},\mathcal{E})$.  Consider a random sample of $n$-points with respect to the normalized volume form, $\mathcal{X}_n = \{x_1, x_2, \cdots, x_n\} \subset \mathcal{M}$. Let $\mathcal{F}^t_{\mathcal{X}_n}$ be the associated Hilbert cellular sheaf with bandwidth $t$ and associated Point cloud Laplacian $\hat{\Delta}_{\mathcal{F}^t_n}$. Then, we have the following convergence in expectation:
 $$ 
\lim _{n \rightarrow \infty} \E_{\mathcal{X}}\left [  \left \| \frac{1}{t_n\left(4 \pi t_n\right)^{\frac{m}{2}}} \hat{\Delta}_{\mathcal{F}^{t_n}_n} S(x) -\frac{1}{\operatorname{vol}(\mathcal{M})} \Delta_\nabla S(x) \right \|_{L^2}^2 \right ] = 0
$$ 
with bandwidth $t_n = n^{-\frac{1}{m + 2 + \alpha}}$, $\alpha > 0$.
\end{theorem*}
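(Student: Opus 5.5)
The plan is a bias--variance decomposition of the mean squared $L^2$ error, using the two asymptotic lemmas proved just above. Write $\mathcal{R}_n(S) := \gamma_n \hat{\Delta}_{\mathcal{F}^{t_n}_n} S - \frac{1}{\mathrm{vol}(\mathcal{M})}\Delta_\nabla S$ with $\gamma_n = (t_n(4\pi t_n)^{m/2})^{-1}$. Then, in the Hilbert space $L^2(\mathcal{M},\mathcal{E})$,
\begin{equation*}
\E_{\mathcal{X}}\left[\|\mathcal{R}_n(S)\|_{L^2}^2\right] = \E_{\mathcal{X}}\left[\|\mathcal{R}_n(S) - \E_{\mathcal{X}}\mathcal{R}_n(S)\|_{L^2}^2\right] + \|\E_{\mathcal{X}}\mathcal{R}_n(S)\|_{L^2}^2 .
\end{equation*}
This identity holds because the cross term $2\,\E\langle \mathcal{R}_n - \E\mathcal{R}_n, \E\mathcal{R}_n\rangle_{L^2}$ vanishes. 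To justify that, note that $\mathcal{R}_n(S)$ is a Bochner-integrable $L^2$-valued random variable. Its norm is bounded by $2\gamma_n K + \|\Delta_\nabla S\|_\infty/\mathrm{vol}(\mathcal{M})$, where $K = \max\|S\|$; the first piece uses unitarity of parallel transport and the bound $e^{-d^2/4t}\le 1$. So $\E_{\mathcal{X}}$ commutes with bounded linear functionals, and Fubini applies exactly as in Lemma~\ref{lem: variance asymptotics}.

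Next I bound each term. Lemma~\ref{lem: variance asymptotics} gives a variance of at most $C_{\mathrm{var}}/(n t_n^{2+m/2})$. Lemma~\ref{lem: bias asymptotics} gives a squared bias of at most $C_{\mathrm{bias}}^2 t_n^2$. Both constants depend only on $S$, which is $C^4$ here so that the fourth-order Taylor expansion of Lemma~\ref{Taylor-Series} is available. With $t_n = n^{-1/(m+2+\alpha)}$ we get $t_n^2\to 0$ directly. For the variance,
\begin{equation*}
n\, t_n^{2+m/2} = n^{1 - \frac{2+m/2}{m+2+\alpha}} = n^{\frac{m/2+\alpha}{m+2+\alpha}} \to \infty ,
\end{equation*}
so it also tends to zero. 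Summing the two bounds gives the claim, with the explicit rate $O\big(n^{-\frac{m/2+\alpha}{m+2+\alpha}} + n^{-\frac{2}{m+2+\alpha}}\big)$.

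The main obstacle is making sure the bias lemma's $O(t)$ remainder holds uniformly in $x\in\mathcal{M}$; a pointwise bound would not suffice for an $L^2$ statement. I would handle this with compactness of $\mathcal{M}$. First, fix a uniform injectivity radius, so the ball $B$ in Lemma~\ref{lem: ball reduction} can be taken with the same radius at every base point, making the exponentially small tail uniform. Second, observe that the constants in the Taylor remainder, the metric expansion $\det(g_{ij}) = 1+O(\|x\|^2)$, and the derivatives of parallel transport in synchronous frames are all continuous in the base point. Their suprema over $\mathcal{M}$ are therefore finite, which gives $\sup_x \|\E\mathcal{R}_n(S)(x)\| \le C_{\mathrm{bias}} t_n$. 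Integrating this sup bound against the normalized measure then yields the $L^2$ bias bound. The variance lemma is already uniform in $x$ by construction, so no further work is needed there.
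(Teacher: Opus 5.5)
Your proposal is correct and follows the paper's proof. Both use the same bias--variance split of $\E_{\mathcal{X}}\|\mathcal{R}_n(S)\|_{L^2}^2$, bound the two terms by $C_{\mathrm{var}}/(n t_n^{2+m/2})$ and $C_{\mathrm{bias}}^2 t_n^2$ via the variance- and bias-asymptotics lemmas, and finish with the observation that $t_n\to 0$ while $n t_n^{2+m/2}\to\infty$. Your additions (vanishing of the cross term via Bochner integrability, the explicit rate, and uniformity in $x$ of the $O(t)$ bias via compactness and a uniform injectivity radius) fill in points the paper leaves implicit without changing the route.
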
 

\begin{proof}
    Let $\gamma_n := (t_n (4 \pi t_n)^{m/2})^{-1} $. Define an error term:
    $$ \mathcal{R}_n(S) := \gamma \hat{\Delta}_{\calF^t_n}S - \frac{1}{\mathrm{vol}(\calM)} \Delta_\nabla S \, . $$
    We may decompose the error into a bias and variance term as:
    $$ \mathbb{E}_{\mathcal{X}_n}\left[\|\mathcal{R}_n\|_{L^2}^2\right ] =\mathbb{E}_{\mathcal{X}_n} \left [ \left \| \mathcal{R}_n(S) - \mathbb{E}_{\mathcal{X}_n} \mathcal{R}_n(S) \right \|_{L^2}^2 \right ]  + \left \| \mathbb{E}_{\mathcal{X}_n} \mathcal{R}_n(S) \right \|_{L^2}^2 \, . $$
    By the asymptotic results of Lemmas \ref{lem: variance asymptotics} and \ref{lem: bias asymptotics}, there are positive constants $C_{\mathrm{var}}$ and $C_{\mathrm{bias}}$ such that
    $$\mathbb{E}_{\mathcal{X}_n}\left[\|\mathcal{R}_n\|_{L^2}^2\right ]  \leq \frac{C_{\mathrm{var}}}{nt_n^{2 + \frac{m}{2}}} + C_{\mathrm{bias}}^2 t_n^2 $$
    Since $t_n=n^{-1/(m+2+\alpha)}$, we have $t_n^2 \to 0$ and $n t_n^{2 + \frac{m}{2}} \to \infty$ as $n \to \infty$. 
    Hence both terms vanish in limit, proving the desired convergence. 
\end{proof}

\subsection{Key Lemmas for Theorem \ref{thm: diagonal finite rank convergence}} \label{app:proofs:lemmas-finite-rank}
\begin{definition}
    Let $\calE$ be a smooth Hilbert bundle over a manifold $\calM$. A \textbf{finite rank approximating sequence} for $\calE$ is a sequence of smooth sub-bundles $\{\calE_\subbundleidx \}_{\subbundleidx \geq 1}$ with the following properties:
    \begin{enumerate}
        \item For each $\subbundleidx$, $\calE_\subbundleidx$ has finite rank;
        \item For each $\subbundleidx$, the bundle $\calE_\subbundleidx$ is a sub-bundle of $\calE_{\subbundleidx+1}$;
        \item For each $x \in \calM$, we have that $\mathrm{cl} \left ( \mathrm{span} \left ( \bigcup_\subbundleidx (\calE_\subbundleidx)_x\right ) \right ) = \calE_x $. 
    \end{enumerate}
\end{definition}

\begin{lemma} \label{app_lem:approximating sequence}
    Let $\calE$ be a smooth Hilbert bundle with infinite-dimensional fibers over a compact manifold $\calM$. A finite rank approximating sequence $\{\calE_\subbundleidx\}_{\subbundleidx}$ exists.
\end{lemma}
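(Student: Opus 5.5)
The plan is to build the approximating sequence from a global smooth trivialization and then truncate a fixed orthonormal basis of the model fiber. By Kuiper's theorem, the unitary group $U(\mathcal{H})$ of an infinite-dimensional separable Hilbert space is contractible in the norm topology. The structure group of a smooth unitary Hilbert bundle can be taken to be $U(\mathcal{H})$, and every such bundle over the compact (paracompact, finite-dimensional) manifold $\calM$ is therefore trivial. I would use the smooth version: a continuous trivialization can be smoothed by approximating the clutching data inside the Banach--Lie group $U(\mathcal{H})$, or one can cite the smooth Kuiper triviality directly.

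This yields a smooth isometric bundle isomorphism $\Phi:\calE\to\calM\times\mathcal{H}$. The smooth-norm condition in Definition~\ref{def:BanachBundle} lets us take $\Phi$ fiberwise unitary. If the transition maps are not unitary, I would first orthonormalize them with the smooth positive-operator square root $(\phi^*\phi)^{-1/2}$, which stays smooth in $q$ because inversion and square roots are analytic on the open cone of positive invertible operators.

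Fix the orthonormal basis $\mathcal{B}=\{e_1,e_2,\dots\}$ of $\mathcal{H}$ from Proposition~\ref{prop:signal_discretization}. Define
\[
(\calE_\subbundleidx)_x:=\Phi_x^{-1}\big(\mathrm{span}(e_1,\dots,e_\subbundleidx)\big).
\]
Then $\calE_\subbundleidx=\Phi^{-1}(\calM\times\mathcal{H}_\subbundleidx)$ is the image of a trivial rank-$\subbundleidx$ subbundle under a smooth bundle isomorphism, so it is a smooth finite-rank subbundle. This gives property 1.

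Nesting (property 2) is immediate from $\mathcal{H}_\subbundleidx\subset\mathcal{H}_{\subbundleidx+1}$.

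For density (property 3), note that $\bigcup_\subbundleidx \mathcal{H}_\subbundleidx$ is dense in $\mathcal{H}$ because $\mathcal{B}$ is an orthonormal basis. Since $\Phi_x^{-1}$ is a homeomorphism onto $\calE_x$, the closure of $\bigcup_\subbundleidx(\calE_\subbundleidx)_x$ is all of $\calE_x$.

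As a byproduct, the fiberwise orthogonal projection $\Pi_\subbundleidx=\Phi^{-1}\circ(\mathrm{id}\times \mathrm{proj}_{\mathcal{H}_\subbundleidx})\circ\Phi$ is smooth. This also justifies Proposition~\ref{prop:signal_discretization}.

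The main obstacle is the global trivialization step, specifically the smoothness and unitarity of $\Phi$ rather than mere continuity. If smooth Kuiper is unavailable, I would fall back to a local construction. Cover $\calM$ by finitely many trivializing charts, pull back $\mathcal{H}_\subbundleidx$ in each chart, and take the closed span of these finitely many finite-dimensional subspaces. This span has finite but possibly jumping rank. I would repair the jumping by enlarging each local piece to a uniform dimension using the compactness of $\calM$ and the continuity of the transition maps in operator norm, so that images of $\mathcal{H}_N$ under nearby transition maps stay close in the gap metric. I expect this fallback to be considerably messier, so I would commit to the Kuiper route first.
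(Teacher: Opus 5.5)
Your proposal is correct and follows essentially the same route as the paper: use Kuiper's theorem to trivialize $\calE \cong \calM\times\mathcal{H}$, upgrade to a smooth trivialization $\Phi$ (the paper cites the continuous-to-smooth equivalence for finite-dimensional bases), and pull back the truncations $\calM\times\mathcal{H}_\subbundleidx$ of a Hilbert basis to get the nested, fiberwise-dense finite-rank subbundles. Your extra step making $\Phi$ fiberwise unitary is not needed for the lemma itself, but it is what makes $\Phi^{-1}\circ(\mathrm{id}\times\mathrm{proj}_{\mathcal{H}_\subbundleidx})\circ\Phi$ the orthogonal projection used in Proposition~\ref{prop:signal_discretization}; the fallback local construction is unnecessary.
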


\begin{proof}
    By Kuiper's Theorem \citep{Kuiper1965}, the unitary group of the typical fiber $\mathcal{H}$ is contractible, implying that there exists an isomorphism of $\calE$ with bundle $\calM \times \mathcal{H}$ at the level of purely topological bundles. Now note that every Hilbert bundle $\calM \times \calH$ admits a finite rank approximating sequence, by considering a Hilbert space basis $\{e_1, e_2, \ldots \}$ for $\calH$, and defining $\calH_n := \mathrm{span}(e_1, \ldots, e_n)$. The sequence $\{\calM \times \calH_\subbundleidx\}_\subbundleidx $ can then be seen to be a finite rank approximating sequence. Furthermore, because the base space is a finite-dimensional manifold, this topological trivialization can be upgraded to a smooth global trivialization \cite{M_ller_2009}. Let $\Phi : \calE \to \calM \times \mathcal{H}$ be such a smooth isomorphism. Thus, the finite rank approximating sequence $\{\calM \times \calH_\subbundleidx\}_\subbundleidx $ pulls back to a finite rank approximating sequence on $\calE$ by $\calE_\subbundleidx := \Phi^{-1}( \calM \times \calH_\subbundleidx)$, as desired.
\end{proof}

\begin{lemma} \label{app_lem:projection-operators}
    Let $(\calE, \calM, \nabla)$ be a smooth infinite-dimensional Hilbert bundle over a compact manifold $\calM$ equipped with a compatible connection $\nabla$. Let $\{\calE_\subbundleidx \}_{\subbundleidx \geq 1}$ be a finite rank approximating sequence for $\calE$. The data of $\nabla$ induces a compatible connection $\nabla_\subbundleidx := \Pi_\subbundleidx \nabla$ on $\calE_\subbundleidx$, where $\Pi_\subbundleidx : \calE \to \calE_\subbundleidx$ denotes the fiber-wise orthogonal projection onto $\calE_\subbundleidx$. 
\end{lemma}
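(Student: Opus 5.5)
The plan is to verify directly that $\nabla_\subbundleidx := \Pi_\subbundleidx \circ \nabla$, restricted to sections of $\calE_\subbundleidx$, satisfies the axioms of a connection (first formulation) and the compatibility condition of Proposition~\ref{prop:connection-equivalent}. This is the Hilbert-bundle analogue of the Gauss-formula (second fundamental form) construction for subbundles. Here $\Pi_\subbundleidx$ acts on $T^*\calM\otimes\calE$ by $\mathrm{id}\otimes\Pi_\subbundleidx$, i.e. $(\nabla_\subbundleidx)_X S := \Pi_\subbundleidx(\nabla_X S)$.

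\textbf{Step 1: the projection is smooth.} I first show that $\Pi_\subbundleidx:\calE\to\calE_\subbundleidx$ is a smooth bundle map; I expect this to be the only real obstacle. By the construction in Lemma~\ref{app_lem:approximating sequence}, $\calE_\subbundleidx=\Phi^{-1}(\calM\times\calH_\subbundleidx)$. It therefore carries a global smooth frame $f_a(x):=\Phi^{-1}(x,e_a)$, $a=1,\dots,\subbundleidx$, which is linearly independent in each fiber. In general this frame is not orthonormal, because $\Phi$ need not be unitary. The fiberwise inner product $\langle-,-\rangle_x$ varies smoothly: this follows from the smooth norm axiom of Definition~\ref{def:BanachBundle} together with polarization. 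Hence the Gram matrix $G_{ab}(x)=\langle f_a(x),f_b(x)\rangle_x$ is smooth and everywhere invertible, and
\[
\Pi_\subbundleidx v=\sum_{a,b}(G^{-1})_{ab}(x)\,\langle v,f_b(x)\rangle_x\, f_a(x),\qquad v\in\calE_x .
\]
This expression is smooth in $(x,v)$, and in each fiber it is the orthogonal projection onto $(\calE_\subbundleidx)_x$: it is idempotent, self-adjoint, and has range $(\calE_\subbundleidx)_x$. Consequently, for $S\in\Gamma(\calE_\subbundleidx)$ the section $\Pi_\subbundleidx\nabla S$ is a smooth section of $T^*\calM\otimes\calE_\subbundleidx$.

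\textbf{Step 2: connection axioms.} $\mathbb{R}$-linearity of $\nabla_\subbundleidx$, and $C^\infty(\calM)$-linearity in the vector-field slot, are inherited from $\nabla$ and the fiberwise linearity of $\Pi_\subbundleidx$. For the Leibniz rule, take $S\in\Gamma(\calE_\subbundleidx)$ and $f\in C^\infty(\calM)$. Then
\[
\nabla_\subbundleidx(fS)=\Pi_\subbundleidx(Df\otimes S+f\nabla S)=Df\otimes \Pi_\subbundleidx S+f\,\Pi_\subbundleidx\nabla S=Df\otimes S+f\nabla_\subbundleidx S,
\]
where the last equality uses $\Pi_\subbundleidx S=S$.

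\textbf{Step 3: compatibility.} Let $S_0,S_1\in\Gamma(\calE_\subbundleidx)$ and let $X$ be a vector field. Compatibility of $\nabla$ gives $X[\langle S_0,S_1\rangle]=\langle\nabla_XS_0,S_1\rangle+\langle S_0,\nabla_XS_1\rangle$. Since $\Pi_\subbundleidx$ is fiberwise self-adjoint and $S_1=\Pi_\subbundleidx S_1$, we get $\langle\nabla_XS_0,S_1\rangle=\langle\Pi_\subbundleidx\nabla_XS_0,S_1\rangle=\langle(\nabla_\subbundleidx)_XS_0,S_1\rangle$, and the same argument applies to the second term. Hence $\nabla_\subbundleidx$ is compatible, and by Proposition~\ref{prop:connection-equivalent} its parallel transports are unitary on $\calE_\subbundleidx$.
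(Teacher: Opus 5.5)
Your proposal is correct and follows the paper's argument. The paper's entire proof is the observation that compatibility of $\nabla$ plus fiberwise self-adjointness of the orthogonal projection gives compatibility of $\Pi_d\nabla$, which is your Step 3; Steps 1 and 2 add the smoothness of $\Pi_d$ and the Leibniz-rule check that the paper leaves implicit (Step 1 needs only local smooth frames of the sub-bundle, so it does not depend on the particular $\Phi$-based construction of $\calE_d$).
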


\begin{proof}
    This follows immediately from the fact that $\nabla$ is compatible and orthogonal projections are self-adjoint. 
\end{proof}
\begin{remark}
    Let  $(\calM, \calE, \nabla)$ be an  infinite-dimensional Hilbert bundle equipped with a compatible connection. By the previous lemmas, we may always find a finite rank approximating sequence $(\calE_\subbundleidx, \calM, \nabla_\subbundleidx)$, each with compatible connection. These compatible connections induce connection Laplacians $\Delta_{\nabla_\subbundleidx}$ on each sub-bundle. Moreover, Theorem \ref{thm: main-thm}{\ref{thm: L2-upgrade}} applies to each Laplacian $\Delta_{\nabla_\subbundleidx}$. 
\end{remark}

We restate and prove Proposition~\ref{prop:signal_discretization}.
\begin{proposition*}
    Let  $(\mathcal{M},\mathcal{E},\nabla)$ be a Hilbert bundle, with strictly infinite-dimensional generic Hilbert-space fiber $\mathcal{H}$. Fix an orthogonal basis $\mathcal{B}=\{e_1, e_2, ...\} $ of $\mathcal{H}$ and let $\mathcal{H}_d:= \mathrm{span}(e_1, e_2, ..., e_d) $. Then there exists a smooth map of bundles 
    \begin{equation}
   \Pi_d: \mathcal{E} \to \mathcal{E}_d
   \end{equation}
   where $\mathcal{E}_d $ is a $d$-dimensional vector bundle with generic fiber $\mathcal{H}_d$ and at each $x\in \mathcal{M}$, $\left .\Pi_d\right |_{\calE_x}: \mathcal{E}_x \to \mathcal{E}_{d,x}$ recovers the usual orthogonal projection map. 
\end{proposition*}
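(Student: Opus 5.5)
The plan is to build $\Pi_d$ from a smooth global trivialization of $\mathcal{E}$ and the fixed basis $\mathcal{B}$, in the same way as in the proof of Lemma~\ref{app_lem:approximating sequence}. By Kuiper's theorem the structure group of unitaries on $\mathcal{H}$ is contractible, so $\mathcal{E}$ is topologically trivial. Since $\mathcal{M}$ is a finite-dimensional closed manifold, this upgrades to a smooth trivialization $\Phi:\mathcal{E}\to\mathcal{M}\times\mathcal{H}$ \cite{M_ller_2009}. I would insist that $\Phi$ be fiberwise unitary, i.e.\ a trivialization of the unitary frame bundle. This is possible because the unitary group is also contractible, so the reduction to unitary transition functions loses nothing. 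I then define $\mathcal{E}_d:=\Phi^{-1}(\mathcal{M}\times\mathcal{H}_d)$. It is a rank-$d$ smooth sub-bundle with generic fiber $\mathcal{H}_d$, and it is the first term of a finite rank approximating sequence.

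Next, let $p_d:\mathcal{H}\to\mathcal{H}_d$ be the orthogonal projection onto the span of $e_1,\dots,e_d$, normalizing $\mathcal{B}$ if it is merely orthogonal. Set
\[
\Pi_d:=\Phi^{-1}\circ(\mathrm{id}_{\mathcal{M}}\times p_d)\circ\Phi .
\]
This is smooth, being a composition of smooth maps with a bounded linear, hence smooth, map on the fiber factor. It covers the identity on $\mathcal{M}$ and is fiberwise linear, so it is a bundle map $\mathcal{E}\to\mathcal{E}_d$. On each fiber, $\Pi_d|_{\mathcal{E}_x}=\Phi_x^{-1}p_d\Phi_x$. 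Because $\Phi_x$ is unitary, this is a self-adjoint idempotent with range $\mathcal{E}_{d,x}$, and hence it is the orthogonal projection of $\mathcal{E}_x$ onto $\mathcal{E}_{d,x}$ with respect to $\langle\cdot,\cdot\rangle_x$.

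The main obstacle is the unitarity of $\Phi$. If $\Phi$ is only a bounded isomorphism, the conjugated projection is idempotent but generally not orthogonal. I would first try to secure unitarity directly from the unitary-frame Kuiper argument. As a fallback, I would take any smooth trivialization and replace the projection by the $\langle\cdot,\cdot\rangle_x$-orthogonal projection onto $\mathcal{E}_{d,x}=\Phi_x^{-1}(\mathcal{H}_d)$. Its formula is $Q_x=B_x(B_x^*B_x)^{-1}B_x^*$, where $B_x=\Phi_x^{-1}|_{\mathcal{H}_d}$ and the adjoint is taken with respect to the fiber metric. Smoothness of $Q_x$ follows from the smooth-norm axiom of Definition~\ref{def:BanachBundle}, since the fiber metric varies smoothly, and from the smoothness of inversion on $\mathrm{GL}(\mathcal{H}_d)$.
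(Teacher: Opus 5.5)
Your proposal is correct and follows the same route as the paper. The paper's one-line proof reuses the Kuiper-plus-smooth-trivialization argument from its finite-rank approximating-sequence lemma, sets $\mathcal{E}_d := \Phi^{-1}(\mathcal{M}\times\mathcal{H}_d)$, and takes $\Pi_d$ to be the fiberwise orthogonal projection onto it. Your treatment of why this map is genuinely orthogonal and smooth supplies a detail the paper leaves implicit, either via a fiberwise unitary $\Phi$ or via the Gram-matrix fallback $Q_x = B_x(B_x^*B_x)^{-1}B_x^*$.
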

\begin{proof}
    One we fix a basis $\mathcal{B}$, the conclusion follows from the proof of Lemma \ref{app_lem:approximating sequence} and an application of Lemma \ref{app_lem:projection-operators}.
\end{proof}
\begin{definition}
     Let $(\calM, \calE, \nabla)$ be a smooth Hilbert bundle over a closed manifold $\calM$ of dimension $m$ equipped with a compatible connection $\nabla$. Fix a section $S \in C^4(\calM, \calE)$. Let $\{\calE_\subbundleidx\}_{\subbundleidx}$ be a finite rank approximating sequence for $\calE$ with induced connections $\nabla_\subbundleidx$, connection Laplacians $\Delta_{\nabla_\subbundleidx}$, and bandwidth $t$ point cloud Laplacians $\hat{\Delta}_{n,\subbundleidx}^t $ associated to an iid sampling $\mathcal{X} = \{x_1, x_2, \ldots\}$. Let $\Pi_\subbundleidx : \calE \to \calE_\subbundleidx$ denote the fiber-wise orthogonal projection map onto $\calE_\subbundleidx$. The \textbf{discretization error} $\mathtt{D}(n,\subbundleidx)$ and the \textbf{continuous geometry error} $\mathtt{E}(\subbundleidx)$ are the quantities:
    \begin{align*}
        \ttD(n,\subbundleidx) & := \left \| \gamma_n \hat{\Delta}^{t_n}_{n,\subbundleidx} (\Pi_\subbundleidx S) - \frac{1}{\mathrm{vol}(\calM)} \Delta_{\nabla_\subbundleidx} (\Pi_\subbundleidx S)  \right \|_{L^2}^2  \\
        \ttE(\subbundleidx) & := \left \| \frac{1}{\mathrm{vol}(\calM)} \left ( \Delta_{\nabla_\subbundleidx}( \Pi_\subbundleidx S) - \Delta_\nabla S\right ) \right \|_{L^2}^2 \, . 
    \end{align*}
\end{definition}

\begin{remark}
    The discretization error $\ttD(n,\subbundleidx)$ captures the error introduced by approximating the connection Laplacian $\Delta_{\nabla_\subbundleidx}$ by a point-cloud Laplacian on $n$ points. The continuous geometry error $\ttE(\subbundleidx)$ is a deterministic quantity that captures the error introduced by moving to the sub-bundle $\calE_\subbundleidx$.
\end{remark}

\begin{lemma}\label{lem: continuous geometry error goes to zero}
    The continuous geometry error $\ttE(\subbundleidx)$ converges to zero as $\subbundleidx \to \infty$.
\end{lemma}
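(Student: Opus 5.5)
Fix $d$ and write $\Pi_d S$ for the fiber-wise projection of $S$ onto $\calE_d$, and $\nabla_d = \Pi_d \nabla$ for the induced connection of Lemma~\ref{app_lem:projection-operators}. The plan is to establish pointwise convergence of $\Delta_{\nabla_d}(\Pi_d S)$ to $\Delta_\nabla S$, and then upgrade it to $L^2$ convergence by dominated convergence on the compact base $\calM$.

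\textbf{Reduction to a local second-order expression.} By Lemma~\ref{lemma:coord-expression}, applied to both $\nabla$ and $\nabla_d$ (both compatible), at any $p$ we have, in a local orthonormal frame synchronous at $p$,
\[
\Delta_{\nabla_d}(\Pi_d S)(p) = -\sum_i (\nabla_d)_{e_i}(\nabla_d)_{e_i}\Pi_d S(p).
\]
Next, use the smooth trivialization $\Phi:\calE\to\calM\times\calH$ from the proof of Lemma~\ref{app_lem:approximating sequence}, in which $\calE_d=\calM\times\calH_d$. In this trivialization $\Pi_d$ is the constant projection $\pi_d$ onto $\calH_d$. By Proposition~\ref{pr: local form of connection}, $\nabla = d + A$, and so $\nabla_d = d + \pi_d A$ acting on $\calH_d$-valued sections. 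Expanding gives
\[
(\nabla_d)_{e_i}(\nabla_d)_{e_i}\pi_d S = \pi_d\bigl(\partial_i^2 S + (\partial_i A_i) S + 2A_i\,\partial_i S + A_i\pi_d A_i S\bigr) + (\text{terms in } \nabla_{e_i}e_i),
\]
where $\pi_d$ commutes with the constant-coefficient derivatives. The same formula without the $\pi_d$'s gives $\nabla_{e_i}\nabla_{e_i} S$.

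\textbf{Pointwise convergence.} Since $\pi_d\to \mathrm{id}$ strongly on $\calH$, each term $\pi_d X$ with $X\in\calH$ fixed converges to $X$. For the cross term $A_i\pi_d A_i S$, we note that $\pi_d A_i S\to A_i S$ and $A_i$ is bounded, so $A_i\pi_dA_iS\to A_iA_iS$. Hence $\Delta_{\nabla_d}(\Pi_d S)(p)\to\Delta_\nabla S(p)$ for every $p$.

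\textbf{Domination and conclusion.} This is the main obstacle: we need uniform control in $d$ over $\calM$. The operator norms $\|\pi_d\|\le 1$, and $A$, $\partial A$, $S$, $\partial S$, $\partial^2 S$ are continuous on a compact set, hence bounded. Therefore
\[
\|\Delta_{\nabla_d}(\Pi_dS)(p)\|\le C
\]
uniformly in $d$ and $p$, after covering $\calM$ by finitely many frame charts and using a partition of unity to handle chart-dependence; smoothness of the global trivialization makes the chart terms uniformly bounded. Dominated convergence applied to $\|\Delta_{\nabla_d}(\Pi_dS)-\Delta_\nabla S\|^2_{\calE_x}$ then yields $\ttE(d)\to0$.

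One subtlety to check is that the trivialization $\Phi$ is unitary, so that the orthogonal projections agree with $\pi_d$. If $\Phi$ is only smooth, I would instead replace $\pi_d$ by $\Phi^{-1}\pi_d\Phi$ conjugated to orthogonal projections. This sequence still converges strongly to the identity by property (3) of approximating sequences, and it remains uniformly bounded, so the same argument goes through.
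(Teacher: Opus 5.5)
Your proposal is correct and follows the paper's proof essentially step for step: a global trivialization with $\nabla = D + A$, the projection commuting with the derivative, strong convergence $\Pi_d \to \mathrm{id}$ plus boundedness of $A$ for pointwise convergence, and a $d$-uniform bound from $\|\Pi_d\| \le 1$ feeding dominated convergence. Your second-order expansion drops a few inner $\pi_d$'s (it should read $(\partial_i A_i)\pi_d S$, $2A_i\pi_d\partial_i S$ and $A_i\pi_d A_i\pi_d S$), but this does not affect either the limit or the bound.

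The unitarity issue you flag is real, and the paper's proof also takes it for granted. Your fallback is not justified as written, though: strong convergence and uniform boundedness of an $x$-dependent projection do not control the extra terms produced when you differentiate it. The clean fix is to replace $\Phi$ by its polar part $\Phi(\Phi^*\Phi)^{-1/2}$, which is a smooth unitary trivialization, and define $\calE_d$ through it, so that $\Pi_d$ really is the constant $\pi_d$.
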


\begin{proof}
    Without loss of generality, by pushing through the global trivialization of Kuiper's theorem, we may assume without loss of generality that $\calE = \calH \times \calM$ as topological bundles. 
    First, note that the orthogonal projection $\Pi_\subbundleidx$ commutes with the Fr\'echet derivative $D$ on sections, in the sense that:
    $$D ( \Pi_\subbundleidx S) = \Pi_\subbundleidx (DS) . $$
    By Proposition \ref{pr: local form of connection}, for any vector field $X$, we may write:
    $$\nabla_XS = DS(X) + A(X)S$$
    where $A(X)$ is a globally bounded linear operator acting on the fiber $\calH$. We may now compute:
    \begin{align*}
        (\nabla_\subbundleidx)_X(\Pi_\subbundleidx S) & = \Pi_\subbundleidx \nabla_X(\Pi_\subbundleidx S)\\
        & = \Pi_\subbundleidx \left ( D \Pi_\subbundleidx S(X) + A(X)\Pi_\subbundleidx S \right ) \\
        & = \Pi_\subbundleidx DS(X) + \Pi_\subbundleidx A(X)\Pi_\subbundleidx S) 
    \end{align*}
    Since $\Pi_\subbundleidx \to \text{id}$ in the strong operator topology and $DS(X) \in \calH$, we have that $\Pi_\subbundleidx DS(X) \to DS(X)$ as $\subbundleidx \to \infty$. Similarly since $A(X): \calH \to \calH$ is bounded, we find that for each $x \in \calM$, we have $\Pi_\subbundleidx A(X)\Pi_\subbundleidx S(x) \to A(X)S(x)$. Therefore $\nabla_\subbundleidx (\Pi_\subbundleidx S)(x) \to \nabla S(x)$ . By a similar argument, we may conclude that for a pair of vector fields $X,Y$, that
    $(\nabla_\subbundleidx)_X(\nabla_\subbundleidx)_Y(\Pi_\subbundleidx S)(x) \to \nabla_X \nabla_Y S(x).$
    Hence using the coordinate form of the connection Laplacian of  Lemma \ref{lemma:coord-expression}, we recover that:
    $$\lim_{\subbundleidx \to \infty} \left [ \Delta_{\nabla_\subbundleidx } S(x) \right ] = \Delta_\nabla S(x)$$
    for all $x \in \calM$. 

    We now upgrade to this statement to $L^2$ convergence by the dominated convergence theorem. If there is a global bound $K$ such that $\| \Delta_{\nabla_\subbundleidx}( \Pi_\subbundleidx S)(x) - \Delta_\nabla S(x) \|_{\calH} \leq K$ for all $x \in \calM$, we may apply the dominated convergence theorem and conclude that $E(\subbundleidx) \to 0$. To find such a $K$, we first observe that $\Delta_\nabla S$ is a continuous section, and hence $\|\Delta_\nabla S(x)\|_{\calH} \leq K_1$ is bounded on the compact manifold $\calM$. Next, for a pair of vector fields $X,Y$, we may use the fact that $\|\Pi_\subbundleidx \|_{\text{op}} = 1$ and the representation $\nabla = D + A$ to find a $\subbundleidx$-independent bound on  $\left \| \big( (\nabla_{\subbundleidx})_X (\nabla_{\subbundleidx})_Y (\Pi_\subbundleidx S) \big)(x) \right \|_{\calH}$. The local coordinate form the connection Laplacian $\Delta_{\nabla_\subbundleidx}$ again allows us to conclude that there is a bound  $\|\Delta_{\nabla_\subbundleidx} (\Pi_\subbundleidx S)(x)\|_\calH \leq K_2$ for all $x \in \calM$ and $\subbundleidx \geq 1$. The triangle inequality finally allows us to bound:
    $$\| \Delta_{\nabla_\subbundleidx}( \Pi_\subbundleidx S)(x) - \Delta_\nabla S(x) \|_{\calH} \leq  K_1 + K_2 \, . $$
    This completes the proof.
\end{proof}

\subsection{Proof of Theorem \ref{thm: diagonal finite rank convergence}}\label{app:proofth2}
\begin{theorem*}
    Let $(\calE, \calM, \nabla)$ be an  infinite-dimensional Hilbert bundle over a closed manifold $\calM$ of dimension $m$ equipped with a compatible connection $\nabla$. Fix a section $S \in C^4(\calM, \calE)$. Let $\{\calE_\subbundleidx\}_{\subbundleidx}$ be a finite rank approximating sequence for $\calE$ with induced connections $\nabla_\subbundleidx$, connection Laplacians $\Delta_{\nabla_\subbundleidx}$, and bandwidth $t$ point cloud Laplacians $\hat{\Delta}_{\calF^t_{n,\subbundleidx}}$ associated to an iid sampling $\mathcal{X} = \{x_1, x_2, \ldots\}$. Let $\Pi_\subbundleidx : \calE \to \calE_\subbundleidx$ denote the fiber-wise orthogonal projection map onto $\calE_\subbundleidx$. There exists a deterministic increasing sequence $\subbundleidx_n$, depending on the section $S$, such that
    $$ \lim_{n \to \infty} \E_{\mathcal{X}} \left [ 
    \left \| \frac{1}{t_n (4 \pi t_n)^{m/2}} \hat{\Delta}_{\calF_{n,\subbundleidx_n}^{t_n}} (\Pi_{\subbundleidx_n}S) - \frac{1}{\mathrm{vol}(\calM)} \Delta_\nabla S \right \|_{L^2}^2 
    \right ] = 0$$
    with bandwidth $t_n = n^{-\frac{1}{m + 2 + \alpha}}$, $\alpha > 0$.
\end{theorem*}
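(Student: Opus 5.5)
The plan is a triangle-inequality splitting followed by a diagonal argument. For each $n$ and $d$, write the total error as the discretization error on the sub-bundle plus the continuous geometry error:
\begin{equation*}
\E_{\mathcal{X}}\left[\left\|\gamma_n \hat{\Delta}_{\calF^{t_n}_{n,d}}(\Pi_d S) - \tfrac{1}{\mathrm{vol}(\calM)}\Delta_\nabla S\right\|_{L^2}^2\right] \leq 2\,\E_{\mathcal{X}}\big[\ttD(n,d)\big] + 2\,\ttE(d),
\end{equation*}
with $\gamma_n = (t_n(4\pi t_n)^{m/2})^{-1}$. Lemma \ref{lem: continuous geometry error goes to zero} already gives $\ttE(d)\to 0$ as $d\to\infty$, deterministically. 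The task therefore reduces to showing that, for each fixed $d$, $\E_{\mathcal{X}}[\ttD(n,d)]\to 0$ as $n\to\infty$, and then choosing $d_n$ to grow slowly enough.

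For fixed $d$, I apply Theorem \ref{thm: main-thm}(B) to the finite-rank bundle $(\calM,\calE_d,\nabla_d)$, which by Lemma \ref{app_lem:projection-operators} has a compatible connection. I apply it to the section $\Pi_d S$, which is $C^4$ because $\Pi_d$ is a smooth bundle map (Proposition \ref{prop:signal_discretization}). Two points need checking. First, the parallel transport used in $\hat{\Delta}_{\calF^{t_n}_{n,d}}$ must be the one induced by $\nabla_d$, not the restriction of $P^\nabla$; I would make this the standing convention, as the remark after Lemma \ref{app_lem:projection-operators} does. Second, the bandwidth schedule $t_n=n^{-1/(m+2+\alpha)}$ is independent of $d$. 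Granted these, the proof of Theorem \ref{thm: main-thm}(B), via Lemmas \ref{lem: variance asymptotics} and \ref{lem: bias asymptotics}, yields
\begin{equation*}
\E_{\mathcal{X}}[\ttD(n,d)] \leq \frac{C_{\mathrm{var}}(d)}{n t_n^{2+m/2}} + C_{\mathrm{bias}}(d)^2 t_n^2 =: \varepsilon_d(n),
\end{equation*}
where $\varepsilon_d(n)\to 0$ as $n\to\infty$ for each $d$.

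The diagonal choice is the standard one. For each $k$, pick $N_k$, increasing in $k$, such that $\varepsilon_k(n)<1/k$ for all $n\ge N_k$. Then set $d_n := \max\{k : N_k\le n\}$, which is deterministic, nondecreasing, and tends to infinity. With this choice, $\E[\ttD(n,d_n)] \le \varepsilon_{d_n}(n) < 1/d_n \to 0$, while $\ttE(d_n)\to 0$ by Lemma \ref{lem: continuous geometry error goes to zero}. Combining the two gives the claim. If strict monotonicity of $d_n$ is wanted, pass to a subsequence or take $N_k$ strictly increasing.

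I expect the main obstacle to be the middle step, because the constants $C_{\mathrm{var}}(d)$ and $C_{\mathrm{bias}}(d)$ depend on $d$ through the fourth covariant derivatives of $\Pi_d S$ under $\nabla_d$ and through the curvature of $\nabla_d$. The diagonal argument sidesteps any need for uniformity in $d$, which is exactly why the theorem only asserts existence of some sequence $d_n$. The residual care lies in verifying that Lemma \ref{lem: bias asymptotics} genuinely applies to $\nabla_d$. That lemma needs a compatible connection with unitary transports and a $C^4$ section on a closed manifold. All of these hold, since $\calE_d$ is a smooth finite-rank sub-bundle over a compact base, so each constant is finite.
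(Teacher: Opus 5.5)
Your proposal is correct and follows essentially the paper's proof. It bounds the error by $2\,\E[\ttD(n,d)]+2\,\ttE(d)$, applies Theorem~\ref{thm: main-thm}(B) to each finite-rank bundle $(\calM,\calE_d,\nabla_d)$, absorbs the $d$-dependent constants by diagonalization, and uses Lemma~\ref{lem: continuous geometry error goes to zero} for $\ttE(d)\to 0$. Your choice $d_n=\max\{k : N_k\le n\}$ is leaner than the paper's, since $d_n\to\infty$ alone forces $\ttE(d_n)\to 0$; the paper instead adds an auxiliary sequence $p_i$ and claims $\phi(n)\ge n$ to get a $4/n$ rate that does not follow, though its limit still holds, and you should take the $N_k$ strictly increasing so that $d_n$ is finite.
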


\begin{proof}
    Let $\gamma_n := \frac{1}{t_n (4 \pi t_n)^{m/2}}$. We may easily bound the expected global error in terms of the continuous geometry error and the expected discretization error:
    $$\E_{\mathcal{X}} \left [ 
    \left \| \gamma_n \hat{\Delta}_{\calF_{n, \subbundleidx_n}}^{t_n} (\Pi_{\subbundleidx_n}S) - \frac{1}{\mathrm{vol}(\calM)} \Delta_\nabla S \right \|_{L^2}^2 
    \right ] \leq 2\E_{\mathcal{X}_n}[\ttD(n,\subbundleidx)] + 2 \ttE(\subbundleidx) \, . $$
    By applying Theorem \ref{thm: main-thm} to the bundle $\calE_\subbundleidx$, the expected discretization error $\E_{\mathcal{X}_n}[\ttD(n,\subbundleidx)] \to 0$ as $n \to \infty$. On the other hand, Lemma \ref{lem: continuous geometry error goes to zero} ensures that $\ttE(\subbundleidx) \to 0$ as $\subbundleidx \to \infty$. 
    
    To construct the diagonal sequence, first choose an increasing sequence $p_i$ such that $\ttE(\subbundleidx) < \frac{1}{i}$ for all $\subbundleidx \geq p_i$. Next, choose an increasing sequence $N_i$ such that $\E_{\mathcal{X}_n}[\ttD(n,p_i)] < \frac{1}{i}$ for each $n \geq N_i$. For each $n \geq 1$, set $\phi(n) := \max \{i \: \mid \: N_i \leq n \}$. Observe that $\ttE(p_{\phi(n)})  <\frac{1}{\phi(n)} \leq \frac{1}{n}$ since $\phi(n) \geq n$. On the other hand, $\E_{\mathcal{X}_n}[\ttD(n,p_{\phi(n)})] < \frac{1}{\phi(n)} \leq \frac{1}{n}$ as well. Therefore setting $\subbundleidx_n := p_{\phi(n)}$ yields a diagonal sequence with the property that
    $$\E_{\mathcal{X}} \left [ 
    \left \| \gamma_n \hat{\Delta}_{\calF^{t_n}_{n,\subbundleidx_n}} (\Pi_{\subbundleidx_n}S) - \frac{1}{\mathrm{vol}(\calM)} \Delta_\nabla S \right \|_{L^2}^2 
    \right ] < \frac{4}{n} \, .$$
    This diagonal subsequence is deterministic as both the continuous geometry error and the \emph{expected} discretization error are deterministic. 
\end{proof}

\subsection{Key Lemmas for Corollary \ref{cor: convergence in architecture}}

\begin{lemma}\label{lem: filter level MSE convergence}
    Let $(\calE, \calM, \nabla)$ be a smooth infinite-dimensional Hilbert bundle over a closed manifold $\calM$ of dimension $m$ equipped with a compatible connection $\nabla$. Fix a section $S \in C^4(\calM, \calE)$. Let $\{\calE_\subbundleidx\}_{\subbundleidx}$ be a finite rank approximating sequence for $\calE$ with induced connections $\nabla_\subbundleidx$, connection Laplacians $\Delta_{\nabla_\subbundleidx}$, and bandwidth $t$ point cloud Laplacians $\hat{\Delta}_{\calF^t_{n,\subbundleidx}}$ associated to an iid sampling $\mathcal{X} = \{x_1, x_2, \ldots\}$. Let $\Pi_\subbundleidx : \calE \to \calE_\subbundleidx$ denote the fiber-wise orthogonal projection map onto $\calE_\subbundleidx$. Let $\{\subbundleidx_n\}$ be the diagonal sequence induced by Theorem \ref{thm: diagonal finite rank convergence}. Let $\tilde{\Delta}_n := \frac{1}{t_n (4 \pi t_n)^{m/2}}\hat{\Delta}_{\calF^{t_n}_{n,\subbundleidx_n}} \Pi_{\subbundleidx_n}$ with bandwidth $t_n = n^{-\frac{1}{m + 2 + \alpha}}$, $\alpha > 0$. Similarly, let $\tilde{\Delta} := \frac{1}{\mathrm{vol}(\calM)} \Delta_\nabla$. Finally, let $g: \mathbb{R} \to \mathbb{R}$ be a bounded continuous function. 

    Under the Borel functional calculus (Appendix \ref{sec:borel-calc}), we have MSE convergence:
    $$ \E \left [ \left \| g(\tilde{\Delta}_{n})S - g(\tilde{\Delta})S \right \|_{L^2}^2 \right ] \to 0  \, . $$
\end{lemma}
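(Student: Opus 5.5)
The plan is to pass from convergence of the Laplacians acting on the fixed section $S$ (Theorem~\ref{thm: diagonal finite rank convergence}) to convergence of $g(\tilde{\Delta}_n)S$. The standard route runs through the resolvent. I first reduce to resolvent functions $r_z(\lambda)=(\lambda-z)^{-1}$ with $z\in\mathbb{C}\setminus\mathbb{R}$, or equivalently $z<0$ for the nonnegative operators at hand. The algebra generated by these functions is dense in $C_0(\mathbb{R})$ by Stone--Weierstrass.

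I would establish the following ingredients in order.

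\textbf{Step 1 (resolvent convergence).} Since $\tilde{\Delta}$ is self-adjoint and nonnegative, and $\tilde{\Delta}_n$ is a bounded operator for each fixed $n$ and sample, fix $z=-1$ and write
\[
(\tilde{\Delta}_n+1)^{-1}S-(\tilde{\Delta}+1)^{-1}S=(\tilde{\Delta}_n+1)^{-1}\bigl(\tilde{\Delta}-\tilde{\Delta}_n\bigr)(\tilde{\Delta}+1)^{-1}S .
\]
Set $u:=(\tilde{\Delta}+1)^{-1}S$. If $u$ is again $C^4$, which follows from elliptic regularity of the connection Laplacian (a parametrix argument as in Lemma~\ref{Heat-Kernel-Existence}), then Theorem~\ref{thm: diagonal finite rank convergence} applied to $u$ gives $\E\|(\tilde{\Delta}-\tilde{\Delta}_n)u\|_{L^2}^2\to 0$. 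Here I must take the diagonal sequence $d_n$ to work simultaneously for $S$ and for the countably many auxiliary sections; a further diagonal argument arranges this.

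To conclude Step 1 I also need $\|(\tilde{\Delta}_n+1)^{-1}\|\le 1$, which would follow from $\tilde{\Delta}_n\ge 0$. This positivity is the main obstacle. The point-cloud operator of Def.~\ref{defn:point-cloud-extension-main} is not obviously self-adjoint or accretive on $L^2(\calM,\calE)$: it is $S(x)\mapsto \frac1n\sum_j k(x,x_j)S(x)-\frac1n\sum_j k(x,x_j)P_{x_j\to x}S(x_j)$, a multiplication operator minus a finite-rank operator. I would first try to show that its real part is bounded below by $-o(1)$. Failing that, I would work on $C^0(\calF_n^{t_n};G_n)$, where the sheaf Laplacian is genuinely $\delta^*\delta\ge0$, and transfer estimates through the sampling/interpolation operator, interpreting $g(\tilde{\Delta}_n)$ accordingly.

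\textbf{Step 2 (polynomials in resolvents).} Products of resolvents follow by telescoping, using Step 1 repeatedly on the sections $(\tilde{\Delta}+1)^{-k}S$. Each of these is smooth and has uniformly bounded operator norms. This gives MSE convergence for every $p\bigl((\lambda+1)^{-1}\bigr)$ with $p$ a polynomial.

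\textbf{Step 3 (density).} Given a bounded continuous $g$, and since the spectra lie in $[0,\infty)$, the map $\lambda\mapsto g\bigl(\tfrac1\mu-1\bigr)$ with $\mu=(\lambda+1)^{-1}\in(0,1]$ is continuous on $(0,1]$. If $g$ has a limit at infinity it extends continuously to $[0,1]$, and Weierstrass gives a uniformly approximating polynomial $p$ with error $\epsilon$. Norm continuity of the Borel calculus (Theorem~\ref{thm:spectral-thm}) then bounds $\|g(\cdot)S-p(\cdot)S\|\le\epsilon\|S\|$ for both operators uniformly in $n$. For general bounded continuous $g$ without a limit, I would use the spectral measure of $\tilde{\Delta}$ for $S$ to truncate: cut $g$ off at a large level $\Lambda$, where $\|\mathbf 1_{(\Lambda,\infty)}(\tilde{\Delta})S\|$ is small. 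Controlling the corresponding tail for $\tilde{\Delta}_n$ via Step 1 and Markov's inequality on $(\tilde{\Delta}_n+1)^{-1}$ is the delicate part. For the compactly supported filters $L^\infty_c$ used in HilbNets, which are in addition continuous, the limit is $0$ and the issue disappears. A final $3\epsilon$ argument yields the stated MSE convergence.
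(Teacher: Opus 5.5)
Your Steps 1--3 are the paper's argument written out by hand. The paper takes a countable dense $\mathcal{D}\subset C^\infty(\calM,\calE)$ as a common core and extracts a diagonal subsequence along which $\tilde{\Delta}_{n_i}S\to\tilde{\Delta}S$ almost surely for every $S\in\mathcal{D}$. It then invokes Reed--Simon VIII.25(a) (convergence on a common core gives strong resolvent convergence) and VIII.20(b) (strong resolvent convergence gives $g(A_n)\to g(A)$ strongly for bounded continuous $g$). Finally it recovers convergence in probability by the sub-subsequence criterion, and MSE by dominated convergence using $\|g\|_\infty$.

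You instead apply Theorem~\ref{thm: diagonal finite rank convergence} directly to $(\tilde{\Delta}+1)^{-k}S$. This replaces the core/density argument by a regularity claim, $(\tilde{\Delta}+1)^{-k}S\in C^4$, which is a Schauder estimate for a Hilbert-valued elliptic system that you would still have to supply. You also carry the MSE directly through contractive resolvents rather than through almost-sure subsequences. Your remark that $d_n$ must be re-diagonalized over countably many sections is correct. It applies equally to the paper, which tacitly uses one $d_n$ for all of $\mathcal{D}$ even though the sequence of Theorem~\ref{thm: diagonal finite rank convergence} depends on $S$.

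The tail issue for $g$ without a limit at infinity is routine. Take $\phi_\Lambda\in C_0(\mathbb{R})$ with $0\le\phi_\Lambda\le1$ and $\phi_\Lambda=1$ on $[0,\Lambda]$. Then $\|(1-\phi_\Lambda(\tilde{\Delta}_n))S\|\le\|(1-\phi_\Lambda(\tilde{\Delta}))S\|+\|(\phi_\Lambda(\tilde{\Delta}_n)-\phi_\Lambda(\tilde{\Delta}))S\|$, and the last term is handled by the $C_0$ case. This is exactly how VIII.20(b) is proved; no Markov inequality is needed.

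The genuine gap is the one you flag as the main obstacle, and it cannot be closed the way you first try. Contrary to your remark in Step~1, $\tilde{\Delta}_n$ is not bounded on $L^2(\calM,\calE)$: its second term uses point values $S(x_j)$, which are undefined on $L^2$ classes. It fails in several further ways:
\begin{itemize}
\item On continuous sections it is not even closable: take $S_k\to0$ in $L^2$ with $S_k(x_1)$ fixed.
\item It has no $L^2$ adjoint.
\item Its quadratic form is unbounded below relative to $\|S\|^2$. The $j=1$ cross term is linear in $S(x_1)$, and a narrow spike of bounded $L^2$ norm makes this value arbitrarily large.
\end{itemize}
So there is no Borel calculus defining $g(\tilde{\Delta}_n)$ on $L^2(\calM,\calE)$, no bound $\|(\tilde{\Delta}_n+1)^{-1}\|\le1$, and no spectral inclusion in $[0,\infty)$ for Step~3. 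The paper does not resolve this either; its proof simply asserts that each $\tilde{\Delta}_n$ is self-adjoint.

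Your fallback is the right repair, but it changes the statement. Let $L_n:=\gamma_n\frac1n\Delta_{\calF^{t_n}_{n,d_n}}$, with $\gamma_n=(t_n(4\pi t_n)^{m/2})^{-1}$, act on $C^0(\calF^{t_n}_{n,d_n};G_n)$ with inner product $\langle a,b\rangle_n=\frac1n\sum_i\langle a_i,b_i\rangle$. There $L_n=\gamma_n\frac1n\delta^*\delta\ge0$. Let $R_nS:=(\Pi_{d_n}S(x_i))_i$. The paper's remark that the point-cloud Laplacian at $x_i$ is the normalized sheaf Laplacian gives $L_nR_n=R_n\tilde{\Delta}_n$ on continuous sections. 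Your resolvent identity then becomes $\|(L_n+1)^{-1}R_nS-R_nu\|_n\le\|R_n(\tilde{\Delta}-\tilde{\Delta}_n)u\|_n$.

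The expectation of the right-hand side is a leave-one-out version of the error in Theorem~\ref{thm: diagonal finite rank convergence}: the $j=i$ term vanishes, and the other $n-1$ points are independent of $x_i$. Steps~2--3 then go through using $\E\|R_nh\|_n^2\le\|h\|_{L^2(\mu)}^2$ for fixed $h$, with $\mu$ the sampling measure. What you obtain is $\E\|g(L_n)R_nS-R_ng(\tilde{\Delta})S\|_n^2\to0$ in the empirical norm. A statement in $L^2(\calM,\calE)$, as the lemma claims, would further need an interpolation operator with its own error bound.
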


\begin{proof}
    Observe that $\tilde{\Delta}$ and each $\tilde{\Delta}_n$ are  self-adjoint unbounded operators on $L^2(\calE;\calM)$. We begin by showing 
    there is a common core $\mathcal{D} \subseteq L^2(\calE;\calM)$ and a subsequence $n_i$ for which $\tilde{\Delta}_{n_i} S \to \tilde{\Delta} S$ in $L^2(\calE;\calM)$. 

    Take $\mathcal{D}$ to be any countable dense subset of $C^\infty(\calM,\calE)$. Since $\calE$ has separable fibers, such a countable dense subset necessarily exists. Moreover, $\tilde{\Delta}$ and $\tilde{\Delta}_n$ are all defined on $\mathcal{D}$. This $\mathcal{D}$ shall be our common core.

    Treat $\tilde{\Delta}_n S$ as an $L^2(\calM,\calE)$-valued random variable. For each $S \in \mathcal{D}$, Theorem \ref{thm: diagonal finite rank convergence} ensures that $\tilde{\Delta}_n S \to \tilde{\Delta} S$ in mean square error. It immediately follows that $\tilde{\Delta}_n S \xrightarrow{\mathbb{P}_{\mathcal{X}}} \tilde{\Delta} S$ in probability with respect to the measure from which the sampling $\mathcal{X}$ is drawn. 

    Enumerate $\mathcal{D} = \{S_1, S_2, \ldots\}$. Since convergence in probability implies almost sure convergence on a subsequence, we may inductively construct a doubly-indexed sequence of indices $N^a_b$ such that the following properties hold:
    \begin{enumerate}
        \item For each $a$, the sequence $\{N_b^{a+1}\}_b$ is a subsequence of $\{N_b^{a}\}_b$;
        \item Along the sequence $\{N^a_b\}_b$, we have almost sure convergence $\tilde{\Delta}_{N^a_b} S_a\xrightarrow{\mathrm{a.s.}} \tilde{\Delta} S_a$ as $b \to \infty$. 
    \end{enumerate}
    Take the diagonal sequence $n_i := N^i_i$. Along $n_i$, we have that $\tilde{\Delta}_{n_i} S\xrightarrow{\mathrm{a.s.}} \tilde{\Delta} S$ as $i \to \infty$ for all $S \in \mathcal{D}$. Now applying Theorems VIII.25(a) and VIII.20(b) of \cite{reed1972functional}, we may conclude that:
    $$\| g(\tilde{\Delta}_{n_i})S - g(\tilde{\Delta})S \|_{L^2} \xrightarrow{\mathrm{a.s.}} 0$$
    almost surely for each $S$. 

    Notice that the previous argument can not only be applied to the sequence $n = \{1,2,3, ...\}$, but also along any subsequence of $\{n\}_n$. Since any subsequence of $\{n\}_n$ therefore has an almost surely convergent sub-subsequence, we may conclude that for the original sequence $n$, for each section $S$ (not necessarily in $\mathcal{D}$), we have convergence in probability:
    $$\| g(\tilde{\Delta}_{n})S - g(\tilde{\Delta})S \|_{L^2} \xrightarrow{\mathbb{P}_{\mathcal{X}}} 0$$
    in probability with respect to the sampling $\mathcal{X}$ as $n \to \infty$. 

    Finally, by the spectral calculus, since $g : \R \to \R$ is bounded by some $B$, we may bound $\|g(\tilde{\Delta}_{n})S \|_{L^2} \leq B \|S\|_{L^2}$. Similarly, $|g(\tilde{\Delta})S \|_{L^2} \leq B \|S\|_{L^2}$. It follows that for each $n$,
    $$\| g(\tilde{\Delta}_{n})S - g(\tilde{\Delta})S \|_{L^2}^2 \leq 4 B^2$$
    for all $n$. Hence the dominated convergence theorem admits an MSE upgrade to the desired conclusion:
    $$ \E \left [ \left \| g(\tilde{\Delta}_{n})S - g(\tilde{\Delta})S \right \|_{L^2}^2 \right ] \to 0  \, . $$
\end{proof}

\subsection{Proof of Corollary \ref{cor: convergence in architecture}}
\begin{corollary*}\label{cor:architecture-convergence-on-sections}
    Under the hypotheses of Theorem \ref{thm: diagonal finite rank convergence}, let $\{\subbundleidx_n\}_n$ be the constructed deterministic diagonal sequence, and $L \in \mathbb{N}$. Let $\sigma$ be a fiber-wise nonlinearity that is $C_\sigma$-Lipschitz in the corresponding fiber norms. For bounded continuous filters $g^0, \ldots, g^{L-1} \in \mathcal{W}$, consider the continuous and sampled architectures:
\begin{align*}
    S^{\ell + 1} & := \sigma \left (g^\ell \left (\tilde{\Delta} \right ) S^\ell \right ) \\
    S_n^{\ell + 1} & :=  \sigma \left (g^\ell \left (\tilde{\Delta}_n \right ) S_n^\ell \right ) \, , 
\end{align*}
with initializations $S^0 := S$ and $S_n^0 := \Pi_{\subbundleidx_n} S$, where $\tilde{\Delta} := \frac{1}{\mathrm{vol}(\calM)} \Delta_{\nabla}$ and $\tilde{\Delta}_n := \frac{1}{t_n (4 \pi t_n)^{m/2} }\hat{\Delta}_{\calF^{t_n}_{n,\subbundleidx_n}}$.

Then, the output of the discrete architecture converges in mean square to the output of the continuous architecture:
$$\lim_{n \to \infty} \mathbb{E} \left[ \left\| S_n^L - S^L \right\|_{L^2(\calM; \calE)}^2 \right] = 0 \, .$$
\end{corollary*}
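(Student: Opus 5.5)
We argue by induction on the layer index $\ell$, proving $\E\big[\|S_n^\ell - S^\ell\|_{L^2}^2\big]\to 0$ for each $\ell=0,\dots,L$. The base case is $\E\|\Pi_{\subbundleidx_n}S - S\|_{L^2}^2 = \|\Pi_{\subbundleidx_n}S - S\|_{L^2}^2$, which is deterministic. Since $\Pi_{\subbundleidx}\to\mathrm{id}$ strongly on each fiber and $\|\Pi_{\subbundleidx}S(x)-S(x)\|\le 2\sup_x\|S(x)\|$ on the compact $\calM$, dominated convergence gives the claim as $\subbundleidx_n\to\infty$.

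For the inductive step we fix $\ell$. The Lipschitz property of $\sigma$ is applied fiberwise and integrated, so
\[
\|S_n^{\ell+1}-S^{\ell+1}\|_{L^2}\le C_\sigma\|g^\ell(\tilde\Delta_n)S_n^\ell - g^\ell(\tilde\Delta)S^\ell\|_{L^2}.
\]
We split the right-hand side by the triangle inequality into
\[
\|g^\ell(\tilde\Delta_n)(S_n^\ell - S^\ell)\|_{L^2} + \|g^\ell(\tilde\Delta_n)S^\ell - g^\ell(\tilde\Delta)S^\ell\|_{L^2}.
\]
By the norm bound of the functional calculus (Theorem~\ref{thm:spectral-thm}), the first term is at most $\|g^\ell\|_\infty\|S_n^\ell-S^\ell\|_{L^2}$, whose mean square vanishes by the induction hypothesis. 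Using $(a+b)^2\le 2a^2+2b^2$, it then suffices to show that the second term tends to zero in mean square. This second term is exactly the quantity controlled by Lemma~\ref{lem: filter level MSE convergence}, applied to the fixed deterministic section $S^\ell$ and the bounded continuous filter $g^\ell$.

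The main obstacle is that Lemma~\ref{lem: filter level MSE convergence} is stated for $S\in C^4(\calM,\calE)$. The layer signals $S^\ell$ for $\ell\ge1$ are generally only $L^2$, because $\sigma$ is merely Lipschitz and $g^\ell(\tilde\Delta)$ need not smooth. Examining the lemma's proof, regularity is used only on the countable core $\mathcal D\subset C^\infty$, where Theorem~\ref{thm: diagonal finite rank convergence} gives the a.s.\ convergence along subsequences. The strong resolvent convergence step (Reed--Simon VIII.25(a), VIII.20(b)) then yields $g(\tilde\Delta_{n_i})S\to g(\tilde\Delta)S$ for every $S\in L^2$. The subsequence argument and the bound $4B^2\|S\|^2$ then give MSE convergence for arbitrary $S\in L^2$, in particular for $S^\ell$.

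If this extension of Lemma~\ref{lem: filter level MSE convergence} does not go through, the fallback is a density argument. We approximate $S^\ell$ in $L^2$ by some $T\in\mathcal D$ and bound
\[
\|(g(\tilde\Delta_n)-g(\tilde\Delta))S^\ell\|_{L^2}\le 2\|g\|_\infty\|S^\ell-T\|_{L^2}+\|(g(\tilde\Delta_n)-g(\tilde\Delta))T\|_{L^2}.
\]
The first term is small uniformly in $n$, and the second vanishes in MSE by the lemma. Multiple feature channels $u,q$ are handled identically, since the outputs are finite sums over $q$ and the triangle inequality applies term by term. Iterating the induction up to $\ell=L$ gives the corollary.
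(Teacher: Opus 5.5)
Your proposal is correct and follows essentially the same route as the paper. It is a layer-wise induction in which the Lipschitz bound on $\sigma$ and the functional-calculus bound $\|g^\ell\|_\infty$ control the propagated error, and Lemma~\ref{lem: filter level MSE convergence} controls the fresh error $\|(g^\ell(\tilde{\Delta}_n)-g^\ell(\tilde{\Delta}))S^\ell\|_{L^2}$. The paper unrolls the pathwise recursion first and then takes expectations with Cauchy--Schwarz, rather than inducting directly in mean square. Your version is also slightly more careful than the paper's on two points. First, you justify $\|\Pi_{d_n}S-S\|_{L^2}\to 0$ by strong convergence of $\Pi_d$ plus dominated convergence, whereas the paper attributes this base case to Theorem~\ref{thm: diagonal finite rank convergence}, which does not directly give it. Second, you explicitly note that the lemma is needed for the merely $L^2$ sections $S^\ell$; its stated hypothesis $S\in C^4$ does not literally cover these, although its strong-resolvent argument does.
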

\begin{proof}
We proceed by induction on the layers. Let $e_{n,\ell}$ and $\delta_{n,\ell}$ denote the signal error and spectral filter error at layer $\ell$ respectively:
    \begin{align*}
        e_{n, \ell} & := \left \| S_n^{\ell} - S^\ell \right \|_{L^2} \\
        \delta_{n, \ell} & := \left \| g_\ell(\tilde{\Delta}_n)(\Pi_{\subbundleidx_n} S^\ell) - g_\ell(\tilde{\Delta}) S^\ell \right \|_{L^2} \, . 
    \end{align*}
    Let $M_\ell := \sup_{x \in \mathbb{R}}\|g_\ell(x)\|$. By the Lipschitz continuity of the nonlinearity $\sigma$, the triangle inequality yields the pathwise recursive bound:
$$e_{n, \ell + 1} \leq C_\sigma (M_\ell e_{n, \ell} + \delta_{n, \ell})\, . $$
Iterating this inequality over $L$ layers expands to:
$$e_{n,L} \leq \left ( \prod_{r=0}^{L-1} C_\sigma M_r \right ) e_{n,0} + \sum_{q=0}^{L-1} \left ( \prod_{r = q+1}^{L-1} C_\sigma M_r\right ) C_\sigma \delta_{n,q} \, . $$

By Theorem \ref{thm: diagonal finite rank convergence}, the initialized signal error $e_{n,0} \to 0$ in mean square. Furthermore, by Lemma \ref{lem: filter level MSE convergence}, $\delta_{n,q} \to 0$ in mean square as well for each $q$. 

Taking the expectation of the squared recursive bound and applying the Cauchy-Schwarz inequality to the finite sum isolates the individual mean square limits.  Hence, as $n \to 0$, we have that the total error satisfies:
$$\lim_{n \to \infty} \E[e_{n,L}^2] = 0 \, . $$
In particular, we have that as sampling density goes to infinity, in MSE ,
\begin{equation*}
    \Omega( \mathcal{F}_{n,\subbundleidx_n}, \hat{\Delta}_{\mathcal{F}_{n,\subbundleidx_n}},\mathcal{W}, \sigma) \to \Omega(\mathcal{E}, \Delta_{\nabla}, \mathcal{W}, \sigma)\, . 
\end{equation*}

\end{proof}

\subsection{Proof of Corollary \ref{transferability}}
\begin{corollary*}
    Let $(\calE, \calM, \nabla)$ be a smooth Hilbert bundle over a closed manifold $\calM$ of dimension $m$ equipped with a compatible connection $\nabla$. Fix a section $S \in C^4(\calM, \calE)$. Let $\{\calE_\subbundleidx\}_{\subbundleidx}$ be a finite rank approximating sequence for $\calE$ with induced connections $\nabla_\subbundleidx$, and connection Laplacians $\Delta_{\nabla_\subbundleidx}$.  Let $\Pi_\subbundleidx : \calE \to \calE_\subbundleidx$ denote the fiber-wise orthogonal projection map onto $\calE_\subbundleidx$. Let $\mathcal{X} = \{x_1, x_2, \ldots\}$ and $\mathcal{Y} = \{y_1, y_2, \ldots\}$ be a pair of independent iid samplings of points on the manifold. Denote the bandwidth $t$ point cloud Laplacians associated to these distinct samplings by $\hat{\Delta}_{\calF^t_{\mathcal{X}_n,d}}$ and 
    $\hat{\Delta}_{\calF^t_{\mathcal{Y}_n,d}}$ respectively. Let $\{\subbundleidx_n\}$ be a diagonal sequence such that the conclusion of Theorem \ref{thm: diagonal finite rank convergence} holds for both samplings. Let $\tilde{\Delta}_n^{\mathcal{X}} := \frac{1}{t_n (4 \pi t_n)^{m/2}}\hat{\Delta}_{\calF^{t_n}_{\mathcal{X}_{n,\subbundleidx_n}}} \Pi_{\subbundleidx_n}$ with bandwidth $t_n = n^{-\frac{1}{m + 2 + \alpha}}$, $\alpha > 0$, and similar for $\tilde{\Delta}_n^{\mathcal{Y}}$. 

    Let $L \in \mathbb{N}$ be a network depth, and $\sigma$ be a fiber-wise nonlinearity that is $C_\sigma$-Lipschitz in the corresponding fiber norms. For bounded continuous filters $g_0, \ldots, g_{L-1} \in \mathcal{W}$, consider the continuous and sampled architectures:
\begin{align*}
    S^{\ell + 1} & := \sigma\left (g_\ell \left (\tilde{\Delta} \right ) S^\ell \right )\\
    S_n^{\mathcal{X}, \ell + 1} & :=  \sigma \left (g_\ell \left (\tilde{\Delta}^{\mathcal{X}}_n \right ) S_n^{\mathcal{X}, \ell} \right ) \\
    S_n^{\mathcal{Y}, \ell + 1}  &:=  \sigma \left (g_\ell \left (\tilde{\Delta}_n^{\mathcal{Y}} \right ) S_n^{\mathcal{Y},\ell} \right ) \, , 
\end{align*}
with initializations $S^{0}:= S$ and $S_n^{\mathcal{X},0},S_n^{\mathcal{Y},0}  := \Pi_{\subbundleidx_n} S$.
    Under these hypotheses and notation, one may obtain a MSE convergence result:
    $$ \lim_{n \to \infty} \E \left [ \left \| S_n^{\mathcal{X},L} - S_n^{\mathcal{Y},L} \right \|_{L^2}^2 \right ] = 0 \, . $$
    Further, one may derive a quantitative bound for the $L^2$ disagreement $\| S_n^{\mathcal{X},L} - S_n^{{\mathcal{Y}},L} \|_{L^2}$ in terms of sample-indpendent quantities.
\end{corollary*}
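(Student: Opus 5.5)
The plan is to route both sampled architectures through the continuous HilbNet output $S^L$ and apply Corollary~\ref{cor: convergence in architecture} once to each sampling. By the triangle inequality in $L^2(\calM;\calE)$ and $(a+b)^2\le 2a^2+2b^2$,
\begin{equation*}
\E_{\mathcal{X},\mathcal{Y}}\left[\left\|S_n^{\mathcal{X},L}-S_n^{\mathcal{Y},L}\right\|_{L^2}^2\right]
\le 2\,\E_{\mathcal{X}}\left[\left\|S_n^{\mathcal{X},L}-S^L\right\|_{L^2}^2\right]
+2\,\E_{\mathcal{Y}}\left[\left\|S_n^{\mathcal{Y},L}-S^L\right\|_{L^2}^2\right].
\end{equation*}
Each summand depends on only one of the samplings. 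Independence of $\mathcal{X}$ and $\mathcal{Y}$ therefore lets us integrate out the other variable by Fubini, so the joint expectation splits into two marginal expectations.

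The hypothesis requires $\{\subbundleidx_n\}$ to be a diagonal sequence for which Theorem~\ref{thm: diagonal finite rank convergence} holds under both samplings. This is exactly what Lemma~\ref{lem: filter level MSE convergence} and the layerwise induction in the proof of Corollary~\ref{cor: convergence in architecture} need, applied separately to $\tilde{\Delta}_n^{\mathcal{X}}$ and to $\tilde{\Delta}_n^{\mathcal{Y}}$. Each term on the right then tends to zero, which gives the MSE statement.

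Such a common sequence exists. The proof of Theorem~\ref{thm: diagonal finite rank convergence} constructs $\subbundleidx_n$ from deterministic quantities: $\ttE(\subbundleidx)$ and the expected discretization error $\E[\ttD(n,\subbundleidx)]$. Because $\mathcal{X}$ and $\mathcal{Y}$ are identically distributed, these quantities are the same for both samplings, so the same sequence works for both. I would note this, even though the corollary takes it as a hypothesis.

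For the quantitative bound, I would reuse the pathwise recursion from the proof of Corollary~\ref{cor: convergence in architecture}. With $M_r:=\sup|g_r|$, it gives
\begin{equation*}
\left\|S_n^{\mathcal{X},L}-S^L\right\|_{L^2}\le \sum_{q=0}^{L-1}\left(\prod_{r=q+1}^{L-1}C_\sigma M_r\right)C_\sigma\,\delta^{\mathcal{X}}_{n,q}
+\left(\prod_{r=0}^{L-1}C_\sigma M_r\right)e_{n,0},
\end{equation*}
and the same inequality holds for $\mathcal{Y}$. Adding the two inequalities bounds the disagreement by constants depending only on $L$, $C_\sigma$ and the $M_r$, multiplied by filter-level errors.

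The filter-level errors are not yet sample-independent, and making them so is the main obstacle. I would first split $e_{n,0}=\|\Pi_{\subbundleidx_n}S-S\|_{L^2}$, which is deterministic. Next I would control $\E[\delta_{n,q}^2]$ through the bias and variance lemmas (Lemmas~\ref{lem: variance asymptotics}, \ref{lem: bias asymptotics}) together with $\ttE(\subbundleidx_n)$. This gives a bound of the form $C_{\mathrm{var}}/(n t_n^{2+m/2})+C_{\mathrm{bias}}^2t_n^2+\ttE(\subbundleidx_n)$.

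The difficulty is passing from Laplacian error to filter error for a general bounded Borel $g$. I would try to restrict to Lipschitz or polynomial filters on a common spectral window and use a resolvent or Helffer--Sjöstrand type estimate $\|g(A)-g(B)\|\lesssim\|(A-B)S\|$. Failing that, I would state the bound only for polynomial filters, where it follows by telescoping.
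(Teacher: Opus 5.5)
Your argument for the MSE claim is the paper's: pass through $S^L$, use $(a+b)^2\le 2a^2+2b^2$, and apply Corollary~\ref{cor: convergence in architecture} to each sampling. Your remarks on Fubini, and on why one diagonal sequence serves both samplings (the construction in Theorem~\ref{thm: diagonal finite rank convergence} depends only on the law of the sample), are correct.

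The gap is in the second claim. The statement asks only for a bound in terms of sample-independent quantities, not a vanishing rate. You never produce such a bound for the stated class of bounded continuous filters. Instead you propose restricting to Lipschitz or polynomial filters, which changes the hypotheses. The missing step is one line. By the Borel functional calculus, $\|g_q(A)\|\le M_q$ for every self-adjoint $A$, and $\|\Pi_{\subbundleidx_n}\|\le 1$, so
\begin{equation*}
\delta^{\mathcal{X}}_{n,q},\ \delta^{\mathcal{Y}}_{n,q}\le 2M_q\|S^q\|_{L^2},\qquad e_{n,0}=\|\Pi_{\subbundleidx_n}S-S\|_{L^2}\le\|S\|_{L^2}.
\end{equation*}
Here $S^q$ is a layer of the \emph{continuous} network and is therefore deterministic. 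Substituting these into your summed recursion gives the paper's bound. That bound is sample-independent, though it does not decay in $n$.

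Your fallback toward a decaying rate also would not close as sketched. Lemmas~\ref{lem: variance asymptotics} and~\ref{lem: bias asymptotics} need a $C^4$ section. For $q\ge 1$, however, the layer $S^q=\sigma(g_{q-1}(\tilde{\Delta})S^{q-1})$ is in general not $C^4$, because $\sigma$ is only Lipschitz. Those lemmas also control the Laplacian error, not the filter error. Polynomials are not bounded filters. Telescoping $\tilde{\Delta}_n^k-\tilde{\Delta}^k$ produces factors of $\tilde{\Delta}_n$, which is not uniformly bounded in $n$: its degree term alone is of order $t_n^{-1}$. The $O(t_n)$ bias therefore does not absorb these factors.
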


\begin{proof}
    One may bound:
    $$ \ \left \| S_n^{\mathcal{X},L} - S_n^{\mathcal{Y},L} \right \|_{L^2}^2 \leq \ 2 \left \| S_n^{\mathcal{X},L} - S^{L} \right \|_{L^2}^2 + 2 \left \| S_n^{\mathcal{Y},L} - S^{L} \right \|_{L^2}^2$$
    Applying Corollary \ref{cor: convergence in architecture} to each sampling separately yields the MSE convergence.      In particular, we have that
\begin{equation*}
    \lim_{n \to \infty} \E_{\mathcal{X},\mathcal{Y{}}} \left [ \left \| \Omega(\calF^{t_n}_{\mathcal{X}_n,d_n}, \hat{\Delta}_{\calF^{t_n}_{\mathcal{X}_n,d_n}}, \mathcal{W}, \sigma) - \Omega(\calF^{t_n}_{\mathcal{Y}_n,d_n}, \hat{\Delta}_{\calF^{t_n}_{\mathcal{Y}_n,d_n}},\mathcal{W}, \sigma) \right \|_{L^2}^2 \right ] = 0 \, . 
\end{equation*}

    To derive a quantitative bound, introduce the per-sampling signal error and spectral filter error at level $\ell$ by:
    \begin{align*}
        e^{(-)}_{n, \ell} & := \left \| S_n^{(-),\ell} - S^\ell \right \|_{L^2} \\
        \delta^{(-)}_{n, \ell} & := \left \| h_\ell(\tilde{\Delta}^{(-)}_n)(\Pi_{\subbundleidx_n} S^\ell) - h_\ell(\tilde{\Delta}) S^\ell \right \|_{L^2} \, . 
    \end{align*}
    Apply the triangle inequality and the layer-wise recursive bounds of the proof of Corollary \ref{cor: convergence in architecture} to establish:
    $$ \left \|S_n^{\mathcal{X},L}-S_n^{\mathcal{Y},L} \right \|_{L^2}
    \le
    \left(\prod_{r=0}^{L-1} C_\sigma M_r\right)(e^{\mathcal{X}}_{n,0}+e^{\mathcal{Y}}_{n,0})
    +
    \sum_{q=0}^{L-1}
    \left(\prod_{r=q+1}^{L-1} C_\sigma M_r\right) C_\sigma(\delta^{\mathcal{X}}_{n,q}+\delta^{\mathcal{Y}}_{n,q}) \, . $$
    The level-zero signal error is sampling independent, and bounded above by $\|S\|_{L^2}$. On the other hand, we may further apply the Borel functional calculus to bound each spectral filter error by:
    \begin{align*}
        \delta_{n,q}^{(-)} & \leq 2 M_q \| S^  q \|_{L^2} \, .
    \end{align*}
    We hence conclude a sample-independent bound:
    $$ \left \|S_n^{\mathcal{X},L}-S_n^{\mathcal{Y},L} \right \|_{L^2}
    \le
    2\left(\prod_{r=0}^{L-1} C_\sigma M_r\right)\|S \|_{L^2}
    +
    2\sum_{q=0}^{L-1}
    \left(\prod_{r=q+1}^{L-1} C_\sigma M_r\right) C_\sigma M \|S^q\|_{L^2}\, . $$
\end{proof}
\end{document}